\definecolor{darkblue}{rgb}{0, 0, 0.5}
\definecolor{beaublue}{rgb}{0.74, 0.83, 0.9}
\definecolor{gainsboro}{rgb}{0.86, 0.86, 0.86}
\definecolor{kleinblue}{rgb}{0,0.18,0.65}
\newcommand{\norm}[1]{\left\lVert#1\right\rVert}
\newcommand{\ind}{\perp\!\!\!\!\perp}
\def\eqref#1{equation~\ref{#1}}
\def\1{\bm{1}}
\newcommand{\train}{\mathcal{D_{\mathrm{tr}}}}
\def\mX{{\bm{X}}}
\DeclareMathAlphabet{\mathsfit}{\encodingdefault}{\sfdefault}{m}{sl}
\SetMathAlphabet{\mathsfit}{bold}{\encodingdefault}{\sfdefault}{bx}{n}
\def\gC{{\mathcal{C}}}
\def\gE{{\mathcal{E}}}
\def\gG{{\mathcal{G}}}
\def\gS{{\mathcal{S}}}
\def\gY{{\mathcal{Y}}}
\def\gZ{{\mathcal{Z}}}
\def\sP{{\mathbb{P}}}
\newcommand{\R}{\mathbb{R}}
\DeclareMathOperator*{\argmax}{arg\,max}
\DeclareMathOperator*{\argmin}{arg\,min}
\theoremstyle{plain}
\newtheorem{theorem}{Theorem}[section]
\newtheorem{proposition}[theorem]{Proposition}
\newtheorem{lemma}[theorem]{Lemma}
\theoremstyle{definition}
\newtheorem{definition}[theorem]{Definition}
\newtheorem{assumption}[theorem]{Assumption}
\theoremstyle{remark}
\newcommand{\dataset}{{\cal D}}
\newcommand{\revision}[1]{\textcolor{black}{#1}}
\newcommand{\inv}{{\text{inv}}}
\newcommand{\spu}{{\text{spu}}}
\newcommand{\ego}{\text{ego}}
\newcommand{\envtrain}{{\gE_{\text{tr}}}}
\newcommand{\envtest}{{\gE_{\text{te}}}}
\newcommand{\envall}{{\gE_{\text{all}}}}
\newcommand{\gen}{{{\text{gen}}}}
\newcommand{\invrat}{{\textsc{InvRAT}}} 
\newcommand{\dir}{{\textsc{DIR}}}
\newcommand{\ginv}{{\textsc{CIGA}}} %
\newcommand{\fullginv}{{\text{\textbf{C}ausality Inspired \textbf{I}nvariant \textbf{G}raph {L}e\textbf{A}rning}}}
\newcommand{\doop}{{\text{do}}}
\newcommand{\var}{{\text{Var}}}
\newenvironment{myquotation}{\setlength{\leftmargini}{0em}\quotation}{\endquotation}
\title{Learning Causally Invariant Representations for Out-of-Distribution Generalization on Graphs}
\author{Yongqiang Chen$^{1}$\thanks{Work done during an internship at Tencent AI Lab.}, Yonggang Zhang$^2$, Yatao Bian$^3$, Han Yang$^1$, Kaili Ma$^1$, Binghui Xie$^1$\\
	 $^1$The Chinese University of Hong Kong $^2$Hong Kong Baptist University \\
	 \texttt{\{yqchen\!,hyang\!,klma\!,bhxie21\!,jcheng\}@cse\!.\!cuhk\!.\!edu\!.\!hk}\
	\texttt{yatao\!.\!bian@gmail\!.\!com}\\\vspace{-0.35in}
	 \AND
	 Tongliang Liu$^4$, Bo Han$^2$, James Cheng$^1$ \\
	$^3$Tencent AI Lab, $^4$TML Lab, The University of Sydney\\ \texttt{tongliang\!.liu@sydney\!.\!edu\!.au} \ \texttt{\{csygzhang\!,bhanml\}@comp\!.\!hkbu\!.\!edu\!.\!hk} \\
\setcounter{footnote}{0}
}
\definecolor{shadecolor}{rgb}{0.94, 0.97, 1.0}
\begin{document}

\maketitle

\begin{abstract}

  Despite recent success in using the invariance principle
  for out-of-distribution (OOD) generalization on Euclidean data (e.g., images), studies on graph data are still limited. Different from images, the complex nature of graphs poses unique challenges to adopting the invariance principle.
  In particular, distribution shifts on graphs can appear in a variety of forms  such as attributes and structures, making it difficult to identify the invariance.
  Moreover, domain or environment partitions, which are often required by OOD methods  on Euclidean data, could be highly expensive to obtain for graphs.
  To bridge this gap, we propose a new framework, called $\fullginv$ ($\ginv$), to capture the invariance of graphs for guaranteed OOD generalization under various distribution shifts.
  Specifically, we characterize potential distribution shifts on graphs with causal models,
  concluding that OOD generalization on graphs is achievable when models focus \emph{only} on subgraphs containing the most information about the causes of labels.
  Accordingly,
  we propose an information-theoretic objective to extract the desired subgraphs that maximally preserve the invariant intra-class information.
  Learning with these subgraphs is immune to distribution shifts.
  Extensive experiments on $16$ synthetic or real-world datasets, including a challenging setting -- DrugOOD,\! from AI-aided drug discovery,
  validate the superior OOD performance of $\ginv$\footnote{Code is available at \url{https://github.com/LFhase/CIGA}.}.
\end{abstract}

\section{Introduction}
Graph representation learning with graph neural networks (GNNs) has gained great success in tasks involving relational information~\citep{gcn,sage,gat,jknet,gin}.
However, it assumes that the training and test graphs are drawn from the same distribution, which
is often violated in reality~\citep{ogb,wilds,TDS,drugood}.
The mismatch between training and test distributions, i.e., \textit{distribution shifts},
introduced by some underlying environmental factors related to data collection or processing,
could seriously degrade the performance of deployed models~\citep{camel_example,covid19_application}.
Such \textit{out-of-distribution} (OOD) generalization failures become the major roadblock for practical applications of
graph representation learning~\citep{drugood}.

Meanwhile, enabling OOD generalization on regular Euclidean data has received surging attention and several solutions were proposed~\citep{irmv1,groupdro,meta-transfer,v-rex,env_inference,ood_max_inv,ib-irm}.
In particular, the invariance principle from causality is at the heart of those works~\citep{inv_principle,causality,causal_transfer}.
The principle leverages the Independent Causal Mechanism (ICM) assumption~\citep{causality,elements_ci} and implies that,
model predictions that only focus on the causes of the label can stay invariant to a large class of distribution shifts~\citep{inv_principle,irmv1,ib-irm}.

Despite the success of the invariance principle on Euclidean data,
the complex nature of graphs raises several new challenges that
prohibit direct adoptions of the principle.
First, distribution shifts on graphs are more complicated.
They can happen at both attribute-level and  structure-level,
and be observed in multiple forms such as graph sizes, subgraph densities and homophily~\citep{size_gen1,size_gen2,reliable_gnn_survey}.
On the other hand, each of the shifts can spuriously correlate with labels in different modes~\citep{irmv1,failure_modes,ib-irm}.
Consequently, the entangled complex distribution shifts make it more difficult to identify and capture the invariance on graphs.
Second, OOD algorithms developed and analyzed on Euclidean data
often require additional environment (or domain) labels for distinguishing the sources of distribution shifts~\citep{irmv1}.
However, the environment labels could be highly expensive to obtain and thus often unavailable for graphs,
as collecting the labels usually requires expert knowledge due to the abstraction of graphs~\citep{ogb}.
These challenges render the problem studied in this paper even more challenging: %
\begin{myquotation}
  \emph{How could one generalize the invariance principle to enable OOD generalization on graphs?}
\end{myquotation}
\begin{figure}[t]
  \subfigure[]{
    \includegraphics[width=0.5\textwidth]{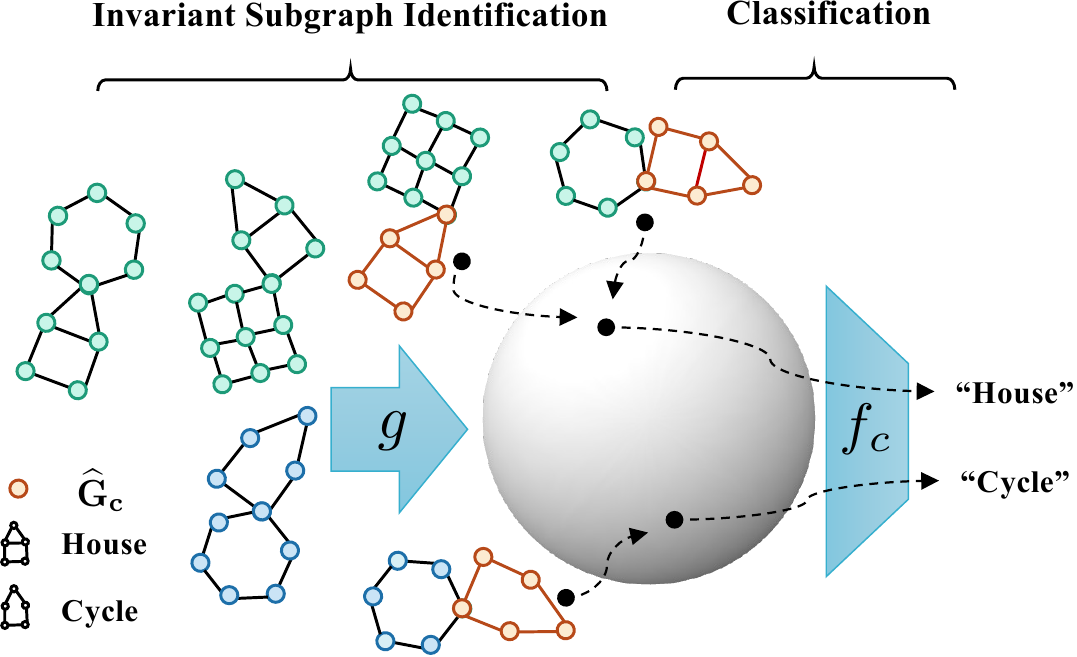}
    \label{fig:motivation}
  }
  \subfigure[]{
    \adjustbox{valign=b}{\resizebox{0.48\textwidth}{!}{
        \bgroup
        \def\arraystretch{1.2}
        \begin{tabular}{ccccc}
          \toprule
          Algorithm                       & OOD Guarantee & Regime         & $E$ Known   & SCM Support         \\
          \midrule
          IRM~\citep{irmv1}               & Yes           & $\R$           & Yes         & PIIF                \\
          IB-IRM~\citep{ib-irm}           & Yes           & $\R$           & Yes         & PIIF\&FIIF          \\
          EIIL~\citep{env_inference}      & Yes           & $\R$           & No          & PIIF                \\
          DANN~\citep{DANN}               & N/A           & $\R$           & Yes         & N/A                 \\
          MatchDG~\citep{causal_matching} & N/A           & $\R$           & Yes         & FIIF                \\
          GroupDro~\citep{groupdro}       & N/A           & $\R$           & Yes         & N/A                 \\
          CNC~\citep{cnc}                 & N/A           & $\R$           & No          & N/A                 \\
          GIB~\citep{gib}                 & Yes           & $\gG$          & No          & FIIF                \\
          DIR~\citep{dir}                 & No            & $\gG$          & No          & FIIF                \\
          \textbf{$\ginv$ (Ours)}         & \textbf{Yes}  & \textbf{$\gG$} & \textbf{No} & \textbf{PIIF\&FIIF} \\
          \bottomrule
                                          &               &                &             &                     \\
                                          &               &                &             &                     \\
        \end{tabular}
        \egroup
      }}
    \label{tab:potential_alg}
  }
  \vspace{-0.15in}
  \caption{(a) Illustration of $\fullginv$ ($\ginv$):
    GNNs need to classify graphs based on the specific motif (``House'' or ``Cycle'').
    The featurizer $g$ will extract an (orange colored) subgraph $\widehat{G}_c$ from each input
    for the classifier $f_c$ to predict the label.
    The training objective of $g$ is implemented in a contrastive strategy where
    the distribution of $\widehat{G}_c$ at the latent sphere
    will be optimized to maximize the intra-class mutual information, hence predictions will be invariant to distribution shifts;
    (b) An overview of potential algorithms for OOD generalization on graphs.}
  \vspace{-0.15in}
\end{figure}

To solve the above problem, we propose  $\fullginv$ ($\ginv$),
a new framework for
capturing the invariance of graphs to enable guaranteed OOD generalization under different distribution shifts.
Specifically, we build three Structural Causal Models (SCMs)~\citep{causality}
to characterize the distribution shifts that could happen on graphs:
one is to model the graph generation process, and the other two are to model two possible
interactions between invariant and spurious features during the graph generation,
i.e., Fully Informative Invariant Feature (FIIF)
and Partially Informative Invariant Feature (PIIF) (Sec.~\ref{sec:data_gen}).
Then, we generalize the invariance principle to graphs for OOD generalization:
GNN models are invariant to distribution shifts
if they focus only on an invariant and critical subgraph $G_c$
that contains the most of the information in $G$ about the underlying causes of the label.
Thus, the problem of achieving OOD generalization on graphs can be rephrased into two processes:
invariant subgraph identification and label prediction.
Accordingly, shown as Fig.~\ref{fig:motivation}, we introduce a prototypical invariant graph learning algorithm
that decomposes a GNN into:
a) a featurizer $g$ for identifying the underlying invariant subgraph $G_c$ from $G$;
b) a classifier $f_c$ for making predictions based on $G_c$.
To extract the desired subgraph $G_c$, we derive an information-theoretic objective for the featurizer
to identify subgraphs that maximally preserves the invariant intra-class information
across a set of different (unknown) environments.
We theoretically show that this approach can provably identify the underlying $G_c$ under mild assumptions (Sec.~\ref{sec:good_framework}).

Experiments on $16$ synthetic and real-world datasets with various distribution shifts,
including a challenging setting from AI-aided drug discovery~\citep{drugood},
show that $\ginv$ can significantly outperform all of existing methods
up to $10\%$, demonstrating its promising OOD generalization ability (Sec.~\ref{sec:exp}).

\paragraph{Related Work.}
We review existing methods that might improve the OOD generalization on graphs,
summarize the main differences between our solution and them in Table~\ref{tab:potential_alg},
and leave thorough discussions to Appendix~\ref{sec:related_work_appdx}.
On Euclidean data,
Invariant Learning~\citep{irmv1,env_inference,ib-irm},
Group Distributionally Robust Optimization~\citep{v-rex,groupdro,cnc},
Domain Adaption and Domain Generalization~\citep{DANN,CORAL,deep_DG,DouCKG19,causal_matching,DG_survey}
are three widely adopted approaches to enable OOD generalization.
However, they all have their own limitations when being applied to graphs.
First, previous invariant learning methods are mostly developed
and analyzed for Euclidean data~\citep{irmv1,ib-irm,env_inference},
or under specific SCM assumptions~\citep{irmv1},
making the theoretical results hardly able to generalize to the complicated graph data~\citep{risk_irm}
that can have multiple types of distribution shifts~\citep{failure_modes}.
Group Distributionally Robust Optimization that minimizes the gap between worst group risk and average risk~\citep{v-rex,groupdro,cnc},
and Domain Adaption/Generalization methods that aim to learn class-conditional domain invariant representations~\citep{DANN,CORAL,deep_DG,DouCKG19,DG_survey},
cannot guarantee a min-max optimal predictor without additional assumptions~\citep{DG_discussion,irmv1,ib-irm}.
Moreover, most existing methods require environment labels that are however expensive to obtain in graphs,
which limits their applications to graphs~\citep{irmv1,v-rex,ib-irm,groupdro,DANN,CORAL,DouCKG19,causal_matching}.
In contrast, we aim to develop OOD algorithms for graphs that are provably generalizable  under different types of distribution shifts.

Another line of relevant works is about GNN explainability
that aims to find a subgraph of the input as the explanation
for a GNN prediction~\citep{gnn_explainer,xgnn_tax}.
Although some may leverage causality to justify the generated explanation~\citep{gen_xgnn},
they mostly focus on understanding the predictions of GNNs instead of for OOD generalization.
The  closest works to ours are two interpretable GNNs that aim to explicitly extract a subgraph
for both predictions and explanations guided by information theory~\citep{gib} and causality~\citep{dir}, respectively.
However, they focus on graphs and shifts generated under a specific SCM.
Although one of them can provide theoretical guarantee for OOD generalization~\citep{gib}
by using the information bottleneck criteria~\citep{ib-irm},
they would inevitably fail to generalize to graphs generated under different SCMs.
More discussions about the failure are deferred to Appendix~\ref{sec:discussion_ood_obj_appdx}.
Besides, \citet{size_gen2} also discuss OOD generalization on graphs
but limited to a specific graph family and graph size shifts.
\citet{handle_node} propose OOD generalization algorithms on graphs
for the task of node classification, also limited to graphs and shifts under a specific SCM.

To the best of our knowledge, there is no existing work that could handle more comprehensive graph distribution shifts than $\ginv$,
while also achieving provable OOD generalization performance. %

\section{OOD Generalization on Graphs through the Lens of Causality}
\label{sec:graph_ood_causal_lens}

\subsection{Problem Setup}
In this work, we focus on OOD generalization in graph classification.
Specifically,
we are given a set of graph datasets $\dataset=\{\dataset^e\}_e$ collected from multiple environments $\envall$.
Samples $(G^e_i,Y^e_i)\in \dataset^e$ from the same
environment are considered as drawn independently from an identical distribution $\sP^e$.
A GNN $\rho \circ h$ generically has an encoder $h:\gG\rightarrow\R^h$
that learns a meaningful representation $h_G$ for each graph $G$
to help predict the label $\hat{Y}_G=\rho(h_G)$ with a downstream classifier $\rho:\R^h\rightarrow\gY$.
The goal of OOD generalization on graphs is to train a GNN $\rho \circ h$
with data from training environments $\train=\{\dataset^e\}_{e\in\envtrain\subseteq\envall}$
that generalizes well to all (unseen) environments,
i.e., to minimize $\max_{e\in\envall}R^e$,
where $R^e$ is the empirical risk of $\rho\circ h$ under environment $e$~\citep{erm,irmv1}.
We leave more details about the background of GNN for graph classification and invariant learning in Appendix~\ref{sec:background_appdx}.

It is known that OOD generalization is impossible without assumptions on the environments $\envall$~\citep{causality,ib-irm}.
Thus, we will first formulate the data generation process with structural causal model
and latent-variable model~\citep{causality,elements_ci,ssl_isolate},
to characterize the distribution shifts that could happen on graphs.
Then, we investigate whether the existing methods are generalizable under these distribution shifts.

\begin{wrapfigure}{l}{0.6\textwidth}
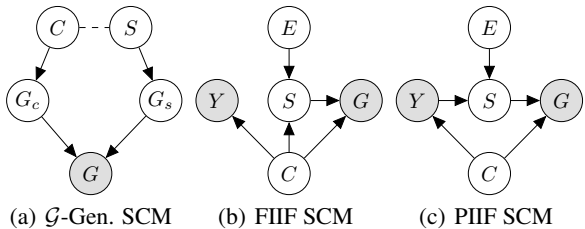

	\vspace{-0.15in}
	\subfigure[\revision{$\gG$-Gen. SCM}]{\label{fig:graph_gen}
		\resizebox{!}{0.18\textwidth}{\tikz{
				\node[latent] (S) {$S$};%
				\node[latent,left=of S,xshift=0.5cm] (C) {$C$};%
				\node[latent,below=of C,xshift=-0.5cm,yshift=0.5cm] (GC) {$G_c$}; %
				\node[latent,below=of S,xshift=0.5cm,yshift=0.5cm] (GS) {$G_s$}; %
				\node[obs,below=of GC,xshift=1.05cm,yshift=0.5cm] (G) {$G$}; %
				\edge[dashed,-] {C} {S}
				\edge {C} {GC}
				\edge {S} {GS}
				\edge {GC,GS} {G}
			}}}
	\subfigure[FIIF SCM]{\label{fig:scm_fiif}
		\resizebox{!}{0.18\textwidth}{\tikz{
				\node[latent] (E) {$E$};%
				\node[latent,below=of E,yshift=0.5cm] (S) {$S$}; %
				\node[obs,below=of E,xshift=-1.2cm,yshift=0.5cm] (Y) {$Y$}; %
				\node[obs,below=of E,xshift=1.2cm,yshift=0.5cm] (G) {$G$}; %
				\node[latent,below=of Y,xshift=1.2cm,yshift=0.5cm] (C) {$C$}; %
				\edge {E} {S}
				\edge {C} {Y,G}
				\edge {S} {G}
				\edge {C} {S}
			}}}
	\subfigure[PIIF SCM]{\label{fig:scm_piif}
		\resizebox{!}{0.18\textwidth}{\tikz{
				\node[latent] (E) {$E$};%
				\node[latent,below=of E,yshift=0.5cm] (S) {$S$}; %
				\node[obs,below=of E,xshift=-1.2cm,yshift=0.5cm] (Y) {$Y$}; %
				\node[obs,below=of E,xshift=1.2cm,yshift=0.5cm] (G) {$G$}; %
				\node[latent,below=of Y,xshift=1.2cm,yshift=0.5cm] (C) {$C$}; %
				\edge {E} {S}
				\edge {C} {Y,G}
				\edge {S} {G}
				\edge {Y} {S}
			}}}
	\caption{
		SCMs on graph distribution shifts.}
	\label{fig:scm}
	\vspace{-0.15in}
\end{wrapfigure}

\subsection{Graph Generation Process}
\label{sec:data_gen}
We take a latent-variable model perspective on the graph generation process and assume
that the graph is generated through a mapping $f_\gen:\gZ\rightarrow \gG$,
where $\gZ\subseteq\R^n$ is the latent space and $\gG=\cup_{N=1}^\infty\{0,1\}^N\times \R^{N\times d}$ is the graph space.
Let $E$ denote environments.
Following previous works~\citep{ssl_isolate,ib-irm},
we partition the latent variable from $\gZ$ into an invariant part $C\in\gC=\R^{n_c}$ %
and a varying part $S\in\gS=\R^{n_s}$, s.t., $n=n_c+n_s$,
according to whether they are affected by $E$ or not.
Similarly in images, $C$ and $S$ can represent content and style
while $E$ can refer to the locations where the images are taken~\citep{camel_example,adv_causal_lens,ssl_isolate}.
Furthermore, $C$ and $S$ control the generation of the observed graphs (Assumption~\ref{assump:graph_gen}) and
can have multiple types of interactions at the latent space (Assumptions~\ref{assump:scm_fiif},~\ref{assump:scm_piif}).

\textbf{Graph generation model.} We elaborate the SCM for the graph generation process in Assumption~\ref{assump:graph_gen}
and Fig.~\ref{fig:graph_gen}, where noises in the structural equations are omitted for simplicity~\citep{elements_ci}.
\begin{assumption}[\revision{Graph Generation Structural Causal Model}]
	\label{assump:graph_gen}
	\[
		\revision{G_c:=f_\gen^{G_c}(C),\qquad G_s:=f_\gen^{G_s}(S),\qquad G:=f_\gen^G(G_c,G_s).}
	\]
\end{assumption}
In Assumption~\ref{assump:graph_gen},
$f_\gen$ is decomposed into $f_\gen^{G_c}$, $f_\gen^{G_s}$ and $f_\gen^G$ to
control the generation of $G_c$, $G_s$, and $G$, respectively.
Among them, $G_c$ inherits the invariant information of $C$ that would not be affected by the interventions (or changes) of $E$~\citep{causality,elements_ci}.
For example, certain properties of a molecule can usually be described by a sub-molecule, or a functional group,
which is invariant across different species or assays~\citep{art_drug,zinc15,drugood}.
\revision{On the contrary, the generation of $G_s$ and $G$ will be affected by
the environment $E$ through $S$.}
Thus, graphs collected from different environments (or domains) can have different distributions of
structure-level properties (e.g., graph sizes~\citep{size_gen2,reliable_gnn_survey})
as well as feature-level properties (e.g., homophily~\citep{homophily1_birds,hao}).
Therefore, the subgraph $G_s$ inherits the spurious feature about $Y$~\citep{adv_causal_lens}.
In fact, Assumption~\ref{assump:graph_gen} is compatible with many graph generation models
by specifying the function classes of $f_\gen^{G_c}$, $f_\gen^{G_s}$ and $f_\gen^G$~\citep{sbm,graphon,graphrnn,graphdf}.
Since our goal is to characterize the potential distribution shifts
in Assumption~\ref{assump:graph_gen},
we focus on building a general SCM that is compatible to many graph families
and leave graph family specifications and their implications to OOD generalization in future works. \revision{More discussions are provided in Appendix~\ref{sec:full_scm_appdx}.}

\textbf{Interactions at latent space.}
Following previous works~\citep{irmv1,ib-irm},
we categorize the latent interactions between $C$ and $S$
into Fully Informative Invariant Features (FIIF, Fig.~\ref{fig:scm_fiif})
and Partially Informative Invariant Features (PIIF, Fig.~\ref{fig:scm_piif})\footnote{Note that FIIF and PIIF can be mixed as Mixed Informative Invariant Features (Appendix~\ref{fig:scm_miif_appdx}) in several ways, while our analysis will focus on the axiom ones for the purpose of generality.},
depending on whether the latent invariant part $C$
is fully informative about label $Y$, i.e., $(S,E)\ind Y|C$.
Formal definitions of the corresponding SCMs are given as follows,
where noises are omitted for simplicity~\citep{causality,elements_ci}.
\begin{assumption}[FIIF Structural Causal Model]
	\label{assump:scm_fiif}$Y:= f_\inv(C),\ S:=f_\spu(C,E),\ G:= f_\gen(C,S).$
\end{assumption}

\begin{assumption}[PIIF Structural Causal Model]
	\label{assump:scm_piif}$Y:= f_\inv(C),\ S:=f_\spu(Y,E),\ G:= f_\gen(C,S).$
\end{assumption}
In the two SCMs above,
$f_\gen$ corresponds to the graph generation process in Assumption~\ref{assump:graph_gen}, and
$f_\spu$ is the mechanism describing how $S$ is affected by $C$ and $E$ at the latent space.
By definition,
$S$ is directly controlled by $C$ in FIIF and
indirectly controlled by $C$ through $Y$  in PIIF,
which can exhibit different behaviors in the observed distribution shifts.
In practice, performances of OOD algorithms can degrade dramatically if one of FIIF or PIIF is excluded~\citep{aubin2021linear,failure_modes}.
This issue can be more serious in graphs, since different distribution shifts can have different interaction modes at the latent space.
Moreover, $f_\inv:\gC\rightarrow\gY$ indicates the labelling process,
which assigns labels $Y$ for the corresponding $G$ merely based on $C$.
Consequently, $\gC$ is better clustered than $\gS$ when given $Y$~\citep{cluster_assump,cluster_assump2,causality4ml,towards_causality},
which also serves as the necessary separation assumption for a classification task~\citep{svm1,svm2,lda}.
\begin{assumption}[Better Clustered Invariant Features]
	\label{assump:latent_sep}$H(C|Y)\leq H(S|Y)$.
\end{assumption}

\subsection{Challenges of OOD Generalization on Graphs}
\label{sec:limitation_prev}

Built upon the graph generation process,
we can formally derive the desired GNN that is able to generalize to OOD graphs under different distribution shifts, which implies the invariant GNN below\footnote{\revision{A discussion on Def. 2.5 and its relation to the SCMs is provided in Appendix~\ref{sec:inv_gnn_discuss_appdx}.}}.
\begin{definition}[Invariant GNN]
	\label{def:inv_gnn}
	Given a set of graph datasets $\{\dataset^e\}_e$ %
	and environments $\envall$ that follow the same graph generation process in Sec.~\ref{sec:data_gen},
	considering a GNN $\rho \circ h$ that has a permutation invariant graph
	encoder $h:\gG\rightarrow\R^h$ and a downstream classifier $\rho:\R^h\rightarrow\gY$,
	$\rho \circ h$ is an invariant GNN if it minimizes the worst case  risk
	among all environments, i.e., $\min \max_{e\in\envall}R^e$.
\end{definition}

Can existing methods produce a desired invariant GNN model?
We find the answers to be negative  unfortunately.
\begin{figure}[t]
	\vspace{-0.15in}
	\subfigure[Failure cases for existing methods.]{
		\includegraphics[width=0.38\textwidth]{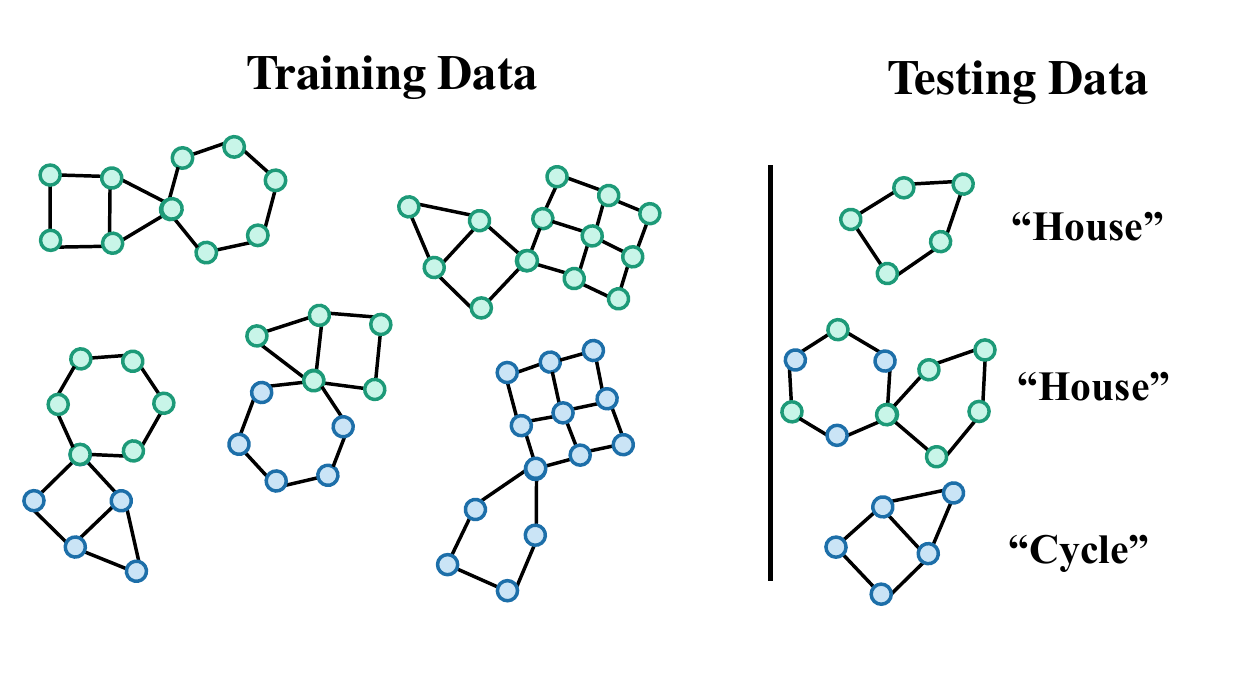}
		\label{fig:good_fail_cases}
	}
	\subfigure[Structure and attribute shifts.]{
		\includegraphics[width=0.28\textwidth]{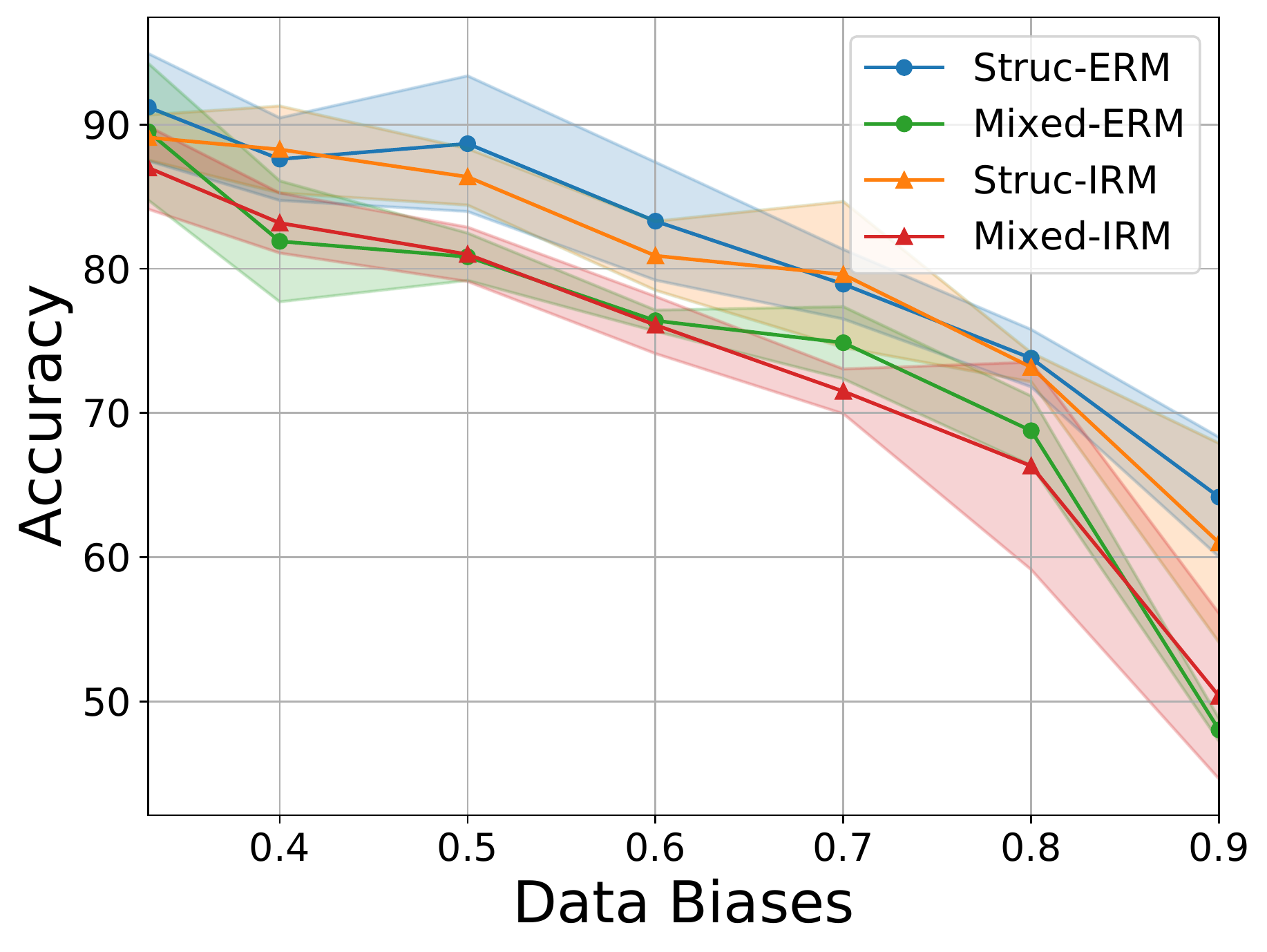}
		\label{fig:ood_failure_wosize}
	}
	\subfigure[Mixed with graph size shifts.]{
		\includegraphics[width=0.28\textwidth]{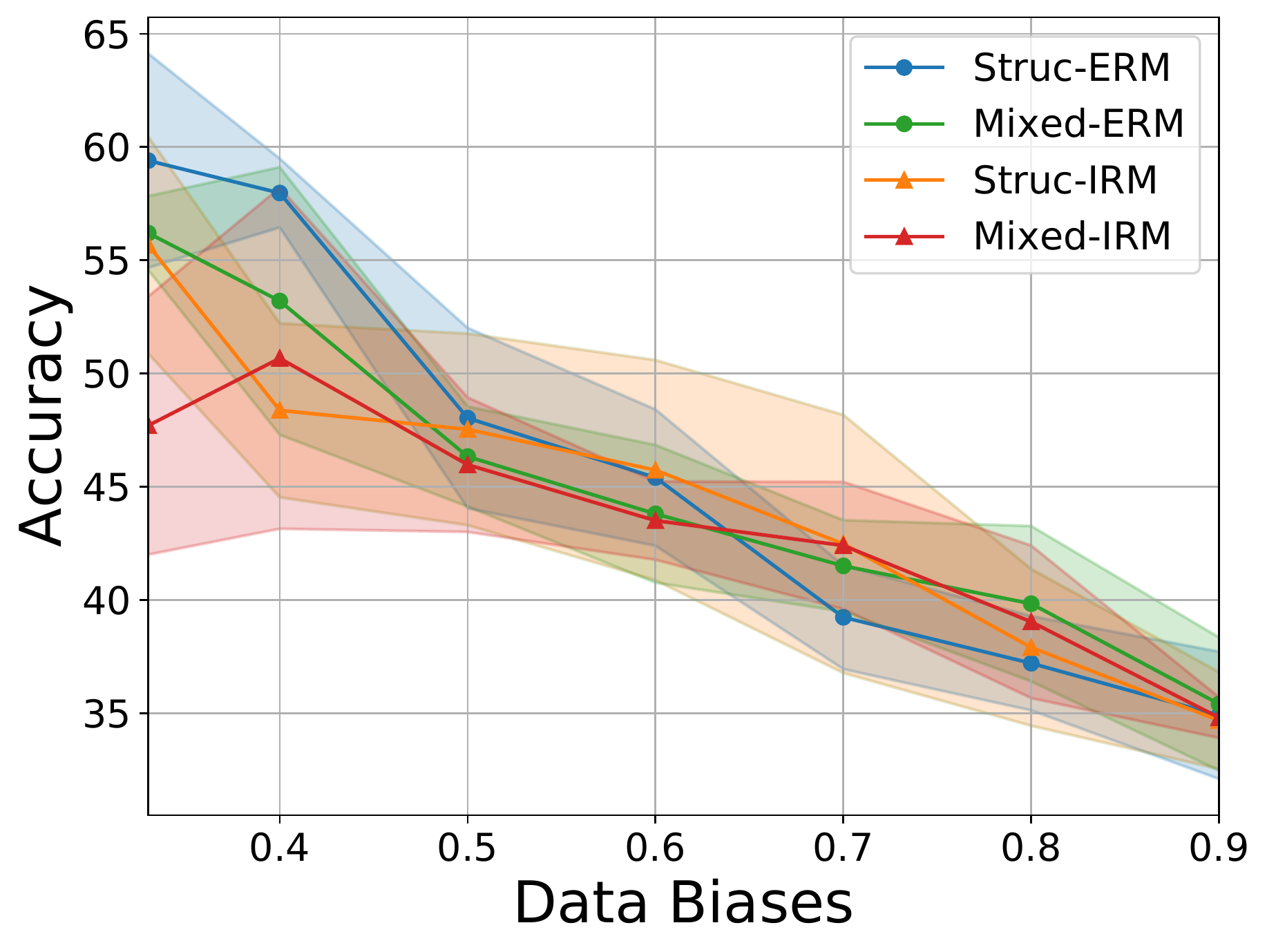}
		\label{fig:ood_failure_size}
	}
	\caption{Failures of OOD generalization on graphs:
		(a) GNNs are required to classify whether the graph contains a ``house'' or ``cycle'' motif,
		where the colors represent node features.
		However, distribution shifts in the training data exist at both structure-level (from left to right: ``house'' mostly co-occur with a hexagon),
		attribute-level (from upper to lower: nodes are mostly  colored green if the graph contains a ``house'', or  colored blue if the graph contains a ``cycle''),
		and graph sizes, making GNNs hard to capture the invariance. Consequently,
		\textit{ERM can fail} for leveraging the shortcuts and predicting graphs that have a hexagon or have nodes mostly colored green as ``house''.
		\textit{IRM can fail} as the test data are not sufficiently supported by the training data.
		(b) GCNs optimized with neither ERM nor IRM can generalize to OOD graphs under
		structure-level shifts (Struc-) or mixed with feature shifts (Mixed-).
		(c) When more complex shifts presented, GNNs can fail more seriously.}
	\label{fig:good_fail}
	\vspace{-0.15in}
\end{figure}
Based on the synthetic BAMotif graph classification task~\citep{pge,dir} shown in Fig.~\ref{fig:good_fail},
we theoretically and empirically analyze whether existing methods
could produce an invariant GNN, through the investigation of the following aspects.
More details and results are given in Appendix~\ref{sec:good_fail_setting_appdx}.

\textbf{Can GNNs trained with ERM generalize to OOD graphs?}
As shown in Fig.~\ref{fig:good_fail}, we find that GNNs trained with the
standard empirical risk minimization (ERM) algorithm~\citep{erm}
are not able to generalize to OOD graphs.
As the data biases grows stronger,
the performances of GNNs drop dramatically.
Furthermore, when graph size shifts are mixed in the data,
GNNs can have larger variance at low data biases, indicating
the instability of learning the desired relationships for the task.
The reason is that ERM tends to overfit to the shortcuts
or spurious correlations presented in specific substructures or attributes
in the graphs~\citep{shortcut_dl}.
This phenomenon has also been shown to exist
in GNNs equipped with more sophisticated architectures such as attention mechanisms~\citep{gat},
under graph size shifts~\citep{understand_att}.

\textbf{Can OOD objectives improve OOD generalization of GNNs?}
Meanwhile, as shown in Fig.~\ref{fig:good_fail}, OOD objectives primarily developed on Euclidean data such as
invariant risk minimization (IRM)~\citep{irmv1}
also cannot alleviate the problem. On the contrary, IRM can
fail catastrophically at non-linear regime if without sufficient support overlap
for the test environments,
i.e., $\cup_{e\in\envtest}\text{supp}(\sP^e)\not\subseteq\cup_{e\in\envtrain}\text{supp}(\sP^e)$~\citep{risk_irm}.
In addition to IRM, the failure would also happen for alternative objectives~\citep{v-rex,gen_inv_conf,ib-irm} as proved by~\citet{risk_irm}.
Besides, different distribution shifts on graphs can be nested with each other where
each one can have distinct spurious correlation type, e.g., FIIF or PIIF.
OOD objectives will also fail seriously if either of the correlation types is not supported~\citep{aubin2021linear,failure_modes}.
Moreover, non-trivial environment partitions or labels
are required for performance guarantee of these OOD objectives~\citep{irmv1,v-rex,groupdro,ib-irm}.
However, collecting meaningful environment partitions of graphs
requires expert knowledge about graph data.
Thus, the environment labels can be expensive to obtain
and are usually not available~\citep{tudataset,benchmark_gnn,ogb}.
Alternative options such as random partitions
tend not to alleviate the issue~\citep{env_inference,zin}, as it can be trivially deemed as mini-batching.

\textbf{Challenges of OOD generalization on graphs.}
The aforementioned failure analysis reveals that existing methods or objectives fail to elicit an invariant GNN primarily due to the following two challenges:
a) Distribution shifts on graphs are more complicated where different types of spurious correlations can be entangled via different graph properties;
b) Environment labels are usually not available due to the abstraction of graphs.
Despite these challenges, we are still highly motivated to address the following research  question:
\emph{Would it be possible to learn an invariant GNN that is generalizable under various distribution shifts by lifting the invariance principle to the graph data?}

\section{Invariance Principle for OOD Generalization on Graphs}
\label{sec:good_framework}
We provide affirmative answers to the previous question
by proposing a new framework, $\ginv$: $\fullginv$. Specifically,
built upon the SCMs in Sec.~\ref{sec:data_gen},
we generalize the invariance principle to graphs
and instantiate the principle with theoretical guarantees.

\subsection{Invariance for OOD Generalization on Graphs}
Towards extending the invariance principle to graphs under SCMs in Sec.~\ref{sec:data_gen},
we need to identify a set of variables that have stable causal relationship with $Y$
under both FIIF and PIIF (Assumption~\ref{assump:scm_fiif},~\ref{assump:scm_piif}).
According to the ICM assumption~\citep{elements_ci}, the labeling process $C\rightarrow Y$ is not informed nor influenced by other processes, implying that the conditional distribution $P(Y|C)$ remains invariant to the interventions on the environment latent variable $E$~\citep{causality}.
Consequently, for a GNN with a permutation invariant encoder $h:\gG\rightarrow \R^h$
and a downstream classifier $\rho:\R^h\rightarrow\gY$,
if $h$ can recover the information of $C$ from $G$ in the learned graph representations,
then the learning of $\rho$ resembles traditional ERM~\citep{erm}
and can achieve the desired min-max optimality required by an invariant GNN (Def.~\ref{def:inv_gnn}).
However, recovering $C$ from $G$ is particularly difficult,
since the generation of $G$ from $C$ involves two causal mechanisms $f_\gen^{G_c}$ and $f_\gen^G$
in Assumption~\ref{assump:graph_gen}.
The unavailability of $E$ further adds up the difficulty of enforcing the
independence between the learned representations and $E$.

\subsection{Invariant Graph Learning Framework}
\label{sec:good_framework_prac}

\textbf{Causal algorithmic alignment.}
To enable a GNN to learn to extract the information about $C$ from $G$, we propose the $\ginv$ framework that \mbox{\textit{explicitly aligns with}} the
two causal mechanisms $f_\gen^{G_c}$ and $f_\gen^G$ in Assumption~\ref{assump:graph_gen}.
The idea of alignment in $\ginv$ is motivated by the algorithmic reasoning results
that a neural network can learn a reasoning process better if its computation structure aligns with the process better~\citep{what_nn_reason,nn_extrapo}.
Specifically, we realize the alignment by decomposing a GNN into two sub-components\footnote{The encoder of the GNN in $\ginv$ can be regarded as the composition of $g$ and the graph encoder in $f_c$.}:
a) a featurizer GNN $g:\gG\rightarrow\gG_c$ aiming to identify the desired $G_c$;
b) a classifier GNN $f_c:\gG_c\rightarrow\gY$ that predicts the label $Y$ based on the estimated $G_c$,
where $\gG_c$ refers to the space of subgraphs of $G$.
Formally, the learning objectives of $f_c$ and $g$ can be formulated as:
\begin{equation}
	\label{eq:good_opt}
	\text{$\max$}_{f_c, \; g} \ I(\widehat{G}_{c};Y), \ \text{s.t.}\ \widehat{G}_{c}\ind E,\ \widehat{G}_{c}=g(G),
\end{equation}
\revision{where maximizing $I(\widehat{G}_c;Y)$ is equivalent
	to minimizing a variational upper bound of $R(f_c(\widehat{G}_{c}))$~\citep{vib,gib}}
that takes $\widehat{G}_{c}$ as inputs to predict label $Y$ for $G$ through $f_c$ and $g$,
and $\widehat{G}_{c}$ is the estimated subgraph containing the information about $C$ and hence needs to be independent of $E$.
Moreover, the extracted $G_c$ can either shares the same graph space with input $G$ or
has its own space with latent node and edge features, depending on the specific graph generation process.
In practice, architectures from the literature of interpretable GNNs are compatible with $\ginv$~\citep{xgnn_tax}, hence can serve as practical choices for the implementation of $\ginv$.
More details are given in Appendix~\ref{sec:good_impl_appdx}.

Although we can technically align with the two causal mechanisms with $g$ and $f_c$,
trivially optimizing this architecture cannot satisfy $\widehat{G}_{c}\ind E$.
Formally, merely maximizing $I(\widehat{G}_{c};Y)$
may include a subgraph from $G_s$ in $\widehat{G}_{c}$ since $G_s$ also shares certain mutual information with $Y$.
Moreover, the unavailability of $E$ prevents the direct usage of $E$ in enforcing the independence
that is often adopted by previous methods~\citep{irmv1,v-rex,groupdro,DANN,CORAL}, making the identification of $G_c$ more challenging.

\textbf{Optimization objective.}
To mitigate this issue,
we need to find and translate other properties of $G_c$
into some differentiable and equivalent objectives to satisfy the independence constraint $\widehat{G}_{c}\ind E$.
\revision{
	{\it\noindent The goal of the desired objective. }
	We begin by considering a simplistic setting where all the invariant subgraphs $G_c$ have the
	same size $s_c$,} i.e., $|G_c|=s_c$\footnote{Throughout the paper, we use generalized set operators for the ease of understanding. They can have multiple implementations in terms of nodes, edges or attributes.}.
When maximizing $I(\widehat{G}_{c};Y)$ in Eq.~\ref{eq:good_opt},
both FIIF and PIIF can introduce part of $G_s$ into $\widehat{G}_{c}$.
In FIIF (Fig.~\ref{fig:scm_fiif}), as $G_c$ already contains the maximal possible information in $G$ about $Y$,
$G_c$ is a solution to $\max I(\widehat{G}_{c};Y)$. However, some subgraph of $G_c$ can be replaced by some subgraph of $G_s$ that is equally informative about $Y$.
In PIIF (Fig.~\ref{fig:scm_piif}), there also exists some subgraph of $G_s$ that
contains additional information about $Y$ than $G_c$, hence $\widehat{G}_{c}$ is more likely
to involve some subgraph of $G_s$.
Thus, the new objective needs to eliminate the auxiliary subgraphs of $\widehat{G}_{c}$ from $G_s$ such that the estimated $\widehat{G}_{c}$ can only contain $G_c$.

\revision{{\it\noindent An important property of $G_c$. }}
Under both FIIF and PIIF SCMs (Fig.~\ref{fig:scm}),
for $G_c^{e_1}$, $G_c^{e_2}$ that relate to the same causal factor $c$
under two environments $e_1$ and $e_2$,
the desired $\widehat{G}_c^{e_1}, \widehat{G}_c^{e_2}$
in $e_1$ and $e_2$ tend to have high mutual information, i.e.,
$(G_c^{e_1},G_c^{e_2})\in \argmax I(\widehat{G}_c^{e_1}; \widehat{G}_c^{e_2})$.
While for $G_c^{e_1}$ and another $G_{c'}^{e_1}$ corresponding to a different $c'\neq c$,
under the same environment $e_1$,
including any subgraph from $G_s^{e_1}$ in $\widehat{G}_c^{e_1},\widehat{G}_{c'}^{e_1}$
will enlarge their mutual information, or in other words,
$(G_c^{e_1},G_{c'}^{e_1})\in \argmin I(\widehat{G}_c^{e_1}; \widehat{G}_{c'}^{e_1})$.
Thus, we can derive an important property of $G_c$, that is, $\forall e_1,e_2\in\envall$,
\begin{equation}
	\label{eq:good_cond_1}
	G_c^{e_1}\in \text{$\argmax$}_{\widehat{G}_c^{e_1}}\  I(\widehat{G}_c^{e_1};\widehat{G}_c^{e_2}|C=c)-I(\widehat{G}_c^{e_1};\widehat{G}_{c'}^{e_2}|C=c',c'\neq c),
\end{equation}
\revision{where $\widehat{G}_c^{e_1}$ and $\widehat{G}_c^{e_2}$ are the estimated
invariant subgraphs corresponding to the same causal factor $c$ under
environment $e_1$ and $e_2$, respectively,
while $\widehat{G}_{c'}^{e_2}$ corresponds to a different causal factor $c'$.}

\revision{\it\noindent Deriving $\ginv$v1 based on the identified property of $G_c$. }
In practice, $C$ is not given.
Nevertheless, since $C$ and $Y$ shares a stable causal relationship in both FIIF and PIIF SCMs,
$Y$ can serve as a proxy of $C$ in Eq.~\ref{eq:good_cond_1}.
Moreover, as Eq.~\ref{eq:good_cond_1} holds for any $\forall e_1,e_2\in\envall$,
the environment superscripts can be eliminated without affecting Eq.~\ref{eq:good_cond_1}.
Furthermore,
when both $I(\widehat{G}^{e_1}_{c};\widehat{G}^{e_2}_c|C=c)$ and $I(\widehat{G}_{c};Y)$ are maximized,
$I(\widehat{G}^{e_1}_{c};\widehat{G}^{e_1}_{c'}|C=c',c'\neq c)$ is automatically minimized,
otherwise all classes will collapse to trivial solutions which is contradictory given
$I(\widehat{G}_{c};Y)$ being maximized.
Therefore, we can derive an alternative objective to Eq.~\ref{eq:good_opt}
by leveraging Eq.~\ref{eq:good_cond_1}
to replace the independence condition:
{\begin{snugshade}\vspace{-0.2in}
	\begin{equation}
		\label{eq:good_opt_contrast_v1}
		(\text{CIGAv1})\qquad\qquad\qquad \ %
		\max_{f_c, g} \ I(\widehat{G}_{c};Y), \ \text{s.t.}\
		\widehat{G}_{c}\in\argmax_{\widehat{G}_{c}=g(G), |\widehat{G}_{c}|\leq s_c} I(\widehat{G}_{c};\widetilde{G}_c|Y),
		\qquad%
	\end{equation}
\end{snugshade}\vspace{-0.1in}}
where $\widetilde{G}_c=g(\widetilde{G})$ and $\widetilde{G}\sim \sP(G|Y)$,
i.e., $\widetilde{G}$ is sampled from training graphs that share the same label $Y$ as $G$.
In Theorem~\ref{thm:good_inv_gnn_new}, we show how Eq.~\ref{eq:good_opt_contrast_v1}
is equivalent to Eq.~\ref{eq:good_opt}.
Nevertheless, Eq.~\ref{eq:good_opt_contrast_v1} requires a strong
assumption on the size of $G_c$.
However, the size of $G_c$ is usually unknown or changes for different $C$s.
In this circumstance, maximizing Eq.~\ref{eq:good_cond_1} without additional constraints will lead to the presence of part of $G_s$ in $\widehat{G}_{c}$.
For instance, $\widehat{G}_{c}=G$ is a trivial solution to Eq.~\ref{eq:good_opt_contrast_v1}
when $s_c = \infty$.

\revision{\it\noindent Deriving $\ginv$v2 by resolving size constraint on $G_c$ in $\ginv$v1. }
To this end, we further resort to the properties of $G_s$.
In both FIIF and PIIF SCMs (Fig.~\ref{fig:scm}), $G_s$ and $G_c$ can share certain
overlapped information about $Y$. When maximizing $I(\widehat{G}_{c};\widetilde{G}_c|Y)$ and $I(\widehat{G}_{c};Y)$,
the appearance of partial $G_s$ in $\widehat{G}_{c}$ will not affect the optimality.
However, it can reduce the mutual information between the
left part $\widehat{G}_s=G-\widehat{G}_{c}$ and $Y$, i.e., $I(\widehat{G}_s;Y)$.
Therefore, by maximizing $I(\widehat{G}_s;Y)$, we can reduce including part of $G_s$ into $\widehat{G}_{c}$.
Meanwhile, to avoid trivial solution that $G_c\subseteq\widehat{G}_s$ during maximizing $I(\widehat{G}_s;Y)$,
we can leverage the better clustering property of $G_c$ implied by Assumption~\ref{assump:latent_sep}
to derive the constraint $I(\widehat{G}_s;Y)\leq I(\widehat{G}_{c};Y)$.
Thus, we can obtain a new objective $\ginv$v2
as follows:
{\begin{snugshade}
	\vspace{-0.2in}
	\begin{equation}
		\label{eq:good_opt_contrast_v3}
		\begin{aligned}
			\text{$\max$}_{f_c, g} \  I(\widehat{G}_{c};Y)+I(\widehat{G}_s;Y), \ \text{s.t.}\
			 & \widehat{G}_{c}\in
			\text{$\argmax$}_{\widehat{G}_{c}=g(G)} I(\widehat{G}_{c};\widetilde{G}_c|Y),\
			\\
			(\text{CIGAv2})\qquad\qquad\qquad\qquad\qquad\qquad\ %
			 & I(\widehat{G}_s;Y)\leq I(\widehat{G}_{c};Y),\ \widehat{G}_s=G-g(G),
			\qquad\ \ %
		\end{aligned}
	\end{equation}
\end{snugshade}\vspace{-0.1in}}
where $\widehat{G}_{c}=g(G),\widetilde{G}_c=g(\widetilde{G})$ and $\widetilde{G}\sim \sP(G|Y)$,
i.e., $\widetilde{G}$ is sampled from training graphs that share the same label $Y$ as $G$.
We also prove the equivalence between Eq.~\ref{eq:good_opt_contrast_v3} and Eq.~\ref{eq:good_opt}
in Theorem~\ref{thm:good_inv_gnn_new}.

\subsection{Theoretical Analysis and Practical Discussions}
\label{sec:good_theory}
\begin{theorem}[$\ginv$ Induces Invariant GNNs]
	\label{thm:good_inv_gnn_new}
	Given a set of graph datasets $\{\dataset^e\}_e$ %
	and environments $\envall$ that follow the same graph generation process in Sec.~\ref{sec:data_gen},
	assuming that \textup{(a)} $f_\gen^G$ and $f_\gen^{G_c}$ in Assumption~\ref{assump:graph_gen} are invertible,
	\textup{(b)} samples from each training environment are equally distributed,
	i.e.,$|\dataset_{\hat{e}}|=|\dataset_{\tilde{e}}|,\ \forall \hat{e},\tilde{e}\in\envtrain$, then:
	\vspace{-0.05in}
	\begin{enumerate}[label=(\roman*).,wide, labelwidth=!]
		\item If $\forall G_c, |G_c|=s_c$,
		      then each solution to Eq.~\ref{eq:good_opt_contrast_v1},
		      elicits an invariant GNN (Def.~\ref{def:inv_gnn}).
		      \vspace{-0.05in}
		\item Each solution to Eq.~\ref{eq:good_opt_contrast_v3},
		      elicits an invariant GNN (Def.~\ref{def:inv_gnn}).
	\end{enumerate}
\end{theorem}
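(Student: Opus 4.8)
The plan is to reduce both parts to one structural fact and then analyze the constraints. The structural fact is: \emph{a GNN whose featurizer outputs a subgraph $\widehat G_c$ that is information-equivalent to $G_c$ is automatically an invariant GNN in the sense of Definition~\ref{def:inv_gnn}.} To see this, note that by hypothesis (a) the maps $f_\gen^G,f_\gen^{G_c}$ are invertible, so the composite $G\mapsto G_c\mapsto C$ is well defined; hence $\widehat G_c\cong G_c$ carries exactly the information in $C$ and none about $S$, in particular $\widehat G_c\ind E$ (the constraint of Eq.~\ref{eq:good_opt}). By the ICM assumption the mechanism $C\to Y$ is autonomous, so $\sP(Y\mid C)=\sP(Y\mid G_c)$ is the same for every $e\in\envall$; a classifier $f_c$ maximizing $I(\widehat G_c;Y)$ on the pooled training data therefore learns this invariant conditional (by the remark after Eq.~\ref{eq:good_opt}, $\max I(\widehat G_c;Y)$ minimizes a variational upper bound on $R(f_c(\widehat G_c))$), and its Bayes predictor minimizes $R^e$ for every $e$ simultaneously, hence minimizes $\max_e R^e$. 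It thus suffices to prove that every feasible optimum of Eq.~\ref{eq:good_opt_contrast_v1} (resp. Eq.~\ref{eq:good_opt_contrast_v3}) has $\widehat G_c\cong G_c$.

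\textbf{Part (i): CIGAv1.} Under $|G_c|=s_c$, I would show the inner $\argmax$ constraint together with the tight size budget $|\widehat G_c|\le s_c$ pins $\widehat G_c$ to $G_c$. First, $G_c$ is feasible: it meets the size budget, and by the property derived in Eq.~\ref{eq:good_cond_1} it maximizes $I(\widehat G_c;\widetilde G_c\mid Y)$ --- the point being that, by invertibility and the SCMs, the conditional law given $Y$ of the $C$-part of $G$ is environment-invariant while that of any $S$-derived piece is not, so when $\widetilde G$ ranges uniformly over the same-label training graphs --- which weights training environments equally by hypothesis (b) --- it is precisely the $C$-carried signal that ``matches'' across the pair. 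Second, any $g$ attaining this maximum under the tight budget must output exactly $G_c$: there is no capacity to append an $S$-piece on top of the $C$-information the contrastive term already demands, and swapping a $C$-piece for an $S$-piece strictly lowers the contrastive value since the substituted piece no longer matches across environments. Hence $\widehat G_c\cong G_c$ and the reduction finishes (i).

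\textbf{Part (ii): CIGAv2.} Here the extra term $I(\widehat G_s;Y)$ and the constraint $I(\widehat G_s;Y)\le I(\widehat G_c;Y)$ play the role the size budget played above. The inner $\argmax$ constraint again forces the $C$-information into $\widehat G_c$ (as in (i)). Leaking an $S$-piece into $\widehat G_c$ deletes it from $\widehat G_s=G-g(G)$ and strictly decreases $I(\widehat G_s;Y)$, while it cannot raise $I(\widehat G_c;Y)$ above the value $G_c$ already attains (in FIIF $C$ is fully informative about $Y$; in PIIF the contrastive constraint forbids $\widehat G_c$ from doing so), so $I(\widehat G_c;Y)+I(\widehat G_s;Y)$ is not maximal unless $\widehat G_c$ contains no $S$-piece. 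Finally, $I(\widehat G_s;Y)\le I(\widehat G_c;Y)$ rules out the complementary degeneracy $\widehat G_c\subseteq G_s$: that would push all of $C$'s label information into $\widehat G_s$, forcing $I(\widehat G_s;Y)\ge I(G_c;Y)$ and hence $I(\widehat G_c;Y)=I(G_c;Y)$ with $\widehat G_c\subseteq G_s$, contradicting the contrastive constraint; and Assumption~\ref{assump:latent_sep} ($H(C\mid Y)\le H(S\mid Y)$) is exactly what makes $I(G_s;Y)\le I(G_c;Y)$, so the constraint is consistent with the ground-truth split. Thus $\widehat G_c\cong G_c$ and the reduction applies.

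\textbf{Main obstacle.} The technical heart, common to both parts, is justifying that the contrastive term is genuinely maximized at $\widehat G_c=G_c$ --- i.e. passing from the clean property of Eq.~\ref{eq:good_cond_1}, stated per causal factor $C$, to the surrogate that conditions only on the observable $Y$ with $\widetilde G$ an independent same-label sample. This requires exploiting the stable mechanism $C\to Y$, the invertibility of $f_\gen^G,f_\gen^{G_c}$ (so that $\widehat G_c$ is in measurable bijection with a function of $(C,S)$ and its contribution to the contrastive term splits into an environment-invariant $C$-channel and an environment-dependent $S$-channel), and hypothesis (b) so that the $S$-channel averages down against the $C$-channel; it must also exclude the degenerate readings of the contrastive term. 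Everything after that is bookkeeping over the two SCMs.
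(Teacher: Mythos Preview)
Your overall architecture matches the paper's: reduce to showing $\widehat G_c=G_c$ at the optimum (the paper states this reduction as Proposition~E.1), then analyze the constraints. Your Part~(i) sketch is the right shape and aligns with the paper's argument; what remains is exactly the entropy bookkeeping you flag as the ``main obstacle'' --- the paper carries this out by using hypothesis~(b) to rewrite $I(\widehat G_c;\widetilde G_c\mid Y)$ as $I(\widehat G_c,E=\hat e;\widetilde G_c,E=\tilde e\mid Y)$ for all pairs $\hat e,\tilde e$, then doing a case split (only one of $\widehat G_c,\widetilde G_c$ contains an $S$-piece, versus both) with chain-rule entropy decompositions to show $\Delta I<0$ strictly whenever a $C$-piece is swapped out for an $S$-piece under the size budget.

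Part~(ii) has a genuine gap in the PIIF case. You argue that appending an $S$-piece to $\widehat G_c$ ``cannot raise $I(\widehat G_c;Y)$ above the value $G_c$ already attains'' because ``in PIIF the contrastive constraint forbids $\widehat G_c$ from doing so.'' This is not correct, and it does not follow from your Part~(i) argument: in Part~(i) the tight size budget forces a \emph{swap} (drop some $G_c^l$, add some $G_s^p$), and it is that swap that strictly lowers the contrastive term. In Part~(ii) there is no size budget, so one can \emph{append} $G_s^p$ on top of the full $G_c$. The paper computes exactly this case and finds $\Delta I(\widehat G_c;\widetilde G_c\mid Y)=0$: the contrastive term is \emph{indifferent} to appending spurious pieces once all of $G_c$ is already present. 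Hence $G_c\cup G_s^p$ is also in the inner $\arg\max$, and in PIIF it has $I(G_c\cup G_s^p;Y)>I(G_c;Y)$ (since $Y\to S$ gives $G_s$ extra label information). The contrastive constraint therefore does not forbid what you claim it forbids.

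The paper's route out is different from yours: it does not try to cap $I(\widehat G_c;Y)$ at $I(G_c;Y)$, but instead argues directly that the \emph{sum} $I(\widehat G_c;Y)+I(\widehat G_s;Y)$ goes down when $G_s^p$ is moved from $\widehat G_s$ to $\widehat G_c$, via the chain-rule identity
\[
\Delta\bigl(I(\widehat G_c;Y)+I(\widehat G_s;Y)\bigr)=I(G_s^p;Y\mid G_c)-I(G_s^p;Y\mid G_s^l)\le 0,
\]
i.e., the gain on the $\widehat G_c$ side is outweighed by the loss on the $\widehat G_s$ side. Your argument needs to be rewritten along these lines for PIIF; the FIIF half of your claim is fine. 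Separately, the paper also first establishes $G_c\subseteq\widehat G_c$ (your ``forces the $C$-information into $\widehat G_c$'') by showing that adding missing $G_c^l$ strictly increases the contrastive term while leaving the sum unchanged --- you assert this but should note it relies on the Part~(i) $\Delta I<0$ computation applied in reverse.
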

\vspace{-0.05in}
We prove Theorem~\ref{thm:good_inv_gnn_new} (i) and (ii)
in Appendix~\ref{proof:good_inv_gnn_new_appdx},~\ref{proof:good_inv_gnn_v3_appdx}, respectively.

\textbf{Practical implementations of $\ginv$ objectives.}
After showing the power of $\ginv$,
we introduce the practical implementations of $\ginv$v1 and $\ginv$v2 objectives.
Specifically,  an exact estimate of the second term $I(\widehat{G}_{c};\widetilde{G}_c|Y)$ could be highly expensive~\citep{infoNCE,mine}.
However, contrastive learning with supervised sampling provides a practical solution for the approximation~\citep{sup_contrastive,contrast_loss1,contrast_loss2,infoNCE,mine}:
\vspace{-0.05in}
\begin{equation} \label{eq:good_opt_contrast}
	I(\widehat{G}_{c};\widetilde{G}_c|Y) \approx
	\mathbb{E}_{
	\substack{
	\{\widehat{G}_{c},\widetilde{G}_c\} \sim \sP_g(G|\gY=Y)\\\
	\{G^i_c\}_{i=1}^{M} \sim \sP_g(G|\gY \neq Y)
	}
	}
	\log\frac{e^{\phi(h_{\widehat{G}_{c}},h_{\widetilde{G}_c})}}
	{e^{\phi(h_{\widehat{G}_{c}},h_{\widetilde{G}_c})} +
		\sum_{i=1}^M e^{\phi(h_{\widehat{G}_{c}},h_{G^i_c})}},
\end{equation}
where positive samples $(\widehat{G}_{c},\widetilde{G}_c)$ are the extracted subgraphs of graphs that share the same label as $G$,
negative samples are those having different labels, $\sP_g(G|\gY=Y)$ is the push-forward distribution of $\sP(G|\gY=Y)$ by featurizer $g$,
$\sP(G|\gY=Y)$ refers to the distribution of $G$ given the label $Y$,
$\sP(G|\gY\neq Y)$ refers to the distribution of $G$ given the label that is different from $Y$,
$h_{\widehat{G}_{c}},h_{\widetilde{G}_c},h_{G^i_c}$ are the graph presentations of the estimated subgraphs,
and $\phi$ is the similarity metric for graph representations.
As $M\rightarrow \infty$, Eq.~\ref{eq:good_opt_contrast} approximates $I(\widehat{G}_{c};\widetilde{G}_c|Y)$,
which can be regarded as a
non-parameteric resubstitution entropy estimator via the von Mises-Fisher
kernel density~\citep{feat_dist_entropy,vMF_entropy,align_uniform}.
Thus, plugging it into Eq.~\ref{eq:good_opt_contrast_v1} and Eq.~\ref{eq:good_opt_contrast_v3}
can relieve the issue of approximating $I(\widehat{G}_{c};\widetilde{G}_c|Y)$ in practice.

For the implementation of $I(\widehat{G}_s;Y)$ and the constraint $I(\widehat{G}_s;Y)\leq I(\widehat{G}_{c};Y)$ in $\ginv$v2,
a practical choice is to follow the idea of hinge loss, $I(\widehat{G}_s;Y)\approx \frac{1}{N} R_{\widehat{G}_s}\cdot \mathbb{I}(R_{\widehat{G}_c}\leq R_{\widehat{G}_{s}})$,
where $N$ is the number of samples, $\mathbb{I}$ is an indicator function that outputs $1$ when the
inner condition is satisfied otherwise $0$, and
$R_{\widehat{G}_s}$ and $R_{\widehat{G}_{c}}$ are the empirical risk vector of the predictions
for each sample based on the corresponding $\widehat{G}_s$ and $\widehat{G}_{c}$.
More implementation details can be found in Appendix~\ref{sec:good_impl_appdx}.

\textbf{Discussions and implications of $\ginv$.}
Although using contrastive learning to improve OOD generalization is not new
in the literature~\citep{DouCKG19,causal_matching,cnc},
previous methods cannot yield OOD guarantees in graph circumstances due to  the highly non-linearity and the unavailability of domain labels $E$.
In particular, $\ginv$ can \textit{be reduced to directly applying contrastive learning}
when without the decomposition for causal algorithmic alignment.
However, in the experiments we found that merely using the contrastive objective, i.e., CNC~\citep{cnc},
yields unsatisfactory OOD generalization performance,
which further implies the necessity of the decomposition in $\ginv$.

Moreover, the architecture of $\ginv$ can have multiple other implementations
for both the featurizer and classifier, such as identifying $G_c$ at the latent space~\citep{causality4ml,towards_causality}.
Since we cannot enumerate every possible implementation, in this work
we choose interpretable GNN architectures as a prototype validation for $\ginv$ and leave more sophisticated architectures as future works.
In particular,
when optimized with ERM objective,
$\ginv$ can \textit{be reduced to interpretable GNNs}.
However, merely using interpretable GNNs such as ASAP~\citep{asap}, GIB~\citep{gib} or DIR~\citep{dir} cannot yield satisfactory OOD performance.
As shown in Table~\ref{tab:potential_alg}
and discussed in Appendix.~\ref{sec:discussion_ood_obj_appdx},
GIB can only work for FIIF,
while DIR \textit{cannot} yield OOD guarantees for neither FIIF and PIIF SCMs.
These results are also empirically validated in the experiments.
We provide more detailed discussions in Appendix~\ref{sec:discuss_future_appdx}.

\begin{table}[t]
	\vspace{-0.4in}
	\scriptsize
	\caption{OOD generalization performance on structure and mixed shifts for synthetic graphs.}
	\vspace{-0.1in}
	\label{tab:sythetic}
	\begin{sc}
		\begin{center}
			\begin{tabular}{l|rrr|rrr|r}
				\toprule
				                                            &
				\multicolumn{3}{c}{SPMotif-Struc$^\dagger$}
				                                            &
				\multicolumn{3}{c}{SPMotif-Mixed$^\dagger$} &                                                                                        \\

				                                            &
				\multicolumn{1}{c}{bias=$0.33$}
				                                            &
				\multicolumn{1}{c}{bias=$0.60$}
				                                            &
				\multicolumn{1}{c}{bias=$0.90$}
				                                            &
				\multicolumn{1}{c}{bias=$0.33$}
				                                            &
				\multicolumn{1}{c}{bias=$0.60$}
				                                            &
				\multicolumn{1}{c}{bias=$0.90$}             & \multicolumn{1}{c}{Avg}
				\\

				\cmidrule(lr){2-4}\cmidrule(lr){5-7}\cmidrule(lr){8-8}
				ERM                                         & 59.49 (3.50)            & 55.48 (4.84)
				                                            & 49.64 (4.63)
				                                            & 58.18 (4.30)            & 49.29 (8.17)
				                                            & 41.36 (3.29)            & 52.24                                                        \\
				ASAP                                        & 64.87 (13.8)            & 64.85 (10.6)
				                                            & \textbf{57.29 (14.5)}
				                                            & 66.88 (15.0)            & 59.78 (6.78)
				                                            & \textbf{50.45 (4.90)}   & 60.69                                                        \\

				DIR                                         & 58.73 (11.9)            & 48.72 (14.8)
				                                            & 41.90 (9.39)
				                                            & 67.28 (4.06)            & 51.66 (14.1)
				                                            & 38.58 (5.88)            & 51.14                                                        \\

				\hline
				\rule{0pt}{8pt}IRM                          & 57.15 (3.98)            & 61.74 (1.32)
				                                            & 45.68 (4.88)
				                                            & 58.20 (1.97)            & 49.29 (3.67)
				                                            & 40.73 (1.93)            & 52.13                                                        \\
				V-Rex                                       & 54.64 (3.05)            & 53.60 (3.74)
				                                            & 48.86 (9.69)
				                                            & 57.82 (5.93)            & 48.25 (2.79)
				                                            & 43.27 (1.32)            & 51.07                                                        \\
				EIIL                                        & 56.48 (2.56)            & 60.07 (4.47)
				                                            & 55.79 (6.54)
				                                            & 53.91 (3.15)            & 48.41 (5.53)
				                                            & 41.75 (4.97)            & 52.73                                                        \\
				IB-IRM                                      & 58.30 (6.37)            & 54.37 (7.35)
				                                            & 45.14 (4.07)
				                                            & 57.70 (2.11)            & 50.83 (1.51)
				                                            & 40.27 (3.68)            & 51.10                                                        \\

				CNC                                         & 70.44 (2.55)            & \textbf{66.79 (9.42)}
				                                            & 50.25 (10.7)
				                                            & 65.75 (4.35)            & 59.27 (5.29)
				                                            & 41.58 (1.90)            & 59.01                                                        \\

				\hline
				\rule{0pt}{8pt}\textbf{$\ginv$v1}           & \textbf{71.07 (3.60)}   & 63.23 (9.61)
				                                            & 51.78 (7.29)
				                                            & \textbf{74.35 (1.85)}   & \textbf{64.54 (8.19)}
				                                            & 49.01 (9.92)            & \textbf{62.33}                                               \\
				\textbf{$\ginv$v2}                          & \textbf{77.33 (9.13)}   & \textbf{69.29 (3.06)}
				                                            & \textbf{63.41 (7.38)}
				                                            & \textbf{72.42 (4.80)}   & \textbf{70.83 (7.54)}
				                                            & \textbf{54.25 (5.38)}   & \textbf{67.92}                                               \\
				\hline
				\rule{0pt}{8pt}\revision{Oracle (IID)}      &                         & \revision{88.70 (0.17)} &  &  & \revision{88.73 (0.25)} &  & \\
				\bottomrule
				\multicolumn{8}{l}{\rule{0pt}{8pt}$^\dagger$\text{\normalfont \small Higher accuracy and lower variance indicate better OOD generalization ability.}  }
			\end{tabular}
		\end{center}
	\end{sc}
	\vspace{-0.25in}
\end{table}

\section{Empirical Studies}
\label{sec:exp}

We conduct extensive experiments with $16$ datasets
to verify the effectiveness of $\ginv$.

\textbf{Datasets.}
We use the SPMotif datasets from DIR~\citep{dir} where
artificial structural shifts and graph size shifts are nested (SPMotif-Struc).
Besides, we  construct a harder version mixed with attribute shifts (SPMotif-Mixed).
To examine $\ginv$ in real-world scenarios with more complicated relationships and distribution shifts,
we also use DrugOOD~\citep{drugood} from AI-aided Drug Discovery with Assay, Scaffold, and Size splits,
convert the ColoredMNIST from IRM~\citep{irmv1} using the algorithm from~\citet{understand_att} to inject attribute shifts,
and split Graph-SST~\citep{xgnn_tax} to inject degree biases.
To compare with previous specialized OOD methods for graph size shifts~\citep{size_gen1,size_gen2},
we use the datasets in~\citet{size_gen2} that are converted from TU benchmarks~\citep{tudataset}.
More details can be found in Appendix~\ref{sec:exp_data_appdx}.

\textbf{Baselines and our methods.} Besides the ERM, we also compare with SOTA interpretable GNNs,
GIB~\citep{gib}, ASAP Pooling~\citep{asap}, and DIR~\citep{dir}, to validate the effectiveness of the optimization objective in $\ginv$.
We use the same selection ratio (i.e., $s_c$) for all models.
Moreover, to validate the effectiveness of the decomposition in $\ginv$,
we compare $\ginv$ with SOTA OOD objectives including IRM~\citep{irmv1},
v-Rex~\citep{v-rex} and IB-IRM~\citep{ib-irm}, for which we apply random environment partitions following~\citep{env_inference}.
We also compare $\ginv$ with
EIIL~\citep{env_inference} and CNC~\citep{cnc} that do not require environment labels,
where CNC~\citep{cnc} has a more sophisticated contrastive sampling strategy for combating subpopulation shifts.
More implementation and comparison details are deferred to Appendix~\ref{sec:exp_impl_appdx}.

\textbf{Evaluation.} We report the classification accuracy for all datasets,
except for DrugOOD datasets where we use ROC-AUC following~\citep{drugood}, and
for TU datasets where we use Matthews correlation coefficient following~\citep{size_gen2}.
We repeat the evaluation multiple times, select models based on the validation performances,
and report the mean and standard deviation of the corresponding metric. \revision{For each dataset, we also report the ``Oracle'' performances that run ERM on the randomly shuffled data.}

\textbf{OOD generalization performance on structure and mixed shifts.}
In Table~\ref{tab:sythetic}, we report the test accuracy of each method, where we omit GIB due to its poor convergence.
Different biases indicate different strengths of the distribution shifts.
Although the training accuracy of most methods converges to more than $99\%$,
the test accuracy decreases dramatically as the bias increases and as more distribution shifts are mixed,
which concurs with our discussions in Sec.~\ref{sec:limitation_prev} and Appendix~\ref{sec:good_fail_setting_appdx}.
Due to the simplicity of the task as well as the relatively high support overlap between training and test distributions,
interpretable GNNs and OOD objectives can improve certain OOD performance, while they can have \emph{high variance} since they donot have OOD generalization guarantees.
In contrast, $\ginv$v1 and $\ginv$v2 outperform all of the baselines by a significant margin up to $10\%$ with \emph{lower variance},
which demonstrates the effectiveness and excellent OOD generalization ability of $\ginv$.

\begin{table}
	\vspace{-0.35in}
	\scriptsize\center
	\caption{OOD generalization performance on complex distribution shifts for real-world graphs.}
	\vspace{-0.1in}
	\label{table:other_graph}
	\sc
	\resizebox{\columnwidth}{!}{
		\begin{sc}
			\begin{tabular}{lrrrrrrc}
				\toprule
				Datasets                               & \multicolumn{1}{c}{Drug-Assay} & \multicolumn{1}{c}{Drug-Sca} & \multicolumn{1}{c}{Drug-Size} & \multicolumn{1}{c}{CMNIST-sp} & \multicolumn{1}{c}{Graph-SST5} & \multicolumn{1}{c}{Twitter} & \multicolumn{1}{c}{Avg (Rank)$^\dagger$} \\
				\midrule
				ERM                                    & 71.79 (0.27)                   & 68.85 (0.62)                 & 66.70 (1.08)                  & 13.96 (5.48)                  & 43.89 (1.73)                   & 60.81 (2.05)                & 54.33 (6.00)                             \\
				ASAP                                   & 70.51 (1.93)                   & 66.19 (0.94)                 & 64.12 (0.67)                  & 10.23 (0.51)                  & 44.16 (1.36)                   & 60.68 (2.10)                & 52.65 (8.33)                             \\
				GIB                                    & 63.01 (1.16)                   & 62.01 (1.41)                 & 55.50 (1.42)                  & 15.40 (3.91)                  & 38.64 (4.52)                   & 48.08 (2.27)                & 47.11 (10.0)                             \\
				DIR                                    & 68.25 (1.40)                   & 63.91 (1.36)                 & 60.40 (1.42)                  & 15.50 (8.65)                  & 41.12 (1.96)                   & 59.85 (2.98)                & 51.51 (9.33)                             \\\hline
				\rule{0pt}{8pt}IRM                     & 72.12 (0.49)                   & 68.69 (0.65)                 & 66.54 (0.42)                  & 31.58 (9.52)                  & 43.69 (1.26)                   & 63.50 (1.23)                & 57.69 (4.50)                             \\
				V-Rex                                  & 72.05 (1.25)                   & 68.92 (0.98)                 & 66.33 (0.74)                  & 10.29 (0.46)                  & 43.28 (0.52)                   & 63.21 (1.57)                & 54.01 (6.17)                             \\
				EIIL                                   & 72.60 (0.47)                   & 68.45 (0.53)                 & 66.38 (0.66)                  & 30.04 (10.9)                  & 42.98 (1.03)                   & 62.76 (1.72)                & 57.20 (5.33)                             \\
				IB-IRM                                 & 72.50 (0.49)                   & 68.50 (0.40)                 & 66.64 (0.28)                  & \textbf{39.86 (10.5)}         & 40.85 (2.08)                   & 61.26 (1.20)                & 58.27 (5.33)                             \\
				CNC                                    & 72.40 (0.46)                   & 67.24 (0.90)                 & 65.79 (0.80)                  & 12.21 (3.85)                  & 42.78 (1.53)                   & 61.03 (2.49)                & 53.56 (7.50)                             \\
				\hline
				\rule{0pt}{8pt}\textbf{$\ginv$v1}      & \textbf{72.71 (0.52)}          & \textbf{69.04 (0.86)}        & \textbf{67.24 (0.88)}         & 19.77 (17.1)                  & \textbf{44.71 (1.14)}          & \textbf{63.66 (0.84)}       & \textbf{56.19 (2.50)}                    \\
				\textbf{$\ginv$v2}                     & \textbf{73.17 (0.39)}          & \textbf{69.70 (0.27)}        & \textbf{67.78 (0.76)}         & \textbf{44.91 (4.31)}         & \textbf{45.25 (1.27)}          & \textbf{64.45 (1.99)}       & \textbf{60.88 (1.00)}                    \\
				\hline
				\rule{0pt}{8pt}\revision{Oracle (IID)} & \revision{85.56 (1.44)}        & \revision{84.71 (1.60)}
				                                       & \revision{85.83 (1.31)}
				                                       & \revision{62.13 (0.43)}        & \revision{48.18 (1.00)}
				                                       & \revision{64.21 (1.77)}        &                                                                                                                                                                                                        \\
				\bottomrule
				\multicolumn{8}{l}{\rule{0pt}{8pt}$^\dagger$\text{\normalfont \small Averaged rank is also reported in the blankets because of dataset heterogeneity. Lower rank is better.}  }
			\end{tabular}
		\end{sc}
	}
	\vspace{-0.2in}
\end{table}

\begin{wraptable}{r}{0.65\textwidth}
	\vspace{-0.3in}
	\scriptsize
	\caption{OOD generalization performance on graph size shifts for real-world graphs in terms of Matthews correlation coefficient.}
	\vspace{0.05in}
	\label{table:graph_size}
	\resizebox{0.65\columnwidth}{!}{
		\begin{sc}
			\begin{tabular}{lrrrrr}
				\toprule
				Datasets                               & \multicolumn{1}{c}{NCI1} & \multicolumn{1}{c}{NCI109} & \multicolumn{1}{c}{PROTEINS} & \multicolumn{1}{c}{DD} & \multicolumn{1}{c}{Avg} \\
				\midrule
				ERM                                    & 0.15 (0.05)              & 0.16 (0.02)                & 0.22 (0.09)                  & 0.27 (0.09)            & 0.20                    \\
				ASAP                                   & 0.16 (0.10)              & 0.15 (0.07)                & 0.22 (0.16)                  & 0.21 (0.08)            & 0.19                    \\
				GIB                                    & 0.13 (0.10)              & 0.16 (0.02)                & 0.19 (0.08)                  & 0.01 (0.18)            & 0.12                    \\
				DIR                                    & 0.21 (0.06)              & 0.13 (0.05)                & 0.25 (0.14)                  & 0.20 (0.10)            & 0.20                    \\\hline
				\rule{0pt}{8pt}IRM                     & 0.17 (0.02)              & 0.14 (0.01)                & 0.21 (0.09)                  & 0.22 (0.08)            & 0.19                    \\
				V-Rex                                  & 0.15 (0.04)              & 0.15 (0.04)                & 0.22 (0.06)                  & 0.21 (0.07)            & 0.18                    \\
				EIIL                                   & 0.14 (0.03)              & 0.16 (0.02)                & 0.20 (0.05)                  & 0.23 (0.10)            & 0.19                    \\
				IB-IRM                                 & 0.12 (0.04)              & 0.15 (0.06)                & 0.21 (0.06)                  & 0.15 (0.13)            & 0.16                    \\
				CNC                                    & 0.16 (0.04)              & 0.16 (0.04)                & 0.19 (0.08)                  & 0.27 (0.13)            & 0.20                    \\
				\hline
				\rule{0pt}{8pt}WL kernel               & \textbf{0.39 (0.00)}     & 0.21 (0.00)                & 0.00 (0.00)                  & 0.00 (0.00)            & 0.15                    \\
				GC kernel                              & 0.02 (0.00)              & 0.00 (0.00)                & 0.29 (0.00)                  & 0.00 (0.00)            & 0.08                    \\
				$\Gamma_\text{1-hot}$                  & 0.17 (0.08)              & \textbf{0.25 (0.06)}       & 0.12 (0.09)                  & 0.23 (0.08)            & 0.19                    \\
				$\Gamma_\text{GIN}$                    & 0.24 (0.04)              & 0.18 (0.04)                & 0.29 (0.11)                  & \textbf{0.28 (0.06)}   & 0.25                    \\
				$\Gamma_\text{RPGIN}$                  & 0.26 (0.05)              & 0.20 (0.04)                & 0.25 (0.12)                  & 0.20 (0.05)            & 0.23                    \\
				\hline
				\rule{0pt}{8pt}\textbf{$\ginv$v1}      & 0.22 (0.07)              & \textbf{0.23 (0.09)}       & \textbf{0.40 (0.06)}         & \textbf{0.29 (0.08)}   & \textbf{0.29}           \\
				\textbf{$\ginv$v2}                     & \textbf{0.27 (0.07)}     & 0.22 (0.05)                & \textbf{0.31 (0.12)}         & 0.26 (0.08)            & \textbf{0.27}           \\
				\hline
				\rule{0pt}{8pt}\revision{Oracle (IID)} & \revision{0.32 (0.05)}   & \revision{0.37 (0.06)}     & \revision{0.39 (0.09)}       & \revision{0.33 (0.05)} &                         \\
				\bottomrule
			\end{tabular}
		\end{sc}
	}
	\vspace{-0.15in}
\end{wraptable}

\textbf{OOD generalization performance on realistic shifts.}
In Table~\ref{table:other_graph} and Table~\ref{table:graph_size},
we examine the effectiveness of $\ginv$ in real-world data and more complicated distribution shifts. Both averaged accuracy and ranks are reported because of the dataset heterogeneity.
Since the tasks are harder than synthetic ones, interpretable GNNs and OOD objectives perform similar to or even under-perform the ERM baselines, which is also consistent to the observations in non-linear benchmarks~\citep{domainbed,drugood}.
However, both $\ginv$v1 and $\ginv$v2 consistently and significantly outperform previous methods,
including previous specialized methods $\Gamma$ GNNs~\citep{size_gen2} for combating graph size shifts,
demonstrating the generality and superiority of $\ginv$.

\textbf{Comparisons with advanced ablation variants.}
As discussed in Sec.~\ref{sec:good_theory}, $\ginv$ can be reduced to interpretable GNNs and contrastive learning approaches.
However, across all experiments, we can observe that neither the advanced interpretable GNNs (DIR) nor sophisticated contrastive objectives with specialized sampling strategy (CNC) can yield satisfactory OOD performance, which serves as \emph{strong evidence} for the necessities of the decomposition as well as the objective in $\ginv$.
Furthermore, although $\ginv$v1 can outperform $\ginv$v2 when we may have a relatively accurate $s_c$,
the improvements in $\ginv$v1 are not as stable as $\ginv$v2 or even unsatisfactory when the assumption is violated.
This phenomenon also reveals the superiority of $\ginv$v2 in practice.

\revision{
	\textbf{Hyperparameter sensitivity analysis.} 
	To examine how sensitive $\ginv$ is to the hyperparamters $\alpha$ and $\beta$ for contrastive loss and hinge loss, respectively. 
	We conduct experiments based on the hardest datasets from each table (i.e., SPMotif-Mixed with the bias of $0.9$, DrugOOD-Scaffold and the NCI109 datasets from Table~\ref{tab:sythetic}, Table~\ref{table:other_graph}, and Table~\ref{table:graph_size}, respectively.) with different $\alpha$ and $\beta$. 
	When changing the value of $\beta$, we fix the $\alpha$ to a specific value under which the model has a relatively good performance (but not the best, to fully examine the robustness of $\ginv$ in practice). 
}

\revision{The results are shown in Fig.~\ref{fig:hp_sen_alpha} and Fig.~\ref{fig:hp_sen_beta}. 
	It can be found that both $\ginv$v1 and $\ginv$v2 are robust to different values of $\alpha$ and $\beta$, respectively, across different datasets and distribution shifts. 
	Besides, the results also reflect the effects of the additional penalty terms in $\ginv$.
	For example, in Fig.~\ref{fig:hp_sen_alpha_appdx}, when  $\alpha$ is too small, the invariance of the identified invariant subgraphs $\widehat{G}_c$ may not be guaranteed, resulting worse performances. 
	Similarly, as shown in Fig.~\ref{fig:hp_sen_beta_appdx}, when $\beta$ becomes too small, some part of the spurious subgraph may still appear in the estimated invariant subgraphs, which yields worse performances.
	Besides, when $\alpha$ and $\beta$ become too large, the optimization of $\ginv$ can be affected due to their intrinsic conflicts with ERM, hence a better optimization scheme for $\ginv$ can be a promising future direction~\citep{pair}.
	We provide more details and additional analysis on the efficiency of $\ginv$ and single environment OOD generalization performance of $\ginv$ in Appendix~\ref{sec:additional_exp_appdx}, as well as the visualization examples of the identified invariant subgraph in Appendix~\ref{sec:interpret_visualize_appdx}.}

\begin{figure}[t]
	\vspace{-0.35in}
	\centering
	\subfigure[SPMotif-Mixed (bias=$0.9$)]{
		\includegraphics[width=0.31\textwidth]{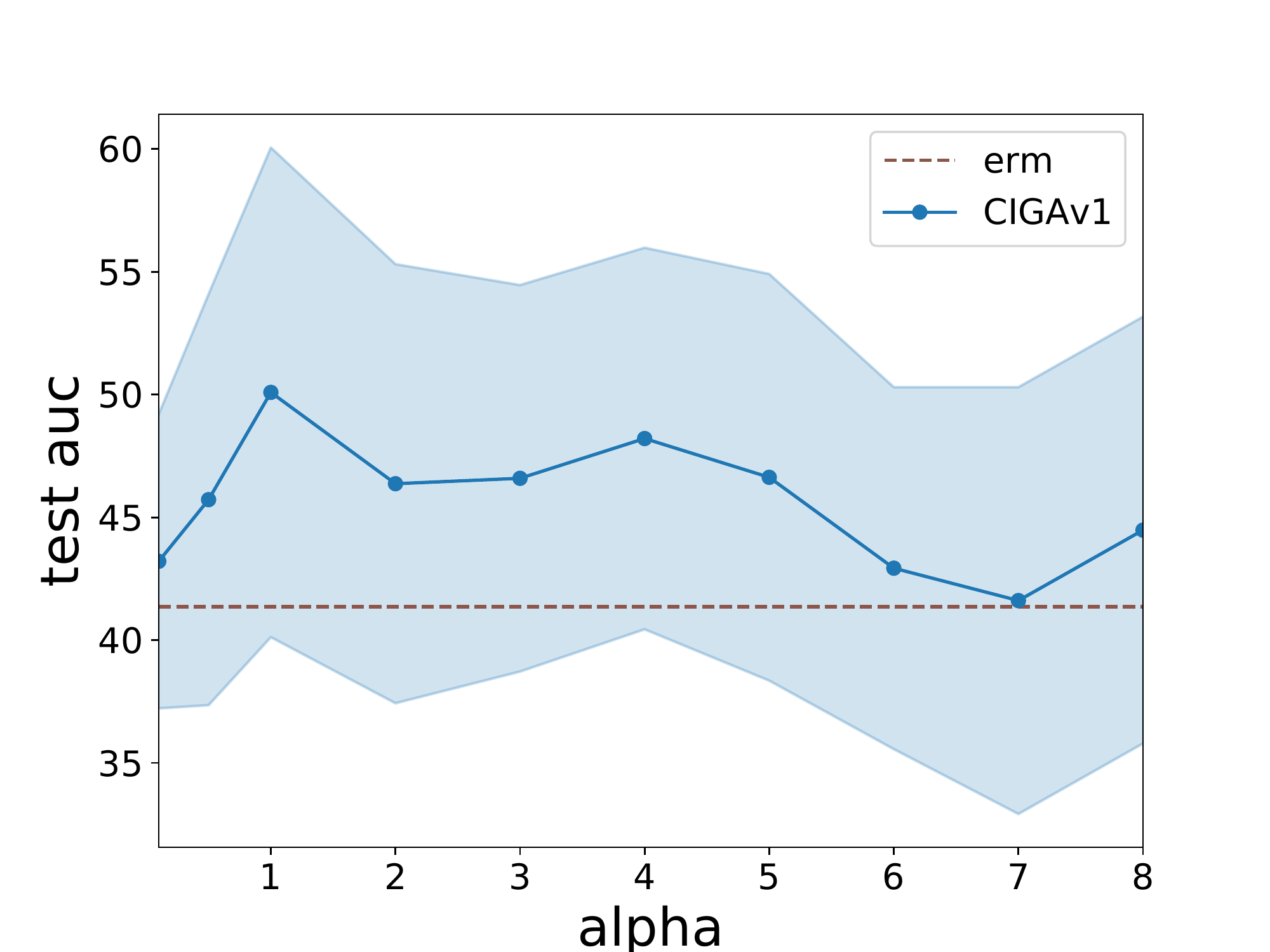}
	}
	\subfigure[DrugOOD-Scaffold]{
		\includegraphics[width=0.31\textwidth]{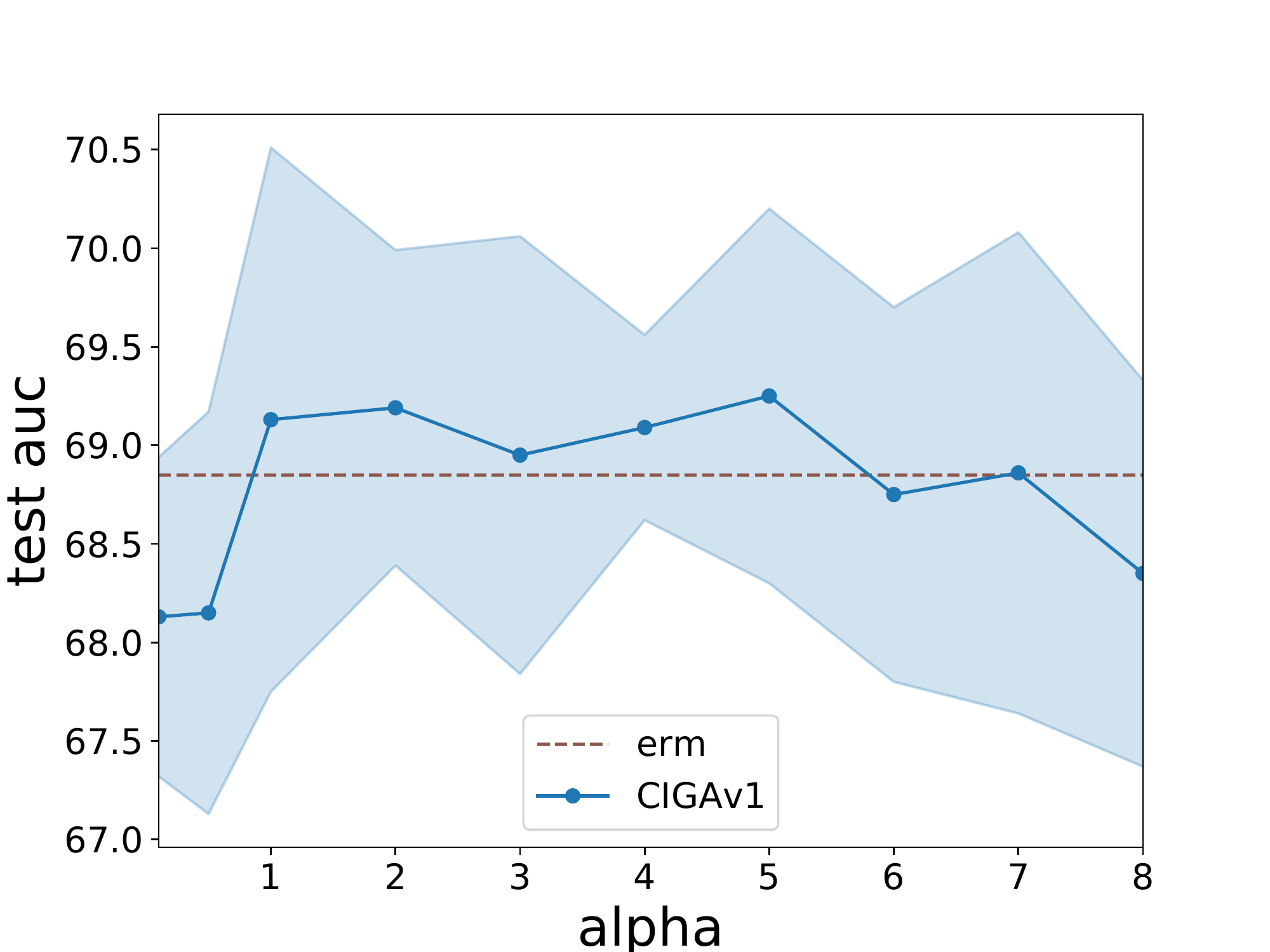}
	}
	\subfigure[NCI109]{
		\includegraphics[width=0.31\textwidth]{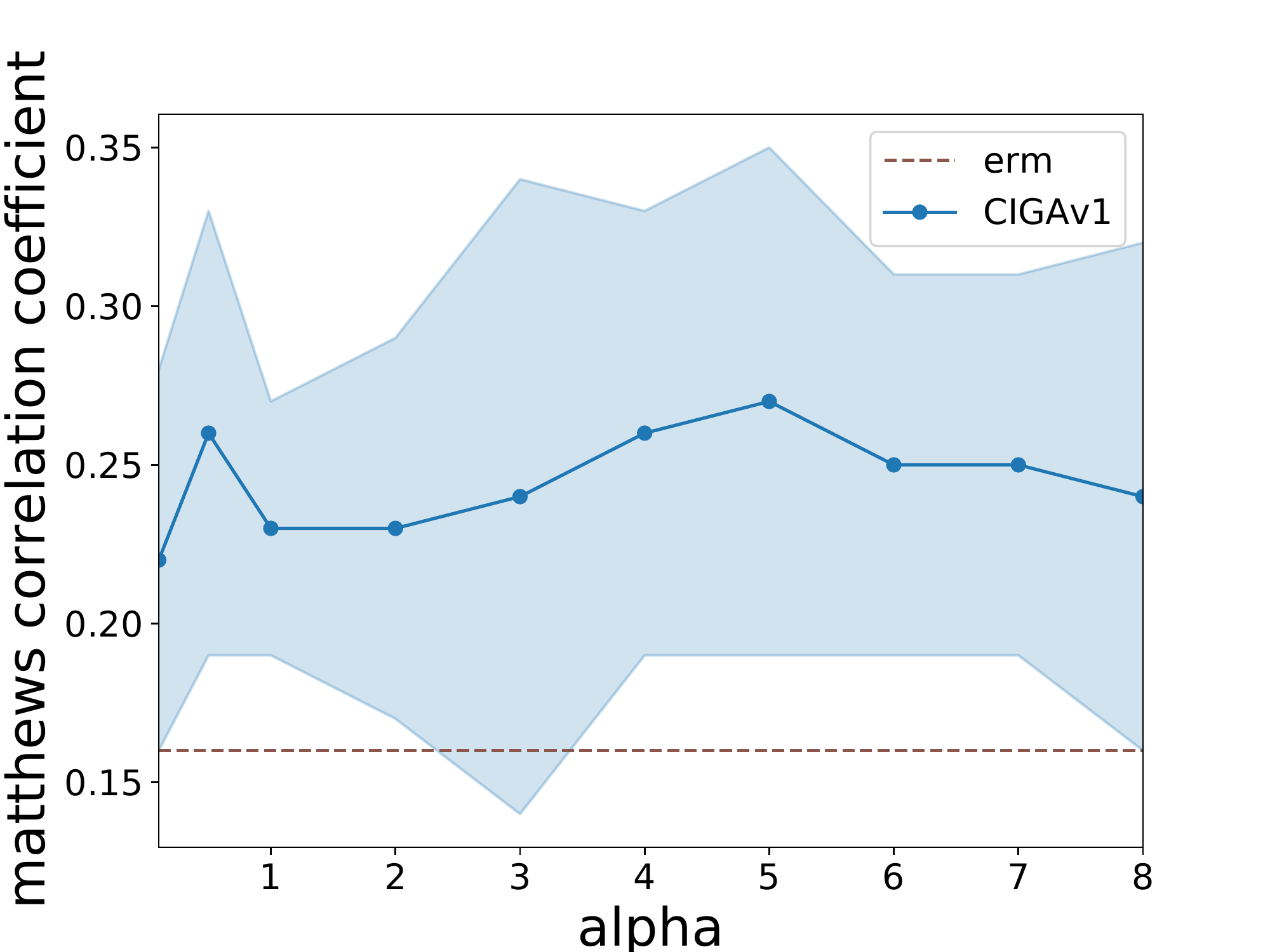}
	}
    \vspace{-0.05in}
	\caption{
		Hyperparameter sensitivity analysis on the coefficient of contrastive loss ($\alpha$).}
	\label{fig:hp_sen_alpha}
	\vskip -0.15in
\end{figure}

\begin{figure}[t]
	\centering
	\subfigure[SPMotif-Mixed (bias=$0.9$, $\alpha$=$4$)]{
		\includegraphics[width=0.31\textwidth]{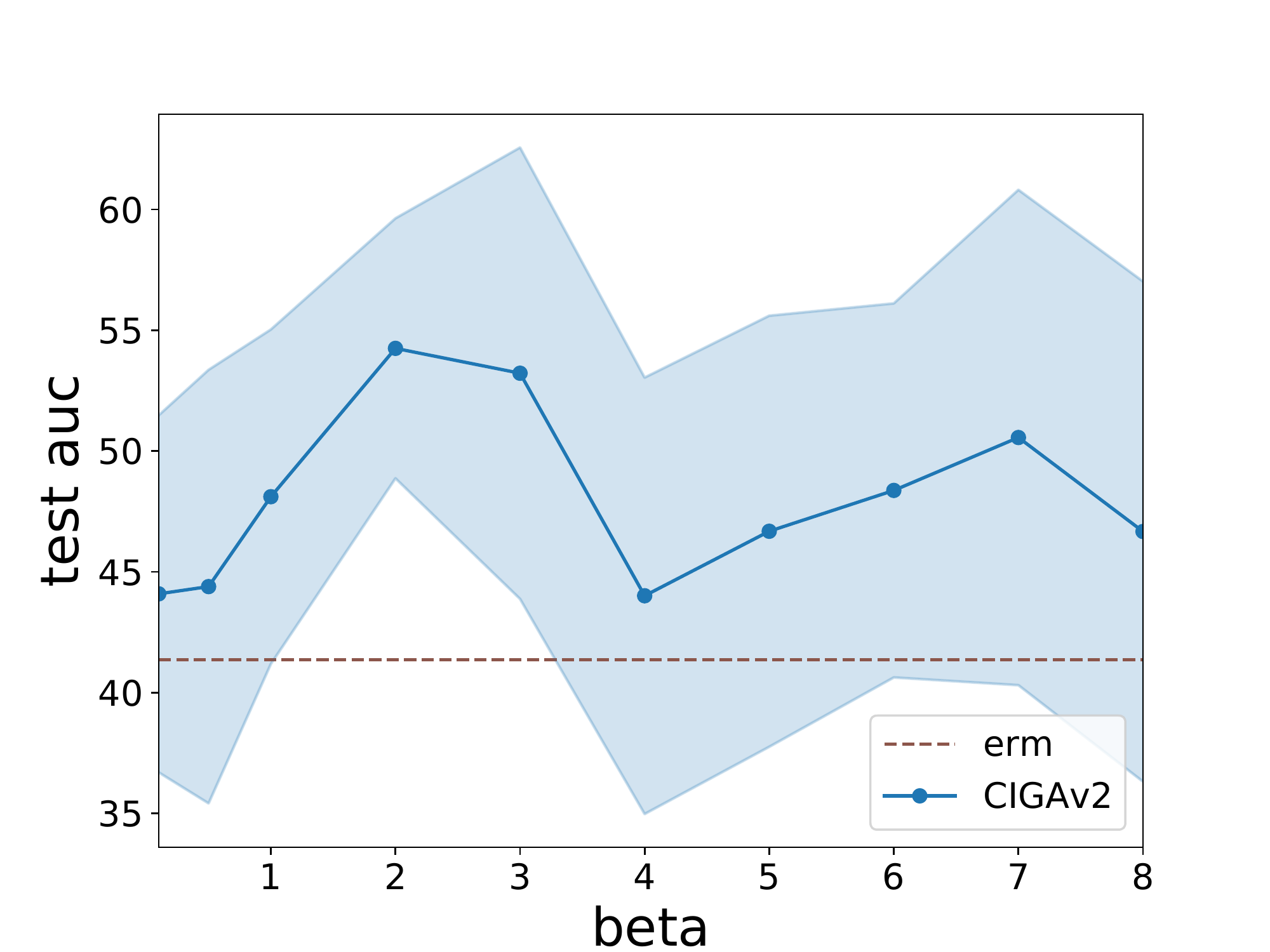}
	}
	\subfigure[DrugOOD-Scaffold ($\alpha$=$1$)]{
		\includegraphics[width=0.31\textwidth]{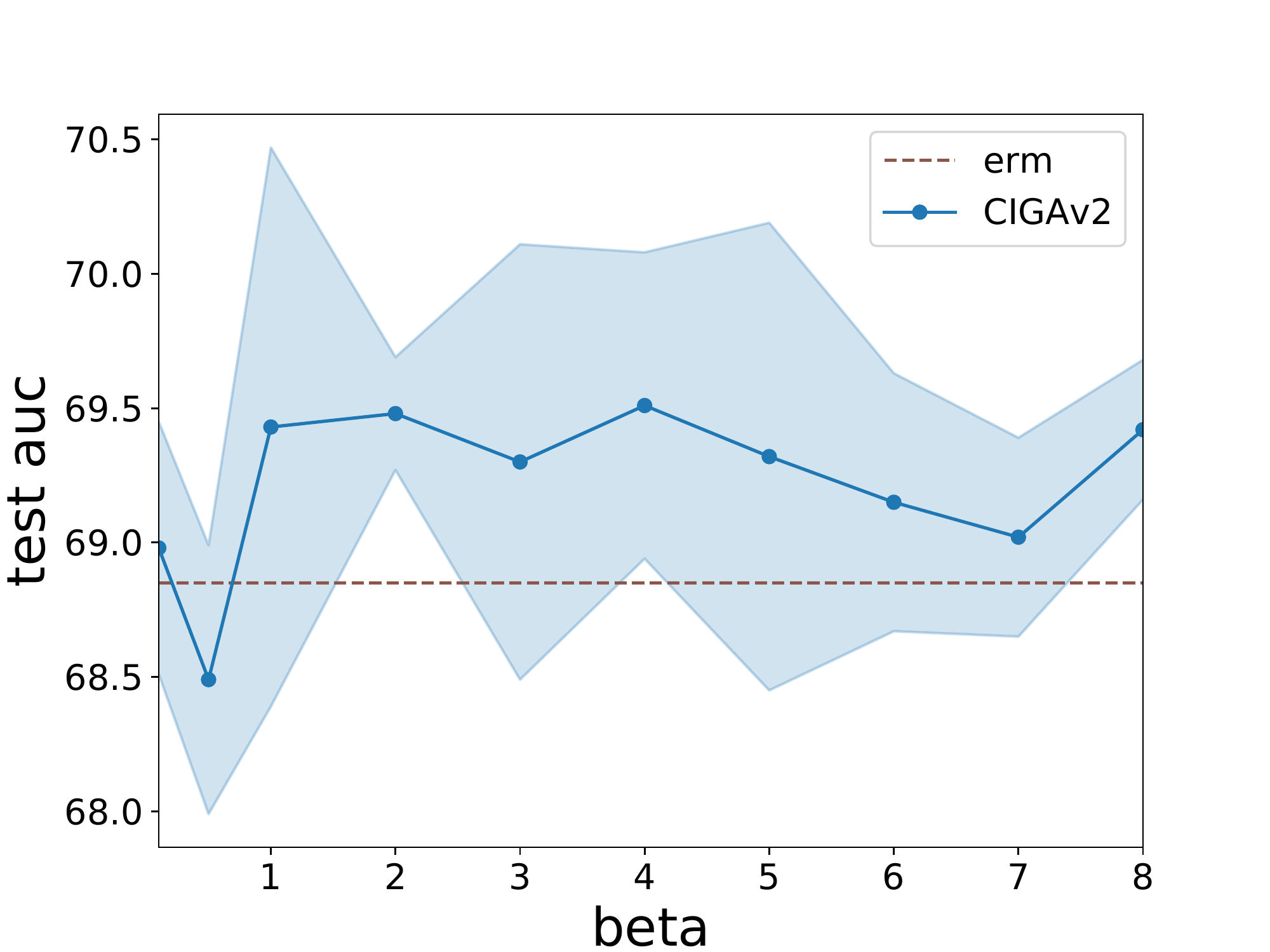}
	}
	\subfigure[NCI109 ($\alpha$=$1$)]{
		\includegraphics[width=0.31\textwidth]{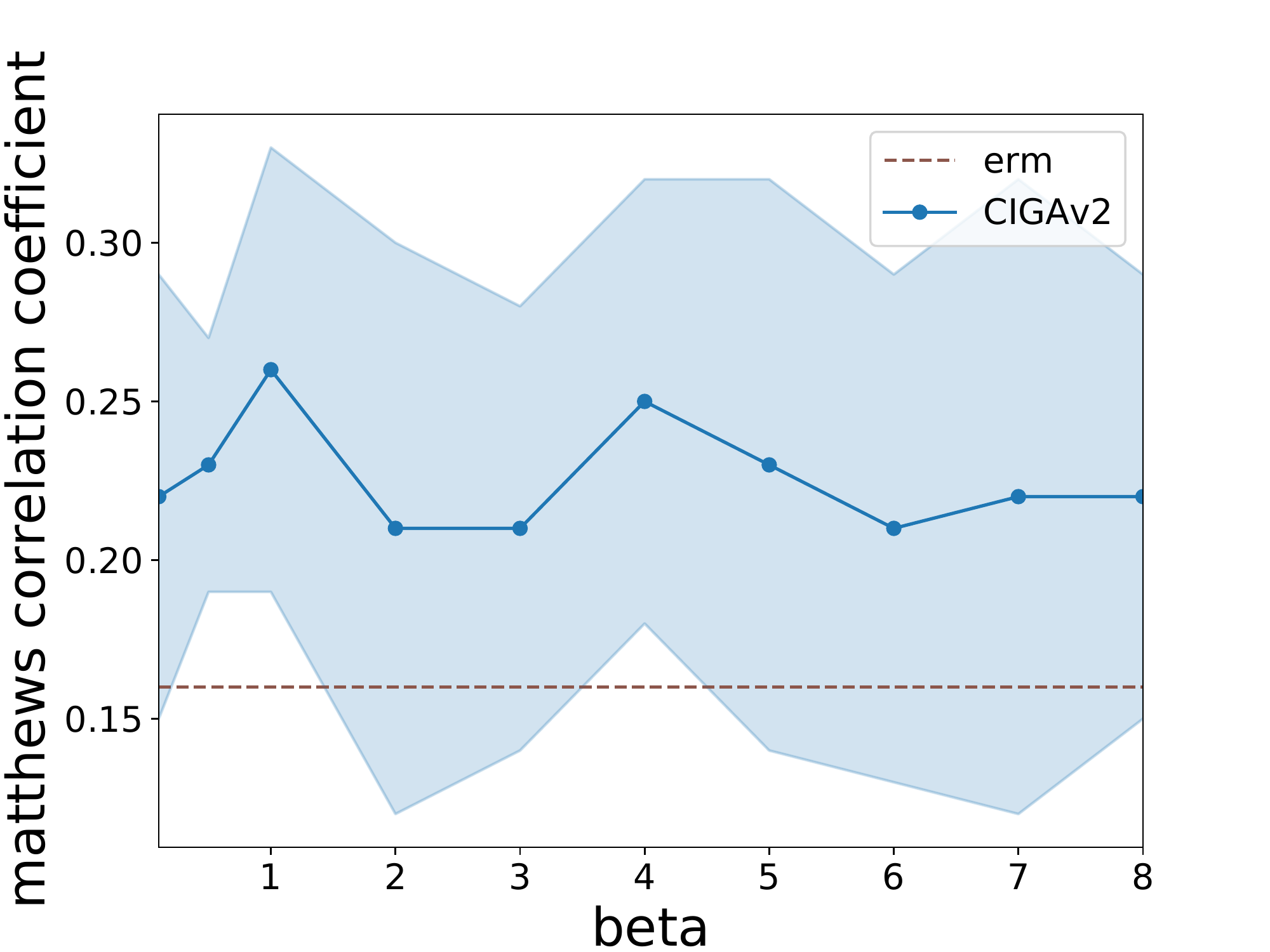}
	}
    \vspace{-0.05in}
	\caption{
		Hyperparameter sensitivity analysis on the coefficient of hinge loss ($\beta$).}
	\label{fig:hp_sen_beta}
	\vskip -0.15in
\end{figure}

\section{Conclusions}
We studied the OOD generalization on graphs via graph classification,
and propose a new solution $\ginv$ through the lens of causality.
By modeling potential distribution shifts on graphs with SCMs,
we generalized and instantiated the invariance principle to graphs,
which was shown to have promising theoretical and empirical OOD generalization ability
under a variety of distribution shifts.

\begin{ack}
  We thank the reviewers for their valuable comments. This work was supported by GRF 14208318 from the RGC of HKSAR and CUHK direct grant 4055146.
  TL was partially supported by Australian Research Council Projects DP180103424, DE-190101473, IC-190100031, DP-220102121, and FT-220100318.
  YZ and BH were supported by the RGC Early Career Scheme No. 22200720, NSFC Young Scientists Fund No. 62006202, Guangdong Basic and Applied Basic Research Foundation No. 2022A1515011652, and Tencent AI Lab Rhino-Bird Gift Fund.
\end{ack}

\bibliography{reference}
\bibliographystyle{abbrvnat}
\clearpage
\section*{Checklist}
\begin{enumerate}
  \item For all authors...
        \begin{enumerate}
          \item Do the main claims made in the abstract and introduction accurately reflect the paper's contributions and scope?
                \answerYes{}
          \item Did you describe the limitations of your work?
                \answerYes{See Sec.~\ref{sec:good_limit_future_appdx} in the appendix.}
          \item Did you discuss any potential negative societal impacts of your work?
                \answerYes{See Sec.~\ref{sec:broader_impacts_appdx} in the appendix.}
          \item Have you read the ethics review guidelines and ensured that your paper conforms to them?
                \answerYes{}
        \end{enumerate}

  \item If you are including theoretical results...
        \begin{enumerate}
          \item Did you state the full set of assumptions of all theoretical results?
                \answerYes{See Sec.~\ref{sec:full_scm_appdx} and Sec.~\ref{sec:theory_appdx} in the appendix.}
          \item Did you include complete proofs of all theoretical results?
                \answerYes{See Sec.~\ref{sec:theory_appdx} in the appendix.}
        \end{enumerate}

  \item If you ran experiments...
        \begin{enumerate}
          \item Did you include the code, data, and instructions needed to reproduce the main experimental results (either in the supplemental material or as a URL)?
                \answerYes{The code and the other required materials are provided in \url{https://github.com/LFhase/CIGA}.}
          \item Did you specify all the training details (e.g., data splits, hyperparameters, how they were chosen)?
                \answerYes{See Sec.~\ref{sec:exp_appdx}.}
          \item Did you report error bars (e.g., with respect to the random seed after running experiments multiple times)?
                \answerYes{See Sec.~\ref{sec:exp_appdx}.}
          \item Did you include the total amount of compute and the type of resources used (e.g., type of GPUs, internal cluster, or cloud provider)?
                \answerYes{See Sec.~\ref{sec:exp_software_appdx}.}
        \end{enumerate}

  \item If you are using existing assets (e.g., code, data, models) or curating/releasing new assets...
        \begin{enumerate}
          \item If your work uses existing assets, did you cite the creators?
                \answerYes{}
          \item Did you mention the license of the assets?
                \answerYes{}
          \item Did you include any new assets either in the supplemental material or as a URL?
                \answerNA{}
          \item Did you discuss whether and how consent was obtained from people whose data you're using/curating?
                \answerNA{The data used are all publicly available datasets.}
          \item Did you discuss whether the data you are using/curating contains personally identifiable information or offensive content?
                \answerNA{}
        \end{enumerate}

  \item If you used crowdsourcing or conducted research with human subjects...
        \begin{enumerate}
          \item Did you include the full text of instructions given to participants and screenshots, if applicable?
                \answerNA{We didn't conduct research with human subjects.}
          \item Did you describe any potential participant risks, with links to Institutional Review Board (IRB) approvals, if applicable?
                \answerNA{We didn't conduct research with human subjects.}
          \item Did you include the estimated hourly wage paid to participants and the total amount spent on participant compensation?
                \answerNA{We didn't conduct research with human subjects.}
        \end{enumerate}
\end{enumerate}

\newpage

\newpage
\appendix

\begin{center}
	\LARGE \bf {Appendix of $\ginv$}
\end{center}

\etocdepthtag.toc{mtappendix}
\etocsettagdepth{mtchapter}{none}
\etocsettagdepth{mtappendix}{subsection}
\tableofcontents
\newpage
\section{Broader Impacts}
\label{sec:broader_impacts_appdx}
Considering the wide applications and high sensitivity of GNNs to distribution shifts and spurious correlations,
it is important to develop GNNs that are able to generalize to OOD data,
especially for realistic scenarios such as AI-aided Drug Discovery where OOD data are ubiquitous.
By formulating OOD generalization problem on graphs using causality,
our work can serve as an initiate step towards tackling OOD generalization problem on graphs,
with the hope to empower GNNs for broader applications and social benefits.
Besides, this paper does not raise any ethical concerns.
This study does not involve
any human subjects, practices to data set releases, potentially harmful insights, methodologies and
applications, potential conflicts of interest and sponsorship, discrimination/bias/fairness concerns,
privacy and security issues, legal compliance, and research integrity issues.

\section{More Discussions on Related Work and Future Directions}
\label{sec:discuss_future_appdx}

\subsection{More backgrounds}
\label{sec:background_appdx}
We give more background introduction about GNNs and Invariant Learning in this section.

\textbf{Graph Neural Networks.} Let $G=(A,X)$ denote a graph with $n$ nodes and $m$ edges,
where $A \in \{0,1\}^{n\times n}$ is the adjacency matrix, and $X\in \R^{n \times d}$ is the node feature matrix
with a node feature dimension of $d$.
In graph classification, we are given a set of $N$ graphs $\{G_i\}_{i=1}^N\subseteq \gG$
and their labels $\{Y_i\}_{i=1}^N\subseteq\gY=\R^c$ from $c$ classes.
Then, we train a GNN $\rho \circ h$ with an encoder $h:\gG\rightarrow\R^h$ that learns a meaningful representation $h_G$ for each graph $G$ to help predict their labels $y_G=\rho(h_G)$ with a downstream classifier $\rho:\R^h\rightarrow\gY$.
The representation $h_G$ is typically obtained by performing pooling with a $\text{READOUT}$ function on the learned node representations:
\begin{equation}
	\label{eq:gnn_pooling}
	h_G = \text{READOUT}(\{h^{(K)}_u|u\in V\}),
\end{equation}
where the $\text{READOUT}$ is a permutation invariant function (e.g., $\text{SUM}$, $\text{MEAN}$)~\citep{gin,diff_pooling,relation_pooling,gin,can_gnn_count,wl_goml},
and $h^{(K)}_u$ stands for the node representation of $u\in V$ at $K$-th layer that is obtained by neighbor aggregation:
\begin{equation}
	\label{eq:gnn}
	h^{(K)}_u = \sigma(W_K\cdot a(\{h^{(K-1)}_v\}| v\in\mathcal{N}(u)\cup\{u\})),
\end{equation}
where $\mathcal{N}(u)$ is the set of neighbors of node $u$,
$\sigma(\cdot)$ is an activation function, e.g., $\text{ReLU}$, and $a(\cdot)$ is an aggregation function over neighbors, e.g., $\text{MEAN}$.

\textbf{Invariant Learning.}
Invariant learning typically considers
a supervised learning setting based on the data $\dataset=\{\dataset^e\}_e$ %
collected from multiple environments $\envall$,
where $\dataset^e=\{G^e_i,y^e_i\}$ is the dataset from environment $e\in\envall$.
$(G^e_i,y^e_i)$ from a single
environment $e$ are considered as drawn independently from an identical distribution $\sP^e$.
The goal of OOD generalization is to train a GNN $\rho \circ h:\gG\rightarrow\gY$
with data from training environments $\train=\{\dataset^e\}_{e\in\envtrain\subseteq\envall}$, and
generalize well to all (unseen) environments, i.e., to minimize:
\begin{equation}
	\label{eq:ood}
	\min_{\rho,h}\max_{e\in\envall}R^e(\rho \circ h),
\end{equation}
where $R^e$ is the empirical risk under environment $e$~\citep{erm,inv_principle,irmv1}.
More details can be referred in~\citep{ib-irm}.

\subsection{Detailed related work}
\label{sec:related_work_appdx}

\textbf{GNN Explainability.}
Works in GNN explainability  aim to find a subgraph of the input graph as the explanation
for the prediction of a GNN model~\citep{gnn_explainer,xgnn_tax}.
Although some may leverage causality in explanation generation~\citep{gen_xgnn},
they mostly focus on understanding the predictions of GNNs in a post-hoc manner instead of OOD generalization.
Recently there are two works aiming to provide robust explanations under distribution shifts, i.e., GIB~\citep{gib} and DIR~\citep{dir},
and both of them focus on tackling FIIF spurious correlations (Assumption~\ref{assump:scm_fiif_appdx}).
The theoretical guarantees of GIB follows the theory of information bottleneck~\citep{ib},
while GIB can not solve PIIF spurious correlations (Assumption~\ref{assump:scm_piif_appdx}).
As both FIIF and PIIF widely exist in realistic scenarios, failing to solve either of them could result in severe performance degradation in practice~\citep{irmv1,ib-irm,aubin2021linear,failure_modes}.
While for DIR, though as a generalization of~\citet{inv_rat} to graphs, can not provide any theoretical guarantees under FIIF spurious correlations as shown in Appendix~\ref{sec:discussion_ood_obj_appdx},
nor under PIIF spurious correlations.

\textbf{GNN Extrapolation.}
Recently there is a surge of attention in improving the extrapolation ability of GNNs and apply them to various applications,
such as mathematical reasoning~\citep{math_reasoning1,math_reasoning2},
physics~\citep{physics_reasoning1,physics_reasoning2},
and graph algorithms~\citep{size_inv_extra,neural_exe,what_nn_reason,transfer_alg}.
\citet{nn_extrapo} study the neural network extrapolation ability from a geometrical perspective.
\citet{reliable_drug} improve OOD drug discovery by mitigating
the overconfident misprediction issue.
\citet{understand_att,size_gen1} focus on the extrapolation of GNNs in terms of
graph sizes, while making additional assumptions on the knowledge about
ground truth attentions and access to test inputs.
\citet{size_gen2} study the graph size extrapolation problem of GNNs through a causal lens,
while the induced  invariance principle is built upon assumptions on the specific family of graphs.
Different from these works, we consider the GNN extrapolation as a causal
problem, establish generic SCMs that are compatible with several graph generation models,
as well as, more importantly, different types of distribution shifts.
Hence, the induced the invariance principle and provable algorithms built upon the SCMs in our work can generalize to multiple graph families and distribution shifts.

Additionally, \citet{handle_node} propose causal models as well as specialized objectives to extrapolate
nodes with different neighbors. However, their formulation is limited to node classification task and specific spurious correlation type.
In contrast, the induced invariance principle in~\citet{handle_node},
can be seen as a extension of $\ginv$ for node classification,
where we cab identify an invariant subgraph
from the $K$-hop neighbor graph of each node, and
making predictions based on it, i.e., $Y\ind E|G_c^\ego\subseteq G_u^\ego$ for node $u$.
We leave specific formulation and implementation to future works.

\textbf{Causality and OOD Generalization.}
Causality comes to the stage for demystifying and improving the huge success
of machine learning algorithms to further advances~\citep{seven_tools_causality,causality4ml,towards_causality}.
One of the most widely applied concept from causality is the Independent Causal Mechanism (ICM)
that assumes conditional distribution of each variable given its causes (i.e., its mechanism)
does not inform or influence the other conditional distributions~\citep{causality,elements_ci}.
The invariance principle is also induced from the ICM assumption.
Once proper assumptions about the underlying data generation process
via Structural Causal Models (SCM) are established,
it is promising to apply the invariance principle to machine learning models for finding
an invariant representation about the causal relationship between the underlying causes and the label~\citep{inv_principle,irmv1}.
Consequently, models built upon the invariant representation can generalize to unseen environments or
domains with guaranteed performance~\citep{inv_principle,causal_transfer,irmv1,groupdro,meta-transfer,ood_max_inv,domainbed,v-rex,env_inference,ib-irm}.
The arguably first formulation of invariance principle was introduced by~\citet{inv_principle}.
\citet{irmv1} propose a novel formulation of learning causal invariance in representation learning, i.e., IRM,
show how it connects with existing areas such as  distributional robust optimization~\citep{dro} and generalization~\citep{memorization},
and prove its effectiveness in addressing PIIF spurious correlations (Assumption~\ref{assump:scm_piif_appdx}).
However, in practice, both PIIF and FIIF (Assumption~\ref{assump:scm_fiif_appdx}) can appear in data, while IRM can fail in these cases~\citep{aubin2021linear,failure_modes}.
\citet{ib-irm} then propose to add information bottleneck criteria into the IRM formulation to address the issue.
However, their results are restricted to linear regime and also require environment partitions to distinguish the sources of distribution shifts.
Recently, \citet{env_inference} and \citet{zin} propose new OOD objectives to relieve the needs for environment partitions, but limited to PIIF spurious types and linear regime.
Besides, \citet{BayesianIRM} identify the overfitting problem as a key challenge when applying IRM on large neural networks. \citet{SparseIRM} propose to alleviate this problem by imposing sparsity constrain.

In parallel invariant learning approaches,
\citet{groupdro} propose to regularize the worst group in group distributionally robust optimization (GroupDro).
\citet{cnc} propose a contrastive approach to tackle GroupDro when the group partitions are not available.
However, minimizing the gap between worst group risk and averaged risk can not yield a OOD generalizable predictors in our circumstances.
Besides, traditional approaches to tackle OOD generalization also include Domain Adaption, Transfer Learning and Domain Generalization\citep{causal_transfer,domain_gen_inv,DANN,CORAL,deep_DG,DouCKG19,causal_matching,DG_survey},
which aim to learn the class conditional invariant representation shared across source domain and target domain.
However, they all require a stronger assumption on the availability of target domain data or the ground truth predictors~\citep{domainbed,ib-irm},
hence are not able to yield predictors with OOD generalization guarantees.
We refer interested readers to~\citet{seven_tools_causality,causality4ml,towards_causality} for an in-depth understanding,
and \citet{domainbed,ib-irm} for a thorough overview.

\subsection{More discussions on connections of $\ginv$ with existing work}
\label{sec:good_connection_appdx}

Although primarily serving for graph OOD generalization
problem, our theory complements the identifiability study on graphs
through contrastive learning, and aligns with the discoveries
in the image domain that contrastive learning learns to isolate
the content ($C$) and style ($S$)~\citep{contrast_inverts,ssl_isolate}.
Moreover, our results also partially explain the success of
graph contrastive learning~\citep{graphcl,contrast_reg,graphcl_auto},
where GNNs may implicitly learn to identify the underlying invariant subgraphs for prediction.

\textbf{On expressivity of graph encoder in $\ginv$.}
The expressivity of $\ginv$ is essentially constrained by the
encoders embedded for learning graph representations.
During isolating $G_c$ from $G$,
if the encoder can not differentiate two isomorphic graphs
$G_c$ and $G_c\cup G_s^p$ where $G_s^p\subseteq G_s$,
then the featurizer will fail to identify the underlying invariant subgraph.
Moreover, the classifier will also fail if the encoder
can not differentiate two non-isomorphic $G_c$s from different classes.
Thus, adopting more powerful graph representation encoders into $\ginv$
can improve the OOD generalization.

\textbf{On $\ginv$ and graph information bottleneck.}
Under the FIIF assumption on latent interaction,
the independence condition derived from causal model can
also be rewritten as $Y\ind S|C$ (similar to that in DIR~\citep{dir} as they also focus on FIIF),
which further implies $Y\ind S|\widehat{G}_c$.
Hence it is natural to use Information Bottleneck (IB) objective~\citep{ib}
to solve for $G_c$:
\begin{equation}
	\label{eq:good_ib}
	\begin{aligned}
		\min_{f_c,g} & \ R_{G_c}(f_c(\widehat{G}_c)),                                                        \\
		\text{s.t.}  & \ G_c=\argmax_{\widehat{G}_c=g(G)\subseteq G}I(\widehat{G}_c,Y)-I(\widehat{G}_c,\gG), \\
	\end{aligned}
\end{equation}
which explains the success of many existing
works in finding predictive subgraph through IB~\citep{gib}.
However, the estimation of $I(\widehat{G}_c,G)$
is notoriously difficult due to the complexity of graph,
which can lead to unstable convergence as observed in our experiments.
In contrast, optimization with contrastive objective in $\ginv$ as Eq.~\ref{eq:good_opt_contrast}
induces more stable convergence.

\textbf{On $\ginv$ for node classifications.}
As the task of node classification can be viewed as
graph classification based on the ego-graphs of a node,
our analysis and discoveries can generalize to
node classification.
More specifically, the invariance principle for node classification
can be implemented by identifying an invariant subgraph
from the $K$-hop neighbor graph of each node, and
making predictions based on it, i.e., $Y\ind E|G_c^\ego\subseteq G_u^\ego$ for node $u$~\citep{handle_node}.

\subsection{Discussions on limitations of $\ginv$ and future directions}
\label{sec:good_limit_future_appdx}
\textbf{Better graph generation modeling.} Compared to~\citet{size_gen2}, we do not specify a specific graph family
in the SCM for graph generation process. Since our focus is to describe the potential distribution shifts with SCMs,
in Assumption~\ref{assump:graph_gen}, we aim to build a SCM that is compatible to many graph generation processes~\citep{sbm,graphon,graphrnn,graphdf}.
However, it is often the case that practitioners have certain inductive knowledge about the graph generation process,
which may imply useful leads and invariance in modeling the generation process~\citep{graph_gen1,graph_gen2,graph_gen3}.
\revision{In Appendix~\ref{sec:case_scm_appdx}, we provide an example about incorporating the graphon~\citep{graphon} knowledge into the SCMs, which derives similar solutions as in the literature~\citep{size_gen1,size_gen2}.
	Therefore, we believe it is promising to leverage more additional knowledge for more precise graph generation modeling and better OOD generalization on graphs.}

\textbf{Better contrastive sampling.} Typical contrastive or graph contrastive learning approaches
leverage augmentation techniques as well as sophisticated sampling strategies during the positive or negative pairs selection~\citep{contrast_loss1,contrast_loss2,infoNCE,graphcl,graphcl_auto}.
A better augmentation or sampling strategy can benefit the OOD generalization in general as shown by \citet{ssl_isolate} and \citet{cnc}.
Since our implementation of $\ginv$ in this work aims to verify the theoretical findings,
we do not apply sophisticated augmentation or sampling during the sampling while simply using the supervised contrastive approach~\citep{sup_contrastive}.
Nevertheless, it is promising to leverage better augmentation and contrastive strategy to improve the generalization ability in $\ginv$~\citep{dps}.

\textbf{More sophisticated architectures/parameter tunning.} The $\ginv$ framework introduced in Sec.~\ref{sec:good_framework} can have multiple implementations.
We choose interpretable architectures in our experiments for the purpose of concept verification.
\revision{Essentially, different architectures can have different advantages and limitations. For the interpretable GNNs used in our experiments, it can provide interpretability for the results (as shown in Appendix~\ref{sec:interpret_visualize_appdx}), but still requires more training time (as shown in Appendix~\ref{sec:additional_exp_appdx}). Therefore, it may not be applicable to some resource-limited scenarios such as Edge-AI. Besides, the approximation may also be limited to the chosen architectures.}
More sophisticated architectures can be incorporated, such as identifying and disentangling $G_c$ at the latent space~\citep{causality4ml,towards_causality}.
\revision{Moreover, as shown in Appendix~\ref{sec:additional_exp_appdx}, $\ginv$ still requires certain additional tunning efforts for the objectives. Hence we believe it is also a promising future direction to reduce the parameter tunning by leveraging better optimization techiniques~\citep{mgda,pair}}

\section{Full Structural Causal Models on Graph Generation}
\label{sec:full_scm_appdx}
Due to the space constraints in the main paper, we make some simplifications when giving the SCMs on the graph generation process.
Hence in this section,
supplementary to the graph generation process in Sec.~\ref{sec:data_gen}, we provide full
SCMs on the graph generation process in this section as shown in Fig.~\ref{fig:scm_appdx}.
Formal descriptions are given as Assumptions~\ref{assump:graph_gen_appdx},~\ref{assump:scm_fiif_appdx},~\ref{assump:scm_piif_appdx},~\ref{assump:scm_miif_appdx}.

To begin with, we take a latent-variable model perspective on the graph generation process and assume
that the graph is generated through a mapping $f_\gen:\gZ\rightarrow \gG$,
where $\gZ\subseteq\R^n$ is the latent space and $\gG=\cup_{N=1}^\infty\{0,1\}^N\times \R^{N\times d}$ is the graph space.
Let $E$ denote environments.
Following previous works~\citep{ssl_isolate,ib-irm},
we partition the latent variable from $\gZ$ into an invariant part $C\in\gC=\R^{n_c}$
and a varying part $S\in\gS=\R^{n_s}$, s.t., $n=n_c+n_s$,
according to whether they are affected by $E$.
Similarly in images, $C$ and $S$ can represent content and style
while $E$ can refer to the locations where the images are taken~\citep{camel_example,adv_causal_lens,ssl_isolate}.
While in graphs, $C$ can be the latent variable that controls the generation of functional groups in
a molecule, which can not be affected by the changes of environments, such as species (or scaffolds), experimental environment for
examining the chemical property (or assays)~\citep{drugood}. On the contrary, the other latent variable $S$
inherits environment-specific information thus can further affect the finally generated graphs.
Besides, $C$ and $S$ can have multiple types of interactions at the latent space with environments $E$ and labels $Y$,
which will generate different types of spurious correlations~\citep{ib-irm}.

\begin{assumption}[\revision{Graph generation SCM}]
	\label{assump:graph_gen_appdx}
	\[\revision{
			\begin{aligned}
				 & (Z^c_A,Z^c_X):=f_\gen^{(A,X)^c}(C),\ G_c:=f_\gen^{G_c}(Z^c_A,Z^c_X), \\
				 & (Z^s_A,Z^s_X):=f_\gen^{(A,X)^s}(S),\ G_s:=f_\gen^{G_s}(Z^s_A,Z^s_X), \\
				 & G:=f_\gen^G(G_c,G_s).
			\end{aligned}}
	\]
\end{assumption}

Specifically, the graph generation process is shown as Fig.~\ref{fig:graph_gen_appdx}.
The generation mapping $f_\gen$ is decomposed into $f_\gen^{(A,X)^c}$,$f_\gen^{G_c}$, $f_\gen^{(A,X)^s}$,$f_\gen^{G_s}$ and $f_\gen^G$ to
control the generation of $(Z^c_A,Z^c_X)$, $G_c$, $(Z^s_A,Z^s_X)$, $G_s$, and $G$, respectively.
Given the variable partitions $C$ and $S$ at the latent space $\gZ$,
they control the generation of the adjacency matrix and features for the invariant subgraph $G_c$ and spurious subgraph $G_s$
through two pairs of latent variables $(Z^c_A,Z^c_X)$ and $(Z^s_A,Z^s_X)$, respectively.
$Z^c_A$ and $Z^s_A$ will control the structure-level properties in the generated graphs, such as degrees, sizes, and subgraph densities.
While $Z^c_X$ and $Z^s_X$ mainly control the attribute-level properties in the generated graphs, such as homophily.
Then, $G_c$ and $G_s$ are entangled into the observed graph $G$ through $f_\gen^{G}$.
It can be a simply $\text{JOIN}$ of
a $G_c$ with one or multiple $G_s$,
or more complex generation processes controlled by the latent variables~\citep{sbm,graphon,graphrnn,graphdf,size_gen2}.
Note that since our focus is to describe the potential distribution shifts with SCMs,
in Assumption~\ref{assump:graph_gen}, we aim to build a SCM that is compatible to many graph generation processes~\citep{sbm,graphon,graphrnn,graphdf}.
\revision{
	In fact, in Appendix~\ref{sec:case_scm_appdx}, we showcase how our SCMs can generalize to specific graph families studied in the literature~\citep{size_gen2,dir,handle_node}, when given more additional knowledge about the graph generation process.
	Nevertheless, we believe integrating specific graph generation processes and their implications to improving OOD generalization on graphs would be a promising future direction, as discussed in Appendix~\ref{sec:good_limit_future_appdx}.}

\revision{
	Due to the correlation between $E$ and $G$, graphs collected from different environments can have different
	structure-level properties} such as degrees, graph sizes, and subgraph densities,
as well as feature-level properties such as homophily~\citep{understand_att,size_gen1,size_gen2,hao}.
Meanwhile, all of them can spuriously correlated with the labels depending on how the underlying latent
variables are interacted with each others. The interaction types can be further
divided into two axiom types FIIF and PIIF, as well as the mixed one MIIF.
Previous OOD methods such as GIB~\citep{gib} and DIR~\citep{dir} mainly focus on FIIF case,
while others such as IRM~\citep{irmv1} mainly focuses on the PIIF case.
Evidences show that failing to model either of them when developing the OOD objectives
can have serious performance degenerations in practice~\citep{aubin2021linear,failure_modes}.
That is why we aim to model both of them in our solution.

\begin{figure*}[t]
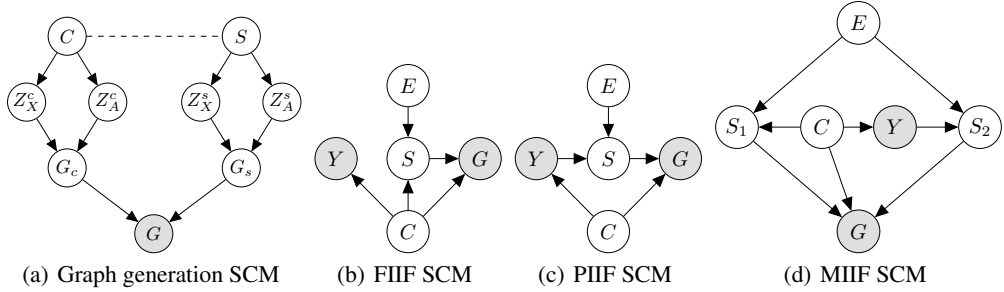

	\centering\hfill
	\subfigure[\revision{Graph generation SCM}]{\label{fig:graph_gen_appdx}
		\resizebox{!}{0.225\textwidth}{\tikz{
				\node[latent] (S) {$S$};%
				\node[latent,left=of S,xshift=-1.5cm] (C) {$C$};%
				\node[latent,below=of C,xshift=-0.75cm,yshift=0.5cm] (ZCA) {$Z_X^c$}; %
				\node[latent,below=of C,xshift=0.75cm,yshift=0.5cm] (ZCX) {$Z_A^c$}; %
				\node[latent,below=of S,xshift=-0.75cm,yshift=0.5cm] (ZSA) {$Z_X^s$}; %
				\node[latent,below=of S,xshift=0.75cm,yshift=0.5cm] (ZSX) {$Z_A^s$}; %
				\node[latent,below=of ZCX,xshift=-0.75cm,yshift=0.5cm] (GC) {$G_c$}; %
				\node[latent,below=of ZSX,xshift=-0.75cm,yshift=0.5cm] (GS) {$G_s$}; %
				\node[obs,below=of GC,xshift=1.6cm,yshift=0.5cm] (G) {$G$}; %
				\edge[dashed,-] {C} {S}
				\edge {C} {ZCX,ZCA}
				\edge {S} {ZSX,ZSA}
				\edge {ZCX,ZCA} {GC}
				\edge {ZSX,ZSA} {GS}
				\edge {GC,GS} {G}
			}}}
	\subfigure[FIIF SCM]{\label{fig:scm_fiif_appdx}
		\resizebox{!}{0.18\textwidth}{\tikz{
				\node[latent] (E) {$E$};%
				\node[latent,below=of E,yshift=0.5cm] (S) {$S$}; %
				\node[obs,below=of E,xshift=-1.2cm,yshift=0.5cm] (Y) {$Y$}; %
				\node[obs,below=of E,xshift=1.2cm,yshift=0.5cm] (G) {$G$}; %
				\node[latent,below=of Y,xshift=1.2cm,yshift=0.5cm] (C) {$C$}; %
				\edge {E} {S}
				\edge {C} {Y,G}
				\edge {S} {G}
				\edge {C} {S}
			}}}
	\subfigure[PIIF SCM]{\label{fig:scm_piif_appdx}
		\resizebox{!}{0.18\textwidth}{\tikz{
				\node[latent] (E) {$E$};%
				\node[latent,below=of E,yshift=0.5cm] (S) {$S$}; %
				\node[obs,below=of E,xshift=-1.2cm,yshift=0.5cm] (Y) {$Y$}; %
				\node[obs,below=of E,xshift=1.2cm,yshift=0.5cm] (G) {$G$}; %
				\node[latent,below=of Y,xshift=1.2cm,yshift=0.5cm] (C) {$C$}; %
				\edge {E} {S}
				\edge {C} {Y,G}
				\edge {S} {G}
				\edge {Y} {S}
			}}}
	\subfigure[MIIF SCM]{\label{fig:scm_miif_appdx}
		\resizebox{!}{0.24\textwidth}{\tikz{
				\node[latent] (E) {$E$};%
				\node[latent,below=of E,xshift=-2cm] (S1) {$S_1$}; %
				\node[latent,below=of E,xshift=-0.6cm] (C) {$C$}; %
				\node[latent,below=of E,xshift=2cm] (S2) {$S_2$}; %
				\node[obs,below=of E,xshift=0.6cm] (Y) {$Y$}; %
				\node[obs,below=of C,xshift=0.6cm] (G) {$G$}; %
				\edge {E} {S1,S2}
				\edge {C} {S1,Y,G}
				\edge {Y} {S2}
				\edge {S1,S2} {G}
			}}}
	\caption{Full SCMs on Graph Distribution Shifts.}
	\label{fig:scm_appdx}
\end{figure*}

\begin{assumption}[FIIF SCM]
	\label{assump:scm_fiif_appdx}
	\[\begin{aligned}
			Y:= f_\inv(C),\ S:=f_\spu(C,E),\ G:= f_\gen(C,S).
		\end{aligned}\]
\end{assumption}

\begin{assumption}[PIIF SCM]
	\label{assump:scm_piif_appdx}
	\[\begin{aligned}
			Y:= f_\inv(C),\ S:=f_\spu(Y,E),\ G:= f_\gen(C,S).
		\end{aligned}\]
\end{assumption}

\begin{assumption}[MIIF SCM]
	\label{assump:scm_miif_appdx}
	\[\begin{aligned}
			Y:= f_\inv(C),\ S_1:=f_\spu(C,E),\ S_2:=f_\spu(Y,E),\ G:= f_\gen(C,S_1,S_2).
		\end{aligned}\]
\end{assumption}

As for the interactions between $C$ and $S$ at the latent space,
we categorize the interaction modes into Fully Informative Invariant Features (FIIF, Fig.~\ref{fig:scm_fiif_appdx}), and Partially Informative Invariant Features (PIIF, Fig.~\ref{fig:scm_piif_appdx}),
depending on whether the latent invariant part $C$
is fully informative about label $Y$, i.e., $(S,E)\ind Y|C$.
It is also possible that FIIF and PIIF are entangled into a Mixed Informative Invariant Features (MIIF,Fig.~\ref{fig:scm_miif_appdx}).
We follow ~\citet{irmv1,ib-irm} to formulate the SCMs for FIIF and PIIF, where we omit noises for simplicity~\citep{causality,elements_ci}.
Since MIIF is built upon FIIF and PIIF, we will focus on the axiom interaction modes (FIIF and PIIF) in this paper,
while most of our discussions can be extended to MIIF or more complex interactions built upon FIIF and PIIF.

Among all of the interaction modes,
$f_\gen$ corresponds to the graph generation process in Assumption~\ref{assump:graph_gen_appdx}.
$f_\spu$ is the mechanism describing how $S$ is affected by $C$ and $E$ at the latent space.
In FIIF, $S$ is directly controlled by $C$
while in PIIF,
indirectly controlled by $C$ through $Y$,
which can exhibit different behaviors in practice~\citep{ib-irm,failure_modes}.
Additionally, in MIIF, $S$ is further partitioned into $S_1$ and $S_2$ depending on whether it is directly or indirectly controlled by $C$, respectively.
Moreover, $f_\inv:\gC\rightarrow\gY$ indicates the labeling process,
which assigns labels $Y$ for the corresponding $G$ merely based on $C$.
Consequently, $\gC$ is better clustered than $\gS$ when given $Y$~\citep{cluster_assump,cluster_assump2,causality4ml,towards_causality},
which also serves as the necessary separation assumption for a classification task~\citep{svm1,svm2,lda}.
\begin{assumption}[Latent Separability]
	\label{assump:latent_sep_appdx}$H(C|Y)\leq H(S|Y)$.
\end{assumption}

\revision{
	\subsection{Discussions on specific cases of the SCMs}
	\label{sec:case_scm_appdx}
	Although our primary focus in this work is to characterize general graph distribution shifts that could happen in practice without any additional knowledge about the underlying graph family, and derive the corresponding solutions, our SCMs (Fig.~\ref{fig:scm_appdx}) can generalize to specific cases studied in previous works, when incorporating more inductive biases about the underlying graph family~\citep{size_gen2,dir,handle_node}.
}
\revision{
	Specifically, we illustrate the specialized SCMs in Fig.~\ref{fig:case_scm_appdx} for the SCM studied in~\citep{size_gen2} which assumes the graphs are generated following a graphon model~\citep{graphon}.
}
\begin{figure}[ht]
	\centering
	\subfigure[\revision{$\gG$-Gen. SCM}]{\label{fig:graph_gen_case_appdx}
		\resizebox{!}{0.18\textwidth}{\tikz{
				\node[latent] (S) {$S$};%
				\node[latent,left=of S,xshift=0.5cm] (C) {$C_W$};%
				\node[latent,below=of C,xshift=-0.5cm,yshift=0.5cm] (GC) {$G_c$}; %
				\node[latent,below=of S,xshift=0.5cm,yshift=0.5cm] (GS) {$G_s$}; %
				\node[obs,below=of GC,xshift=1.05cm,yshift=0.5cm] (G) {$G$}; %
				\edge[dashed,-] {C} {S}
				\edge {C} {GC}
				\edge {S} {GS}
				\edge {GC,GS} {G}
			}}}
	\hfill
	\subfigure[\revision{FIIF SCM}]{\label{fig:scm_fiif_case_appdx}
		\resizebox{!}{0.18\textwidth}{\tikz{
				\node[latent] (E) {$E$};%
				\node[latent,below=of E,yshift=0.5cm] (S) {$S$}; %
				\node[obs,below=of E,xshift=-1.2cm,yshift=0.5cm] (Y) {$Y$}; %
				\node[obs,below=of E,xshift=1.2cm,yshift=0.5cm] (G) {$G$}; %
				\node[latent,below=of Y,xshift=1.2cm,yshift=0.5cm] (C) {$C_W$}; %
				\edge {E} {S}
				\edge {C} {Y,G}
				\edge {S} {G}
				\edge {C} {S}
			}}}
	\hfill
	\subfigure[\revision{Graphon SCM from~\citep{size_gen2}}.]{\label{fig:scm_graphon_case_appdx}
		\includegraphics[width=0.4\textwidth]{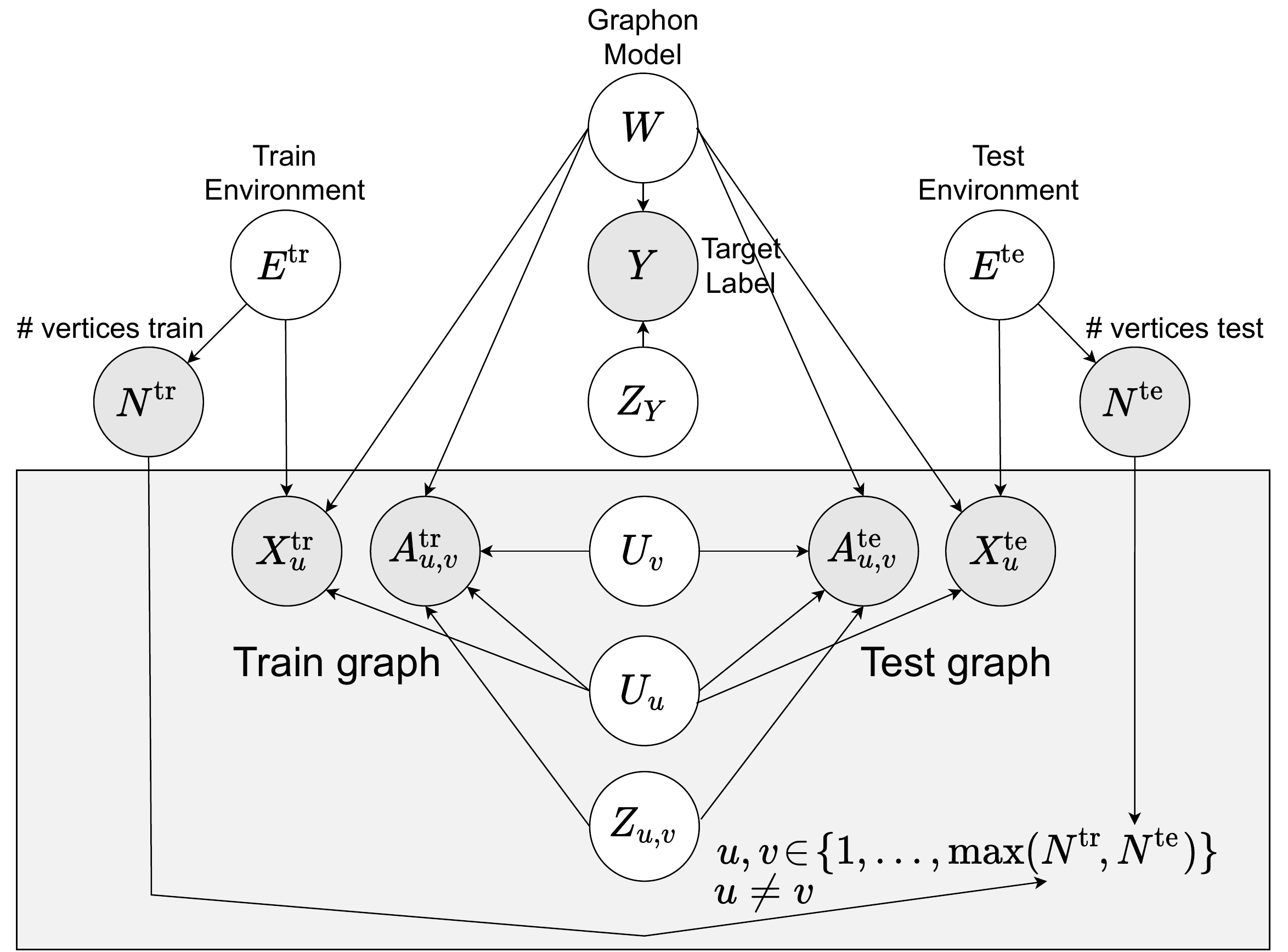}
	}
	\caption{
		\revision{Specialized graph generation SCMs when incorporating additional knowledge.}}
	\label{fig:case_scm_appdx}
\end{figure}

\revision{
	When with the additional knowledge about the underlying graph generative model, the graph generation SCM (Fig.~\ref{fig:graph_gen_appdx}) and the FIIF SCM (Fig.~\ref{fig:scm_fiif_appdx}) together generalizes to the graphon SCM studied in~\citep{size_gen2}. We now give a brief description in the below.
}

\revision{
Specifically, shown as in Fig.~\ref{fig:graph_gen_case_appdx}, $C$ now is instantiated as a graphon model $C_W\sim \mathbb{P}(C_W)$, where $C_W:[0,1]^2\rightarrow[0,1]$ is a random symmetric measurable function sampled from the set of all symmetric measurable functions~\citep{graphon}. Besides, the label $Y$ is determined according to $C_W$. Then, $C_W$ will further control the generation of the adjcency matrix $G_c=A^c$ through graphon generative process:
\[A^c_{u,v}:=\mathbb{I}(Z_{u,v}>C_W(U_u,U_v)),\ \forall u,v\in V,\]
where $Z_{u,v}$ is an independent uniform noises on $[0,1]$ for each possible edge $(u,v)$ in the graph. Bascially, $Z$ and $U$ are inherited from the graphon SCM as Fig.~\ref{fig:scm_graphon_case_appdx}.
}

\revision{On the other hand, as $S$ does not imply any information about $Y$ in this case, it resembles the FIIF SCM (Fig.~\ref{fig:scm_fiif_appdx}). In other words, $(S,E)\ind Y|C$ still holds. Moreover, the node attributes $G_s=X^s$ are generated jointly influenced by the environment $E$ and the graphon $C_W$ through $S$:
	\[X_v:=f_\gen^s(S),\ S:=f_\spu(E,C_W),\]
	which resembles the attribute generation in Fig.~\ref{fig:scm_graphon_case_appdx}.
}

\revision{Then, both $G_c$ and $G_s$ are concatenated together. In a simplistic case intuitively, we can regard $G_c$ only contains the edges in $G$ and $G_s$ only contains the node attributes. Since the graphon model mainly controls the edge connection, the edge connection patterns, e.g., motif appearance frequency or subgraph densities, acts as a informative indicator for the label $Y$. In contrast, the node attributes and its numbers would be affected by the environments. A GNN model is prone to the changes of the environments if it overfits to some spurious patterns about the graph sizes or the attributes. While if the GNN model can leverage the connection patterns to make predictions, it remain invariant to the changes of environments, or the spurious patterns such as graph sizes and node attributes, which resembles the solutions derived in~\citep{size_gen1,size_gen2}. Besides, it also partially explains why $\ginv$ can generalize to OOD graphs studied in these works~\citep{size_gen1,size_gen2}.}

\revision{In addition to the graphon SCM, essentially, the SCM studied in~\citep{dir} resembles the FIIF SCM, and that of~\citep{handle_node} resembles PIIF SCM, which also serves as partial evidence for the superiority OOD generalization performances of $\ginv$.}

\section{More Details about Failure Case Studies in Sec.~\ref{sec:limitation_prev}}
\label{sec:good_fail_setting_appdx}
In this section, we provide details on failure case studies in Sec.~\ref{sec:limitation_prev}.
We first elaborate the empirical evaluation setting where we construct a synthetic graph datasets to probe the
behaviors of existing methods in OOD generalization on graphs.

\subsection{More empirical details about failure case study in Sec.~\ref{sec:limitation_prev}}
\label{sec:more_emp_fail_case_appdx}

To begin with,
we construct 3-class synthetic datasets based on BAMotif~\citep{pge} and follow~\citet{dir} to inject spurious correlations
between motif graph and base graph during the generation.
In this graph classification task, the model needs to tell which motif the graph contains, e.g., ``House'' or ``Cycle'' motif, as shown in Fig.~\ref{fig:good_fail_cases_appdx}.
We inject the distribution shifts in the training data while keeping the test data and validation data without the biases.
For structure-level shifts, we introduce the artificial bias based on FIIF, where the motif and the base graph are spuriously correlated with a probability of various bias.
For mixed shifts, we additionally introduced attribute-level shifts based on FIIF, where all of the node features are spuriously correlated with a probability of various bias.
The number of training graphs is $600$ for each class and the number of graphs in validation and test set is $200$ for each class.
More construction details are given in Appendix~\ref{sec:exp_appdx}.

For the GNN encoders, by default,
we use $3$-layer GCN~\citep{gcn} with mean readout, a hidden dimension of $64$, and JK jump connections~\citep{jknet} at the last layer.
During training, we use a batch size of $32$, learning rate of $1e-3$ with Adam optimizer~\citep{adam}, and
batch normalization between hidden layers~\citep{batch_norm}.
Meanwhile, to stabilize the training,
we also use dropout~\citep{dropout} of $0.1$ and early stop the training when the validation accuracy does not increase till $5$ epoch after first $20$ epochs.
All of the experiments are repeated $5$ times, and the mean accuracy as well as variance are reported and plotted.
When using IRM objective~\citep{irmv1}, as the environment partitions are not available, we generate $2$ environments with random partitions.

\begin{wrapfigure}{r}{0.5\textwidth}
	\vspace{-0.2in}
	\includegraphics[width=0.5\textwidth]{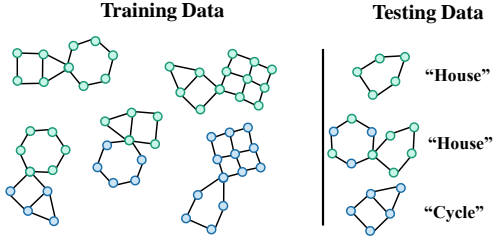}
	\vspace{-0.2in}
	\caption{Failure cases of existing methods. GNNs are required to classify whether the graph contains a ``house'' or ``cycle'',
		where the colors represent node features.
		However, distribution shifts in the training exists at both structure level (From left to right: ``house'' mostly co-occur with a hexagon),
		attribute level (From upper to lower: graphs nodes are mostly green colored if they contain ``house'', or blued colored if they contain ``cycle''),
		and graph sizes, making GNNs hard to capture the invariance.
		\textit{ERM can fail} for leveraging the shortcuts and predict graphs that have a hexagon or have mostly green nodes as ``house''.
		\textit{IRM can fail} when test data is not sufficiently supported by the training data.}
	\label{fig:good_fail_cases_appdx}
	\vspace{-0.2in}
\end{wrapfigure}

\subsection{More discussions about failure case study in Sec.~\ref{sec:limitation_prev}}
In Fig.~\ref{fig:good_fail_wosize_appdx},~\ref{fig:good_fail_size_appdx},~\ref{fig:good_fail_piif_wosize_appdx},~\ref{fig:good_fail_piif_appdx},
we investigate whether existing training objectives (ERM and IRM), adding more message passing, as well as using expressive GNNs, can improve the OOD generalization ability
on graphs. Here we also provide a additional discussion
in complementary to the discussions on OOD generalization performance of ERM and IRM objectives in Sec.~\ref{sec:limitation_prev}.
\begin{myquotation}
	\emph{Can better architectures improve OOD generalization of GNNs?}
\end{myquotation}
\textbf{Adding more message passing turns.} It is a common practice in GNNs to denoise the signals by aggregating more neighbors with higher layers,
or enhance the expressive power with more powerful readout functions~\citep{jknet,gin,p-reg}.
Aggregating neighbor information with more layers to denoise the input signal,
or enhancing the expressivity with more powerful readout functions,
are two common choices in GNNs to improve the generalization ability~\citep{jknet,oversmoothing,gin,p-reg}.
However, in the experiments next, we empirically found that GCNs with more layers and more powerful  readout operations are still sensitive to distribution shifts.
In particular, stacking more layers  helps denoising certain shifts,
while the OOD performance would drop more sharply when the bias increases.
Intuitively, if the spurious features from nodes cannot be eliminated by the denoising property of a deeper GNN,
they would spread among the whole graph more widely, which in turn leads to stronger spurious correlations.
Besides, the spurious correlations would be more difficult to be disentangled
if there are distribution shifts at both structure-level and attribute-level.
Since the node representations from hidden layers can also encode graph topology features~\citep{gin},
distribution shifts introduced through $Z_A^s$ and $Z_X^s$ will doubly mix at the learned features.
In the worst case, the information about $Z_A^c$ and $Z_X^c$ could be partially covered by or even replaced by $Z_A^s$ and $Z_X^s$.
This will make OOD generalization of message passing GNNs trained through ERM much more difficult or even impossible.
Besides, as the node representations of $1\leq i\leq k$-th layer can also encode graph topology features~\citep{gin}, which, if spuriously correlated with labels through $Z^s_A$ and entangled with part of invariant node features, i.e., $Z^c_X$, in the worst case, can greatly improve the difficulty or even make the OOD generalization impossible for neighbor aggregation GNNs trained with ERM.

\textbf{Using more expressive GNNs.} Previous results on the expressivity of GNNs show that GNNs are limited to
distinguish isomorphic graphs at most as 1-WL/2-WL test can distinguish~\cite{gin}. After that, many follow-up variants are proposed
to improve the expressivity of GNNs~\citep{wl_goml}. However, if the labels are spuriously correlated with certain subgraphs,
even the GNN has high expressivity can still be prone to distribution shifts.
In a idealistic case, when classifying a graph with a highly expressive GNN, it reduces to the linear or discrete feature case on the Euclidean regime.
In this case, there exists many evidences showing that neural networks can fail to generalize to OOD data without a proper objective~\citep{camel_example,covid19_application,irmv1,groupdro,meta-transfer,v-rex,env_inference,ood_max_inv,ib-irm}.
Empirically, we use $k$-GNNs~\citep{kgnn} to verify the intuition and observe similar failures for this provably more expressive GNN as basic GNN variants.

\subsection{More empirical results about failure case study in Sec.~\ref{sec:limitation_prev}}

\begin{figure}[ht]
	\centering
	\subfigure[Failures of training objectives.]{
		\includegraphics[width=0.3\textwidth]{figures/ood_failure_wosize.pdf}
	}
	\subfigure[Failures of deeper GNNs.]{
		\includegraphics[width=0.3\textwidth]{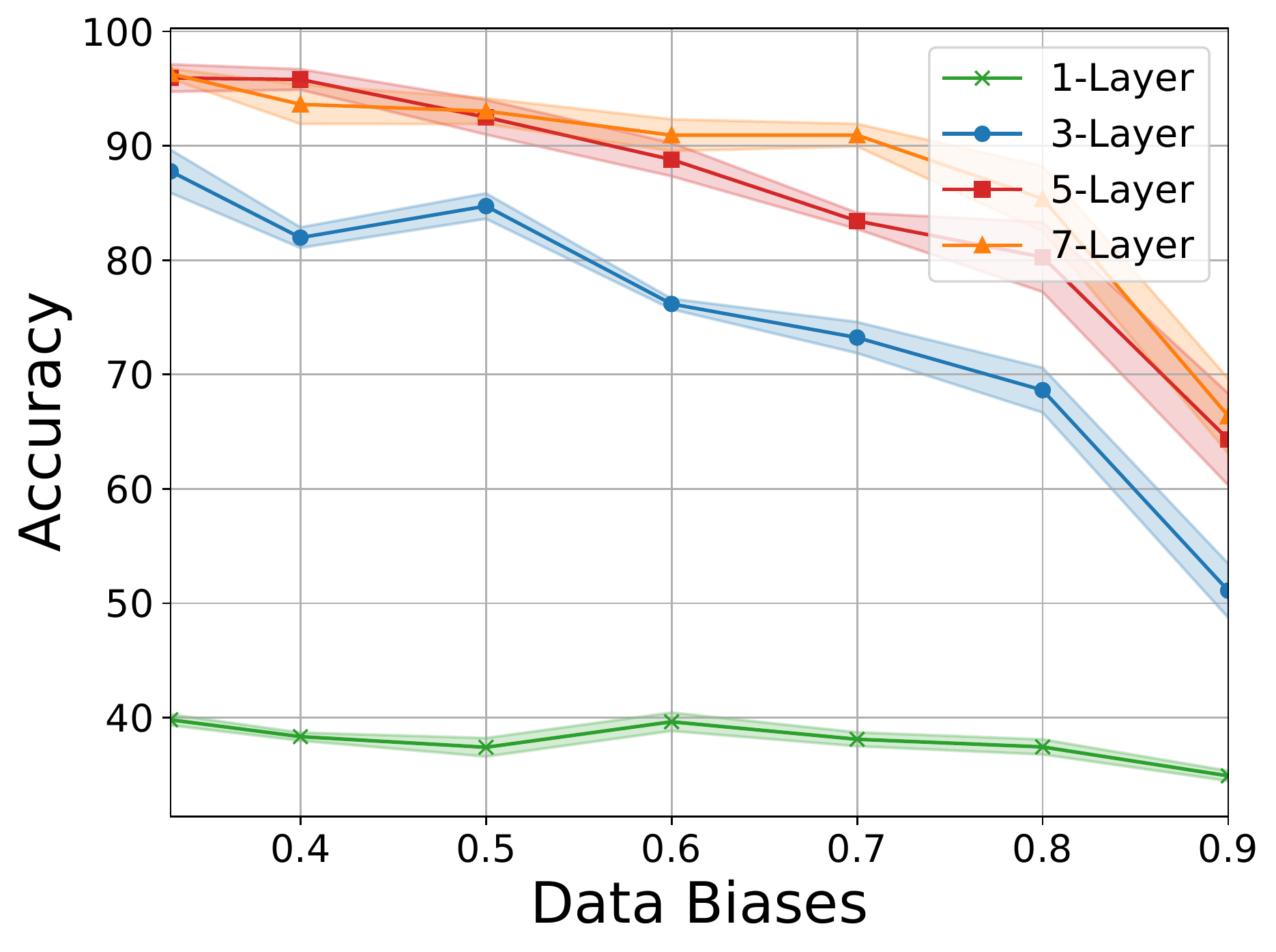}
	}
	\subfigure[Failures of expressive GNNs.]{
		\includegraphics[width=0.3\textwidth]{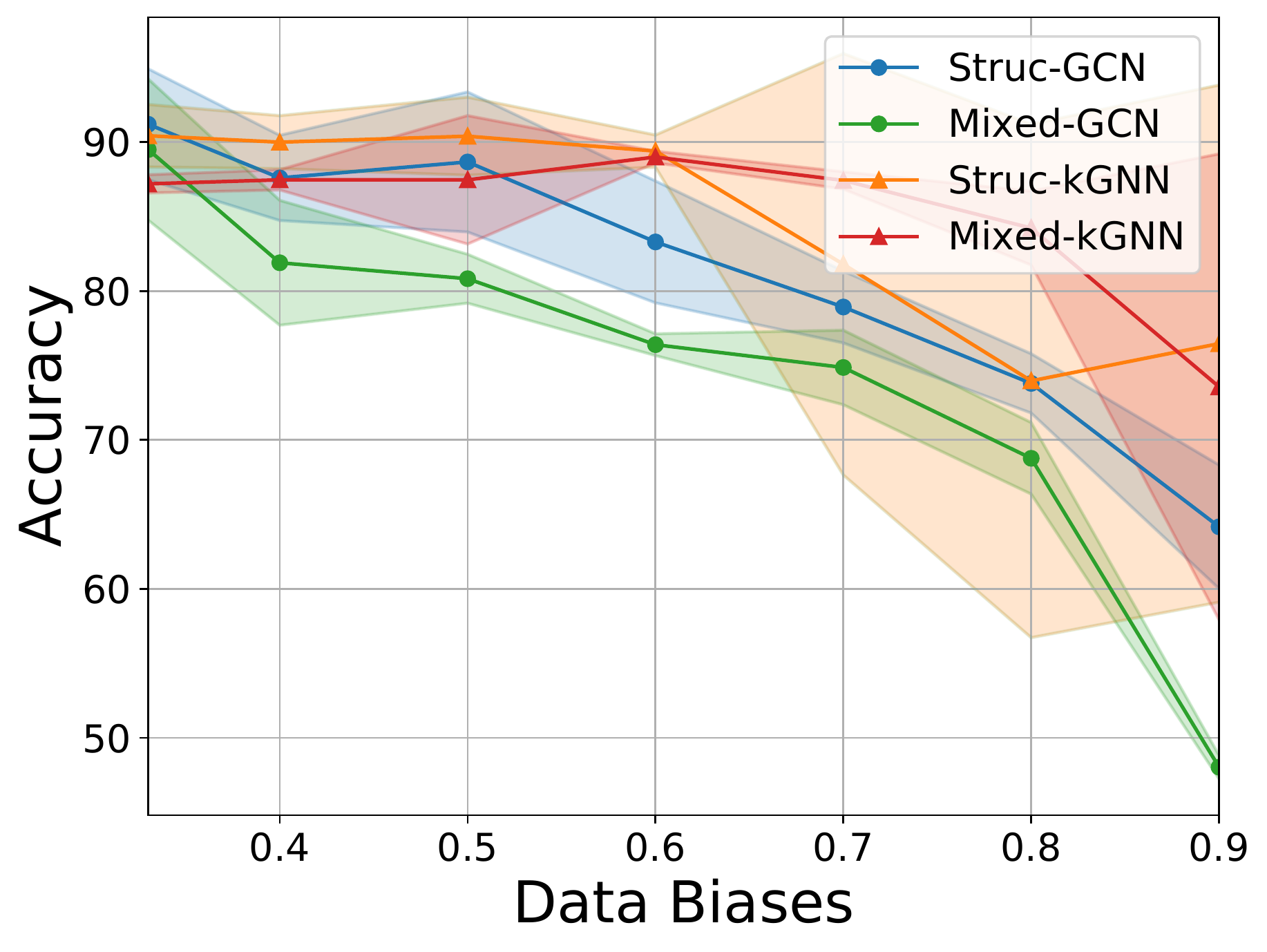}
	}
	\caption{
		Failure of existing methods on SPMotif with FIIF attribute shifts.}
	\label{fig:good_fail_wosize_appdx}
\end{figure}

\begin{figure}[ht]
	\centering
	\subfigure[Failures of training objectives.]{
		\includegraphics[width=0.3\textwidth]{figures/ood_failure_size.pdf}
	}
	\subfigure[Failures of deeper GNNs.]{
		\includegraphics[width=0.3\textwidth]{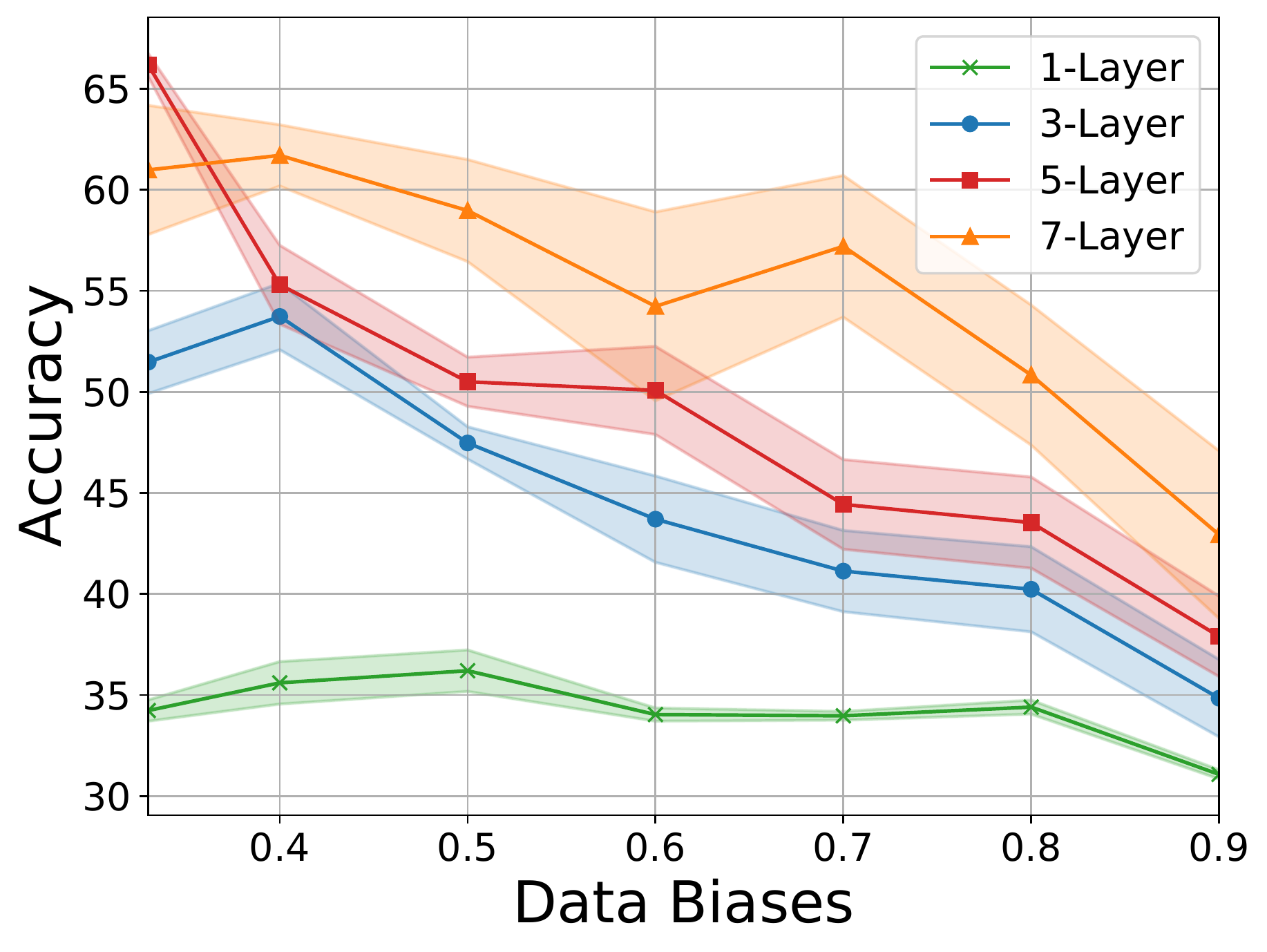}
	}
	\subfigure[Failures of expressive GNNs.]{
		\includegraphics[width=0.3\textwidth]{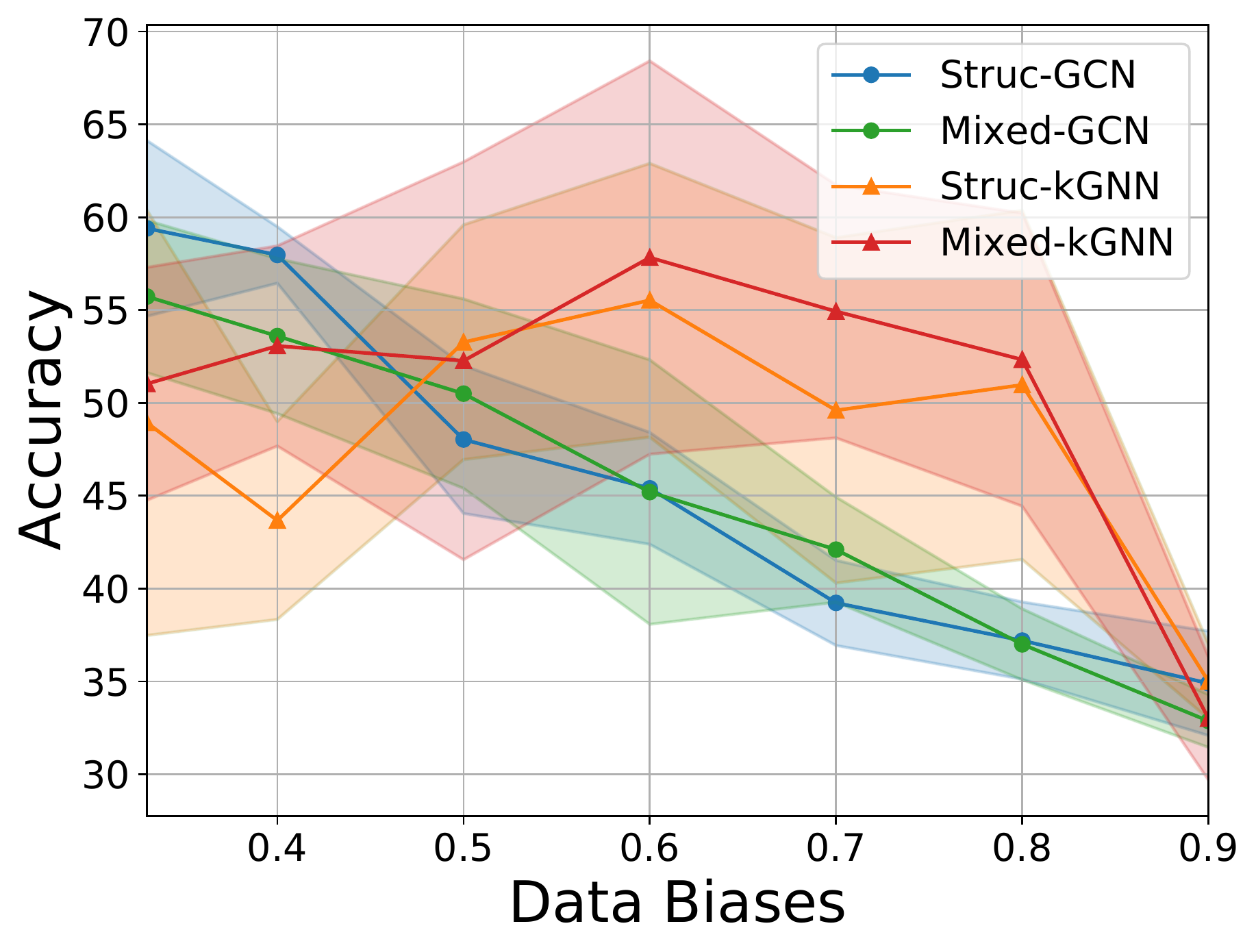}
	}
	\caption{
		Failure of existing methods on SPMotif with FIIF attribute shifts and graph size shifts.}
	\label{fig:good_fail_size_appdx}
\end{figure}

\begin{figure}[H]
	\centering
	\subfigure[Failures of training objectives.]{
		\includegraphics[width=0.3\textwidth]{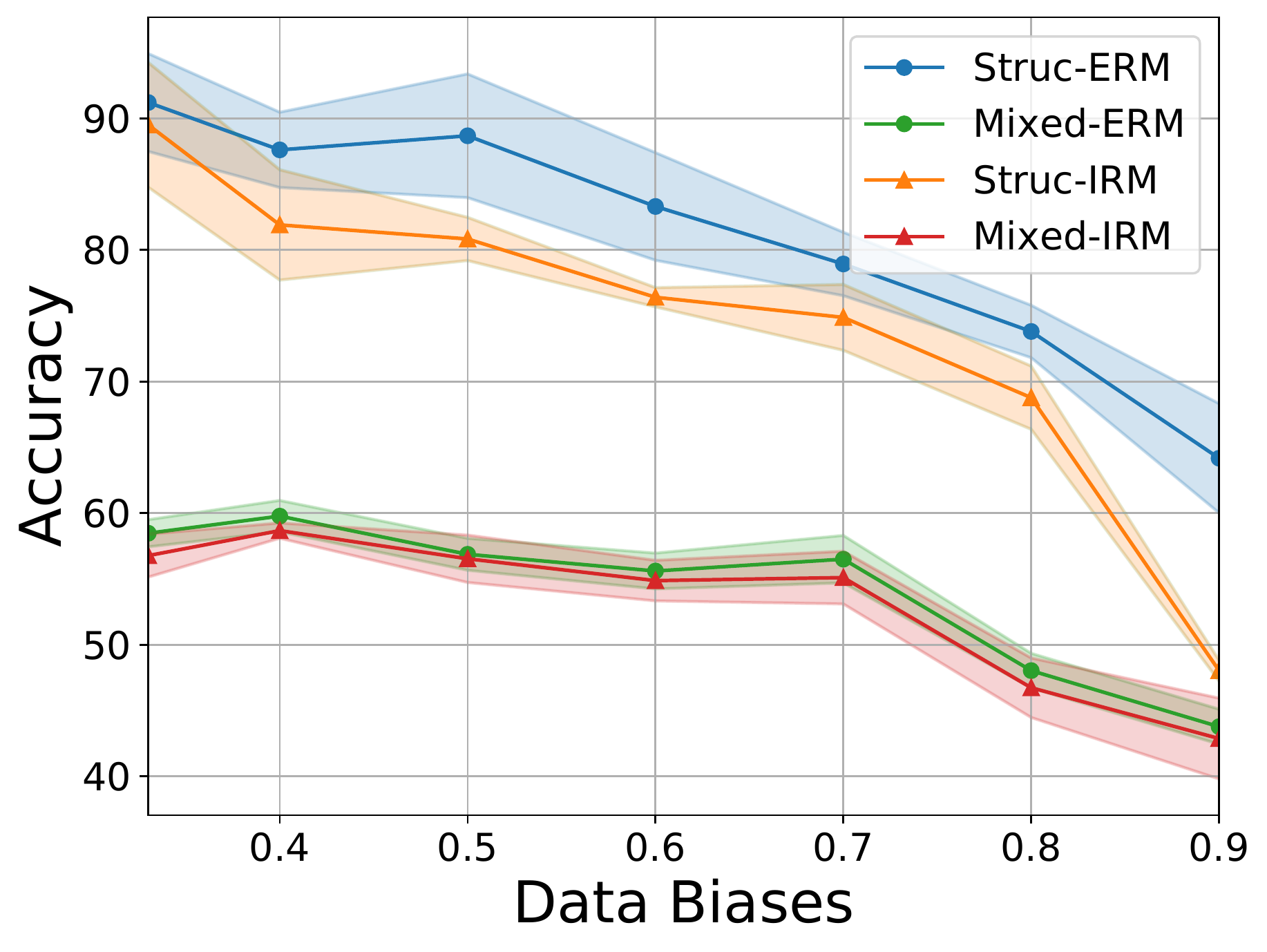}
	}
	\subfigure[Failures of deeper GNNs.]{
		\includegraphics[width=0.3\textwidth]{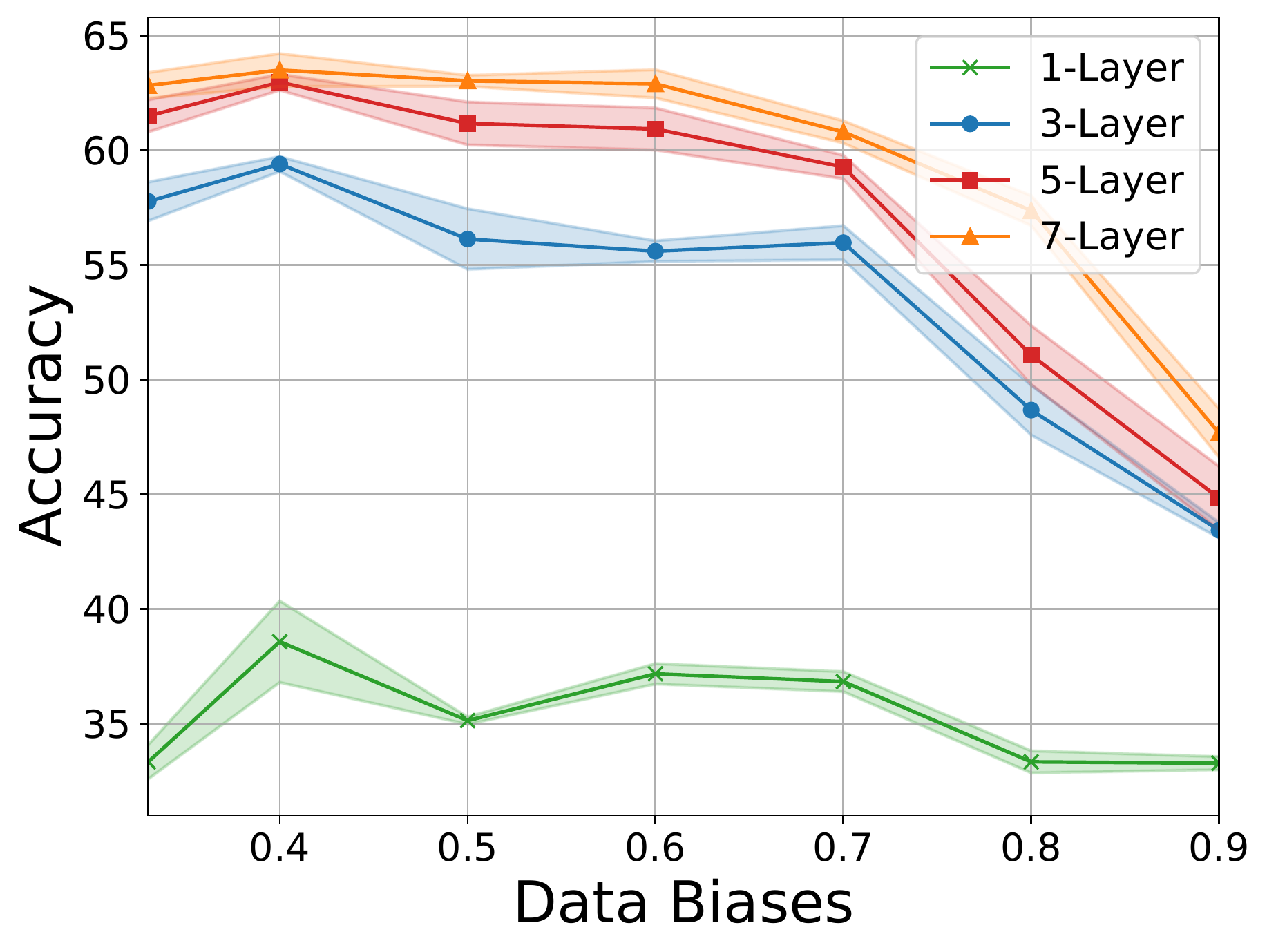}
	}
	\subfigure[Failures of expressive GNNs.]{
		\includegraphics[width=0.3\textwidth]{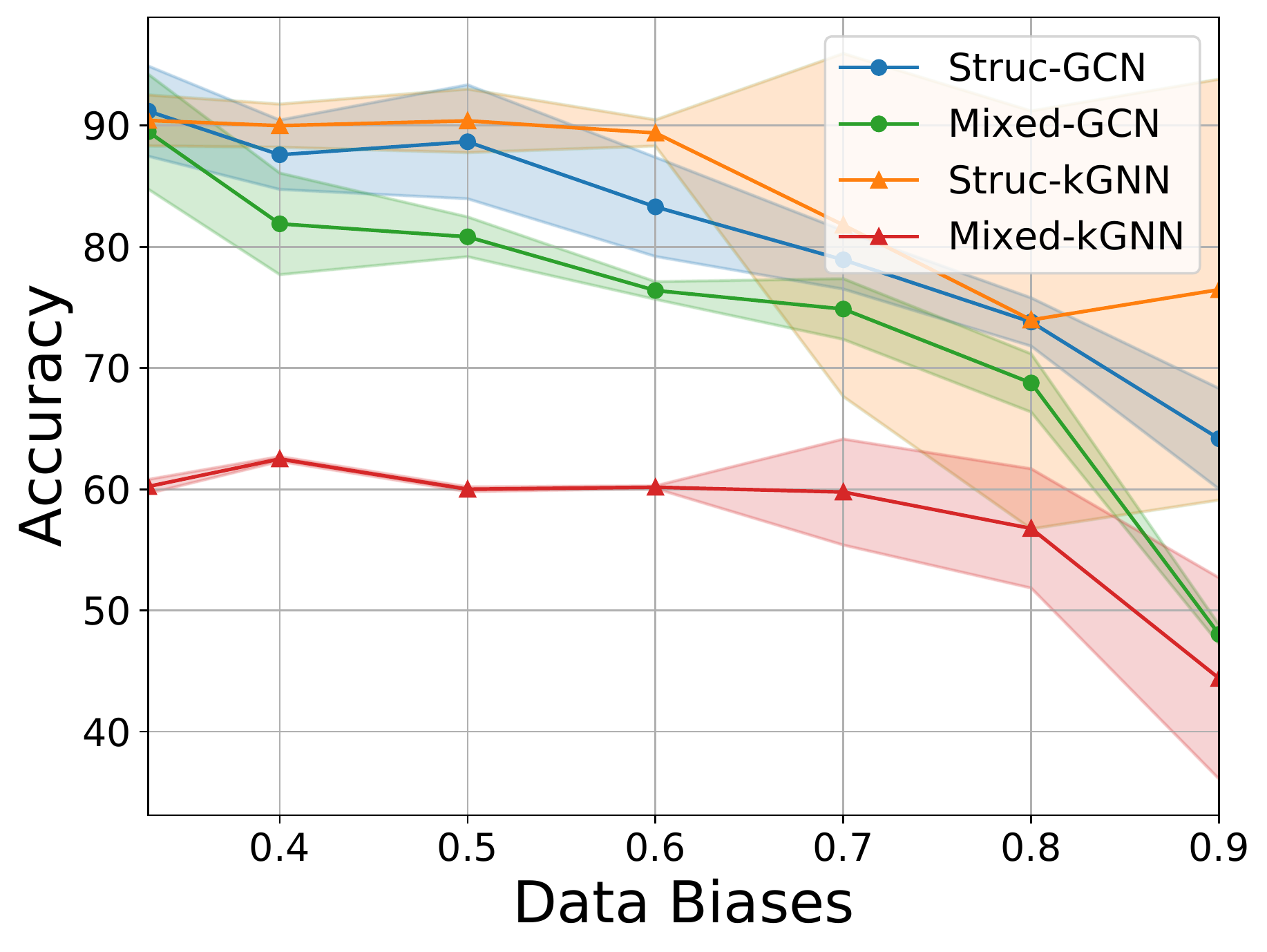}
	}
	\caption{
		Failure of existing methods on SPMotif with PIIF attribute shifts.}
	\label{fig:good_fail_piif_wosize_appdx}
\end{figure}

\begin{figure}[H]
	\centering
	\subfigure[Failures of training objectives.]{
		\includegraphics[width=0.3\textwidth]{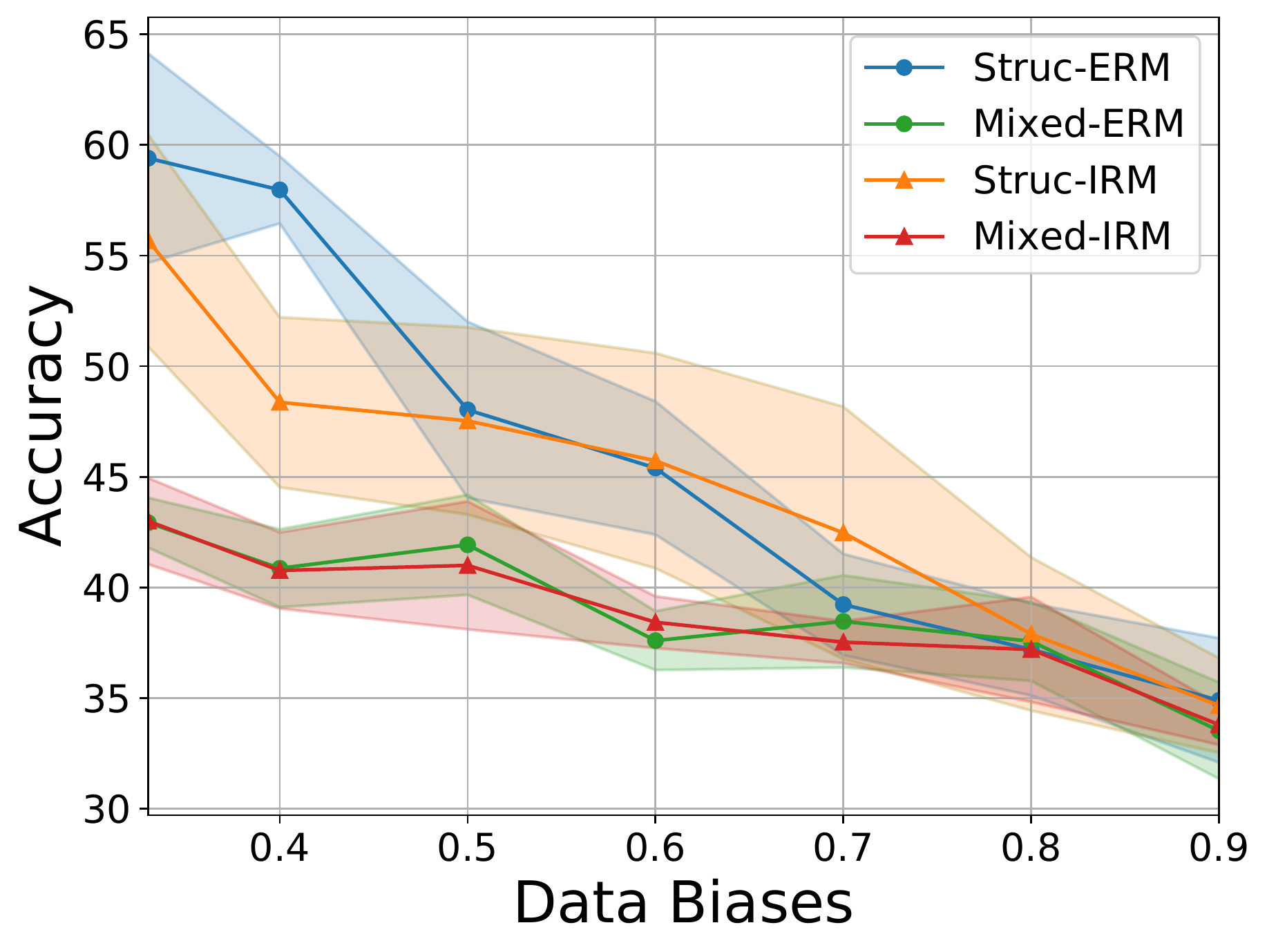}
	}
	\subfigure[Failures of deeper GNNs.]{
		\includegraphics[width=0.3\textwidth]{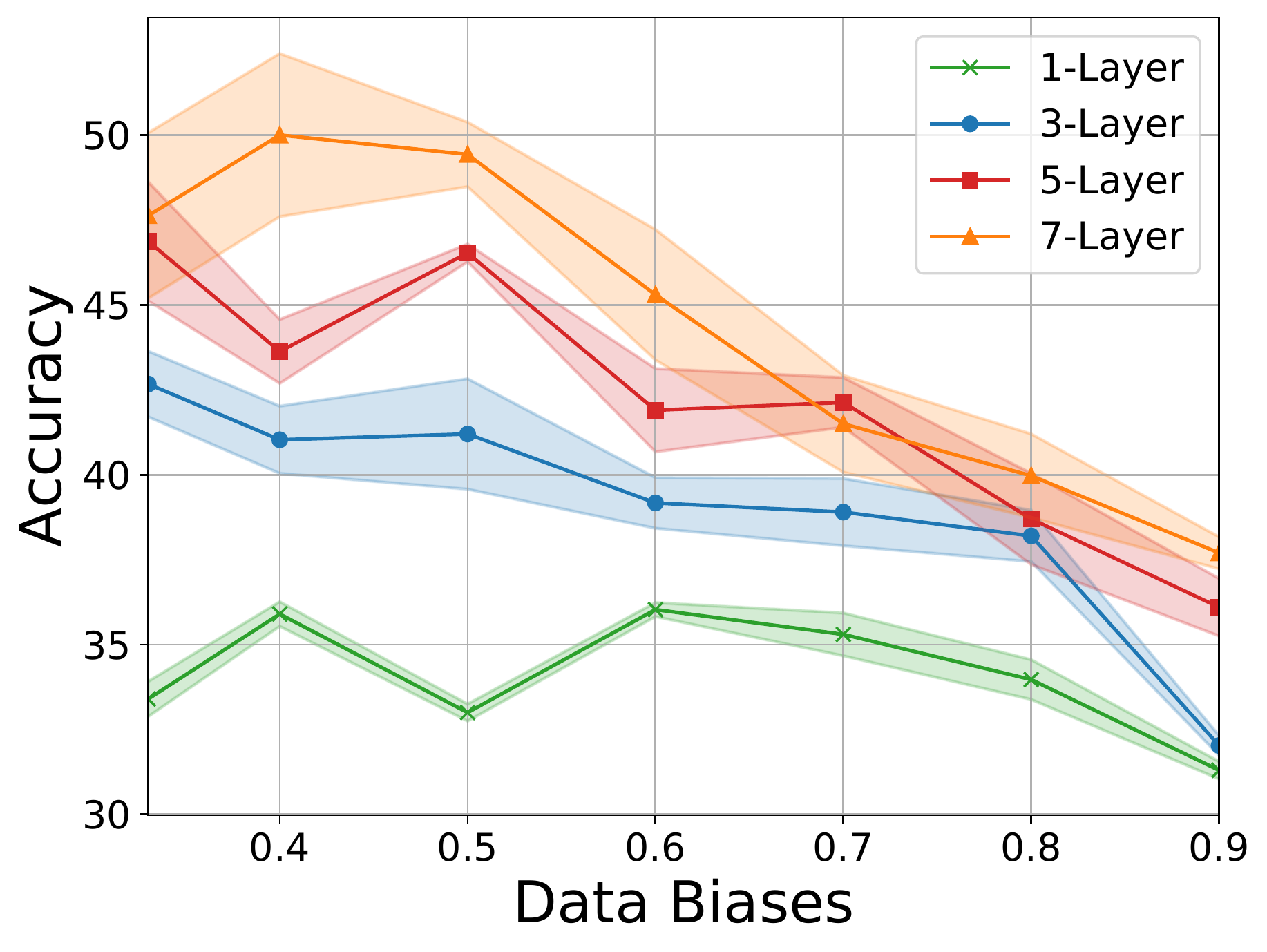}
	}
	\subfigure[Failures of expressive GNNs.]{
		\includegraphics[width=0.3\textwidth]{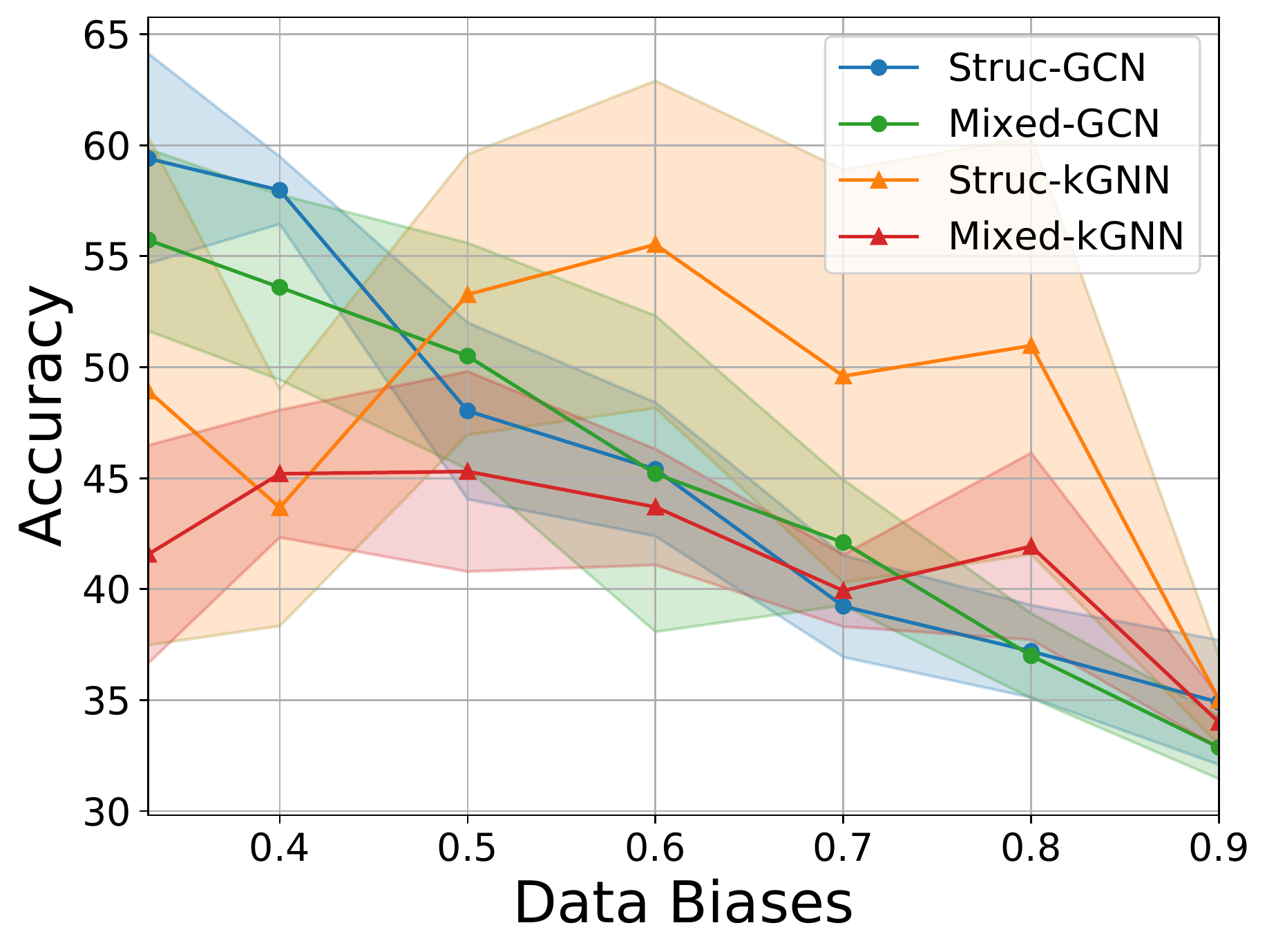}
	}
	\caption{
		Failure of existing methods on SPMotif PIIF attribute shifts with graph size shifts.}
	\label{fig:good_fail_piif_appdx}
\end{figure}

To explore the behaviors of aforementioned methods against complicated distribution shifts on graphs,
we first modify construction method in~\citet{dir} to construct dataset for Fig.~\ref{fig:good_fail_wosize_appdx}, where only
FIIF structure-level spurious correlations are injected.
Then we also inject FIIF attribute-level shifts, by setting the node attributes to constant vectors which is spuriously correlated with the labels.
Furthermore, in Fig.~\ref{fig:good_fail_size_appdx}, graph size shifts are added, which is exactly the SPMotif datasets used in DIR~\citep{dir}.
Besides, in Fig.~\ref{fig:good_fail_piif_wosize_appdx}, we can also change the FIIF attribute-level shifts to PIIF attribute-level shifts, where we flip the labels by a probability of $5\%$
and let the flipped label to be spuriously correlated with the node features, following the PIIF SCM in Fig.~\ref{fig:scm_appdx}.
Graph size shifts can also be injected in this case, shown as Fig.~\ref{fig:good_fail_piif_appdx}. Next, we summarize our findings from the experiments.

\textbf{Observation I: All existing methods are sensitive to distribution shifts.}
From the Fig.~\ref{fig:good_fail_wosize_appdx},~\ref{fig:good_fail_size_appdx},~\ref{fig:good_fail_piif_wosize_appdx},~\ref{fig:good_fail_piif_appdx},
we can observe that \emph{all} GNNs are sensitive to distribution shifts.
As the intensity of spurious correlation grows, GNNs are more likely to overfit to shortcuts
presented either in the structure-level or attribute-level, which is similar to general deep learning models~\citep{shortcut_dl}.

\textbf{Observation II: Higher variance also indicates unstable OOD performance.}
Although GNNs show certain robustness against single distribution shifts, e.g., performances do not decrease sharply at the beginning in Fig.~\ref{fig:good_fail_wosize_appdx},
when the spurious correlation grows stronger, the OOD performance become more \emph{unstable}, e.g., higher variance.
The reason is that, GNNs sometimes can directly learn about the desired information at some random initializations, since the task is relatively simple
compared to reality. Hence the performance will be highly sensitive to the quality of initialized points at the beginning.
Consequently, the performances from multiple runs would exhibit high variance.
However, when the task becomes more difficult, GNNs will consistently be prone to distribution shifts, and the variance will be smaller,
as shown in experiments (Sec.~\ref{sec:exp}).

\textbf{Observation III: Entangling more distribution shifts can degenerate more GNN performance.}
As implied by the graph generation SCMs in Fig.~\ref{fig:scm_appdx}, distribution shifts can happen at both structure-level and attribute-level,
and each of them can have different type of spurious correlation with the label.
In Fig.~\ref{fig:good_fail_wosize_appdx}, we can find that, when the attribute-level distribution shifts are mixed,
the performance will be worse and more unstable.
When the graph size shifts are mixed, this phenomenon will be more obvious, as shown in Fig.~\ref{fig:good_fail_size_appdx}.
This phenomenon also verifies the observations in~\citet{understand_att} that attention mechanism in GNN
is also sensitive to graph size shifts and can hardly learn the desired attention distributions without further guidance.
Moreover, when the structure-level and attribute-level shifts have different spurious correlation types,
i.e., when FIIF structure-level shifts and PIIF attribute-level shifts are both presented,
the performance drop will be more serious, by comparing Fig.~\ref{fig:good_fail_wosize_appdx} to Fig.~\ref{fig:good_fail_piif_wosize_appdx},
as well as Fig.~\ref{fig:good_fail_size_appdx} to Fig.~\ref{fig:good_fail_piif_appdx}.

\textbf{Observation IV: Using more powerful architectures can not improve the OOD performance.}
From the sub-figures (b) and (c) in
Fig.~\ref{fig:good_fail_wosize_appdx},~\ref{fig:good_fail_size_appdx},~\ref{fig:good_fail_piif_wosize_appdx},~\ref{fig:good_fail_piif_appdx},
we can also observe that neither adding more message passing turns nor using more expressive GNN architectures can
be immune to distribution shifts. On the contrary, they also exhibit similar behaviors like basic GNN architectures.
Specifically, adding more message passing runs show certain robustness against distribution shifts since they are more likely to
learn the desired information during the optimization~\citep{gnn_opt}.
However, when the intensity of spurious correlation grows stronger,
deeper GNNs are more likely to overfit to shortcuts hence their performances will drop more sharply.
On the other hand, using provably more expressive GNN architectures can not improve the OOD performance, either.
In Fig.~\ref{fig:good_fail_wosize_appdx},~\ref{fig:good_fail_size_appdx},~\ref{fig:good_fail_piif_wosize_appdx},~\ref{fig:good_fail_piif_appdx}
we use $1$-$2$-$3$-GNN following the algorithm of $k$-GNNs which is provably more expressive than $2$-WL test~\citep{kgnn}.
When there are no graph size shifts, $k$-GNNs will have higher performance at the beginning.
When there are graph size shifts, $k$-GNNs will have a lower initial performance at the beginning.
Then, as the spurious strength grows, $k$-GNNs can suddenly become seriously unstable, though
$k$-GNNs can have higher averaged performance, which reflects unsatisfactory OOD performance as Observation II implies.
When the intensity of spurious correlations grows even stronger, similar to deeper GNNs,
OOD performances of $k$-GNNs will be more unstable and go down to similar level as that of normal GNN architectures.
Hence, it calls for better optimization objectives as well as a suitable architectures to help improve the OOD generalization performance.

Beyond the empirical studies in previous section, we aim to accompany more formal discussions for explaining the failures of existing optimization
objectives and architectures in the next sections.

\subsection{Theoretical discussions for failure case study in Sec.~\ref{sec:limitation_prev}}
\label{sec:discussion_ood_obj_appdx}

\textbf{A motivating example.}
To begin with, we follow~\citet{ib-irm} to introduce a formal example on the failures of GNNs optimized with ERM or IRM~\citep{erm,irmv1} via a linear binary classification problem:
\begin{definition}[Linear classification structural equation model (FIIF)]
	\label{def:linear_fiif_appdx}
	\[
		\begin{aligned}
			 & Y:= (w_\inv^* \cdot C)\oplus N,\ N\sim \text{Ber}(q),\ N\ind (C,S), \\
			 & X\leftarrow S(C,S),
		\end{aligned}
	\]
	where $w_\inv^*\in\R^{n_c}$ with $\norm{w_\inv^*}=1$ is the labeling hyperplane,
	$C\in \R^{n_c},\ S\in\R^{n_s}$ are the corresponding invariant and varying latent variables, $N$ is Bernoulli binary noise with a parameter of $q$ and identical across all environments, $\oplus$ is the $\text{XOR}$ operator, $S$ is invertible.
\end{definition}

Given data generation process as Assumption~\ref{assump:graph_gen_appdx}, and latent space interaction as Assumption~\ref{assump:scm_fiif_appdx} or ~\ref{assump:scm_piif_appdx}, and strictly separable invariant features~\ref{assump:latent_sep},
consider a $k$-layer linearized GNN $\rho \circ h$ using $\text{mean}$ as $\text{READOUT}$ for binary graph classification,
if $\cup_{e\in\envtest}\text{supp}(\sP^e)\not\subseteq\cup_{e\in\envtrain}\text{supp}(\sP^e)$:
\begin{enumerate}[label=(\roman*),nosep]
	\item  For graphs features  generated as Definition~\ref{def:linear_fiif_appdx},
	      $\rho \circ h$ optimized with ERM or IRM will fail to generalize OOD (Eq.~\ref{eq:ood}) almost surely;
	\item For graphs with more than two nodes, globally same node features generated as Definition~\ref{def:linear_fiif_appdx}, and graph labels that are the same as global node labels, $\rho \circ h$ optimized with ERM or IRM will fail to generalize OOD (Eq.~\ref{eq:ood}) almost surely;
\end{enumerate}

For graph classification, if the number of nodes is fixed to one, it covers the linear classification as above. When  $\cup_{e\in\envtest}\text{supp}(\sP^e)\not\subseteq\cup_{e\in\envtrain}\text{supp}(\sP^e)$,
it implies the $S$ from training environments $\envtrain$ does not cover $S$ from testing environments, while $C$ can be covered. Moreover, the condition of strictly separable training data now can be formulated as $\min_{C\in\cup_{e\in\envtrain}(C\subseteq G^e)}\text{sgn}(w^*_\inv\cdot C)(w^*_\inv \cdot C)>0$. Recall that ERM trains the model by minimizing the empirical risk (e.g., 0-1 loss) over all training data, and IRM formulates OOD generalization as:
\begin{equation}
	\label{eq:irmv1_formula_appdx}
	\begin{aligned}
		\min_{\theta, f_c} & \frac{1}{|\envtrain|}\sum_{e\in\envtrain}R^e(\rho \circ h)                   \\
		\text{s.t.}\       & \rho\in\argmin_{\hat{\rho}}R^e(\hat{\rho}\circ h),\ \forall e \in \envtrain.
	\end{aligned}
\end{equation}
However, both ERM and IRM can not enable OOD generalization, i.e., finding the ground truth $w^*_\inv$, following the Theorem 3 from~\citet{ib-irm}:
\begin{theorem}[Insufficiency of ERM and IRM]
	Suppose each $e\in\envall$ follows Definition.~\ref{def:linear_fiif_appdx}, $C$ are strictly separable, bounded and satisfy the support overlap between $\envtrain$ and $\envtest$, and $S$ are bounded, if $S$ does not support the overlap, then both ERM and IRM fail at solving the OOD generalization problem.
\end{theorem}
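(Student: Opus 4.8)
The strategy is to reduce graph classification with a linearized GNN to the latent linear classification model of Definition~\ref{def:linear_fiif_appdx} and then invoke Theorem~3 of~\citet{ib-irm}. The argument has three parts: a hypothesis-class reduction, an ERM-failure step, and an IRM-failure step.

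\textbf{Reduction to latent linear classification.} First I would note that a $k$-layer linearized GNN $\rho\circ h$ with $\text{mean}$ readout produces a graph representation $h_G$ that is a linear functional of the pooled node latents, and that invertibility of $f_\gen^G$ and $f_\gen^{G_c}$ in Assumption~\ref{assump:graph_gen_appdx} makes $(C,S)$ recoverable from $G$. Hence the set of train/test risk profiles achievable by $\rho\circ h$ coincides with that of linear classifiers acting on $(C,S)\in\R^{n_c}\times\R^{n_s}$. For case (ii) -- graphs with globally identical node features and graph label equal to the common node label -- this is immediate: the problem literally is the one-node linear classification of Definition~\ref{def:linear_fiif_appdx}. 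For case (i) one additionally checks that mean pooling, being linear, preserves the separation structure, so the class equivalence still holds. Under this reduction, $\cup_{e\in\envtest}\text{supp}(\sP^e)\not\subseteq\cup_{e\in\envtrain}\text{supp}(\sP^e)$ means that on the test environments the spurious latent $S$ leaves its training support while $C$ stays covered, and strict separability of Assumption~\ref{assump:latent_sep_appdx} becomes $\min_{C}\text{sgn}(w^*_\inv\cdot C)(w^*_\inv\cdot C)>0$ over the training range of $C$.

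\textbf{ERM fails.} Next I would exhibit minimum-risk predictors that rely on $S$. On the training environments $S$ carries information about $Y$: under FIIF (Assumption~\ref{assump:scm_fiif_appdx}) through $S=f_\spu(C,E)$ with $Y$ determined by $C$ up to the environment-independent noise $N$; under PIIF (Assumption~\ref{assump:scm_piif_appdx}) directly through $S=f_\spu(Y,E)$. Because the training data are bounded and strictly separable, a generic (positive-measure) subset of $0/1$-risk minimizing linear classifiers has nonzero weight on the $S$-coordinates; each such classifier matches $w^*_\inv$ on the training risk but, since the test support of $S$ is uncovered, its predictions can be forced wrong on a non-negligible part of $\envtest$, so $\max_{e\in\envall}R^e$ in Eq.~\ref{eq:ood} stays bounded away from the optimum. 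The exceptional realizations of $(C,S,N)$ for which this breaks form a measure-zero set, which yields the ``almost surely'' qualifier.

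\textbf{IRM fails, and the main obstacle.} Finally I would show the IRM constraint in Eq.~\ref{eq:irmv1_formula_appdx} does not rule these solutions out: in the strictly separable linear regime the per-environment optimal classifier is a common strict separator that remains optimal after adding $S$-directions (they cost no training error), so the IRM-feasible set still contains spurious-using predictors, and the test-failure of the ERM step applies verbatim -- this is exactly Theorem~3 of~\citet{ib-irm}, invoked after the reduction. The step I expect to be hardest is that reduction: one must carefully verify that the linearized-GNN-with-mean-readout hypothesis class is equivalent, for the purpose of the failure argument, to linear predictors on $(C,S)$ -- in particular that pooling does not inadvertently destroy the spurious component and thereby enforce invariance -- and that the PIIF case is handled with the same bookkeeping as FIIF; the remaining two steps are then a direct transcription of the IB-IRM linear impossibility result.
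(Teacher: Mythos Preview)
Your proposal is essentially correct and lands on the same core step the paper uses: the theorem is not proved from scratch but is imported as Theorem~3 of \citet{ib-irm}, with the accompanying intuition that under strict separability of $C$ there are infinitely many Bayes-optimal linear classifiers on $\envtrain$, only one of which has zero weight on $S$, so the chance of selecting the invariant one is zero in probability. Your ERM and IRM failure sketches are a faithful expansion of exactly that argument.

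Where you diverge is in scope. The theorem as stated is purely about the linear latent model of Definition~\ref{def:linear_fiif_appdx}; it does not mention GNNs, mean readout, or Assumption~\ref{assump:graph_gen_appdx}. The GNN cases (i) and (ii) you spend your first paragraph reducing are part of the \emph{surrounding informal discussion} in the paper, not part of this theorem, and the paper handles them separately and tersely: case~(i) is the single-node graph, which \emph{is} the linear model, and case~(ii) is dispatched in one sentence by noting that with identical node features and mean pooling the graph problem collapses to the single-node problem. So your ``hardest step'' --- verifying that the linearized-GNN hypothesis class matches linear predictors on $(C,S)$ and that pooling doesn't destroy the spurious component --- is extra work relative to what the stated theorem requires, and the paper does not attempt it at that level of generality. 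If you strip your proposal down to ``invoke Theorem~3 of \citet{ib-irm} and record the multiplicity-of-optima intuition,'' you match the paper exactly.
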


The reason is that, when $C$ from all environments are strictly separable, there can be infinite many Bayes optimal solutions given training data $\{G^e,y^e\}_{e\in\envtrain}$, while there is only one optimal solution that does not rely on $S$. Hence, the probability of generalization to OOD (finding the optimal solution) tends to be $0$ in probability.

As for case (ii), when the GNN uses mean readout to classify more than one node graphs, assuming the graph label is determined by the node label and all of the nodes have the same label that are determined as Definition~\ref{def:linear_fiif_appdx}, then GNN optimized with ERM and IRM will also fail because of the same reasons as case (i).

\textbf{Discussions on the failures of previous OOD related solutions.}
First of all, for IRM or similar objectives~\citep{groupdro,v-rex,ib-irm,gen_inv_conf} that require environment information or non-trivial data partitions,
they can hardly be applied to graphs due to the lack of such information.
The reason is that obtaining such information can be expensive due to the abstraction of graphs.
Moreover, as proved in Theorem 5.1 of~\citet{risk_irm}, when there is not sufficient support overlap between training environments and testing environments, the IRM or similar objectives can fail catastrophically when being applied to non-linear regime.
The only OOD objective EIIL~\citep{env_inference} that does not require environment labels, also rely on similar assumptions on the support overlap.
We also empirically verify their failing behaviors in our experiments.

Moreover, since part of explainability works also try to find a subset of the inputs for interpretable prediction robustly against distribution shifts.
Here we also provide a discussion for these works.
The first work following this line is $\invrat$~\citep{inv_rat},
which develops an information-theoretic objective (we re-formulate it to suit with OOD generalization problem on graphs):
\begin{equation}
	\label{eq:inv_rat_appdx}
	\min_{g,f_c} \max_{f_s}R(f_c \circ g,Y)+\lambda h(R(f_c \circ g,Y)-R_e(f_s\circ g,Y,E)).
\end{equation}
However, it also requires extra environment labels for optimization that are often unavailable in graphs.
Besides, the corresponding assumption on the data generation for guaranteed performance is essentially PIIF if applied to our case,
while it can not provide any theoretical guarantee on FIIF.

We also notice a recent work, $\dir$~\citep{dir}, as a generalization of  $\invrat$ to graphs while studying FIIF spurious correlations,
that proposes an alternative objective which does not require environment label:
\begin{equation}
	\label{eq:dir_appdx}
	\min \mathbb{E}_s[R(h,Y|\doop(S=s))]+\\\lambda \var_s(\{R(h,Y|\doop(S=s))\}).
\end{equation}
However, the theoretical justification established for $\dir$ (Theorem 1 to Corollary 1 in~\citet{dir}) essentially depends on the
quality of the generator $g$ which can be prone to spurious correlations.
Thus, $\dir$ can hardly provide any theoretical guarantees when applied to our case, neither for FIIF nor PIIF.
In experiments, we empirically find the unstable and relatively high sensitivity of DIR to spurious correlations,
which verifies our finding.
More details about empirical behaviors of DIR can be found in Appendix~\ref{sec:exp_appdx}.

In contrast to $\dir$, GIB~\citep{gib} that focuses on discovering a informative subgraph for explanation,
essentially can provide theoretical guarantees for FIIF spurious correlations. Theoretically,
(we copy the discussion in Appendix~\ref{sec:good_impl_appdx} here to provide an overview of relationships
between GIB and DIR.)
Under the FIIF assumption on latent interaction,
the independence condition derived from causal model can
also be rewritten as $Y\ind S|C$ (similar to that in DIR~\citep{dir} as they also focus on FIIF),
which further implies $Y\ind S|\widehat{G}_c$.
Hence it is natural to use Information Bottleneck (IB) objective~\citep{ib}
to solve for $G_c$:
\begin{equation}
	\label{eq:good_ib}
	\begin{aligned}
		\min_{f_c,g} & \ R_{G_c}(f_c(\widehat{G}_c)),                                                        \\
		\text{s.t.}  & \ G_c=\argmax_{\widehat{G}_c=g(G)\subseteq G}I(\widehat{G}_c,Y)-I(\widehat{G}_c,\gG), \\
	\end{aligned}
\end{equation}
which explains the success of many existing
works in finding predictive subgraph through IB~\citep{gib}.
However, the estimation of $I(\widehat{G}_c,G)$
is notoriously difficult due to the complexity of graph,
which can lead to unstable convergence as observed in our experiments.
In contrast, optimization with contrastive objective in $\ginv$ as Eq.~\ref{eq:good_opt_contrast}
induces more stable convergence.
\subsection{Challenges of OOD generalization on graphs.}
From the aforementioned analysis, we can summarize some key challenges revealed by the failures of both existing optimization
objectives and GNN architectures.
In particular, we are facing two main challenges
a) Distribution shifts on graphs are more complicated where different types of spurious correlations can be entangled via different graph properties;
b) Environment labels are usually not available due to the abstract graph data structure.

\section{Theory and Discussions}
\label{sec:theory_appdx}
In this section, we provide proofs for propositions and theorems mentioned in the main paper.
\revision{
	\subsection{More discussions on Definition~\ref{def:inv_gnn} for Invariant GNNs}
	\label{sec:inv_gnn_discuss_appdx}
	Definition~\ref{def:inv_gnn} is motivated by applying the invariance principle to the established SCMs in Sec.~\ref{sec:data_gen}, following the literature of invariant learning~\citep{inv_principle}. In this section, we will present Proposition~\ref{thm:causal_minmax_appdx} and Proposition~\ref{thm:minmax_causal_appdx}
	to illustrate how satisfying the minmax objective in Definition~\ref{def:inv_gnn_appdx} is equivalent to identifying the underlying invariant subgraph $G_c$ that contains all of the information about causal factor $C$ in $G$, under both FIIF and PIIF SCMs (Fig.~\ref{fig:scm_fiif} and Fig.~\ref{fig:scm_piif}).
}\revision{
	\begin{definition}[Invariant GNN]
		\label{def:inv_gnn_appdx}
		Given a set of graph datasets $\{\dataset^e\}_e$ %
		and environments $\envall$ that follow the same graph generation process in Sec.~\ref{sec:data_gen},
		considering a GNN $\rho \circ h$ that has a permutation invariant graph
		encoder $h:\gG\rightarrow\R^h$ and a downstream classifier $\rho:\R^h\rightarrow\gY$,
		$\rho \circ h$ is an invariant GNN if it minimizes the worst case  risk
		among all environments, i.e., $\min \max_{e\in\envall}R^e$.
	\end{definition}
}

\revision{First, we show that using the invariant subgraphs $G_c$ to predict $Y$ can satisfy the minmax objective $\min \max_{e\in\envall}R^e$ in Proposition~\ref{thm:causal_minmax_appdx}. }

\revision{
	\begin{proposition}
		\label{thm:causal_minmax_appdx}
		Let $\gG_c$ denote the subgraph space for $G_c$,
		given a set of graphs with their labels $\dataset=\{G^{(i)},y^{(i)}\}_{i=1}^N$ and $\envall$ that
		follow the graph generation process in Sec.~\ref{sec:data_gen} (or Sec.~\ref{sec:full_scm_appdx}),
		a GNN $\rho\circ h:\gG_c\rightarrow\gY$ that takes $G_c$ of $G$ as the input to predict $Y$, and solves the following objective can generalize to OOD graphs, i.e., solving the minmax objective in Def.~\ref{def:inv_gnn_appdx}:
		\[\min_{\theta}R_{\gG_c}(\rho\circ h),\]
		where $R_{\gG_c}$ is the empirical risk over $\{G^{(i)}_c,y^{(i)}\}_{i=1}^N$
		and $G^{(i)}_c$ is the underlying invariant subgraph $G_c$ for $G^{(i)}$.
	\end{proposition}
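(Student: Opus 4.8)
The plan is to reduce the claim to a single structural fact --- that the joint law of $(G_c,Y)$ is the same in every environment --- and then read minimax optimality off the invariance of the per-environment risk. Write $f^\star(g_c):=\E[Y\mid G_c=g_c]$ for the Bayes predictor of the chosen loss. I would argue: (i) $\sP^e(G_c,Y)$ does not depend on $e\in\envall$; (ii) any minimizer of $R_{\gG_c}(\rho\circ h)$ coincides (a.e.)\ with $f^\star$; and (iii) $f^\star$ attains the minimum worst-case risk, so it realizes $\min\max_{e\in\envall}R^e$.

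First I would establish $(G_c,Y)\perp E$ under both the FIIF SCM (Assumption~\ref{assump:scm_fiif}) and the PIIF SCM (Assumption~\ref{assump:scm_piif}). By Assumption~\ref{assump:graph_gen} (or its refinement in Sec.~\ref{sec:full_scm_appdx}), $G_c=f_\gen^{G_c}(C)$ is a measurable function of $C$ and exogenous structural noise; in both SCMs $Y=f_\inv(C)$ is likewise a function of $C$ and exogenous noise. In each SCM $C$ and $E$ are root variables with no edge and no common ancestor, hence $C\perp E$, and the noises feeding $G_c$ and $Y$ are independent of $E$ as well; so $(G_c,Y)$ is a function of variables jointly independent of $E$, giving $(G_c,Y)\perp E$ and therefore $\sP^e(G_c,Y)=\sP(G_c,Y)$ for every $e\in\envall$. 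This is precisely the invariance of $\sP(Y\mid G_c)$ --- together with that of $\sP(G_c)$ --- that the ICM assumption on the mechanism $C\to Y$ guarantees.

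Next, since the pooled training sample $\{(G^{(i)}_c,y^{(i)})\}_{i=1}^N$ is drawn from this single, environment-free law, in the population limit any minimizer of $R_{\gG_c}(\rho\circ h)$ agrees a.e.\ with $f^\star$; realizability within the class of permutation-invariant GNNs is legitimate because $f^\star$ uses $G_c$ only through the information it carries about $C$ (in particular it is invariant to relabelings of $G_c$) and because $G_c$ is a deterministic subgraph of $G$, so a full-graph GNN can realize the $G_c$-extraction map composed with $f^\star$. By the previous step, $R^e(\rho\circ h)=\E_{(G_c,Y)\sim\sP}[\ell(f^\star(G_c),Y)]=:R^\star$ for every $e\in\envall$, hence $\max_{e\in\envall}R^e(\rho\circ h)=R^\star$. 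It then remains to see that $R^\star=\min_{\rho',h'}\max_{e\in\envall}R^e(\rho'\circ h')$: a GNN beating $R^\star$ in the worst case would have to fall below the Bayes risk of $Y$ given $C$ in \emph{every} environment, in particular in an environment of $\envall$ in which $G_s$ carries no information about $Y$ beyond $C$; such an environment exists under the richness of $\envall$ implied by the ICM assumption (made precise by the companion Proposition~\ref{thm:minmax_causal_appdx}), a contradiction. Hence $\rho\circ h$ attains $\min\max_{e\in\envall}R^e$ and is an invariant GNN in the sense of Definition~\ref{def:inv_gnn_appdx}.

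The conceptual content is light; the care goes into three places. The first is carrying the independence argument uniformly across FIIF, PIIF (and MIIF) while the structural noises are only implicit in the assumptions. The second is the realizability and permutation-invariance bookkeeping that turns ``solves $\min R_{\gG_c}$'' into ``equals $f^\star$''. The third --- and the only genuinely nontrivial point --- is the lower bound $R^\star=\min\max_e R^e$: $f^\star$ need not be Bayes-optimal for $Y$ given the \emph{full} graph $G$ inside an individual training environment (for instance through the $Y\to S$ edge in PIIF), so this step leans on the assumption that $\envall$ contains an adversarial environment --- exactly the assumption under which OOD generalization on graphs is meaningful at all.
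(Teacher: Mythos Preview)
Your proposal is correct and shares the paper's core idea---use the SCM structure to show that the law of $(G_c,Y)$ does not vary with $e$, then read off minimax optimality---but it is considerably more complete than the paper's own argument. The paper derives only the conditional invariance $P(Y\mid G_c)=P(Y\mid G_c,E=e)$ from the ICM assumption (via a short chain of equalities starting from $P(Y\mid C)=P(Y\mid C,E=e)$), and then simply asserts that ``$\min R_{\gG_c}=\min\max R^e$'' without separating the upper and lower bounds. You instead establish the stronger joint independence $(G_c,Y)\perp E$ directly from the root status of $C$ and $E$ in both SCMs, which immediately gives $R^e(f^\star)=R^\star$ for all $e$; and you correctly identify that the nontrivial direction is the lower bound $\min\max_e R^e\geq R^\star$, which you handle by invoking an adversarial environment in $\envall$ (effectively the content of Proposition~\ref{thm:minmax_causal_appdx}). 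Your added realizability and permutation-invariance bookkeeping is also absent from the paper. In short: same skeleton, but your version fills gaps the paper leaves implicit, at the cost of some extra assumptions (population limit, richness of $\envall$) that the paper's terse argument simply does not surface.
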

}\revision{
	\begin{proof}
		\label{proof:causal_minmax_appdx}
		We establish the proof with independent causal mechanism (ICM) assumption in SCM~\citep{causality,elements_ci}.
		In particular, given the data generation assumption, i.e.,
		for both FIIF (Assumption~\ref{assump:scm_fiif}) and PIIF (Assumption~\ref{assump:scm_piif}),
		we have: $\forall e,$
		\begin{equation}
			\label{eq:icm_appdx}
			\begin{aligned}
				P(Y|C)                     & =P(Y|C,E=e)                     \\
				P(Y|G_c)\sum_{G_c}P(G_c|C) & =P(Y|G_c)\sum_{G_c}P(G_c|C,E=e) \\
				P(Y|G_c)\sum_{G_c}P(G_c|C) & =P(Y|G_c,E=e)\sum_{G_c}P(G_c|C) \\
				P(Y|G_c)                   & =P(Y|G_c,E=e),                  \\
			\end{aligned}
		\end{equation}
		where we use ICM for the first three equalities.
		From Eq.~\ref{eq:icm_appdx}, it suffices to know $P(Y|G_c)$ is invariant across different environments.
		Hence, a GNN predictor $\rho\circ h:\gG_c\rightarrow\gY$ optimized with
		empirical risk given $G_c$, essentially minimizes the empirical risk
		across all environments, i.e., $\min R_{\gG_c} = \min \max R^e$.
		Thus, if $\rho\circ h$ solves $\min R_{\gG_c}$, it also solves $\min \max R^e$,
		hence it elicits a invariant GNN predictor according to Definition.~\ref{def:inv_gnn_appdx}.
	\end{proof}
}
\revision{Besides, we show in Proposition~\ref{thm:minmax_causal_appdx} that only using the underlying invariant subgraphs $G_c$ to make predictions can satisfy the minmax objectives. Or equivalently, a GNN predictor solving the minmax objective can only rely on the underlying invariant subgraph $G_c$ to predict $Y$. }

\revision{
	\begin{proposition}
		\label{thm:minmax_causal_appdx}
		Given a set of graph datasets $\{\dataset^e\}_e$ %
		and environments $\envall$ that follow the same graph generation process in Sec.~\ref{sec:data_gen},
		considering a GNN $\rho \circ h$ that has a permutation invariant graph
		encoder $h:\gG\rightarrow\R^h$ and a downstream classifier $\rho:\R^h\rightarrow\gY$,
		$\rho \circ h$ that minimizes the worst case  risk
		among all environments, i.e., $\min \max_{e\in\envall}R^e$,
		can not rely on any part of $G_s$, i.e., $\rho \circ h (G) \ind G_s$.
	\end{proposition}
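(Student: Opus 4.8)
The plan is three-fold: (i) pin down the value of the optimal worst-case risk $\min\max_{e\in\envall}R^e$; (ii) observe that attaining it forces $\rho\circ h$ to be Bayes-optimal in a specially chosen ``decoupled'' environment $e^\dagger\in\envall$; and (iii) read off from that Bayes-optimality that $\rho\circ h$ cannot vary with the $G_s$-coordinate of its input, which is the content we attach to $\rho\circ h(G)\ind G_s$ (it yields $\rho\circ h(G)\ind G_s\mid G_c$ in every environment, and $\rho\circ h(G)\ind G_s$ unconditionally in any environment where $G_c\ind G_s$). Throughout I adopt the regularity assumptions of Theorem~\ref{thm:good_inv_gnn_new} — in particular that $f_\gen^G$ is invertible, so $G$ is in bijection with $(G_c,G_s)$ and a predictor on $G$ may be rewritten as a function $\psi(G_c,G_s)$ — together with the standard richness assumption on $\envall$ (without which min-max OOD generalization is ill posed, cf. Sec.~\ref{sec:graph_ood_causal_lens}): $\envall$ realizes environments in which $f_\spu$ decouples the spurious part, so that $S$ (hence $G_s$) can have full support while being independent of $C$ in FIIF, resp. of $Y$ in PIIF.

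\textbf{Step 1: identify the optimal worst-case risk.} First I would note, exactly as in the ICM computation in the proof of Proposition~\ref{thm:causal_minmax_appdx}, that $P(Y\mid G_c)$ is invariant across $\envall$, so the Bayes risk $R^\star$ of predicting $Y$ from $G_c$ is a fixed number. By the richness assumption pick $e^\dagger\in\envall$ in which $G_s\ind(C,Y)$ and $G_s$ has full support; since $G_c$ is a function of $C$ this gives $Y\ind G_s\mid G_c$ in $e^\dagger$, and combined with invertibility of $f_\gen^G$ the Bayes risk from the full graph in $e^\dagger$ again equals $R^\star$. Hence for every predictor $\max_{e\in\envall}R^e\ge R^{e^\dagger}\ge R^\star$, while Proposition~\ref{thm:causal_minmax_appdx} supplies a GNN attaining $R^\star$; therefore $\min\max_{e\in\envall}R^e=R^\star$, and any $\rho\circ h$ solving the objective of Def.~\ref{def:inv_gnn_appdx} satisfies $R^{e^\dagger}(\rho\circ h)=R^\star$, i.e. $\rho\circ h$ is Bayes-optimal in the decoupled environment $e^\dagger$.

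\textbf{Step 2: Bayes-optimality in $e^\dagger$ kills the dependence on $G_s$.} Writing $\rho\circ h=\psi(G_c,G_s)$, under a strictly proper loss the Bayes-optimal predictor in $e^\dagger$ is the unique map $(G_c,G_s)\mapsto P_{e^\dagger}(Y\mid G_c,G_s)$, which equals $P(Y\mid G_c)$ because $Y\ind G_s\mid G_c$ in $e^\dagger$; thus $\psi(G_c,G_s)=P(Y\mid G_c)$ for $e^\dagger$-almost every $(G_c,G_s)$, and since $e^\dagger$ has full support this forces $\psi$ to be independent of its second argument as a function. Consequently $\rho\circ h(G)$ is a measurable function of $G_c$, which gives the claim. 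For the $0/1$ loss the same conclusion holds under the mild genericity that $\argmax_y P(Y=y\mid G_c)$ is almost surely unique; on the measure-zero tie set the dependence on $G_s$ is immaterial.

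\textbf{Main obstacle.} The crux is the richness claim used in Step~1 — that $\envall$ genuinely contains a ``fully decoupled'' environment $e^\dagger$ with full support for $G_s$ independently of the label-determining factor; this is precisely the (necessary) assumption on $\envall$ that makes the min-max problem well posed, and it is compatible with both FIIF and PIIF because there $S$ is generated from $(C,E)$, resp. $(Y,E)$, through an otherwise unrestricted mechanism $f_\spu$. A secondary, purely technical point is the non-uniqueness of the Bayes predictor under $0/1$ loss, handled either by passing to a strictly proper surrogate or by the genericity assumption above; and one should double-check that invertibility of $f_\gen^G$ is indeed available here (as it is in Theorem~\ref{thm:good_inv_gnn_new}), since it is what lets us treat $G_s$ as a coordinate of the predictor's input rather than only as a latent cause of $G$.
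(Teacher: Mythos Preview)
Your proposal is correct and takes a genuinely different route from the paper. The paper's proof is a brief contradiction argument: it assumes $\rho\circ h(G)\not\ind G_s$, infers $\rho\circ h(G)\not\ind E$ (since $E$ influences $G_s$ through $S$ in both SCMs), and then asserts that one can always find an adversarial environment $e'$ in which the predictor flips its answer on some graph, contradicting min-max optimality relative to the $G_s$-invariant predictor. Your argument instead (i) pins down the min-max value as the $G_c$-Bayes risk $R^\star$, (ii) forces Bayes-optimality in a carefully chosen decoupled environment $e^\dagger$, and (iii) uses uniqueness of the Bayes predictor under a strictly proper loss (or generic uniqueness under $0/1$ loss) to kill the $G_s$-dependence \emph{as a function}. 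What your approach buys is rigor and explicitness: you make transparent exactly which richness assumption on $\envall$ is doing the work (existence of a full-support decoupled $e^\dagger$), and you handle the loss-function subtlety that the paper's sketch glosses over. The paper's approach is shorter and more intuitive---``dependence on $G_s$ lets an adversary hurt you''---but leaves implicit both why a single flipped prediction translates into strictly larger worst-case \emph{risk} and what structure on $\envall$ licenses the adversarial choice of $e'$. Your self-identified obstacles (richness of $\envall$, invertibility of $f_\gen^G$, Bayes non-uniqueness under $0/1$ loss) are exactly the points the paper's argument also needs but does not state.
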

}\revision{
	\begin{proof}
		\label{proof:minmax_causal_appdx}
		The proof for Proposition~\ref{thm:minmax_causal_appdx} is straightforward. Assuming that $\rho \circ h (G) \not\ind G_s$, as $E$ is influenced by the changes of $E$ through $S$ in both FIIF and PIIF SCMs (Fig.~\ref{fig:scm_fiif} and Fig.~\ref{fig:scm_piif}), then $\rho \circ h (G) \not\ind E$ as well.
		Consequently, there exists some graph $G$ corresponding to $G_c,G_s^e$ and $\rho \circ h (G)=Y$ under an environment $e$, such that we can always find a proper $e'$ to make $\rho \circ h (G) \neq Y$.
		In contrast, the prediction of a GNN that satisfies $\rho \circ h (G) \ind G_s$ remains invariant against arbitrary changes of environments.
		Thus, it leads to a contradiction to the condition that $\min \max_{e'\in\envall}R^{e'}$. Therefore, a GNN that solves $\min \max_{e\in\envall}R^e$ must satisfy $\rho \circ h (G) \ind G_s$.
	\end{proof}
}

Combining Proposition~\ref{thm:causal_minmax_appdx} and Proposition~\ref{thm:minmax_causal_appdx}, we are highly motivated to find the underlying invariant subgraphs to make predictions about the original graphs, which converges to Eq.~\ref{eq:good_opt}. Tackling Eq.~\ref{eq:good_opt} under the unavailability of $E$ brings us two variants of CIGA solutions, as illustrated in Section~\ref{sec:good_framework}.

\subsection{Proof for theorem~\ref{thm:good_inv_gnn_new} (i)}

\begin{theorem}[$\ginv$v1 Induces Invariant GNNs]
	\label{thm:good_inv_gnn_new_appdx}
	Given a set of graph datasets $\{\dataset^e\}_e$ %
	and environments $\envall$ that follow the same graph generation process in Sec.~\ref{sec:data_gen},
	assuming that \textup{(a)} $f_\gen^G$ and $f_\gen^{G_c}$ in Assumption~\ref{assump:graph_gen} are invertible,
	\textup{(b)} samples from each training environment are equally distributed,
	i.e.,$|\dataset_{\hat{e}}|=|\dataset_{\tilde{e}}|,\ \forall \hat{e},\tilde{e}\in\envtrain$,
	if $\forall G_c, |G_c|=s_c$,
	then a GNN $f_c\circ g$ solves Eq.~\ref{eq:good_opt_contrast_v3},
	is an invariant GNN (Def.~\ref{def:inv_gnn}).
\end{theorem}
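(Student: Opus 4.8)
The plan is to show that any featurizer--classifier pair $(f_c,g)$ solving the $\ginv$v1 program of Eq.~\ref{eq:good_opt_contrast_v1} must have $g(G)$ \emph{information-equivalent} to the true invariant subgraph $G_c$ of $G$ --- i.e.\ $g(G)$ retains all the information of the latent cause $C$ and none of the environment-dependent information carried by $G_s$ --- and then to invoke Propositions~\ref{thm:causal_minmax_appdx} and~\ref{thm:minmax_causal_appdx}. Concretely, once $g(G)$ carries exactly the content of $G_c$, the outer objective $\max_{f_c} I(g(G);Y)$ reduces to empirical risk minimization over invariant subgraphs paired with their labels, so by Proposition~\ref{thm:causal_minmax_appdx} the resulting $f_c\circ g$ attains $\min\max_{e\in\envall}R^e$, i.e.\ it is an invariant GNN in the sense of Definition~\ref{def:inv_gnn}. (Proposition~\ref{thm:minmax_causal_appdx} gives the converse direction and is what makes ``recovering $G_c$'' the right target.)

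\textbf{Step 1: $G_c$ is a global optimum of Eq.~\ref{eq:good_opt_contrast_v1}.} Since $f_\gen^G$ is invertible (Assumption~\ref{assump:graph_gen}(a)), from $G$ one can recover the pair $(G_c,G_s)$, so there is an admissible $g$ with $g(G)=G_c$; by the fixed-size hypothesis $|G_c|=s_c$ this $g$ meets the inner size budget. Because $f_\gen^{G_c}$ is invertible, $G_c$ is in measurable bijection with $C$, hence $I(G_c;Y)=I(C;Y)$, which by the data-processing inequality (using that $Y$ is a function of $C$ alone in both FIIF and PIIF, Assumptions~\ref{assump:scm_fiif},~\ref{assump:scm_piif}) is the maximum of $I(\widehat{G}_c;Y)$ over all $\widehat{G}_c=g(G)$. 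For the inner objective, the balanced-environment hypothesis (b) makes the pooled law of $(G,\widetilde{G})$ conditioned on $Y$ symmetric over training environments, so $Y$ acts as a faithful proxy for $C$; the property derived in Eq.~\ref{eq:good_cond_1} then says precisely that $G_c\in\argmax_{|\widehat{G}_c|\le s_c}I(\widehat{G}_c;\widetilde{G}_c\mid Y)$. Thus $g(G)=G_c$ is feasible and attains the outer maximum, so it solves Eq.~\ref{eq:good_opt_contrast_v1}, pinning the optimal inner value to $I(G_c;\widetilde{G}_c\mid Y)$.

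\textbf{Step 2: every solution is information-equivalent to $G_c$.} Let $(f_c^\star,g^\star)$ be any solution. Then $g^\star(G)$ meets the inner constraint with equality, i.e.\ $g^\star(G)\in\argmax_{|\widehat{G}_c|\le s_c}I(\widehat{G}_c;\widetilde{G}_c\mid Y)$. The key point is that, because the budget $s_c$ equals $|G_c|$ exactly, this forces $g^\star(G)$ to select exactly the content of $G_c$: by the independent-causal-mechanism structure, the portion of $G_c$ is perfectly shared across two same-class samples drawn from (possibly) different environments, whereas any portion of $G_s$ is corrupted by the independently varying $E$ through $S=f_\spu(\cdot,E)$, so --- making precise the decomposition of the within-class mutual information over the selected subgraph, and using the better clustering of invariant features from Assumption~\ref{assump:latent_sep} --- replacing any piece of $G_c$ by an equal-size piece of $G_s$ strictly decreases $I(\widehat{G}_c;\widetilde{G}_c\mid Y)$. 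Hence $g^\star(G)$ is a measurable bijection of $G_c$, equivalently of $C$, so $H(C\mid g^\star(G))=0$ and $g^\star(G)\ind(S,E)\mid C$. This yields $Y\ind E\mid g^\star(G)$, which is exactly the independence condition of Eq.~\ref{eq:good_opt}; combined with $f_c^\star$ maximizing $I(g^\star(G);Y)$, Proposition~\ref{thm:causal_minmax_appdx} gives that $f_c^\star\circ g^\star$ attains $\min\max_{e\in\envall}R^e$. The argument is uniform over FIIF and PIIF: in both, $Y=f_\inv(C)$ and all $E$-dependence of $G$ flows through $G_s$, so whether $S$ is driven by $C$ or by $Y$ affects only $G_s$, never the bijection $G_c\leftrightarrow C$.

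\textbf{Main obstacle.} The crux is the ``forcing'' claim in Step 2: rigorously establishing that under the tight constraint $|\widehat{G}_c|\le s_c=|G_c|$ the true $G_c$ is the \emph{unique} (up to information content) maximizer of $I(\widehat{G}_c;\widetilde{G}_c\mid Y)$, ruling out any mixture that trades a piece of $G_c$ for a piece of $G_s$. This needs a careful quantitative comparison of how within-class agreement decomposes over sub-parts of the extracted subgraph --- leveraging that invariant content is perfectly shared across same-class, cross-environment pairs (by ICM) while spurious content is not --- together with the balanced-environment hypothesis (b) to prevent a dominant environment's spurious correlation from leaking in, and Assumption~\ref{assump:latent_sep} to handle the boundary case where a spurious piece happens to be as class-concentrated as an invariant one. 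One must also pin down the ``$|\cdot|\le s_c$'' and subgraph set-operation semantics (nodes vs.\ edges vs.\ attributes) consistently, and check that invertibility of $f_\gen^{G_c}$ delivers the $G_c\leftrightarrow C$ bijection at the granularity at which the mutual information is measured.
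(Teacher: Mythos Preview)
Your high-level plan matches the paper's: show that $G_c$ is feasible and optimal for Eq.~\ref{eq:good_opt_contrast_v1}, then show it is the \emph{unique} inner maximizer under the tight size budget, so any solution $g^\star(G)$ recovers $G_c$ and the reduction via Proposition~\ref{thm:causal_minmax_appdx} finishes. Two points of divergence are worth noting.

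\textbf{The ``forcing'' step.} What you flag as the main obstacle is exactly the substance of the paper's proof, and the paper's mechanism is more concrete than the ``perfect sharing vs.\ corruption'' intuition you sketch. The paper uses assumption~(b) to rewrite $I(\widehat{G}_c;\widetilde{G}_c\mid Y)$ as $I(\widehat{G}_c,E=\hat e;\widetilde{G}_c,E=\tilde e\mid Y)$ for arbitrary $\hat e,\tilde e\in\envtrain$, and then computes the change $\Delta I$ when a piece $\widehat{G}_s^p\subseteq G_s$ replaces an equal-size piece $\widehat{G}_c^l\subseteq G_c$. It splits into two cases --- only one of $\widehat{G}_c,\widetilde{G}_c$ is contaminated, or both are --- and in each case expands both $\Delta H(\widehat{G}_c,E=\hat e\mid Y)$ and $\Delta H(\widehat{G}_c,E=\hat e\mid \widetilde{G}_c,E=\tilde e,Y)$. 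The key observation is that the $H(\widehat{G}_s^p\mid\cdot)$ contributions \emph{cancel} between these two terms (by conditional independence of $\widehat{G}_s^p$ from the other environment's data given $Y,E=\hat e$), leaving a residual of the form $-H(\widehat{G}_c^l\mid \widehat{G}_c^p,E=\hat e,Y)+H(\widehat{G}_c^l\mid \widetilde{G}_c,\ldots)<0$ by ``conditioning reduces entropy.'' This explicit cancellation-and-residual argument is the actual work; your plan locates it correctly but does not supply it.

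\textbf{An extraneous hypothesis.} You invoke Assumption~\ref{assump:latent_sep} ($H(C\mid Y)\le H(S\mid Y)$) both in Step~2 and in the obstacle discussion. The paper does \emph{not} use this assumption anywhere in the $\ginv$v1 proof; it enters only in the $\ginv$v2 proof (via Lemma~\ref{thm:mi_gc_larger_than_gs_appdx}) to control the extra $I(\widehat{G}_s;Y)$ term. For $\ginv$v1 the tight size constraint $|\widehat{G}_c|\le s_c=|G_c|$ together with the SCM structure and invertibility are enough, so your appeal to Assumption~\ref{assump:latent_sep} is unnecessary and slightly misdiagnoses where the strictness comes from.
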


\textit{Proof.}
We re-write the objective as follows:
\label{proof:good_inv_gnn_new_appdx}
\begin{equation}
	\label{eq:good_opt_contrast_new_appdx}
	\max_{f_c,g} \ I(\widehat{G}_c;Y), \ \text{s.t.}\
	\widehat{G}_c\in\argmax_{\widehat{G}_c=g(G),|\widehat{G}_c|\leq s_c} I(\widehat{G}_c;\widetilde{G}_c|Y),
\end{equation}
where $\widehat{G}_c=g(G),\widetilde{G}_c=g(\widetilde{G})$ and $\widetilde{G}\sim \sP(G|Y)$,
i.e., $\widetilde{G}$ and $G$ have the same label.

The proof of Theorem~\ref{thm:good_inv_gnn_new_appdx} is essentially
to show the estimated $\widehat{G}_c$ through Eq.~\ref{eq:good_opt_contrast_new_appdx}
is the underlying $G_c$, then
the maximizer of $I(\widehat{G}_c;Y)$ in Eq.~\ref{eq:good_opt_contrast_new_appdx}
can produce most informative and stable predictions about $Y$ based on $G$,
hence is an invariant GNN (Definition.~\ref{def:inv_gnn_appdx}).

In the next, we are going to take an information-theoretic view of
the first term $I(\widehat{G}_c;Y)$ and
the second term $I(\widehat{G}_c;\widetilde{G}_c|Y)$ to conclude the proof.
We begin by introducing the following lemma:
\begin{lemma}
	\label{thm:env_eq_appdx}
	Given the same conditions as Thm.~\ref{thm:good_inv_gnn_new_appdx},
	$I(\widehat{G}_c;Y)$ is maximized if and only if
	$I(\widehat{G}_c;Y|E=e)$ is maximized, $\forall e\in\envtrain$.
\end{lemma}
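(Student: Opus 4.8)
### Proof Proposal for Lemma~\ref{thm:env_eq_appdx}

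The plan is to exploit the equal-distribution assumption on training environments, i.e.,~$|\dataset_{\hat{e}}| = |\dataset_{\tilde{e}}|$ for all $\hat{e},\tilde{e}\in\envtrain$, which means the training distribution is a uniform mixture over the per-environment distributions, $\sP(G,Y) = \frac{1}{|\envtrain|}\sum_{e\in\envtrain}\sP^e(G,Y)$. The key observation is that $I(\widehat{G}_c;Y) = H(Y) - H(Y\mid\widehat{G}_c)$, and since $Y$ ranges over a finite label set $\gY$ with a fixed marginal, maximizing $I(\widehat{G}_c;Y)$ over the choice of featurizer $g$ (with $\widehat{G}_c = g(G)$, $|\widehat{G}_c|\le s_c$) is equivalent to minimizing the conditional entropy $H(Y\mid\widehat{G}_c)$.

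First I would write the mixture decomposition of the conditional entropy. Using that $E$ is uniform over $\envtrain$ and that, conditioned on $E = e$, the pair $(\widehat{G}_c, Y)$ is drawn from $\sP^e$ pushed forward through $g$, we have
\begin{equation}
	\label{eq:cond_ent_decomp}
	H(Y\mid\widehat{G}_c, E) = \frac{1}{|\envtrain|}\sum_{e\in\envtrain} H(Y\mid\widehat{G}_c, E=e).
\end{equation}
Next I would invoke the independence $\widehat{G}_c \ind E$ that holds when $\widehat{G}_c$ recovers (a subgraph of) $G_c$ — but more carefully, since this is precisely what we are trying to establish, I would instead argue directly: by the data-generation SCMs (Fig.~\ref{fig:scm_fiif},~\ref{fig:scm_piif}) the ICM assumption gives $P(Y\mid C) = P(Y\mid C, E=e)$ for every $e$ (as already used in Eq.~\ref{eq:icm_appdx}), so the labeling mechanism is environment-invariant. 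Therefore $H(Y\mid\widehat{G}_c) \ge H(Y\mid\widehat{G}_c, E) = \frac{1}{|\envtrain|}\sum_e H(Y\mid\widehat{G}_c, E=e)$, with the first inequality being the standard ``conditioning reduces entropy'' fact, and equality when $\widehat{G}_c$ carries all information $E$ has about $Y$ — which, for any candidate $\widehat{G}_c$ that already achieves the per-environment optimum, is automatic since the optimal per-environment prediction depends only on the invariant part.

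Then I would close the argument by a sandwiching/monotonicity step: each term $H(Y\mid\widehat{G}_c, E=e)$ is lower-bounded by $H(Y\mid C, E=e) = H(Y\mid C)$ (data-processing, since $\widehat{G}_c = g(G)$ and $G$ is a function of $C$ and $S$), and this bound is simultaneously attained for all $e$ precisely when $\widehat{G}_c$ recovers the information of $C$ — using invertibility of $f_\gen^G$ and $f_\gen^{G_c}$ (assumption (a)) so that such a $g$ exists within the size budget $s_c$. Hence $I(\widehat{G}_c;Y)$ attains its maximum iff every $I(\widehat{G}_c;Y\mid E=e)$ attains its maximum $H(Y) - H(Y\mid C)$, which is the claimed equivalence. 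The main obstacle I anticipate is handling the direction ``$I(\widehat{G}_c;Y)$ maximized $\Rightarrow$ each $I(\widehat{G}_c;Y\mid E=e)$ maximized'': one must rule out the possibility that a featurizer trades off accuracy across environments (doing better than the invariant optimum in some $e$ at the cost of others), and the equal-weighting assumption together with the uniform lower bound $H(Y\mid C)$ on each term is exactly what forbids such a trade-off — since the average equals the max only when no term can dip below a common floor, and $H(Y\mid C)$ is that floor for every environment by ICM.
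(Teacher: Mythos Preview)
The paper's own proof is a single sentence: it simply asserts the lemma is ``straightforward, given the condition that samples from each training environment are equally distributed.'' Your proposal supplies far more structure than the paper does, and the overall strategy --- write the training distribution as a uniform mixture, reduce to conditional entropies, and sandwich using a common per-environment floor --- is reasonable.

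The gap is in your floor step. You justify $H(Y\mid\widehat{G}_c, E=e) \ge H(Y\mid C, E=e)$ by ``data-processing, since $\widehat{G}_c = g(G)$ and $G$ is a function of $C$ and $S$,'' but that only yields a Markov chain $Y\text{--}C\text{--}G$ if $S \ind Y \mid C$. This holds under FIIF (Assumption~\ref{assump:scm_fiif}) but \emph{fails} under PIIF (Assumption~\ref{assump:scm_piif}), where $Y \to S$ directly: a featurizer that picks up part of $G_s$ can then carry \emph{more} information about $Y$ than $C$ does, so neither the floor nor your ``no trade-off'' argument for the reverse direction is valid as stated. What actually rescues the argument in the paper's setting is that the SCMs are taken noiseless, so $Y = f_\inv(C)$ deterministically and $H(Y\mid C) = 0$; the paper uses exactly this ($I(C;Y)=H(Y)$) in the line immediately following the lemma. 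With that, your floor is $0$ and holds trivially, and the forward direction is clean: $0 = H(Y\mid\widehat{G}_c) \ge H(Y\mid\widehat{G}_c,E) = \tfrac{1}{|\envtrain|}\sum_e H(Y\mid\widehat{G}_c,E=e) \ge 0$ forces every term to zero. The converse (each per-environment conditional entropy is zero $\Rightarrow$ the unconditional one is zero) still needs an extra observation --- the same $\widehat{G}_c$ value could in principle determine different $Y$'s in different environments --- which you correctly flag as the circularity around $\widehat{G}_c\ind E$; the paper does not address this and appears to be using the lemma as an informal principle rather than a standalone iff.
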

The proof for Lemma~\ref{thm:env_eq_appdx} is straightforward, given the
condition that samples from each training environment are equally distributed,
i.e.,$|\dataset_{\hat{e}}|=|\dataset_{\tilde{e}}|,\ \forall \hat{e},\tilde{e}\in\envtrain$.
Obviously, $\widehat{G}_c=G_c$ is a maximizer of $I(\widehat{G}_c;Y)=I(C;Y)=H(Y)$, since
$f_\gen^c:\gC\rightarrow\gG_c$ is invertible and $C$ causes $Y$.
However, there might be some subset $G_s^p\subseteq G_s$ from the underlying $G_s$
that entail the same information about label, i.e., $I(G_c^p\cup G_s^p;Y)=I(G_c;Y)$
where $\widehat{G}_c=G_c^p\cup G_s^p$ and $G_c^p= G_c\cap\widehat{G}_c$.
For FIIF (Assumption~\ref{fig:scm_fiif_appdx}), it can not happen, otherwise,
let $G_c^l= G_c-G_c^p$, then we have:
\begin{equation}
	\begin{aligned}
		I(\widehat{G}_c;Y)=I(G_c^p\cup G_s^p;Y) & =I(G_c^p\cup G_c^l;Y)=I(G_c;Y) \\
		I(G_c^p;Y)+I(G_s^p;Y|G_c^p)             & =I(G_c^p;Y)+I(G_c^l;Y|G_c^p)   \\
		I(G_s^p;Y|G_c^p)                        & =I(G_c^l;Y|G_c^p)              \\
		H(Y|G_c^p)-H(Y|G_c^p,G_s^p)             & =H(Y|G_c^p)-H(Y|G_c^p,G_c^l)   \\
		H(Y|G_c^p)-H(Y|G_c^p,G_s^p)             & =H(Y|G_c^p),                   \\
		H(Y|G_c^l,G_s^p)                        & =0,                            \\
	\end{aligned}
\end{equation}
where the second last equality is due to $C\rightarrow Y$ and the invertibility of $f_\gen^c:\gC\rightarrow\gG_c$ in FIIF,
i.e., $H(Y|C)=H(Y|G_c)=H(Y|G_c^p,G_c^l)=0$.
However, in PIIF,
it can hold since conditioning on $G_c^p,G_s^p$ can not determine $Y$, as $S\not\ind Y|C$. In other words, $G_s\not\ind Y|G_c$, which
means $G_s$ can imply some information about $Y$
that is equivalent to $I(G_c^l;Y|G_c^p)$.

To avoid the presence of spuriously correlated $G_s$ in $\widehat{G}_c$, we will
use the second term to eliminate it:
\begin{equation}
	\label{eq:good_opt_contrast_cond_appdx_yg2}
	\begin{aligned}
		\max_{f_c,g} & \ I(\widehat{G}_c;\widetilde{G}_c|Y),                    \\
		             & = H(\widehat{G}_c|Y)-H(\widehat{G}_c|\widetilde{G}_c,Y), \\
	\end{aligned}
\end{equation}
where $\widehat{G}_c=g(G)$, $\widetilde{G}_c=g(\widetilde{G})$ are two positive samples drawn from the same class (i.e., condition on the same $Y$).
Since the all of the training environments are equally distributed,
maximizing $I(\widehat{G}_c;\widetilde{G}_c|Y)$ is
essentially maximizing $I(\widehat{G}_c,E=\hat{e};\widetilde{G}_c,E=\tilde{e}|Y)$, $\forall \hat{e},\tilde{e}\in\envtrain$.
Hence, we have:
\begin{equation}
	\label{eq:good_opt_contrast_env_appdx_i}
	\begin{aligned}
		\max_{f_c,g} & \ I(\widehat{G}_c;\widetilde{G}_c|Y),                                                    \\
		             & =I(\widehat{G}_c,E=\hat{e};\widetilde{G}_c,E=\tilde{e}|Y)                                \\
		             & = H(\widehat{G}_c,E=\hat{e}|Y)-H(\widehat{G}_c,E=\hat{e}|\widetilde{G}_c,E=\tilde{e},Y). \\
	\end{aligned}
\end{equation}
We claim Eq.~\ref{eq:good_opt_contrast_env_appdx_i}
can eliminate any potential
subsets from $G_s$ in the estimated $\widehat{G}_c$.
\begin{wrapfigure}{r}{0.5\textwidth}
	\includegraphics[width=0.5\textwidth]{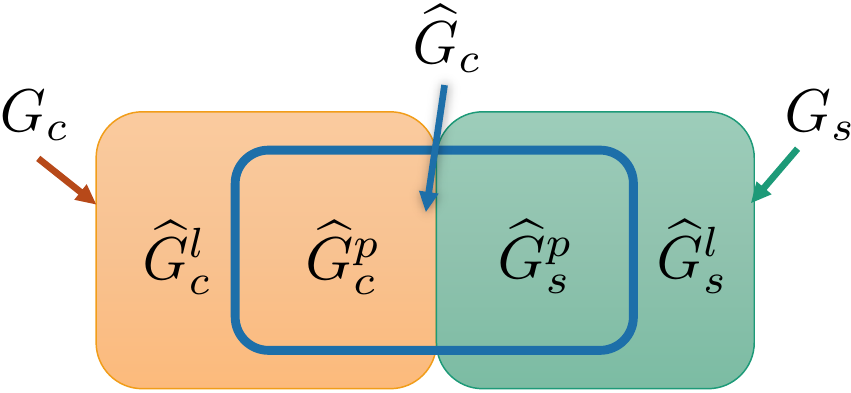}
	\label{fig:good_set_venn_1_appdx}
	\caption{Illustration of the notation. $G_c$ and $G_s$ are two
		disjoint sets. $\widehat{G}_c$ may contain certain subsets from $G_c$ and $G_s$.
		The subsets from $G_c$ and $G_s$ contained in $\widehat{G}_c$ are denoted as $\widehat{G}_c^p$ and $\widehat{G}_s^p$, respectively.
		While the left subsets in $G_c$ and $G_s$ are denoted as $\widehat{G}_c^l$ and $\widehat{G}_s^l$, respectively.}
	\vspace{-0.45in}
\end{wrapfigure}

Otherwise, suppose there are some subsets $\widehat{G}_s^p\subseteq \widehat{G}_s$
and $\widetilde{G}_s^p\subseteq \widetilde{G}_s$
contained in the estimated $\widehat{G}_c$, $\widetilde{G}_c$, where
$\widehat{G}_s,\widetilde{G}_s$ be the corresponding underlying $G_s$s for $\widehat{G}_c,\widetilde{G}_c$.
Let $\widehat{G}_c^*$ and $\widetilde{G}_c^*$ be the ground truth invariant subgraph $G_c$s
of $\widehat{G}$ and $\widetilde{G}$,
$\widehat{G}_c^l=\widehat{G}_c^*-\widehat{G}_c$
and $\widetilde{G}_c^l=\widetilde{G}_c^*-\widetilde{G}_c$ be the \textbf{l}eft (un-estimated) subsets
from corresponding ground truth $G_c$s,
and $\widehat{G}_c^p=\widehat{G}_c^*-\widehat{G}_c^l$ and
$\widetilde{G}_c^p=\widetilde{G}_c^*-\widetilde{G}_c^l$ be the complement,
or equivalently,
the \textbf{p}artial $\widehat{G}_c^*,\widetilde{G}_c^*$ that are estimated in $\widehat{G}_c,\widetilde{G}_c$, respectively.
We can also define similar counterparts for $G_s$: $\widehat{G}_s^p,\widetilde{G}_s^p$
are the partial $\widehat{G}_s,\widetilde{G}_s$s contained in the estimated $\widehat{G}_c,\widetilde{G}_c$
while $\widehat{G}_s^l,\widetilde{G}_s^l$ are the left subsets $\widehat{G}_s,\widetilde{G}_s$, respectively.

Recall the constraint that $|G_c|=s_c$, hence if $\widehat{G}_c^p\subseteq \widehat{G}_c$,
then a corresponding $\widehat{G}_c^l=\widehat{G}_c^*-\widehat{G}_c^p$ will be
replaced by $\widehat{G}_s^p$ in $\widehat{G}_c$.
In this case, we have:
\begin{equation}
	\label{eq:good_opt_contrast_mi_t1_appdx_i}
	\begin{aligned}
		H(\widehat{G}_c,E=\hat{e}|Y) & =H(E=\hat{e}|\widehat{G}_c,Y)+H(\widehat{G}_c|E=\hat{e},Y)                      \\
		                             & =H(\widehat{G}_c^p\cup \widehat{G}_s^p|E=\hat{e},Y)                             \\
		                             & =H(\widehat{G}_c^p|E=\hat{e},Y) +H(\widehat{G}_s^p|\widehat{G}_c^p,E=\hat{e},Y) \\
	\end{aligned}
\end{equation}
where the second equality is due to $E=\hat{e}$ is determined so that $H(E=\hat{e}|\widehat{G}_c,Y)=0$.
Compared Eq.~\ref{eq:good_opt_contrast_mi_t1_appdx_i} to that
when $\widehat{G}_c=\widehat{G}_c^*$, we have the entropy change as:
\begin{equation}
	\label{eq:good_opt_contrast_env_t1_delta_appdx}
	\begin{aligned}
		\Delta H(\widehat{G}_c,E=\hat{e}|Y) & =H(\widehat{G}_c,E=\hat{e}|Y)-H(\widehat{G}_c^*,E=\hat{e}|Y),                                   \\
		                                    & =H(\widehat{G}_s^p|\widehat{G}_c^p,E=\hat{e},Y)-H(\widehat{G}_c^l|\widehat{G}_c^p,E=\hat{e},Y).
	\end{aligned}
\end{equation}
Let $\epsilon=H(\widehat{G}_s^p|\widehat{G}_c^p,E=\hat{e},Y)$.
In a idealistic setting, when the noise of the generation process $S:=f_\spu(Y,E)$ in PIIF
tends to be $0$, i.e., $\epsilon\rightarrow 0$,
$S$ is determined conditioned on $E,Y$,
hence $G_s$ and any subsets of $G_s$ are all determined.
Then, it suffices to know that in Eq.~\ref{eq:good_opt_contrast_env_t1_delta_appdx},
$H(\widehat{G}_s^p|\widehat{G}_c^p,E=\hat{e},Y)=0$ while $H(\widehat{G}_c^l|\widehat{G}_c^p,E=\hat{e},Y)>0$
since $\widehat{G}_c^l$ can not be determined when given $\widehat{G}_c^p,E=\hat{e},Y$.
Thus, when some subset from $G_s$ is included in $\widehat{G}_c$, it will minimize $H(\widehat{G}_c,E=\hat{e}|Y)$.

However in practice, it is usual that $\epsilon>0$.
Therefore, in the next, we will show how $\epsilon=H(\widehat{G}_s^p|\widehat{G}_c^p,E=\hat{e},Y)$
can be cancelled thus leading to a smaller $H(\widehat{G}_c,E=\hat{e}|Y)$, by considering
the second term $H(\widehat{G}_c,E=\hat{e}|\widetilde{G}_c,E=\tilde{e},Y)$.

As for $H(\widehat{G}_c,E=\hat{e}|\widetilde{G}_c,E=\tilde{e},Y)$, without loss of generality,
we can divide all of the possible cases into two:
\begin{enumerate}[label=(\roman*)]
	\item One of $\widehat{G}_c$ and $\widetilde{G}_c$ contains some subset of $G_s$, i.e.,
	      $\widehat{G}_c$ contains some $\widehat{G}_s^p\subseteq \widehat{G}_s$;
	\item Both $\widehat{G}_c$ and $\widetilde{G}_c$ contain some $\widehat{G}_s^p\subseteq \widehat{G}_s$ and $\widetilde{G}_s^p\subseteq \widetilde{G}_s$, respectively.
\end{enumerate}
For (i), we have:
\begin{equation}
	\begin{aligned}
		H(\widehat{G}_c,E=\hat{e}|\widetilde{G}_c,E=\tilde{e},Y) & =H(\widehat{G}_c^p, \widehat{G}_s^p,E=\hat{e}|\widetilde{G}_c,E=\tilde{e},Y)                                                              \\
		                                                         & =H( \widehat{G}_s^p|\widetilde{G}_c,E=\tilde{e},Y,\widehat{G}_c^p,E=\hat{e})+H( \widehat{G}_c^p,E=\hat{e}|\widetilde{G}_c,E=\tilde{e},Y), \\
	\end{aligned}
\end{equation}
Thus, we can write the change of $H(\widehat{G}_c,E=\hat{e}|\widetilde{G}_c,E=\tilde{e},Y)$ between $\widehat{G}_c=\widehat{G}_c^p\cup\widehat{G}_s^p$ and $\widehat{G}_c=\widehat{G}_c^*$ as:
\begin{equation}
	\begin{aligned}
		\Delta H(\widehat{G}_c,E=\hat{e}|\widetilde{G}_c,E=\tilde{e},Y) & =H(\widehat{G}_c,E=\hat{e}|\widetilde{G}_c,E=\tilde{e},Y)-H(\widehat{G}_c^*,E=\hat{e}|\widetilde{G}_c,E=\tilde{e},Y), \\
		                                                                & =H(\widehat{G}_s^p|\widetilde{G}_c,E=\tilde{e},Y,\widehat{G}_c^p,E=\hat{e})                                           \\
		                                                                & \qquad -H( \widehat{G}_c^l|\widetilde{G}_c,E=\tilde{e},Y,\widehat{G}_c^p,E=\hat{e}).
	\end{aligned}
\end{equation}
Combing $\Delta H(\widehat{G}_c,E=\hat{e}|Y)$, we have:
\begin{equation}
	\begin{aligned}
		\Delta I(\widehat{G}_c,E=\hat{e};\widetilde{G}_c,E=\tilde{e}|Y) & =\Delta H(\widehat{G}_c,E=\hat{e}|Y)-\Delta H(\widehat{G}_c,E=\hat{e}|\widetilde{G}_c,E=\tilde{e},Y)                                                \\
		                                                                & =\left\{H(\widehat{G}_s^p|\widehat{G}_c^p,E=\hat{e},Y)-H(\widehat{G}_s^p|\widetilde{G}_c,E=\tilde{e},Y,\widehat{G}_c^p,E=\hat{e})\right\}           \\
		                                                                & \qquad +\left\{-H(\widehat{G}_c^l|\widehat{G}_c^p,E=\hat{e},Y)+H( \widehat{G}_c^l|\widetilde{G}_c,E=\tilde{e},Y,\widehat{G}_c^p,E=\hat{e})\right\}, \\
		                                                                & =-H(\widehat{G}_c^l|\widehat{G}_c^p,E=\hat{e},Y)+H(\widehat{G}_c^l|\widetilde{G}_c,E=\tilde{e},Y,\widehat{G}_c^p,E=\hat{e}),
	\end{aligned}
\end{equation}
where the last equality is because of the independence of $\widehat{G}_s^p$ between
$\widetilde{G}_c,E=\tilde{e}$ conditioned on $Y,E=\hat{e}$.
Since conditioning will lower the entropy for both discrete and continuous variables~\citep{elements_info,network_coding},
we have:
\begin{equation}
	\Delta I(\widehat{G}_c,E=\hat{e};\widetilde{G}_c,E=\tilde{e}|Y) <0,
\end{equation}
which implies the existence of $\widehat{G}_s^p$ in $\widehat{G}_c$ will lower down the second term in Eq.~\ref{eq:good_opt_contrast_new_appdx}
for the case (i).

For (ii), we have:
\begin{equation}
	\begin{aligned}
		H(\widehat{G}_c,E=\hat{e}|\widetilde{G}_c,E=\tilde{e},Y) & =H(\widehat{G}_c^p, \widehat{G}_s^p,E=\hat{e}|\widetilde{G}_c^p,\widetilde{G}_s^p,E=\tilde{e},Y) \\
		                                                         & =H( \widehat{G}_s^p|\widetilde{G}_c^p,\widetilde{G}_s^p,E=\tilde{e},Y,\widehat{G}_c^p,E=\hat{e}) \\
		                                                         & \qquad +H(\widehat{G}_c^p,E=\hat{e}|\widetilde{G}_c^p,\widetilde{G}_s^p,E=\tilde{e},Y),          \\
	\end{aligned}
\end{equation}
Similar to (i),
$H( \widehat{G}_s^p|\widetilde{G}_c^p,\widetilde{G}_s^p,E=\tilde{e},Y,\widehat{G}_c^p,E=\hat{e})$
can be cancelled out with $H(\widehat{G}_s^p|\widehat{G}_c^p,E=\hat{e},Y)$.
Then, we have:
\begin{equation}
	\begin{aligned}
		\Delta I(\widehat{G}_c,E=\hat{e};\widetilde{G}_c,E=\tilde{e}|Y) & =\Delta H(\widehat{G}_c,E=\hat{e}|Y)-\Delta H(\widehat{G}_c,E=\hat{e}|\widetilde{G}_c,E=\tilde{e},Y)                                             \\
		                                                                & =-H(\widehat{G}_c^l|\widehat{G}_c^p,E=\hat{e},Y)+H(\widehat{G}_c^l|\widetilde{G}_c^p,\widetilde{G}_s^p,E=\tilde{e},\widehat{G}_c^p,Y,E=\hat{e}).
	\end{aligned}
\end{equation}
Since additionally conditioning on $\widehat{G}_s^p$ in $H(\widehat{G}_c^l,E=\hat{e}|\widetilde{G}_c^p,\widetilde{G}_s^p,E=\tilde{e},Y)$
can not lead to new information about $\widehat{G}_c^l$, we have:
\begin{equation}
	\begin{aligned}
		H(\widehat{G}_c^l|\widetilde{G}_c^p,\widetilde{G}_s^p,E=\tilde{e},\widehat{G}_c^p,Y,E=\hat{e}) & =H(\widehat{G}_c^l|\widetilde{G}_c^p,E=\tilde{e},\widehat{G}_c^p,Y,E=\hat{e}) \\
		                                                                                               & < H(\widehat{G}_c^l|\widehat{G}_c^p,Y,E=\hat{e}),                             \\
	\end{aligned}
\end{equation}
which follows that $\Delta I(\widehat{G}_c,E=\hat{e};\widetilde{G}_c,E=\tilde{e}|Y)<0$.

To summarize, the ground truth $G_c$ is the only maximizer of
the objective (Eq.~\ref{eq:good_opt_contrast_new_appdx}),
hence solving for the objective (Eq.~\ref{eq:good_opt_contrast_new_appdx}) can elicit an invariant GNN.

\subsection{Proof for theorem~\ref{thm:good_inv_gnn_new} (ii)}
\begin{theorem}[$\ginv$v2 Induces Invariant GNNs]
	\label{thm:good_inv_gnn_v3_appdx}
	Given a set of graph datasets $\{\dataset^e\}_e$ %
	and environments $\envall$ that follow the same graph generation process in Sec.~\ref{sec:data_gen},
	assuming that \textup{(a)} $f_\gen^G$ and $f_\gen^{G_c}$ in Assumption~\ref{assump:graph_gen} are invertible,
	\textup{(b)} samples from each training environment are equally distributed,
	i.e.,$|\dataset_{\hat{e}}|=|\dataset_{\tilde{e}}|,\ \forall \hat{e},\tilde{e}\in\envtrain$,
	a GNN $f_c\circ g$ solves Eq.~\ref{eq:good_opt_contrast_v3},
	is an invariant GNN (Def.~\ref{def:inv_gnn}).
\end{theorem}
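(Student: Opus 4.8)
\textit{Proof (plan).} The plan is to mirror the proof of Theorem~\ref{thm:good_inv_gnn_new_appdx}, trading the size constraint $|G_c|=s_c$ for the extra objective term $I(\widehat{G}_s;Y)$ and the inequality $I(\widehat{G}_s;Y)\le I(\widehat{G}_c;Y)$, and to show that every solution of Eq.~\ref{eq:good_opt_contrast_v3} has $\widehat{G}_c=g(G)=G_c$. Once this is established, Proposition~\ref{thm:causal_minmax_appdx} together with the ICM identity Eq.~\ref{eq:icm_appdx} shows that the resulting $f_c\circ g$ solves $\min\max_{e\in\envall}R^e$, and Proposition~\ref{thm:minmax_causal_appdx} shows it cannot depend on $G_s$, so it is an invariant GNN in the sense of Definition~\ref{def:inv_gnn}. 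The reductions via Lemma~\ref{thm:env_eq_appdx} (passing from $I(\widehat{G}_c;Y)$ to per-environment information) are reused verbatim.

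I would proceed in three steps. First, check that $\widehat{G}_c=G_c$ (hence $\widehat{G}_s=G_s$) is feasible and optimal: invertibility of $f_\gen^{G_c}$ and $f_\gen^{G}$ gives $I(G_c;Y)=I(C;Y)=H(Y)$, the maximum of the first term; the contrastive constraint holds by the argument of Theorem~\ref{thm:good_inv_gnn_new_appdx}, where $G_c^{e_1},G_c^{e_2}$ corresponding to the same causal factor maximize $I(\widehat{G}_c^{e_1};\widehat{G}_c^{e_2}\mid Y)$; and $I(G_s;Y)\le H(Y)=I(G_c;Y)$ gives the last constraint, so the optimal value is $H(Y)$ plus the largest attainable $I(\widehat{G}_s;Y)$. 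Second, for an arbitrary optimal $\widehat{G}_c$, decompose $\widehat{G}_c=\widehat{G}_c^p\cup\widehat{G}_s^p$ with $\widehat{G}_c^p=\widehat{G}_c\cap G_c$, $\widehat{G}_s^p=\widehat{G}_c\cap G_s$, $\widehat{G}_c^l=G_c\setminus\widehat{G}_c^p$, and rule out unwanted content: if $\widehat{G}_s^p$ shares any information with $Y$ beyond $\widehat{G}_c^p$, relocating it to $\widehat{G}_s$ strictly increases $I(\widehat{G}_s;Y)$ while $I(\widehat{G}_c;Y)$ stays at $H(Y)$, contradicting optimality, and the $Y$-irrelevant remainder of $G_s$ inside $\widehat{G}_c$ is excluded by the contrastive constraint as in cases (i) and (ii) of the proof of Theorem~\ref{thm:good_inv_gnn_new_appdx}. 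Third, if $\widehat{G}_c^l\ne\emptyset$, then either (in FIIF) $\widehat{G}_c^l$ together with retained spurious content would have to determine $Y$, which is contradictory by the FIIF computation in that proof, or $I(\widehat{G}_c;Y)<H(Y)$, in which case $I(\widehat{G}_s;Y)\le I(\widehat{G}_c;Y)<H(Y)$ combined with the clustering inequality $H(C\mid Y)\le H(S\mid Y)$ from Assumption~\ref{assump:latent_sep} drives the objective below the value attained at $\widehat{G}_c=G_c$; this in particular kills the degenerate ``swapped'' solution $G_c\subseteq\widehat{G}_s$. Concluding that $\widehat{G}_c=G_c$ at every optimum, the classifier maximizing $I(\widehat{G}_c;Y)$ is the ERM-optimal predictor on $\{(G_c^{(i)},y^{(i)})\}$, so Proposition~\ref{thm:causal_minmax_appdx} finishes the argument.

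The hard part will be the third step, i.e., compensating for the loss of the size constraint: when all of $G_c$ is retained, appending an extra spurious subgraph to $\widehat{G}_c$ no longer forces a piece of $G_c$ out, so the contrastive term is blind to it (its contributions to $H(\widehat{G}_c\mid Y)$ and $H(\widehat{G}_c\mid\widetilde{G}_c,Y)$ cancel), and one must lean entirely on $I(\widehat{G}_s;Y)$, the inequality $I(\widehat{G}_s;Y)\le I(\widehat{G}_c;Y)$, and Assumption~\ref{assump:latent_sep} to penalize it. The PIIF case, where $G_s$ genuinely carries information about $Y$ through $Y\to S$ and therefore $I(\widehat G_s;Y)$ interacts non-trivially with which invariant/spurious pieces land in $\widehat G_c$, is where this bookkeeping is most delicate and will need the most careful treatment.
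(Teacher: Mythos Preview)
Your plan has the right ingredients but assembles them in the wrong order, which creates a circular dependency and an incorrect appeal to Theorem~\ref{thm:good_inv_gnn_new_appdx}.

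Concretely, in your Step~2 you claim that relocating $\widehat{G}_s^p$ back to $\widehat{G}_s$ ``strictly increases $I(\widehat{G}_s;Y)$ while $I(\widehat{G}_c;Y)$ stays at $H(Y)$.'' But $I(\widehat{G}_c;Y)=H(Y)$ after the relocation only if the remaining $\widehat{G}_c^p$ already determines $Y$, i.e.\ $G_c\subseteq\widehat{G}_c$; that is precisely what Step~3 is supposed to establish, so Step~2 presupposes Step~3. You also try to use the contrastive constraint (``cases (i) and (ii) of the proof of Theorem~\ref{thm:good_inv_gnn_new_appdx}'') to expel the $Y$-irrelevant piece of $G_s$. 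Those two cases, however, hinge on the size constraint: adding $\widehat{G}_s^p$ \emph{forces out} some $\widehat{G}_c^l$, and it is the loss of $\widehat{G}_c^l$ that makes $\Delta I(\widehat{G}_c;\widetilde{G}_c\mid Y)<0$. Without the size constraint, when $G_c\subseteq\widehat{G}_c$, appending any $\widehat{G}_s^p$ gives $\Delta I(\widehat{G}_c;\widetilde{G}_c\mid Y)=0$, as you yourself note in the final paragraph; so the contrastive term cannot do the work you assign it in Step~2. Your proposal is internally inconsistent on this point.

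The paper's proof breaks the circularity by reversing your Steps~2 and~3. First (its case~(i)) it shows $G_c\subseteq\widehat{G}_c$: if some $\widehat{G}_c^l$ is missing, adding it strictly increases $I(\widehat{G}_c;\widetilde{G}_c\mid Y)$, and moving $\widehat{G}_c^l$ from $\widehat{G}_s$ to $\widehat{G}_c$ does not decrease $I(\widehat{G}_c;Y)+I(\widehat{G}_s;Y)$; hence any maximizer of the contrastive constraint must already contain all of $G_c$. Only then (its case~(ii)) does it expel spurious content: given $G_c\subseteq\widehat{G}_c$, adding $\widehat{G}_s^p$ leaves the contrastive term unchanged but strictly lowers the sum, since
\[
\Delta\bigl(I(\widehat{G}_c;Y)+I(\widehat{G}_s;Y)\bigr)
= I(\widehat{G}_s^p;Y\mid \widehat{G}_c^*) - I(\widehat{G}_s^p;Y\mid \widehat{G}_s^l)
= I(\widehat{G}_s^p;Y\mid \widehat{G}_c^*) - I(\widehat{G}_s^p;Y)\le 0.
\]
The role of Lemma~\ref{thm:mi_gc_larger_than_gs_appdx} (which follows from Assumption~\ref{assump:latent_sep}) together with the constraint $I(\widehat{G}_s;Y)\le I(\widehat{G}_c;Y)$ is only to seed the argument by guaranteeing that $\widehat{G}_c$ contains some piece of $G_c$, ruling out the fully swapped solution; it is not used as a quantitative penalty on the objective value in the way your Step~3 suggests. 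If you reorder your argument to first force $G_c\subseteq\widehat{G}_c$ via the contrastive term and only afterwards use the sum objective to remove $\widehat{G}_s^p$, the plan goes through.
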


\textit{Proof.}
We re-write the objective as follows:
\label{proof:good_inv_gnn_v3_appdx}
\begin{equation}
	\label{eq:good_opt_contrast_v3_appdx}
	\begin{aligned}
		\begin{aligned}
			\max_{f_c,g} \  I(\widehat{G}_c;Y)+I(\widehat{G}_s;Y), \ \text{s.t.}\
			 & \widehat{G}_c\in
			\argmax_{\widehat{G}_c=g(G), \widetilde{G}_c=g(\widetilde{G})} I(\widehat{G}_c;\widetilde{G}_c|Y),\
			\\
			 & I(\widehat{G}_s;Y)\leq I(\widehat{G}_c;Y),\ \widehat{G}_s=G-g(G).
		\end{aligned}
	\end{aligned}
\end{equation}
where $\widehat{G}_c=g(G),\widetilde{G}_c=g(\widetilde{G})$ and $\widetilde{G}\sim \sP(G|Y)$,
i.e., $\widetilde{G}$ and $G$ have the same label.

Similar to the proof for Theorem~\ref{thm:good_inv_gnn_new_appdx},
to prove Theorem~\ref{thm:good_inv_gnn_v3_appdx} is essentially
to show the estimated $\widehat{G}_c$ through Eq.~\ref{eq:good_opt_contrast_v3_appdx}
is the underlying $G_c$, hence the minimizer of Eq.~\ref{eq:good_opt_contrast_v3_appdx}
elicits an invariant GNN predictor (Definition.~\ref{def:inv_gnn_appdx}).

In the next, we also begin with a lemma:
\begin{lemma}
	\label{thm:mi_gc_larger_than_gs_appdx}
	Given data generation process as Theorem~\ref{thm:good_inv_gnn_v3_appdx}, for both FIIF and PIIF, we have:
	\[I(C;Y)\geq I(S;Y),\]
	hence $I(G_c;Y)\geq I(G_s;Y)$.
\end{lemma}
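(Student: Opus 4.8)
The plan is to observe that, because the label is a deterministic function of the invariant latent variable, the mutual information $I(C;Y)$ (equivalently $I(G_c;Y)$) is already as large as it can possibly be, namely $H(Y)$, so that no other variable can exceed it; the lemma then follows from the elementary bound $I(S;Y)\le H(Y)$, plus invertibility of the relevant generation map to transport the statement from latents to graphs.

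Concretely, in both the FIIF SCM (Assumption~\ref{assump:scm_fiif}) and the PIIF SCM (Assumption~\ref{assump:scm_piif}) the label is assigned by $Y:=f_\inv(C)$, i.e.\ $Y$ is a deterministic function of $C$; hence $H(Y|C)=0$ and
\[
I(C;Y)=H(Y)-H(Y|C)=H(Y).
\]
Since $I(S;Y)=H(Y)-H(Y|S)\le H(Y)$ for any random variable $S$, we get $I(C;Y)=H(Y)\ge I(S;Y)$, which is the first claim. For the graph statement, assumption (a) of Theorem~\ref{thm:good_inv_gnn_v3_appdx} makes $f_\gen^{G_c}$ invertible, so $G_c$ and $C$ are measurable functions of each other and $H(Y|G_c)=H(Y|C)=0$; thus $I(G_c;Y)=H(Y)$, while trivially $I(G_s;Y)\le H(Y)$, giving $I(G_c;Y)\ge I(G_s;Y)$. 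If the sharper intermediate bound $I(G_s;Y)\le I(S;Y)$ is wanted, one notes that $G_s$ is obtained from $S$ by maps whose extra randomness is exogenous and independent of $Y$, so $Y-S-G_s$ is a Markov chain and the data-processing inequality applies.

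The one place that requires care is the reliance on the SCMs read with their noises suppressed, which is exactly what makes $Y:=f_\inv(C)$ deterministic and $H(Y|C)=0$. If one instead keeps an explicit labelling noise $Y:=f_\inv(C,N)$ with $N\ind(C,S,E)$ (as in Definition~\ref{def:linear_fiif_appdx}), then $H(Y|C)>0$ in general and the bare computation no longer closes: in the FIIF case the inequality is still recovered from the conditional independence $(S,E)\ind Y\mid C$, which yields $H(Y|S)\ge H(Y|S,C)=H(Y|C)$ and hence $I(C;Y)\ge I(S;Y)$; in the PIIF case, where that conditional independence fails, one falls back on the separation hypothesis $H(C|Y)\le H(S|Y)$ of Assumption~\ref{assump:latent_sep}. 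In every case the lemma is a purely information-theoretic consequence of the data-generating assumptions and needs no contrastive machinery; it is precisely what licenses imposing $I(\widehat{G}_s;Y)\le I(\widehat{G}_c;Y)$ as a constraint in CIGAv2.
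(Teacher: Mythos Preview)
Your argument is correct, but it takes a different route from the paper's. The paper's proof invokes Assumption~\ref{assump:latent_sep} (the ``better clustered'' hypothesis $H(C|Y)\le H(S|Y)$) as the sole premise and tries to read off $I(C;Y)\ge I(S;Y)$ directly from it; invertibility of $f_\gen^{G_c}$ then transports the inequality to $G_c$, and a data-processing step handles $G_s$. Your main line instead exploits the fact that in both SCMs (as written, with noises suppressed) $Y=f_\inv(C)$ is deterministic, so $I(C;Y)=H(Y)$ saturates the trivial upper bound and $I(S;Y)\le H(Y)$ finishes the job without ever touching Assumption~\ref{assump:latent_sep}. This is more elementary and makes clear that in the noiseless regime the separability assumption is not needed for the lemma. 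The trade-off is robustness to labelling noise: once $Y:=f_\inv(C,N)$ carries nondegenerate $N$, your saturation argument no longer applies, and you (correctly) fall back to the conditional-independence structure in FIIF and to Assumption~\ref{assump:latent_sep} in PIIF---which is exactly the premise the paper leans on from the start. In short, both proofs are valid under the stated assumptions; yours is cleaner in the idealized setting, while the paper's route via Assumption~\ref{assump:latent_sep} is the one that carries through uniformly once noise is reinstated.
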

\begin{proof}[Proof for Lemma~\ref{thm:mi_gc_larger_than_gs_appdx}]
	\label{proof:mi_gc_larger_than_gs_appdx}
	For both FIIF and PIIF, Assumption~\ref{assump:latent_sep}
	implies that $H(C|Y)\leq H(S|Y)$.
	It follows that $I(C;Y)=H(Y)-H(C|Y)\geq H(Y)-H(S|Y)=I(S;Y)$.
	Then, since $f^{G_c}_\gen:\gC\rightarrow\gG_c$ is invertible,
	we have $I(G_c;Y)=I(C;Y)\geq I(S;Y)\geq I(G_s;Y)$.
\end{proof}
Given Lemma~\ref{thm:mi_gc_larger_than_gs_appdx}, we know $\widehat{G}_c$ at least contains some subset of the underlying $G_c$,
otherwise the constraint $I(\widehat{G}_s;Y)\leq I(\widehat{G}_c;Y)$ will be violated since $G_c\subseteq \widehat{G}_s$ in this case.

Assuming there are some subset of $G_s$ contained in $\widehat{G}_c$,
without loss of generality,
we can divide all of the possible cases about $\widehat{G}_c$ into two:
\begin{enumerate}[label=(\roman*)]
	\item $\widehat{G}_c$ only contains a subset of the underlying $G_c$;
	\item $\widehat{G}_c$ contains a subset of the underlying $G_c$ as well as part of the underlying $G_s$;
\end{enumerate}

\begin{wrapfigure}{r}{0.5\textwidth}
	\vskip -0.15in
	\includegraphics[width=0.5\textwidth]{figures/good_set_venn_v2_crop.pdf}
	\label{fig:good_set_venn_2_appdx}
	\caption{Illustration of the notation for estimated $\widehat{G}_c$ from $G$.
		$G_c$ and $G_s$ are two
		disjoint sets. $\widehat{G}_c$ may contain certain subsets from $G_c$ and $G_s$.
		The subsets from $G_c$ and $G_s$ contained in $\widehat{G}_c$ are denoted as $\widehat{G}_c^p$ and $\widehat{G}_s^p$, respectively.
		While the left subsets in $G_c$ and $G_s$ are denoted as $\widehat{G}_c^l$ and $\widehat{G}_s^l$, respectively.
		Similar notations are also applicable for the estimated $\widetilde{G}_c$ from $\widetilde{G}$.
	}
	\vspace{-0.5in}
\end{wrapfigure}

Before the discussion, let us inherit the notations of subsets of $G_c,G_s$
from the proof for Theorem~\ref{thm:good_inv_gnn_new_appdx}:
Let $\widehat{G}_c^*$ and $\widetilde{G}_c^*$ be the ground truth invariant subgraph $G_c$s
of $\widehat{G}$ and $\widetilde{G}$,
$\widehat{G}_c^l=\widehat{G}_c^*-\widehat{G}_c$
and $\widetilde{G}_c^l=\widetilde{G}_c^*-\widetilde{G}_c$ be the \textbf{l}eft (un-estimated) subsets
from corresponding ground truth $G_c$s,
and $\widehat{G}_c^p=\widehat{G}_c^*-\widehat{G}_c^l$ and
$\widetilde{G}_c^p=\widetilde{G}_c^*-\widetilde{G}_c^l$ be the complement,
or equivalently,
the \textbf{p}artial $\widehat{G}_c^*,\widetilde{G}_c^*$ that are estimated in $\widehat{G}_c,\widetilde{G}_c$, respectively.
Similarly,  $\widehat{G}_s^p,\widetilde{G}_s^p$
are the partial $\widehat{G}_s,\widetilde{G}_s$s contained in the estimated $\widehat{G}_c,\widetilde{G}_c$
while $\widehat{G}_s^l,\widetilde{G}_s^l$ are the left subsets $\widehat{G}_s,\widetilde{G}_s$, respectively.

First of all, case (i) cannot hold because,
when maximizing $I(\widehat{G}_c;\widetilde{G}_c|Y)$,
if $\exists \widehat{G}_c^l=\widehat{G}_c^*-\widehat{G}_c$,
as shown in the proof for Theorem~\ref{thm:good_inv_gnn_new_appdx},
including $\widehat{G}_c^l$ into $\widehat{G}_c$ can always enlarge $I(\widehat{G}_c;\widetilde{G}_c|Y)$,
while not affecting the optimality of $I(\widehat{G}_s;Y)+I(\widehat{G}_c;Y)$
by re-distributing $\widehat{G}_c^l$ from $\widehat{G}_s$ to $\widehat{G}_c$.
Consequently, $\widehat{G}_c^*$ must be included in $\widehat{G}_c$, i.e., $\widehat{G}_c^*\subseteq \widehat{G}_c$.

As for case (ii),
recall that, by the condition of equally distributed training samples from each training environment,
maximizing $I(\widehat{G}_c;\widetilde{G}_c|Y)$ is
essentially maximizing $I(\widehat{G}_c,E=\hat{e};\widetilde{G}_c,E=\tilde{e}|Y)$, $\forall \hat{e},\tilde{e}\in\envtrain$,
hence, we have:
\begin{equation}
	\label{eq:good_opt_contrast_env_appdx_ii}
	\begin{aligned}
		\max_{g,f_c} & \ I(\widehat{G}_c;\widetilde{G}_c|Y),                                                    \\
		             & =I(\widehat{G}_c,E=\hat{e};\widetilde{G}_c,E=\tilde{e}|Y)                                \\
		             & = H(\widehat{G}_c,E=\hat{e}|Y)-H(\widehat{G}_c,E=\hat{e}|\widetilde{G}_c,E=\tilde{e},Y). \\
	\end{aligned}
\end{equation}
We claim Eq.~\ref{eq:good_opt_contrast_env_appdx_ii}
can eliminate any potential
subsets in the estimated $\widehat{G}_c$.
Similarly, we have:
\begin{equation}
	\label{eq:good_opt_contrast_mi_t1_appdx_ii}
	\begin{aligned}
		H(\widehat{G}_c,E=\hat{e}|Y) & =H(E=\hat{e}|\widehat{G}_c,Y)+H(\widehat{G}_c|E=\hat{e},Y)                      \\
		                             & =H(\widehat{G}_c^*\cup \widehat{G}_s^p|E=\hat{e},Y)                             \\
		                             & =H(\widehat{G}_c^*|E=\hat{e},Y) +H(\widehat{G}_s^p|\widehat{G}_c^*,E=\hat{e},Y) \\
		                             & =H(\widehat{G}_c^*|Y) +H(\widehat{G}_s^p|\widehat{G}_c^*,E=\hat{e},Y)           \\
	\end{aligned}
\end{equation}
where the second equality is due to $E=\hat{e}$ is determined.
Compared to the case that $\widehat{G}_c=\widehat{G}_c^*$, we have:
\begin{equation}
	\begin{aligned}
		\Delta H(\widehat{G}_c,E=\hat{e}|Y) & =H(\widehat{G}_c,E=\hat{e}|Y)-H(\widehat{G}_c^*,E=\hat{e}|Y), \\
		                                    & =H(\widehat{G}_s^p|\widehat{G}_c^*,E=\hat{e},Y).
	\end{aligned}
\end{equation}

Then, as for $H(\widehat{G}_c,E=\hat{e}|\widetilde{G}_c,E=\tilde{e},Y)$,
without loss of generality, we can divide all of the possible cases into two:
\begin{enumerate}[label=(\alph*)]
	\item $\widehat{G}_c$ contains some $\widehat{G}_s^p\subseteq \widehat{G}_s$;
	\item Both $\widehat{G}_c$ and $\widetilde{G}_c$ contain some $\widehat{G}_s^p\subseteq \widehat{G}_s$ and $\widetilde{G}_s^p\subseteq \widetilde{G}_s$, respectively.
\end{enumerate}
For (a), we have:
\begin{equation}
	\begin{aligned}
		H(\widehat{G}_c,E=\hat{e}|\widetilde{G}_c,E=\tilde{e},Y) & =H(\widehat{G}_c^*, \widehat{G}_s^p,E=\hat{e}|\widetilde{G}_c,E=\tilde{e},Y)                                                              \\
		                                                         & =H( \widehat{G}_s^p|\widetilde{G}_c,E=\tilde{e},Y,\widehat{G}_c^*,E=\hat{e})+H( \widehat{G}_c^*,E=\hat{e}|\widetilde{G}_c,E=\tilde{e},Y), \\
	\end{aligned}
\end{equation}
Similarly to the proof for Theorem~\ref{thm:good_inv_gnn_new_appdx},
when considering $\Delta I(\widehat{G}_c;\widetilde{G}_c|Y)$, the effects of
$H( \widehat{G}_s^p|\widetilde{G}_c,E=\tilde{e},Y,\widehat{G}_c^*,E=\hat{e})$ is cancelled out by
$H(\widehat{G}_s^p|\widehat{G}_c^*,E=\hat{e},Y)$.
Hence, we have:
\[\Delta I(\widehat{G}_c;\widetilde{G}_c|Y)=0.\]

For (b), we have:
\begin{equation}
	\begin{aligned}
		H(\widehat{G}_c,E=\hat{e}|\widetilde{G}_c,E=\tilde{e},Y) & =H(\widetilde{G}_c^*, \widetilde{G}_s^p,E=\hat{e}|\widetilde{G}_c^*,\widetilde{G}_s^p,E=\tilde{e},Y) \\
		                                                         & =H( \widehat{G}_s^p|\widetilde{G}_c^*,\widetilde{G}_s^p,E=\tilde{e},Y,\widehat{G}_c^*,E=\hat{e})     \\
		                                                         & \qquad+H(\widehat{G}_c^*|\widetilde{G}_c^*,\widetilde{G}_s^p,E=\tilde{e},Y,E=\hat{e}),               \\
	\end{aligned}
\end{equation}
Similarly, $H( \widehat{G}_s^p|\widetilde{G}_c^*,\widetilde{G}_s^p,E=\tilde{e},Y,\widehat{G}_c^*,E=\hat{e})=0$
can also be cancelled out by $H(\widehat{G}_s^p|\widehat{G}_c^*,E=\hat{e},Y)$.
Moreover, for $H(\widehat{G}_c^*|\widetilde{G}_c^*,\widetilde{G}_s^p,E=\tilde{e},Y,E=\hat{e})$,
$\widetilde{G}_s^p$ can not bring no additional information about $\widehat{G}_c^*$,
when conditioning on $\widetilde{G}_c^*,Y,E=\tilde{e}$. Hence, we also have:
\[\Delta I(\widehat{G}_c;\widetilde{G}_c|Y)=0.\]

To summarize, when maximizing $I(\widehat{G}_c;\widetilde{G}_c|Y)$,
including any $\widehat{G}_s^p\subseteq \widehat{G}_s^*$ can not
bring additional benefit while affecting the optimality of $I(\widehat{G}_s;Y)+I(\widehat{G}_c;Y)$.
More specifically, when considering the changes to $I(\widehat{G}_s;Y)+I(\widehat{G}_c;Y)$,
$\forall G_s^p\subseteq G_s$, we have
\[I(G-\widehat{G}_c^*-G_s^p;Y)\leq I(G-\widehat{G}_c^*;Y),\ \forall G_s^p\subseteq G_s,\]
while $I(Y;\widehat{G}_c^*,G_s^p)=I(Y;\widehat{G}_c^*)+I(Y;\widehat{G}_s^p|\widehat{G}_c^*),\ \forall e\in\envtrain$.
Consequently,
\begin{equation}
	\begin{aligned}
		\Delta I(\widehat{G}_s;Y)+I(\widehat{G}_c;Y) & = -I(\widehat{G}_s^p;Y|\widehat{G}_s^l)+I(\widehat{G}_s^p;Y|\widehat{G}_c^*) \\
		                                             & =-I(\widehat{G}_s^p;Y)+I(\widehat{G}_s^p;Y|\widehat{G}_c^*)\leq 0.           \\
	\end{aligned}
\end{equation}

Hence, only the underlying $G_c$ is the solution to Eq.~\ref{eq:good_opt_contrast_v3_appdx},
which implies that solving for the objective (Eq.~\ref{eq:good_opt_contrast_v3_appdx})
can elicit an invariant GNN.

\section{Details of Prototypical $\ginv$ Implementation}
\label{sec:good_impl_appdx}
In fact, the $\ginv$ framework introduced in Sec.~\ref{sec:good_framework} can have multiple implementations.
We choose interpretable architectures in our experiments for the purpose of concept verification.
More sophisticated architectures can be incorporated.
Experimental results in Sec.~\ref{sec:exp} also demonstrates that, even equipped with basic GNN architectures,
$\ginv$ already has the excellent OOD generalization ability, hence
it is promising to incorporate more advanced architectures from the prosperous GNN literature.

We now introduce the details of the architectures used in our experiments. Recall that $\ginv$ decomposes a GNN model for graph classification into two modules,
i.e., a featurizer: $g:\gG\rightarrow\gG_c$ and a classifier $f_c:\gG_c\rightarrow\gY$.
Specifically, for the implementation of Featurizer, we choose one of the common practices GAE~\citep{gae} for calculating the sampled weights for each edge. More formally, the soft mask is predicted through the following equation:
\begin{equation}\label{eq:gae_appdx}\nonumber
	Z=\text{GNN}(G)\in\R^{n\times h},\ M=\sigma(ZZ^T)\in\R^{n\times n}.
\end{equation}

\begin{wrapfigure}{r}{0.5\textwidth}
	\includegraphics[width=0.5\textwidth]{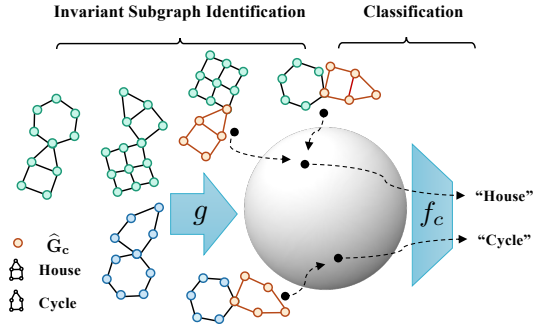}
	\label{fig:motivation_appdx}
	\caption{Illustration of $\fullginv$ ($\ginv$):
		GNNs need to classify graphs based on the specific motif (``House'' or ``Cycle'').
		The featurizer $g$ will extract an (orange colored) subgraph $\widehat{G}_c$ from each input
		for the classifier $f_c$ to predict the label.
		The training objective of $g$ is implemented in a contrastive strategy where
		the distribution of $\widehat{G}_c$ at the latent sphere
		will be optimized to maximize the intra-class mutual information.
		With the identified invariant subgraph $G_c$,
		the predictions made by classifier $f_c$ based on $G_c$ are invariant to distribution shifts;
	}
\end{wrapfigure}

If a sampling ratio $s_c$ is predetermined, we sample $s_c$ of total edges with the largest predicted weights as a soft estimation of $\widehat{G}_c$.
Then, the estimated $\widehat{G}_c$ will be forwarded to the classifier $f_c$ for predicting the labels of the original graph.
Although Theorem~\ref{thm:good_inv_gnn_new_appdx} assumes $s_c$ is known, in real applications we do not know the specific $s_c$.
Hence, in experiments, we select $s_c$ according to the validation performance.
To thoroughly study the effects of $I(\widehat{G}_s;Y)$ comparing to $\ginv$v1, we stick to using the same $s_c$ and sampling process for $\ginv$v2,
while  $\ginv$v2 essentially requires less specific knowledge about ground truth $r_c$ hence achieving better empirical performance.
Moreover, once the sampled edges are determined, the classifier GNN can take either the original feature of the input graph
or the learned feature from the featurizer as the new node attributes for $\widehat{G}_c$.
We select the architecture according to the validation performance from some random runs.

For the implementation of the information theoretic objectives,
we will use $\ginv$v2 for elaboration while the implementation of $\ginv$v1 can be obtained via removing the third term from $\ginv$v2.
Recall that $\ginv$v2 has the following formulation:
\begin{equation}
	\label{eq:good_opt_contrast_v3_appdx2}
	\begin{aligned}
		\max_{f_c,g} \  I(\widehat{G}_c;Y)+I(\widehat{G}_s;Y), \ \text{s.t.}\
		 & \widehat{G}_c\in
		\argmax_{\widehat{G}_c=g(G), \widetilde{G}_c=g(\widetilde{G})} I(\widehat{G}_c;\widetilde{G}_c|Y),\
		\\
		 & I(\widehat{G}_s;Y)\leq I(\widehat{G}_c;Y),\ \widehat{G}_s=G-g(G).
	\end{aligned}
\end{equation}
where $\widehat{G}_c=g(G),\widetilde{G}_c=g(\widetilde{G})$ and $\widetilde{G}\sim P(G|Y)$, i.e., $\widetilde{G}$ and $G$ have the same label.
In Sec.~\ref{sec:good_theory}, we introduce a contrastive approximation for $I(\widehat{G}_c;\widetilde{G}_c|Y)$:
\begin{equation} \label{eq:good_opt_contrast_appdx2}
	I(\widehat{G}_c;\widetilde{G}_c|Y) \approx
	\mathbb{E}_{
	\substack{
	\{\widehat{G}_c,\widetilde{G}_c\} \sim \sP_g(G|\gY=Y)\\\
	\{G^i_c\}_{i=1}^{M} \sim \sP_g(G|\gY \neq Y)
	}
	}
	\log\frac{e^{\phi(h_{\widehat{G}_c},h_{\widetilde{G}_c})}}
	{e^{\phi(h_{\widehat{G}_c},h_{\widetilde{G}_c})} +
		\sum_{i}^M e^{\phi(h_{\widehat{G}_c}h_{G^i_c})}},
\end{equation}
where positive samples $(\widehat{G}_c,\widetilde{G}_c)$ are the extracted subgraphs of graphs that have the same label of $G$,
negative samples are those with different labels, $\sP_g(G|\gY=Y)$ is the pushforward distribution of $\sP(G|\gY=Y)$ by featurizer $g$,
$\sP(G|\gY=Y)$ refers to the distribution of $G$ given the label $Y$,
$h_{\widehat{G}_c},h_{\widetilde{G}_c},h_{G^i_c}$ are the graph presentations of the estimated subgraphs,
and $\phi$ is the similarity metric for the graph presentations.
As $M\rightarrow \infty$, Eq.~\ref{eq:good_opt_contrast_appdx2} approximates $I(\widehat{G}_c;\widetilde{G}_c|Y)$ which can be regarded as a
non-parameteric resubstitution entropy estimator via the von Mises-Fisher
kernel density~\citep{feat_dist_entropy,vMF_entropy,align_uniform}.

While for the third term $I(\widehat{G}_s;Y)$ and the constraint $I(\widehat{G}_s;Y)\leq I(\widehat{G}_c;Y)$,
a straightforward implementation is to imitate the hinge loss:
\begin{equation} \label{eq:good_opt_hinge_appdx2}
	I(\widehat{G}_s;Y)\approx \frac{1}{N} R_{\widehat{G}_s}\cdot \mathbb{I}(R_{\widehat{G}_s}\leq R_{\widehat{G}_c}),
\end{equation}
where $N$ is the number of samples, $\mathbb{I}$ is a indicator function that outputs $1$ when the
interior condition is satisfied otherwise $0$, and
$R_{\widehat{G}_s}$ and $R_{\widehat{G}_c}$ are the empirical risk vector of the predictions
for each sample based on $\widehat{G}_s$ and $\widehat{G}_c$ respectively.
One can also formulate Eq.~\ref{eq:good_opt_contrast_v3_appdx2} from game-theoretic perspective~\citep{inv_rat}.

Finally, we can derive the specific loss for the optimization of $\ginv$v2 combining Eq.~\ref{eq:good_opt_contrast_appdx2} and Eq.~\ref{eq:good_opt_hinge_appdx2}:
\begin{equation} \label{eq:good_opt_loss_impl_appdx}
	\begin{aligned}
		 & R_{\widehat{G}_c} +\alpha
		\mathbb{E}_{
		\substack{
		\{\widehat{G}_c,\widetilde{G}_c\} \sim \sP_g(G|\gY=Y)                                             \\\
		\{G^i_c\}_{i=1}^{M} \sim \sP_g(G|\gY \neq Y)
		}
		}
		\log\frac{e^{\phi(h_{\widehat{G}_c},h_{\widetilde{G}_c})}}
		{e^{\phi(h_{\widehat{G}_c},h_{\widetilde{G}_c})} +
		\sum_{i}^M e^{\phi(h_{\widehat{G}_c}h_{G^i_c})}}                                                  \\
		 & +\beta \frac{1}{N} R_{\widehat{G}_s}\cdot \mathbb{I}(R_{\widehat{G}_c}\leq R_{\widehat{G}_s}), \\
	\end{aligned}
\end{equation}
where $R_{\widehat{G}_c},R_{\widehat{G}_s}$ are the empirical risk when using $\widehat{G}_c,\widehat{G}_s$ to predict $Y$ through the classifier.
Typically, we use a additional MLP downstream classifier $\rho_s$ for $\widehat{G}_s$ in the classifier GNN.
$h_{\widehat{G}_c}$ is the graph representation of $\widehat{G}_c$ which can be induced from the GNN encoder either in the featurizer or in the classifier.
$\alpha,\beta$ are the weights for $I(\widehat{G}_c;\widetilde{G}_c|Y)$ and $I(\widehat{G}_s;Y)$, and $\phi$ is implemented as cosine similarity.
The optimization loss for $\ginv$v1 merely contains the first two terms in Eq.~\ref{eq:good_opt_loss_impl_appdx}.

The detailed algorithm for $\ginv$ is given in the Algorithm~\ref{alg:good_appdx}, assuming the $h_{\widehat{G}_c}$ is obtained via the graph encoder in $f_c$. Fig.~\ref{fig:motivation_appdx} also shows a illustration of the working procedure of $\ginv$.

\begin{algorithm}[tb]
	\caption{Pseudo code for $\ginv$ framework.}
	\label{alg:good_appdx}
	\begin{algorithmic}
		\STATE {\bfseries Input:} Training graphs and labels $\train=\{G_i,Y_i\}_{i=1}^N$; learning rate $l$; loss weights $\alpha,\beta$ required by Eq.~\ref{eq:good_opt_loss_impl_appdx}; number of training epochs $e$; batch size $b$;
		\STATE Randomly initialize parameters of $g,f_c$;
		\FOR{$i=1$ {\bfseries to} $e$}
		\STATE Sample a batch of graphs $\{G^j,Y^j\}_{j=1}^b$;
		\STATE Estimate the invariant subgraph for the batch: $\{\widehat{G}_c^j\}_{j=1}^b=g(\{G^j,Y^j\}_{j=1}^b)$;
		\STATE Make predictions based the estimated invariant subgraph: $\{\widehat{Y}^j\}_{j=1}^b=f_c(\{\widehat{G}_c^j\}_{j=1}^b)$;
		\STATE Calculate the empirical loss $R_{\widehat{G}_c}$ with $\{\widehat{Y}^j\}_{j=1}^b$;
		\STATE Fetch the graph representations of invariant subgraphs from $f_c$ as $\{h_{\widehat{G}_c^j}\}_{j=1}^b$;
		\STATE Calculate the contrastive loss $R_c$ with Eq.~\ref{eq:good_opt_contrast_appdx2}, where positive samples and negative samples are constructed from the batch;
		\STATE Obtain $\widehat{G}_s$
		for the batch: $\{\widehat{G}_c^j\}_{j=1}^b=\{G^j-\widehat{G}_c^j\}_{j=1}^b$;
		\STATE Make predictions based on the $\widehat{G}_s$: $\{\widehat{Y}_s^j\}_{j=1}^b=f_c(\{\widehat{G}_c^j\}_{j=1}^b)$;
		\STATE Calculate the empirical loss $R_{\widehat{G}_s}$ with $\{\widehat{Y}_s^j\}_{j=1}^b$, and weighted as Eq.~\ref{eq:good_opt_hinge_appdx2};
		\STATE Update parameters of $g,f_c$ with respect to $R_{\widehat{G}_c}+\alpha R_c+\beta R_{\widehat{G}_s}$ as Eq.~\ref{eq:good_opt_loss_impl_appdx};
		\ENDFOR
	\end{algorithmic}
\end{algorithm}

\section{Detailed Experimental Settings}
\label{sec:exp_appdx}
In this section, we provide more details about our experimental settings in Sec.~\ref{sec:exp}, including the dataset preparation, dataset statistics, implementations of baselines, selection of models and hyperparameters as well as evaluation protocols.

\bgroup
\def\arraystretch{1.2}
\begin{table}[ht]
	\centering
	\caption{Information about the datasets used in experiments. The number of nodes and edges are taking average among all graphs. MCC indicates the Matthews correlation coefficient.}\small\sc
	\label{tab:datasets_stats_appdx}
	\resizebox{\textwidth}{!}{
		\begin{small}
			\begin{tabular}{lccccccc}
				\toprule
				\textbf{Datasets} & \textbf{\# Training} & \textbf{\# Validation} & \textbf{\# Testing} & \textbf{\# Classes} & \textbf{ \# Nodes} & \textbf{ \# Edges}
				                  & \textbf{  Metrics}                                                                                                                            \\\midrule
				SPMotif           & $9,000$              & $3,000$                & $3,000$             & $3$                 & $44.96$            & $65.67$            & ACC     \\
				PROTEINS          & $511$                & $56$                   & $112$               & $2$                 & $39.06$            & $145.63$           & MCC     \\
				DD                & $533$                & $59$                   & $118$               & 2                   & $284.32$           & $1,431.32$         & MCC     \\
				NCI1              & $1,942$              & $215$                  & $412$               & $2$                 & $29.87$            & $64.6$             & MCC     \\
				NCI109            & $1,872$              & $207$                  & $421$               & $2$                 & $29.68$            & $64.26$            & MCC     \\
				SST5              & $6,090$              & $1,186$                & $2,240$             & $5$                 & $19.85$            & $37.70$            & ACC     \\
				Twitter           & $3,238$              & $694$                  & $1,509$             & $3$                 & $21.10$            & $40.20$            & ACC     \\
				CMNIST-sp         & $40,000$             & $5,000$                & $15,000$            & $2$                 & $56.90$            & $373.85$           & ACC     \\
				DrugOOD-Assay     & $34,179$             & $19,028$               & $19,032$            & $2$                 & $32.27$            & $70.25$            & ROC-AUC \\
				DrugOOD-Scaffold  & $21,519$             & $19,041$               & $19,048$            & $2$                 & $29.95$            & $64.86$            & ROC-AUC \\
				DrugOOD-Size      & $36,597$             & $17,660$               & $16,415$            & $2$                 & $30.73$            & $66.90$            & ROC-AUC \\
				\bottomrule
			\end{tabular}	\end{small}}
\end{table}
\egroup

\begin{table*}[ht]
	\caption{Detailed statistics of selected TU datasets. Table from~\citet{size_gen1,size_gen2}.}
	\label{tab:datasets_stats_tu_appdx}
	\begin{small}
		\begin{sc}
			\begin{center}
				\resizebox{\textwidth}{!}{
					\centering
					\begin{tabular}{|l|r|r|r|r|r|r|}
						\cline{2-7}
						\multicolumn{1}{c|}{}   & \multicolumn{3}{c|}{\textbf{NCI1}} & \multicolumn{3}{c|}{\textbf{NCI109}}                                                                                                                          \\
						\cline{2-7}
						\multicolumn{1}{c|}{}   & \textbf{all}                       & \textbf{Smallest} $\mathbf{50\%}$    & \textbf{Largest $\mathbf{10\%}$} & \textbf{all} & \textbf{Smallest} $\mathbf{50\%}$ & \textbf{Largest $\mathbf{10\%}$} \\
						\hline
						\textbf{Class A}        & $49.95\%$                          & $62.30\%$                            & $19.17\%$                        & $49.62\%$    & $62.04\%$                         & $21.37\%$                        \\
						\hline
						\textbf{Class B}        & $50.04\%$                          & $37.69\%$                            & $80.82\%$                        & $50.37\%$    & $37.95\%$                         & $78.62\%$                        \\
						\hline
						\textbf{Num of graphs}  & 4110                               & 2157                                 & 412                              & 4127         & 2079                              & 421                              \\
						\hline
						\textbf{Avg graph size} & 29                                 & 20                                   & 61                               & 29           & 20                                & 61                               \\
						\hline
					\end{tabular}
				}

				\bigskip

				\resizebox{\textwidth}{!}{
					\centering
					\begin{tabular}{|l|r|r|r|r|r|r|}
						\cline{2-7}
						\multicolumn{1}{c|}{}   & \multicolumn{3}{c|}{\textbf{PROTEINS}} & \multicolumn{3}{c|}{\textbf{DD}}                                                                                                                           \\
						\cline{2-7}
						\multicolumn{1}{c|}{}   & \textbf{all}                           & \textbf{Smallest} $\mathbf{50\%}$ & \textbf{Largest $\mathbf{10\%}$} & \textbf{all} & \textbf{Smallest} $\mathbf{50\%}$ & \textbf{Largest $\mathbf{10\%}$} \\
						\hline
						\textbf{Class A}        & $59.56\%$                              & $41.97\%$                         & $90.17\%$                        & $58.65\%$    & $35.47\%$                         & $79.66\%$                        \\
						\hline
						\textbf{Class B}        & $40.43\%$                              & $58.02\%$                         & $9.82\%$                         & $41.34\%$    & $64.52\%$                         & $20.33\%$                        \\
						\hline
						\textbf{Num of graphs}  & 1113                                   & 567                               & 112                              & 1178         & 592                               & 118                              \\
						\hline
						\textbf{Avg graph size} & 39                                     & 15                                & 138                              & 284          & 144                               & 746                              \\
						\hline
					\end{tabular}
				}
			\end{center}
		\end{sc}
	\end{small}
\end{table*}

\subsection{Details about the datasets}
\label{sec:exp_data_appdx}
We provide more details about the motivation and construction method of the datasets that are used in our experiments. Statistics of the datasets are presented in Table~\ref{tab:datasets_stats_appdx}.

\textbf{SPMotif datasets.} We construct 3-class synthetic datasets based on BAMotif~\citep{gnn_explainer,pge} following~\cite{dir},
where the model needs to tell which one of three motifs (House, Cycle, Crane) that the graph contains.
For each dataset, we generate $3000$ graphs for each class at the training set, $1000$ graphs for each class at the validation set and testing set, respectively.
During the construction, we merely inject the distribution shifts in the training data while keep the testing data and validation data without the biases.
For structure-level shifts (\textbf{SPMotif-Struc}), we introduce the bias based on FIIF, where the motif and one of the three base graphs (Tree, Ladder, Wheel) are artificially (spuriously) correlated with a probability of various biases, and equally correlated with the other two. Specifically, given a predefined bias $b$, the probability of a specific motif (e.g., House) and a specific base graph (Tree) will co-occur is $b$ while for the others is $(1-b)/2$ (e.g., House-Ladder, House-Wheel). We use random node features for SPMotif-Struc, in order to study the influences of structure level shifts.
Moreover, to simulate more realistic scenarios where both structure level and topology level have distribution shifts, we also construct \textbf{SPMotif-Mixed} for mixed distribution shifts.
We additionally introduced FIIF attribute-level shifts based on SPMotif-Struc, where all of the node features are spuriously correlated with a probability of various biases by setting to the same number of corresponding labels. Specifically, given a predefined bias $b$, the probability that all of the node features of a graph has label $y$ (e.g., $y=0$) being set to $y$ (e.g., $\mX=\mathbf{0}$) is $b$ while for the others is $(1-b)/2$ (e.g., $P(\mX=\mathbf{1})=P(\mX=\mathbf{2})=(1-b)/2$). More complex distribution shift mixes can be studied following our construction approach, which we will leave for future works.

\textbf{TU datasets.} To study the effects of graph sizes shifts, we follow~\citet{size_gen1,size_gen2} to study the OOD generalization abilities of various methods on four of TU datasets~\citep{tudataset}, i.e., \textbf{PROTEINS, DD, NCI1, NCI109}. Specifically, we use the data splits generated by~\citet{size_gen1} and use the Matthews correlation coefficient  as evaluation metric following~\cite{size_gen2} due to the class imbalance in the splits. The splits are generated as follows: Graphs with sizes smaller than the $50$-th percentile are assigned to training, while graphs with sizes larger than the $90$-th percentile are assigned to test. A validation set for hyperparameters tuning consists of $10\%$ held out examples from training. We also provide a detailed statistics about these datasets in table~\ref{tab:datasets_stats_tu_appdx}.

\textbf{Graph-SST datasets.} Inspired by the data splits generation for studying distribution shifts on graph sizes, we split the data curated from sentiment graph data~\citep{xgnn_tax}, that converts sentiment sentence classification datasets \textbf{SST5} and \textbf{SST-Twitter}~\citep{sst25,sst_twitter} into graphs, where node features are generated using BERT~\citep{bert} and the edges are parsed by a Biaffine parser~\citep{biaffine}. Our splits are created according to the averaged degrees of each graph. Specifically, we assign the graphs as follows: Those that have smaller or equal than $50$-th percentile averaged degree are assigned into training, those that have averaged degree large than $50$-th percentile while smaller than $80$-th percentile are assigned to validation set, and the left are assigned to test set. For SST5 we follow the above process while for Twitter we conduct the above split in an inversed order to study the OOD generalization ability of GNNs trained on large degree graphs to small degree graphs.

\textbf{CMNIST-sp.} To study the effects of PIIF shifts, we select the ColoredMnist dataset created in IRM~\citep{irmv1}. We convert the ColoredMnist into graphs using super pixel algorithm introduced by~\citet{understand_att}. Specifically, the original Mnist dataset are assigned to binary labels where images with digits $0-4$ are assigned to $y=0$ and those with digits $5-9$ are assigned to $y=1$. Then, $y$ will be flipped with a probability of $0.25$. Thirdly, green and red colors will be respectively assigned to images with labels $0$ and $1$ an averaged probability of $0.15$ (since we do not have environment splits) for the training data. While for the validation and testing data the probability is flipped to $0.9$.

\textbf{DrugOOD datasets.} To evaluate the OOD performance in realistic scenarios with realistic distribution shifts, we also include three datasets from DrugOOD benchmark.
DrugOOD is a systematic OOD benchmark for AI-aided drug discovery, focusing on the task of drug target binding affinity prediction for both macromolecule (protein target) and small-molecule (drug compound).
The molecule data and the notations are curated from realistic ChEMBL database~\citep{chembl}.
Complicated distribution shifts can happen on different assays, scaffolds and molecule sizes.
In particular, we select \texttt{DrugOOD-lbap-core-ic50-assay}, \texttt{DrugOOD-lbap-core-ic50-scaffold}, and \texttt{DrugOOD-lbap-core-ic50-size},
from the task of Ligand Based Afﬁnity Prediction which uses \texttt{ic50} measurement type and contains \texttt{core} level annotation noises.
For more details, we refer interested readers to~\citet{drugood}.

\subsection{Training and Optimization in Experiments}
\label{sec:exp_impl_appdx}
During the experiments, we do not tune the hyperparameters exhaustively while following the common recipes for optimizing GNNs.
Details are as follows.

\textbf{GNN encoder.} For fair comparison, we use the same GNN architecture as graph encoders for all methods.
By default, we use $3$-layer GNN with Batch Normalization~\citep{batch_norm} between layers and JK residual connections at last layer~\citep{jknet}.
For the architectures we use the GCN with mean readout~\citep{gcn} for all datasets except Proteins where we empirically observe better validation performance
with a GIN and max readout~\citep{gin}, and for DrugOOD  datasets where we follow the backbone used in the paper~\citep{drugood}, i.e., $4$-layer GIN with sum readout.
The hidden dimensions are fixed as $32$ for SPMotif, TU datasets, CMNIST-sp, and $128$ for SST5, Twitter and DrugOOD datasets.

\textbf{Optimization and model selection.}
By default, we use Adam optimizer~\citep{adam} with a learning rate of $1e-3$ and a batch size of $32$ for all models at all datasets.
Except for DrugOOD datasets, we use a batch size of $128$ following the original paper~\citep{drugood}.
To avoid underfitting, we pretrain models for $20$ epochs for all datasets, except
for CMNIST and Twitter where we pretrain $5$ epochs and for SST5 we pretrain $10$ epochs, because of the dataset size and the difficulty of the task.
To avoid overfitting, we also employ an early stopping of $5$ epochs according to the validation performance.
Meanwhile, dropout~\citep{dropout} is also adopted for some datasets.
Specifically, we use a dropout rate of $0.5$ for CMNIST, SST5, Twitter, DrugOOD-Assay and DurgOOD-Scaffold,
$0.1$ for DrugOOD-Size according to the validation performance,
and $0.3$ for TU datasets following the practice of~\citet{size_gen2}.

\textbf{Implementations of baselines.}
For implementations of the interpretable GNNs, we use the author released codes~\citep{gib,asap}, where we use the codes provided by the authors\footnote{\url{https://anonymous.4open.science/r/DIR/}} for DIR~c\citep{dir}
which is the same as the author released codes.
During the implementation, we use the same $s_c$ for all interpretable GNN baselines, chosen from $\{0.1,0.2,0.25,0.3,0.4,0.5,0.6,0.7,0.8,0.9\}$ according to the validation performances,
and set to $0.25$ for SPMotif following~\citet{dir}, $0.3$ for Proteins and DD,
$0.6$ for NCI1, $0.7$ for NCI109, $0.8$ for CMNIST-sp, $0.5$ for SST5 and Twitter, and $0.8$ for DrugOOD datasets, respectively.
Empirically, we observe that the optimization process in GIB can be unstable during its nested optimization for approximating  the mutual information of the predicted subgraph and the input graph.
We use a larger batch size of $128$ or reduce the nested optimization steps to be lower than $20$ for stabilizing the performance.
If the optimization failed due to the instability during training, we will select the results with best validation accuracy as the final outcomes.
Although SPMotif-Struc is also evaluated in DIR, we find the results are inconsistent to the results reported by the author, because
DIR adopts \texttt{Last Epoch Model Selection} which is \emph{different} from the claim that they select models according to \texttt{the validation performance},
i.e., \texttt{line $264$} to \texttt{line $278$} in \texttt{train/spmotif\_dir.py} from the commit \texttt{4b975f9b3962e7820d8449eb4abbb4cc30c1025d} of \url{https://github.com/Wuyxin/DIR-GNN}.
We select the hyperparamter for the proposed DIR regularization from $\{0.01,0.1,1,10\}$ according to the validation performances at the datasets, while we stick to the authors claimed hyperparameters for the datasets they also experimented with.

For invariant learning,
we refer to the implementations in DomainBed~\citep{domainbed} for IRM~\citep{irmv1}, V-Rex~\citep{v-rex} and IB-IRM~\citep{ib-irm}.
Since the environment information is not available, we perform random partitions on the training data to obtain two equally large environments for these objectives.
Moreover, we select the weights for the corresponding regularization from $\{0.01,0.1,1,10,100\}$ for these objectives according to the validation performances of IRM and stick to it for others,
since we empirically observe that they perform similarly with respect to the regularization weight choice.
For EIIL~\citep{env_inference}, we use the author released implementations about assigning different samples the weights for being put in each environment and calculating the IRM loss.

Besides, for CNC~\citep{cnc}, we follow the algorithm description to modify the sampling strategy in supervised contrastive loss~\citep{sup_contrastive} based on a pretrained GNN optimized with ERM, and choose the weight for contrastive loss using the same grid search as for $\ginv$.

\textbf{Implementations of $\ginv$.}
For fair comparison, $\ginv$ uses the same GNN architecture for GNN encoders as the baseline methods.
We did not do exhaustive hyperparameters tuning for the loss Eq.~\ref{eq:good_opt_loss_impl_appdx}.
By default, we fix the temperature to be $1$ in the contrastive loss,
and merely search $\alpha$ from $\{0.5,1,2,4,8,16,32\}$ and $\beta$ from $\{0.5,1,2,4\}$ according to the validation performances.
For CMNIST-sp, we find larger $\beta$ are required to get rid of intense spurious node features hence we expand the search range for $\beta$ to $\{0.5,1,2,4,16,32\}$,
For Graph-SST datasets, we search $\alpha$ from $\{0.5,1,2,4\}$ as we empirically find that increasing $\alpha$ does not help increase the performance with few random runs.
Besides, we also have various implementation options for obtaining the features in $\widehat{G}_c$, for obtaining $h_{\widehat{G}_c}$, as well as for obtaining predictions based on $\widehat{G}_s$.
By default, we feed the graph representations of featurizer GNN to the classifier GNN, as well as to the contrastive loss.
For classifying $G$ based on $\widehat{G}_s$, we use a separate MLP downstream classifier in the classifier GNN $f_c$.
The only exception is for the CMNIST-sp dataset where the spurious correlation is stronger than the invariant signal.
Directly feeding the graph representations from the featurizer GNN can easily overfit to the shortcuts hence we instead feed the original features to the downstream classifier GNN.
There  can be more other options, such as using separate graph convolutions on $\widehat{G}_s$ or $\widehat{G}_c$, which we  leave for future work.

\textbf{Evaluation protocol.} We run each experiment $10$ on TU datasets and $5$ times for others where the random seeds start from $1$ to the number of total repeated times.
During each run, we select the model according to the validation performance and report the mean and standard deviation of the corresponding metrics.

\subsection{Software and Hardware}
\label{sec:exp_software_appdx}
We implement our methods with PyTorch~\citep{pytorch} and PyTorch Geometric~\citep{pytorch_geometric}.
We ran our experiments on Linux Servers with 40 cores Intel(R) Xeon(R) Silver 4114 CPU @ 2.20GHz, 256 GB Memory, and Ubuntu 18.04 LTS installed.
GPU environments are varied from 4 NVIDIA RTX 2080Ti graphics cards with CUDA 10.2, 2 NVIDIA RTX 2080Ti and 2 NVIDIA RTX 3090Ti graphics cards with CUDA 11.3, and NVIDIA TITAN series with CUDA 11.3.
\revision{
	\subsection{Additional Analysis}
	\label{sec:additional_exp_appdx}
}

\revision{
	\textbf{Hyperparameter sensitivity analysis.} To examine how sensitive $\ginv$ is to the hyperparamters $\alpha$ and $\beta$ for contrastive loss and hinge loss, respectively, under different distribution shifts. We conduct experiments based on the hardest datasets from each table (i.e., SPMotif-Mixed with the bias of $0.9$, DrugOOD-Scaffold and the NCI109 datasets from Table~\ref{tab:sythetic}, Table~\ref{table:other_graph}, and Table~\ref{table:graph_size}, respectively.) To increase the difficulty, we search for more fine-grained spaces for both parameters, i.e., $\{0.1,0.5,1,2,3,4,5,6,7,8\}$. During changing the value of $\beta$, we will fix the $\alpha$ to a specific value under which the model has a relatively good performance (but not the best, to fully examine the robustness of $\ginv$ in practice). During the sensitivity tests, we follow the evaluation protocol as that used for the main experiments. The results are shown in Fig.~\ref{fig:hp_sen_alpha_appdx} and Fig.~\ref{fig:hp_sen_beta_appdx}.
}

\begin{figure}[H]
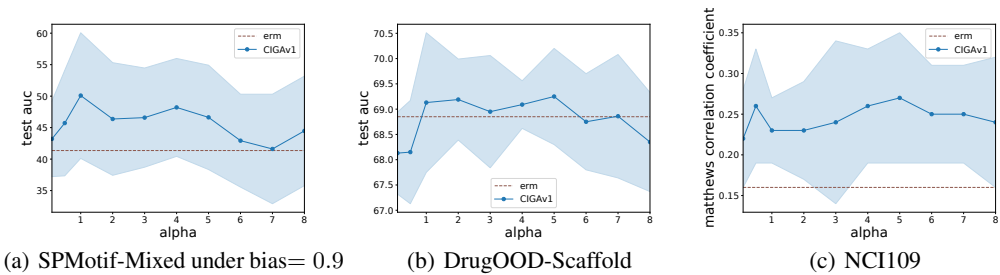

	\centering
	\subfigure[SPMotif-Mixed under bias$=0.9$]{
		\includegraphics[width=0.31\textwidth]{figures/spmotif_ab_contrast.pdf}
	}
	\subfigure[DrugOOD-Scaffold]{
		\includegraphics[width=0.31\textwidth]{figures/drugood_scaffold_ab_contrast.pdf}
	}
	\subfigure[NCI109]{
		\includegraphics[width=0.31\textwidth]{figures/nci_ab_contrast.pdf}
	}
	\caption{
		Hyperparameter sensitivity analysis on the coefficient of contrastive loss ($\alpha$).}
	\label{fig:hp_sen_alpha_appdx}
\end{figure}

\begin{figure}[H]
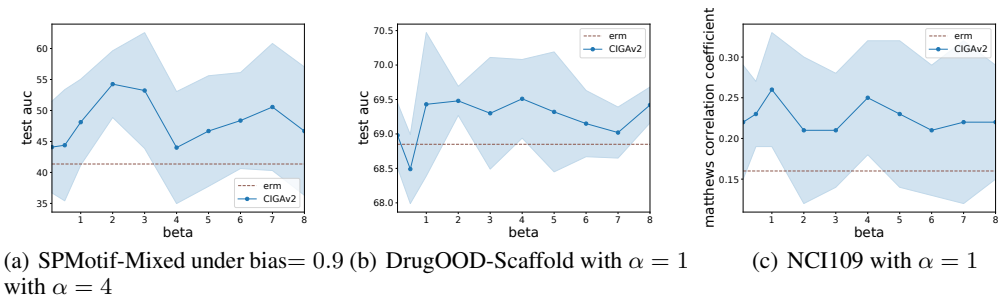

	\centering
	\subfigure[SPMotif-Mixed under bias$=0.9$ with $\alpha=4$]{
		\includegraphics[width=0.31\textwidth]{figures/spmotif_ab_conf.pdf}
	}
	\subfigure[DrugOOD-Scaffold with $\alpha=1$]{
		\includegraphics[width=0.31\textwidth]{figures/drugood_scaffold_ab_conf.pdf}
	}
	\subfigure[NCI109 with $\alpha=1$]{
		\includegraphics[width=0.31\textwidth]{figures/nci_ab_conf.pdf}
	}
	\caption{
		Hyperparameter sensitivity analysis on the coefficient of hinge loss ($\beta$).}
	\label{fig:hp_sen_beta_appdx}
\end{figure}

\revision{From the results above, we can see that both $\ginv$v1 and $\ginv$v2 are robust to different values of $\alpha$ and $\beta$, respectively, across different datasets and distribution shifts. Notably, in Fig.~\ref{fig:hp_sen_alpha_appdx}, when the coefficient $\alpha$ for the contrastive loss become too small, the invariance of the identified invariant subgraphs $\widehat{G}_c$ may not be guaranteed, resulting worse performances. Moreover, when $\alpha$ becomes too large, it may affect the optimization and yield worse performances. In SPMotif datasets, the worse performances can be observed via the large variances as well. Similarly for $\beta$, as shown in Fig.~\ref{fig:hp_sen_beta_appdx}, when $\beta$ becomes too small, some part from the spurious subgraph may still be contained in the estimated invariant subgraphs. While if $\beta$ becomes too large, there might be part of $\widehat{G}_c$ being eliminated. Although both $\ginv$v1 and $\ginv$v2 are robust to the changes of $\alpha$ and $\beta$, the intrinsic difficult optimization in OOD generalization algorithms including the proposed $\ginv$ in our work, still require a more proper and smooth optimization process~\citep{pair}.}

\begin{table}[ht]
	\centering
	\caption{\revision{Averaged training time (sec.) per epoch of various methods on DrugOOD-Scaffold.}}\small\sc
	\label{tab:running_time_appdx}
	\resizebox{\textwidth}{!}{
		\begin{small}
			\begin{tabular}{lccccccccc}
				\toprule
				Methods         & ERM   & ASAP   & GIB     & DIR     & IRM   & EIIL   & CNC   & $\ginv$v1 & $\ginv$v2 \\\midrule
				Running time    & 8.055 & 15.578 & 300.304 & 106.919 & 8.73  & 69.664 & 9.795 & 40.065    & 46.181    \\
				OOD Performance & 68.85 & 66.19  & 62.01   & 63.91   & 68.69 & 68.45  & 67.24 & 69.04     & 69.7      \\
				Avg. Rank       & 2     & 5.5    & 9       & 8       & 3     & 6      & 4.5   & 3.5       & 3.5       \\
				\bottomrule
			\end{tabular}	\end{small}}
\end{table}

\revision{
	\textbf{Running time analysis.} To examine how much computational overhead is induced by the architecture and the additional objectives in $\ginv$, we analyze and compare the averaged training time of different methods on DrugOOD-Scaffold. Factors that could affect the running time such as GNN backbone, batch size, and the running devices (NVIDIA RTX 2080Ti, Linux Servers with 40 cores Intel(R) Xeon(R) Silver 4114 CPU @ 2.20GHz, 256 GB Memory, and Ubuntu 18.04 LTS), are fixed the same during the testing. The results are shown as in Table.~\ref{tab:running_time_appdx}. It can be found that $\ginv$ is the only OOD method that outperforms ERM by a non-trivial margin with a relatively low additional computational overhead.
}

\begin{table}[ht]
	\centering
	\caption{\revision{Performances of different methods on Drug-Assay under single environment OOD generalization (i).}}\small\sc
	\label{tab:single_dg_appdx}
	\resizebox{\textwidth}{!}{
		\begin{small}
			\begin{tabular}{lccccccc}
				\toprule
				Methods         & ERM         & ASAP        & GIB         & DIR         & $\ginv$v1             & $\ginv$v2             & Oracle (IID) \\\midrule
				OOD Performance & 63.29(2.67) & 63.41(0.70) & 62.72(0.59) & 62.56(0.79) & \textbf{63.86 (0.57)} & \textbf{64.31 (0.92)} & 84.71 (1.60) \\
				Rank            & 5           & 4           & 8           & 9           & 2                     & 1                     &              \\
				\bottomrule
			\end{tabular}	\end{small}}
\end{table}

\begin{table}[ht]
	\centering
	\caption{\revision{Performances of different methods on Drug-Assay under single environment OOD generalization (ii).}}\small\sc
	\label{tab:single_dg_2_appdx}
	\resizebox{\textwidth}{!}{
		\begin{small}
			\begin{tabular}{lccccccccc}
				\toprule
				Methods         & ERM         & IRM         & V-Rex       & EIIL        & IB-IRM      & CNC         & $\ginv$v1             & $\ginv$v2             & Oracle (IID) \\\midrule
				OOD Performance & 63.29(2.67) & 63.25(1.45) & 62.18(1.71) & 62.95(1.37) & 61.95(1.72) & 63.61(0.96) & \textbf{63.86 (0.57)} & \textbf{64.31 (0.92)} & 84.71 (1.60) \\
				Rank            & 5           & 6           & 10          & 7           & 11          & 3           & 2                     & 1                     &              \\
				\bottomrule
			\end{tabular}	\end{small}}
\end{table}

\revision{
	\textbf{Single environment OOD generalization.} The theory of invariant learning fundamentally assume the presence of multiple environments~\citep{inv_principle,irmv1}. However in practice, it does not always hold, which would inevitably fail all of the invariant learning solutions~\citep{irmv1,v-rex,env_inference,ib-irm}, including $\ginv$.
}

\revision{	Nevertheless, to examine how $\ginv$ performs under various realistic scenarios, we conduct an additional experiment based on DrugOOD-Assay. We select samples that are from the largest assay group (i.e., the biochemical functionalities of these molecules are tested and reported under the same experimental setup in the lab)~\citep{drugood}. The results are separated and shown in Table~\ref{tab:single_dg_appdx} and Table~\ref{tab:single_dg_2_appdx}. Besides the baselines, we also show the ``Oracle'' performances from the main table, to demonstrate the performance gaps.}

\revision{From the Table~\ref{tab:single_dg_appdx} and Table~\ref{tab:single_dg_2_appdx}, we can see that, both $\ginv$v1 and $\ginv$v2 maintain their state-of-the-art performances even in the single training environment setting. We hypothesize that enforcing the mutual information between the estimated $\widehat{G}_c$ also helps to retain the invariance even under the single training environment setting. That may partially explain why CNC can bring some improvements. We believe it is an interesting and promising future direction to develop in-depth understanding and better solutions under this circumstance.
}

\revision{
	\subsection{Interpretation Visualization}
	\label{sec:interpret_visualize_appdx}
	Since we use the interpretable GNN architecture to implement $\ginv$\footnote{We use the code provided by~\citep{gsat}.}, it brings an additional benefit that provides certain interpretation for the predictions automatically, which may facilitate human understanding in practice.
}

\revision{First, we provide some interpretation visualizations in SPMotif-Struc and SPMotif-Mixed datasets, under the biases of $0.6$ and $0.9$. Shown in Fig.~\ref{fig:spmotif_b6_appdx} to Fig.~\ref{fig:spmotifm_b9_appdx}, we use pink to color the ground truth nodes in $G_c$, and denote the relative attention strength with edge color intensities.}

\revision{
	Besides, we also provide some interpretation visualization examples in DrugOOD datasets.
	Shown in Fig.~\ref{fig:assay_viz_act_appdx} to Fig.~\ref{fig:size_viz_inact_appdx}, we use the edge color intensities to denote the attentions of models that pay to the corresponding edge.
	Some interesting patterns can be found in the molecules shared with the same label, which could provide insights to the domain experts when developing new drugs.
	We believe that, because of its superior OOD generalization performance on graphs, $\ginv$ can have high potential to push forward the developments of AI-Assisted Drug Discovery, and enrich the AI tools for facilitating the fundamental practice of science in the future.
}

\begin{figure}[H]
	\centering
	\subfigure[]{
		\includegraphics[width=0.31\textwidth]{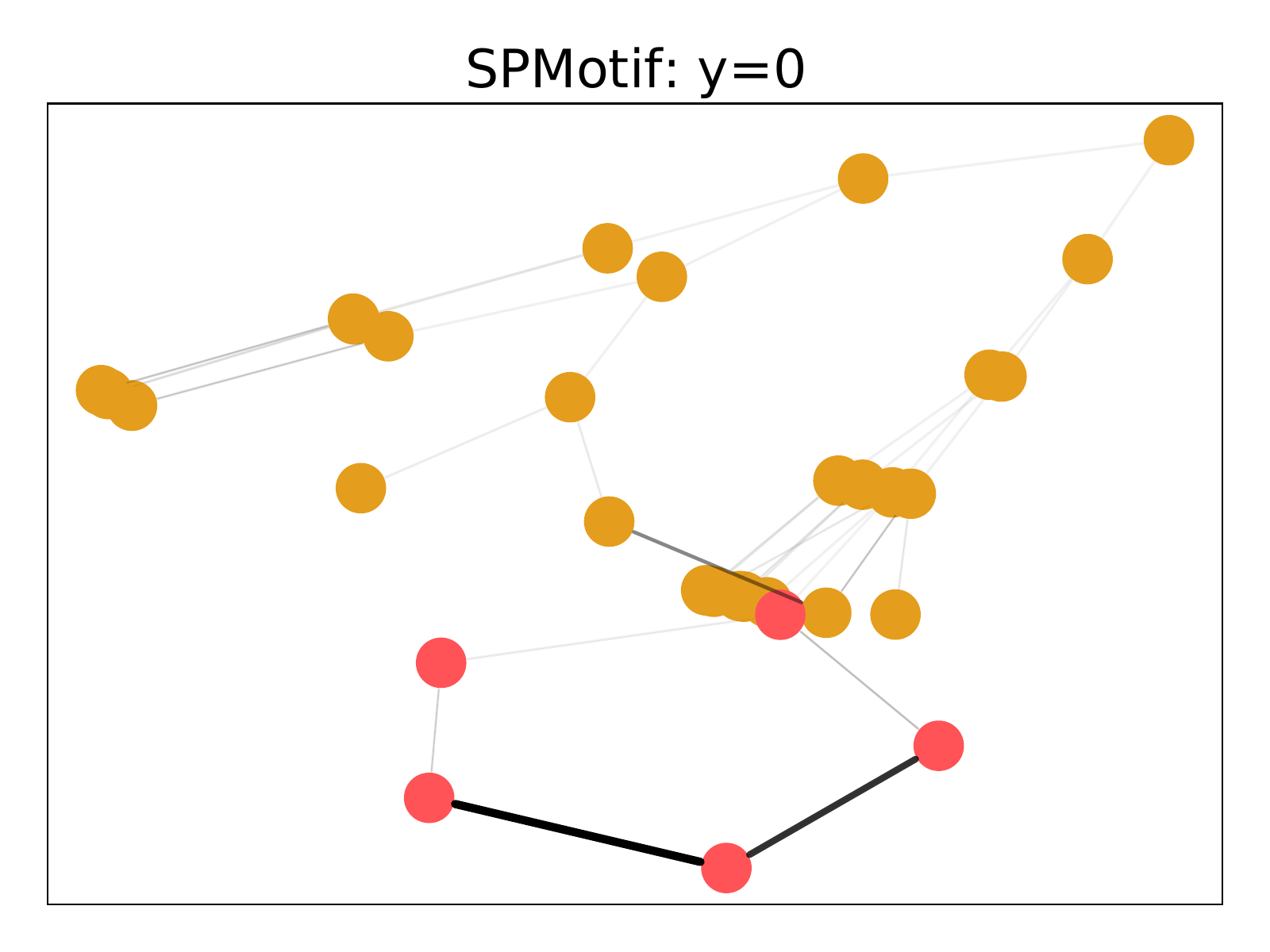}
	}
	\subfigure[]{
		\includegraphics[width=0.31\textwidth]{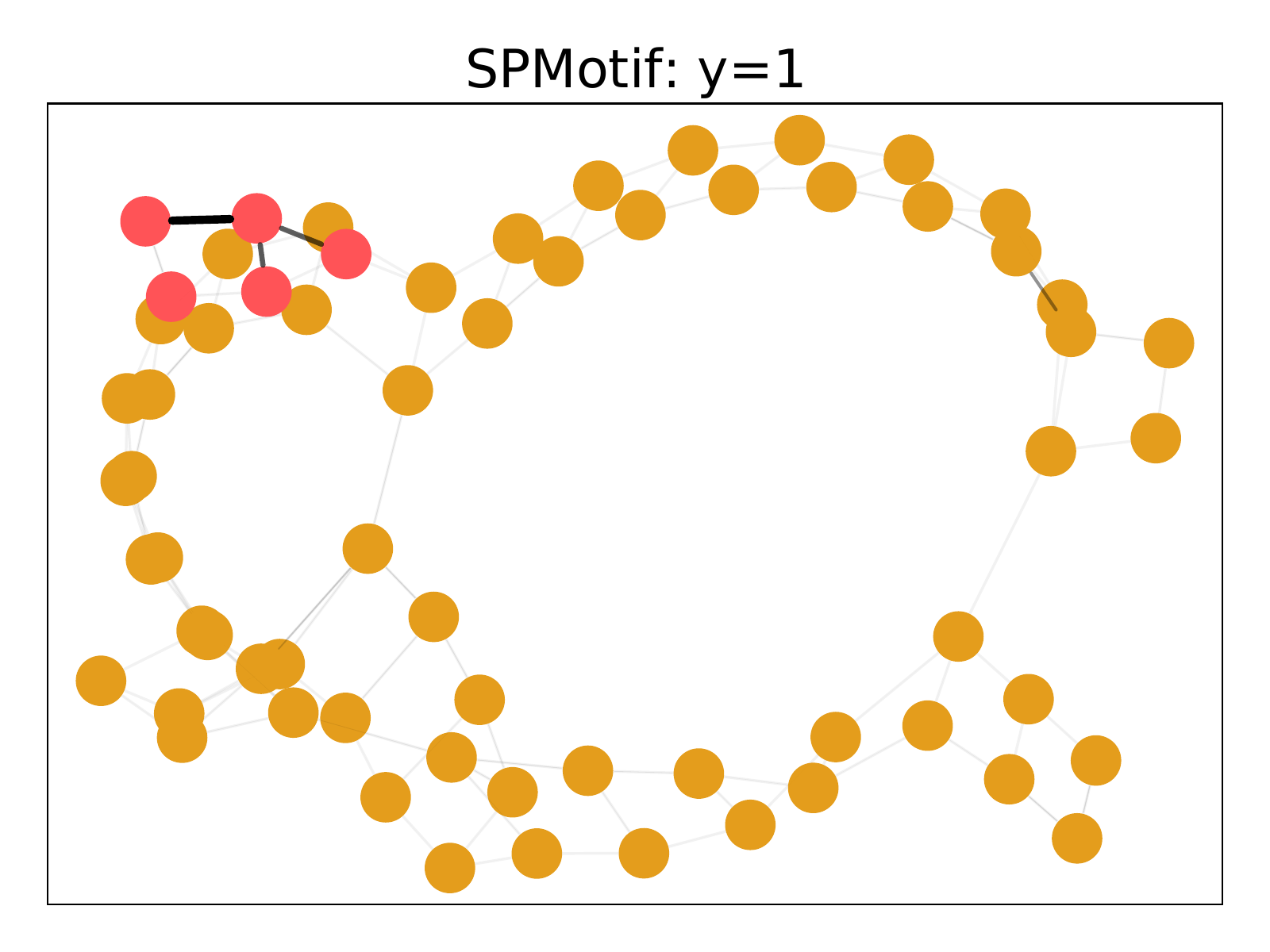}
	}
	\subfigure[]{
		\includegraphics[width=0.31\textwidth]{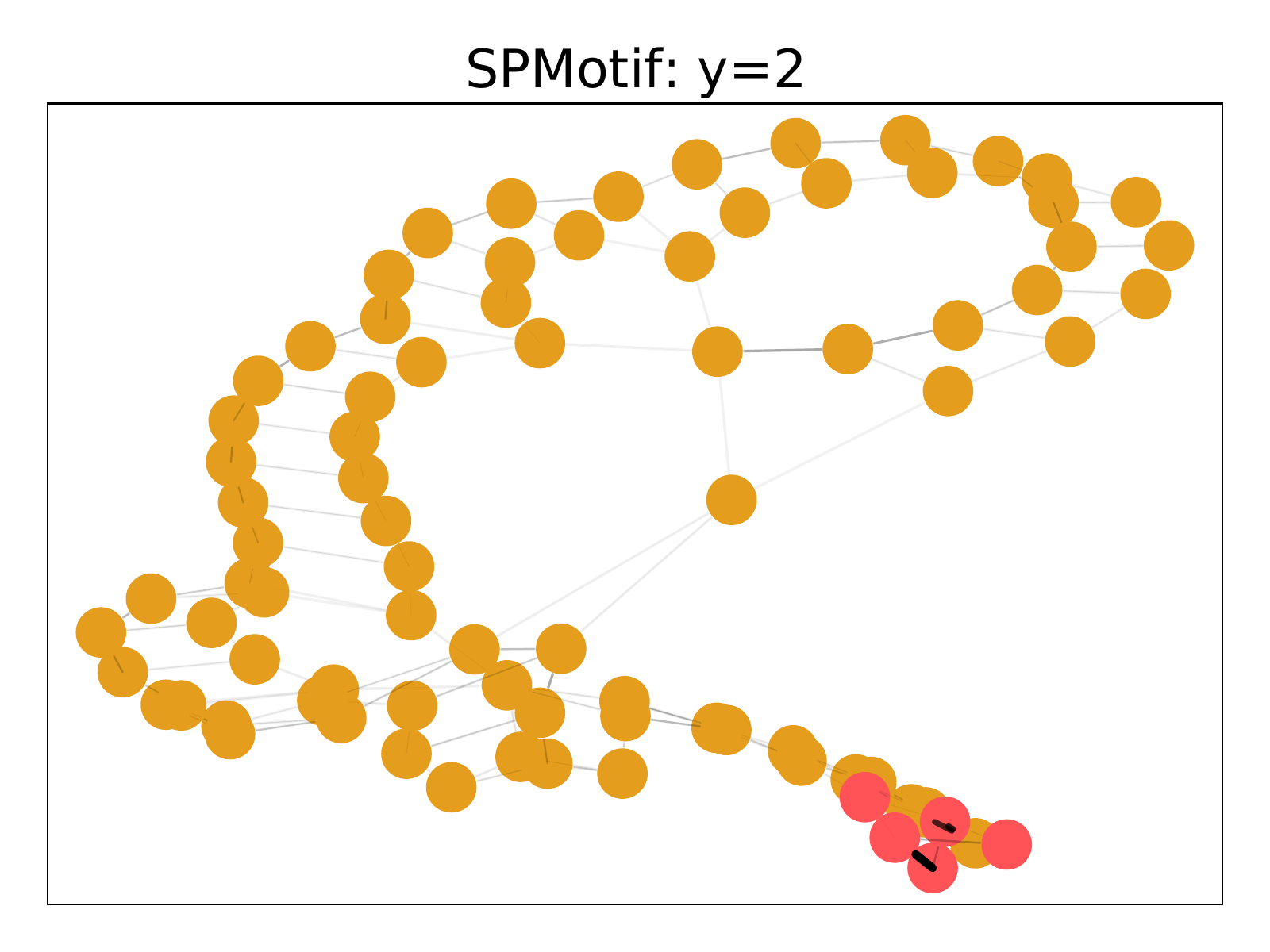}
	}
	\caption{
		Interpretation visualization of examples from SPMotif-Struc under bias$=0.6$.}
	\label{fig:spmotif_b6_appdx}
\end{figure}

\begin{figure}[H]
	\centering
	\subfigure[]{
		\includegraphics[width=0.31\textwidth]{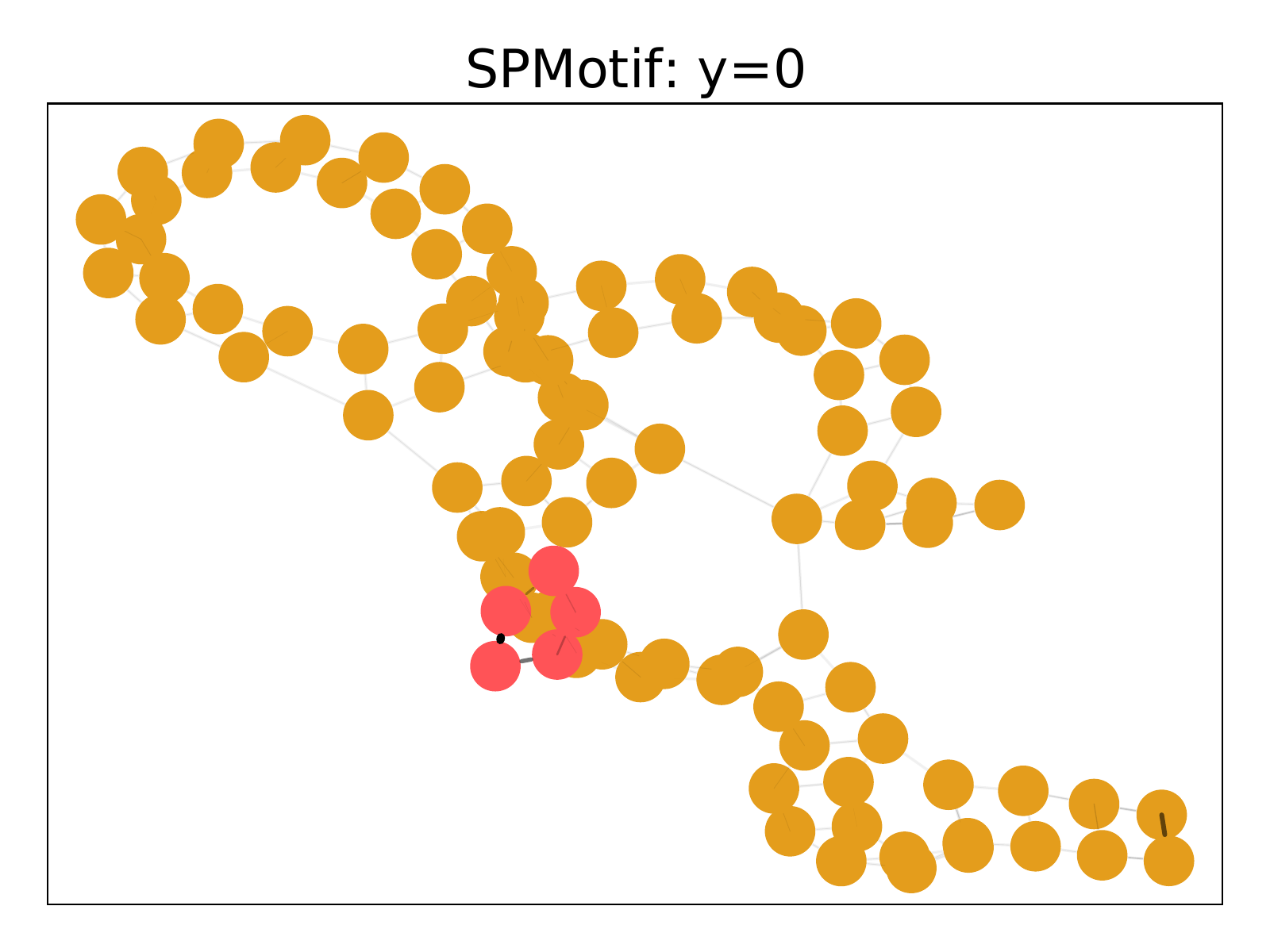}
	}
	\subfigure[]{
		\includegraphics[width=0.31\textwidth]{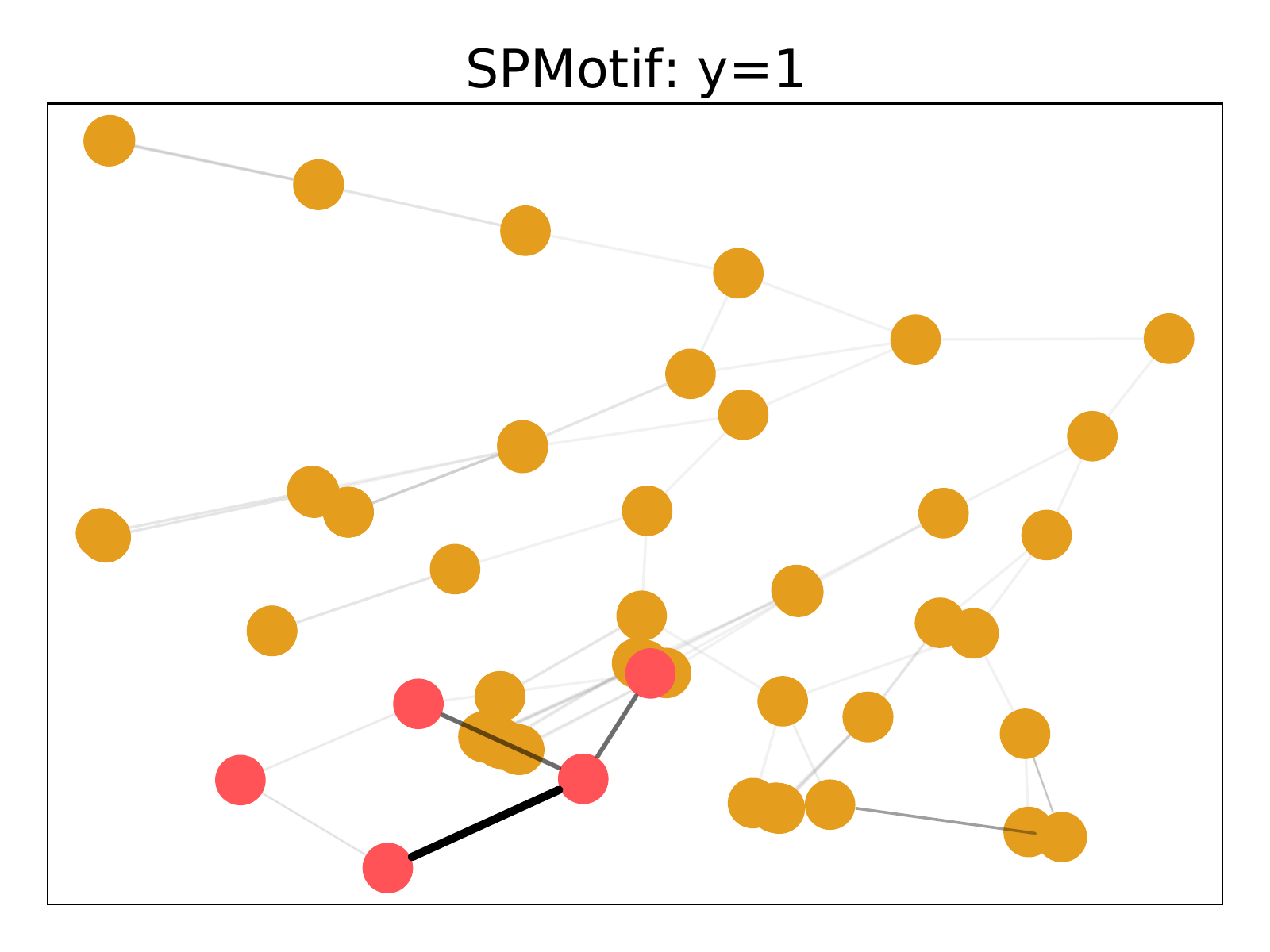}
	}
	\subfigure[]{
		\includegraphics[width=0.31\textwidth]{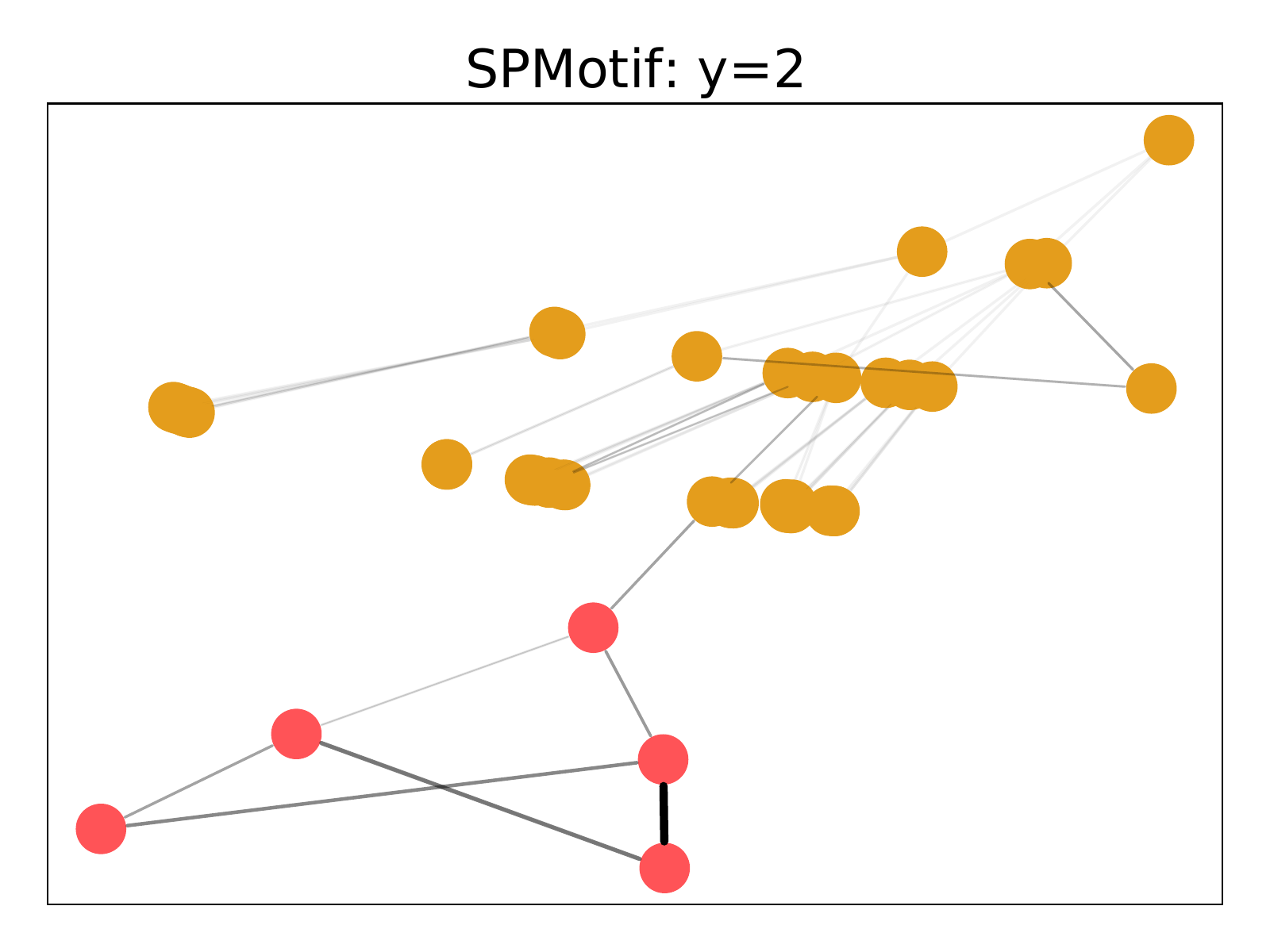}
	}
	\caption{
		Interpretation visualization of examples from SPMotif-Struc under bias$=0.9$.}
	\label{fig:spmotif_b9_appdx}
\end{figure}

\begin{figure}[H]
	\centering
	\subfigure[]{
		\includegraphics[width=0.31\textwidth]{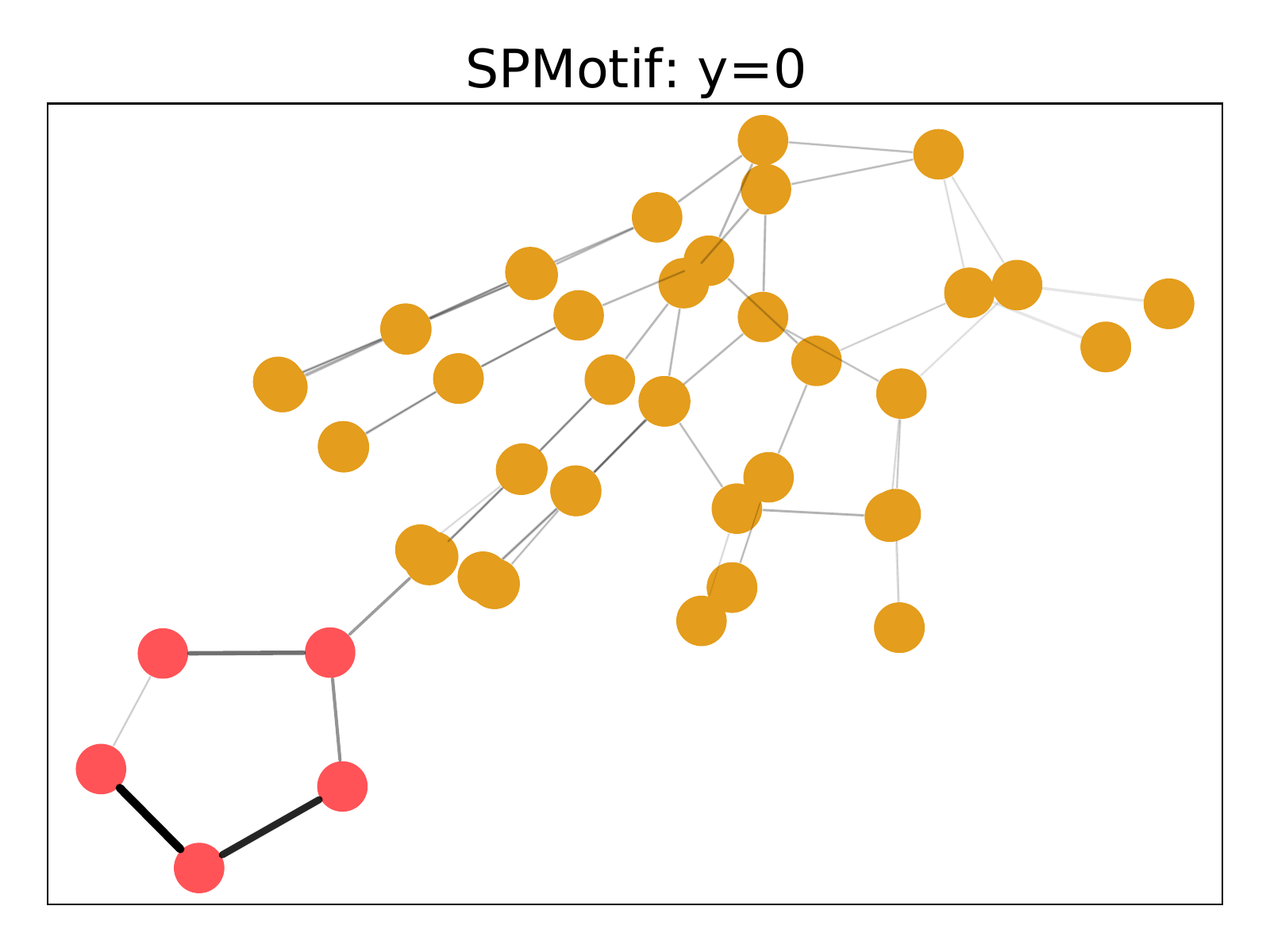}
	}
	\subfigure[]{
		\includegraphics[width=0.31\textwidth]{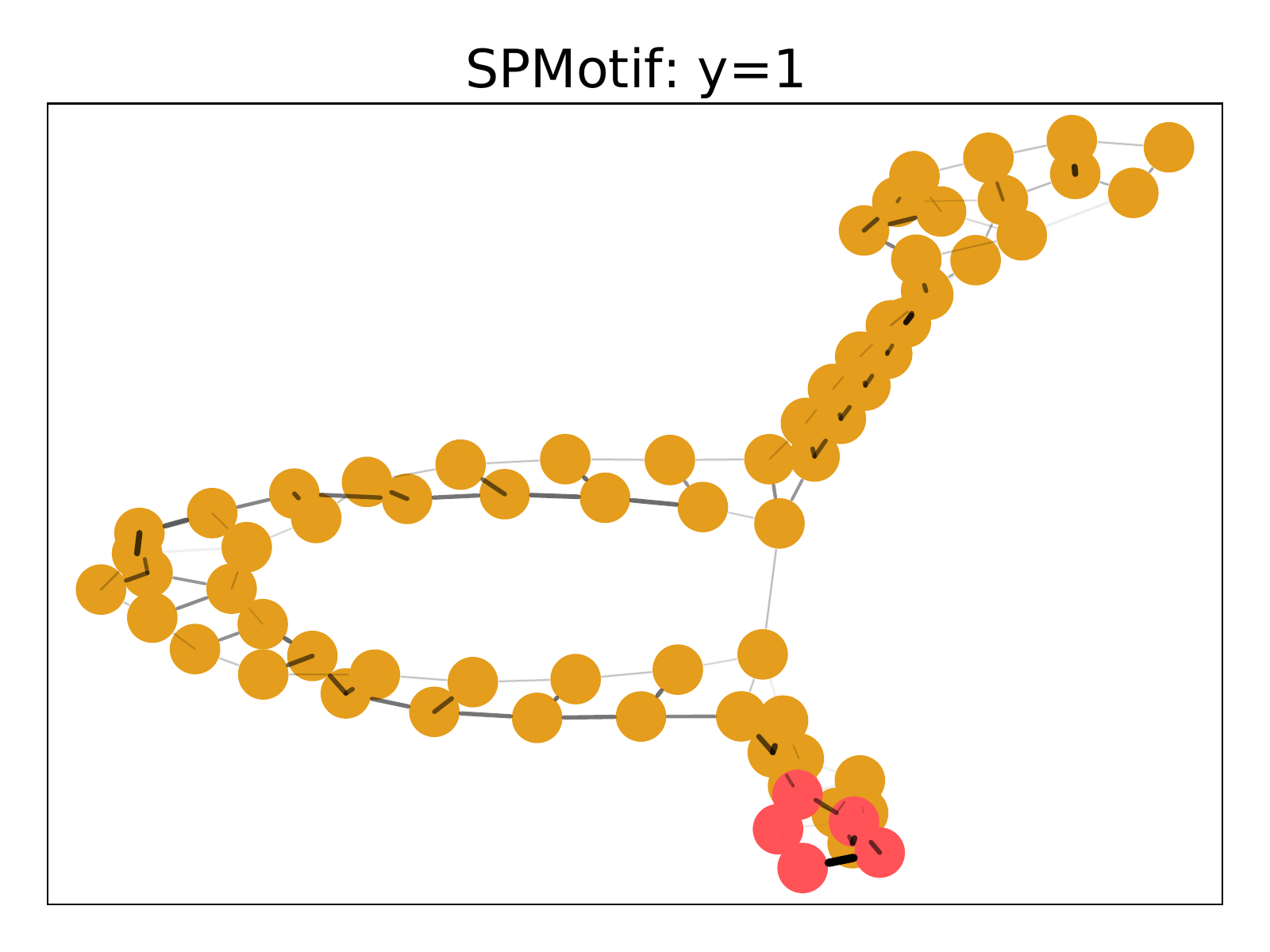}
	}
	\subfigure[]{
		\includegraphics[width=0.31\textwidth]{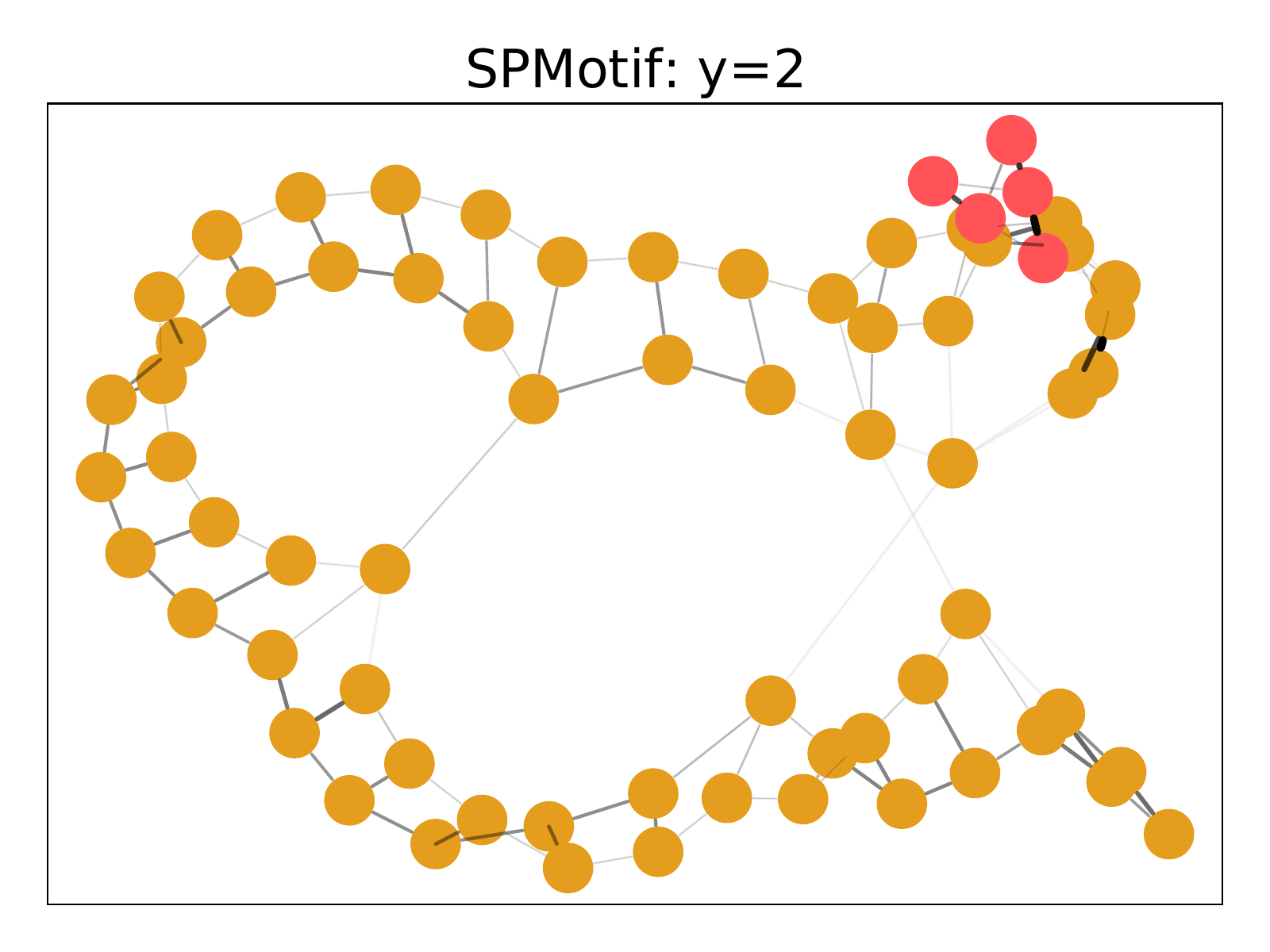}
	}
	\caption{
		Interpretation visualization of examples from SPMotif-Mixed under bias$=0.6$.}
	\label{fig:spmotifm_b6_appdx}
\end{figure}

\begin{figure}[H]
	\centering
	\subfigure[]{
		\includegraphics[width=0.31\textwidth]{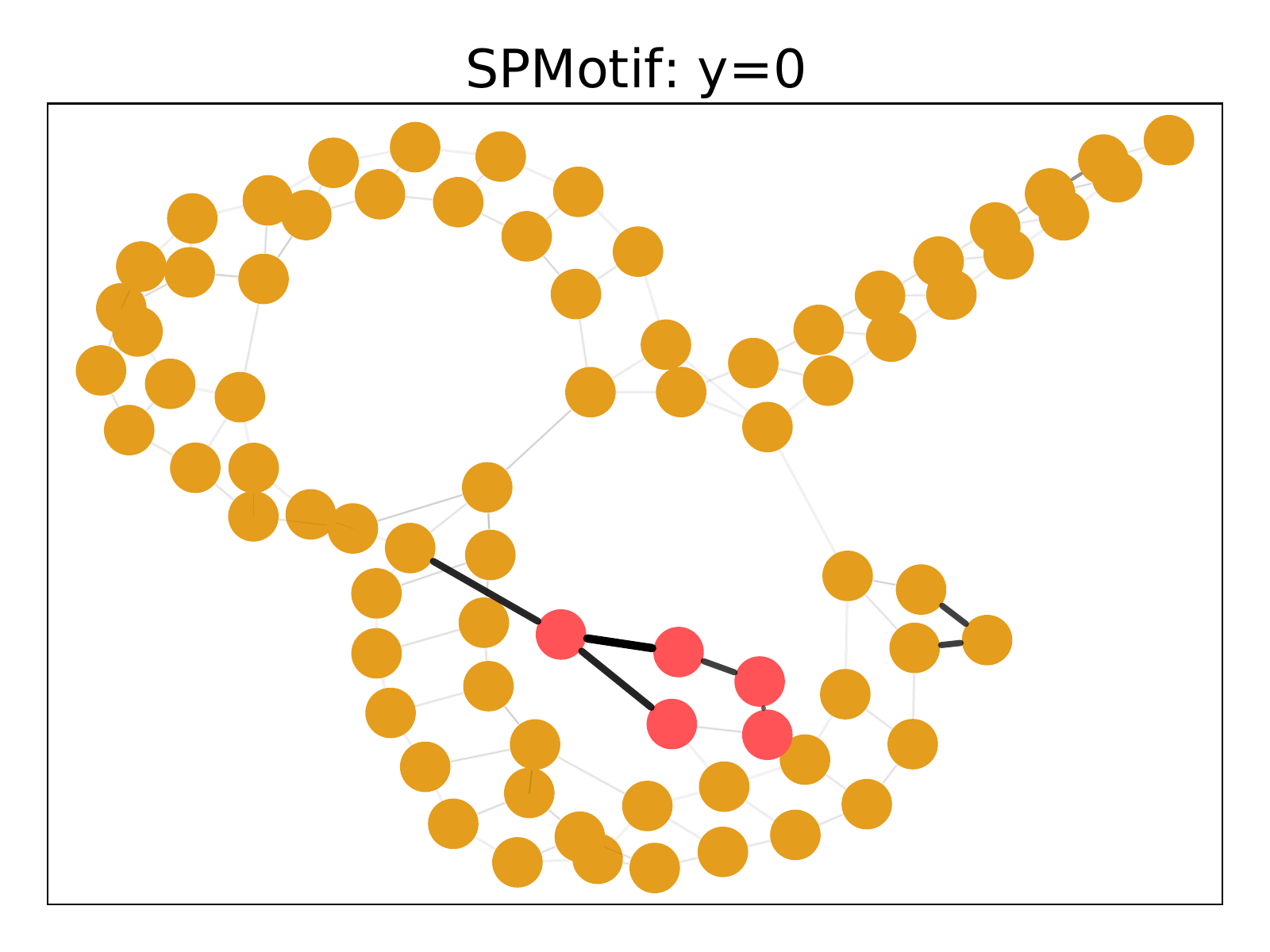}
	}
	\subfigure[]{
		\includegraphics[width=0.31\textwidth]{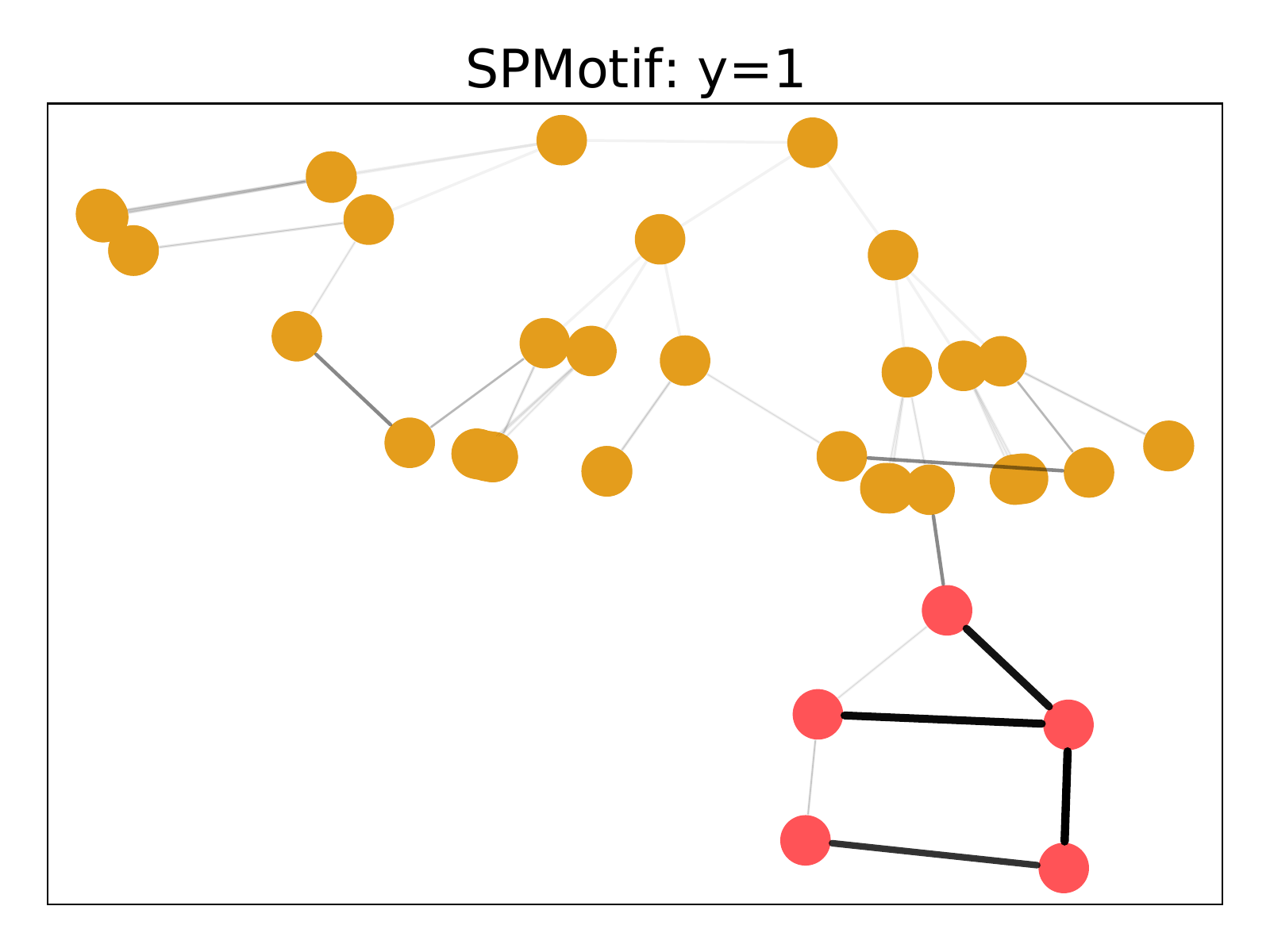}
	}
	\subfigure[]{
		\includegraphics[width=0.31\textwidth]{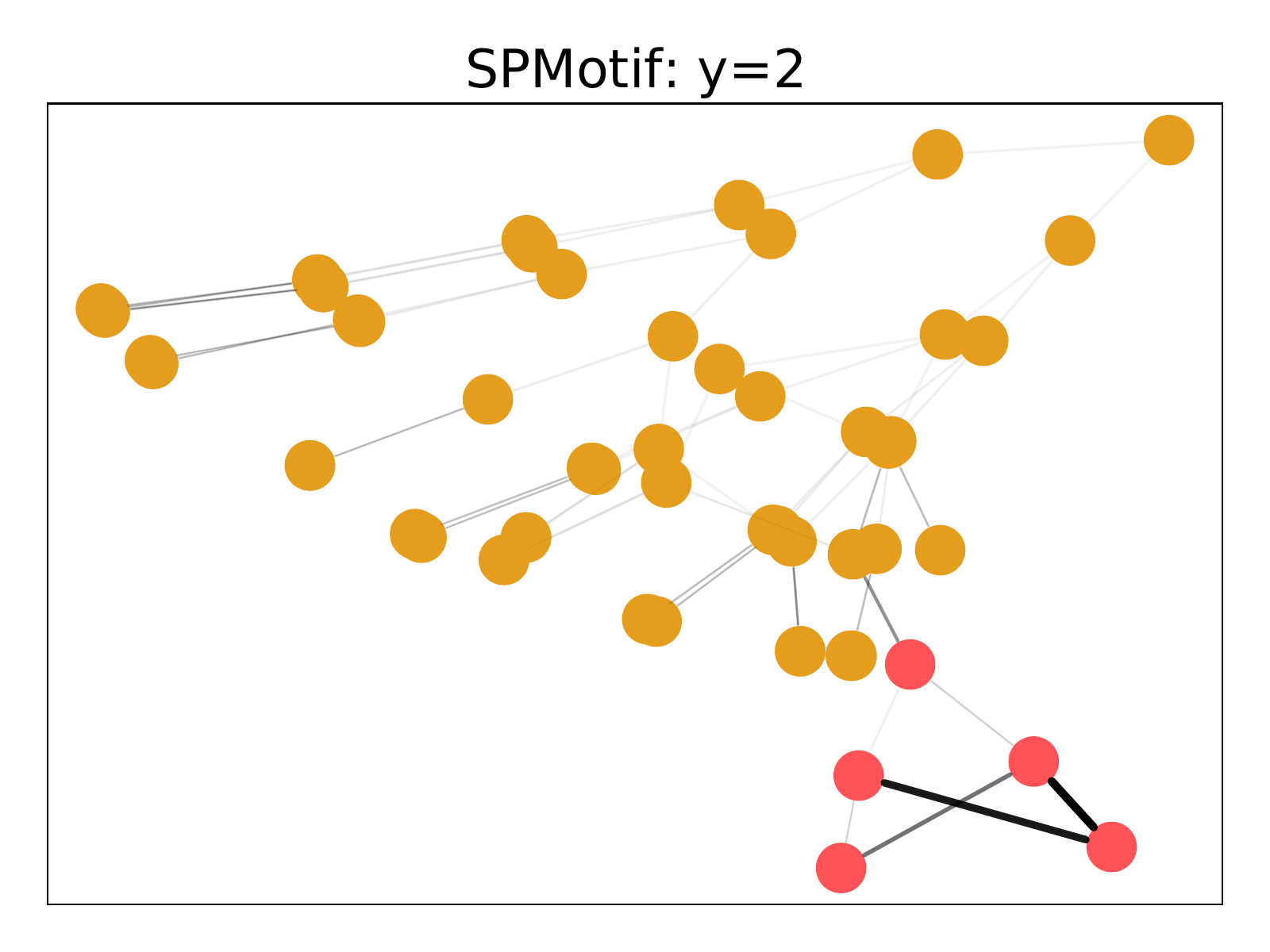}
	}
	\caption{
		Interpretation visualization of examples from SPMotif-Mixed under bias$=0.9$.}
	\label{fig:spmotifm_b9_appdx}
\end{figure}

\begin{figure}[H]
	\centering
	\subfigure[]{
		\includegraphics[width=0.31\textwidth]{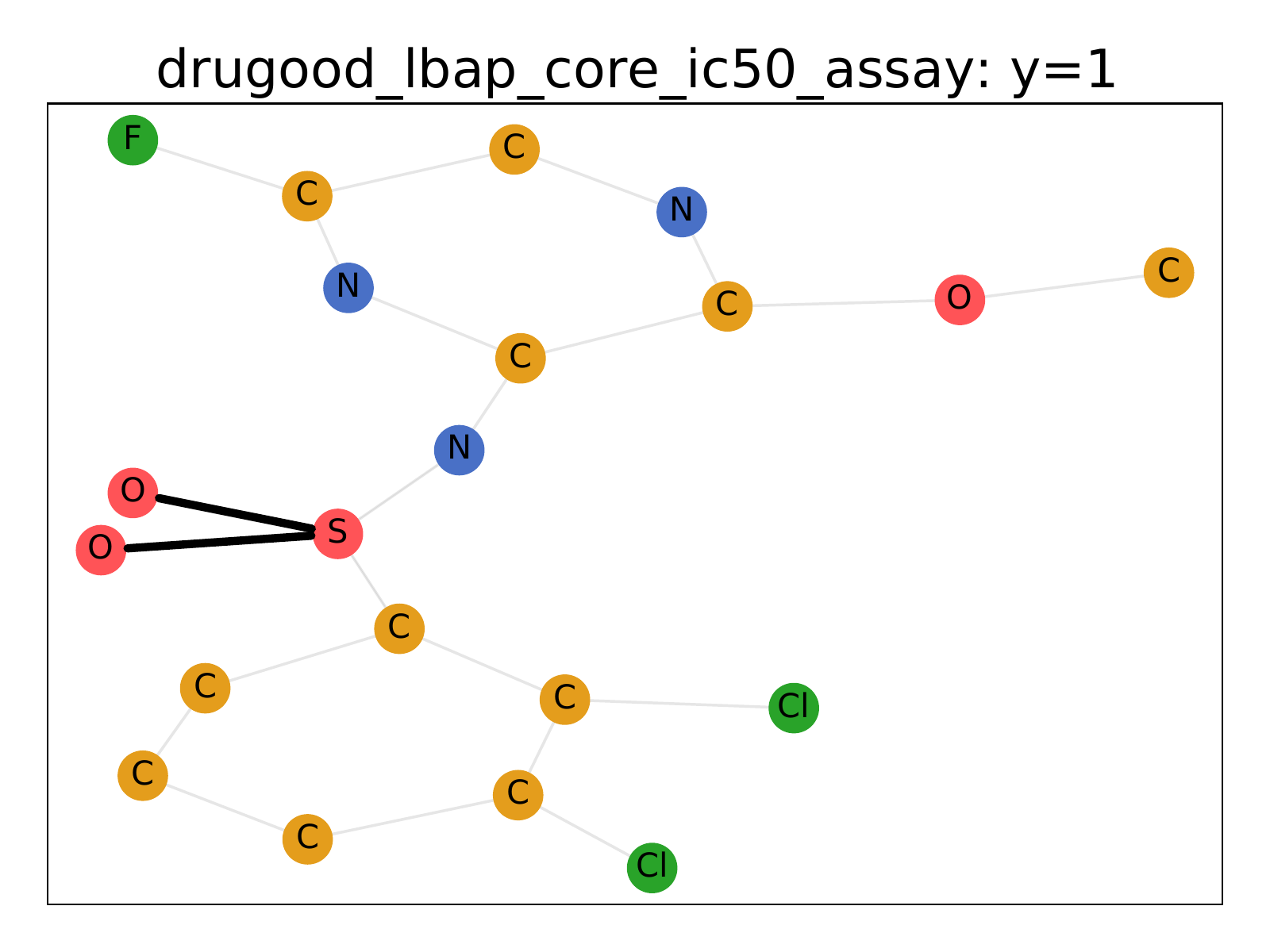}
	}
	\subfigure[]{
		\includegraphics[width=0.31\textwidth]{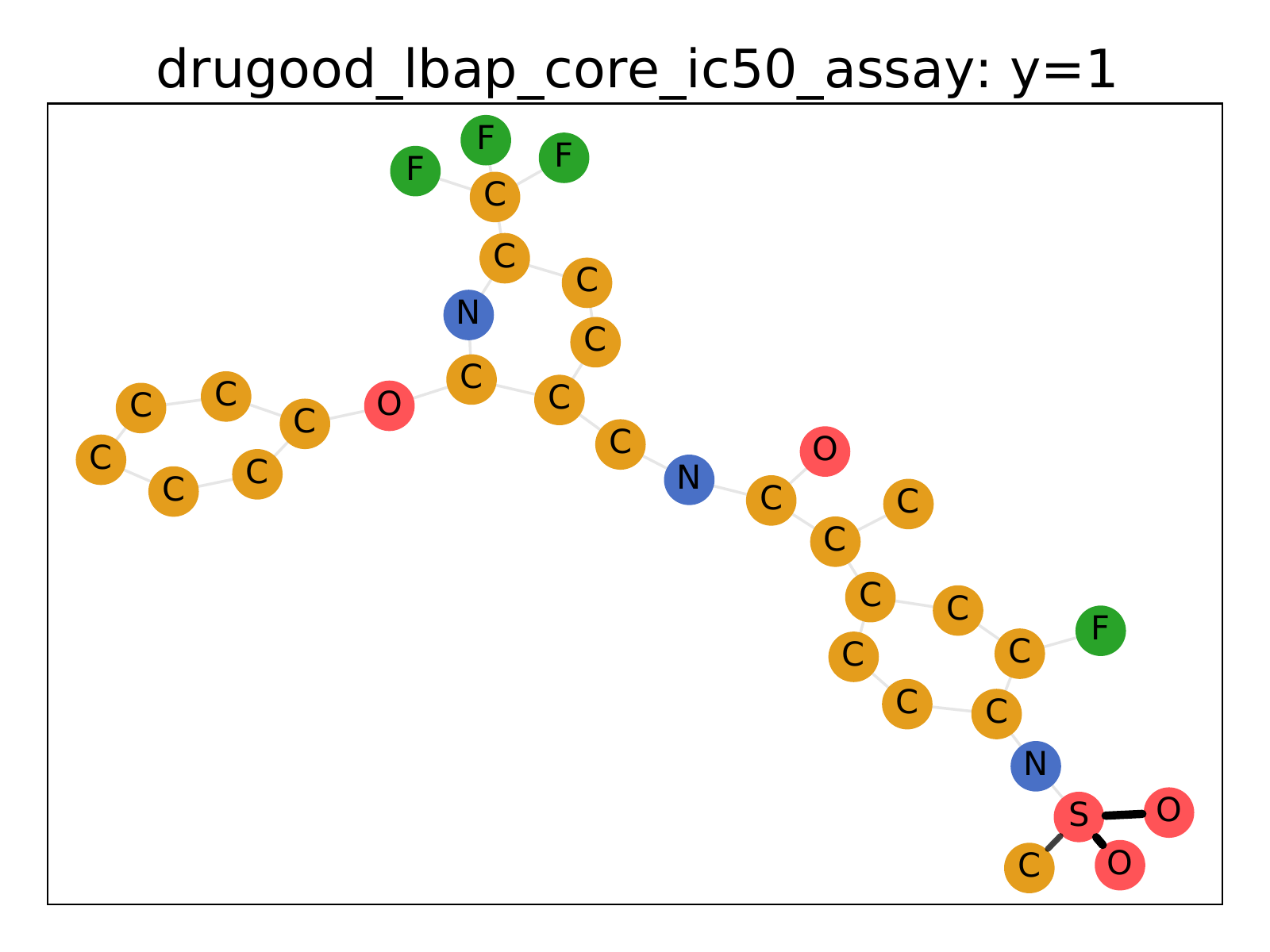}
	}
	\subfigure[]{
		\includegraphics[width=0.31\textwidth]{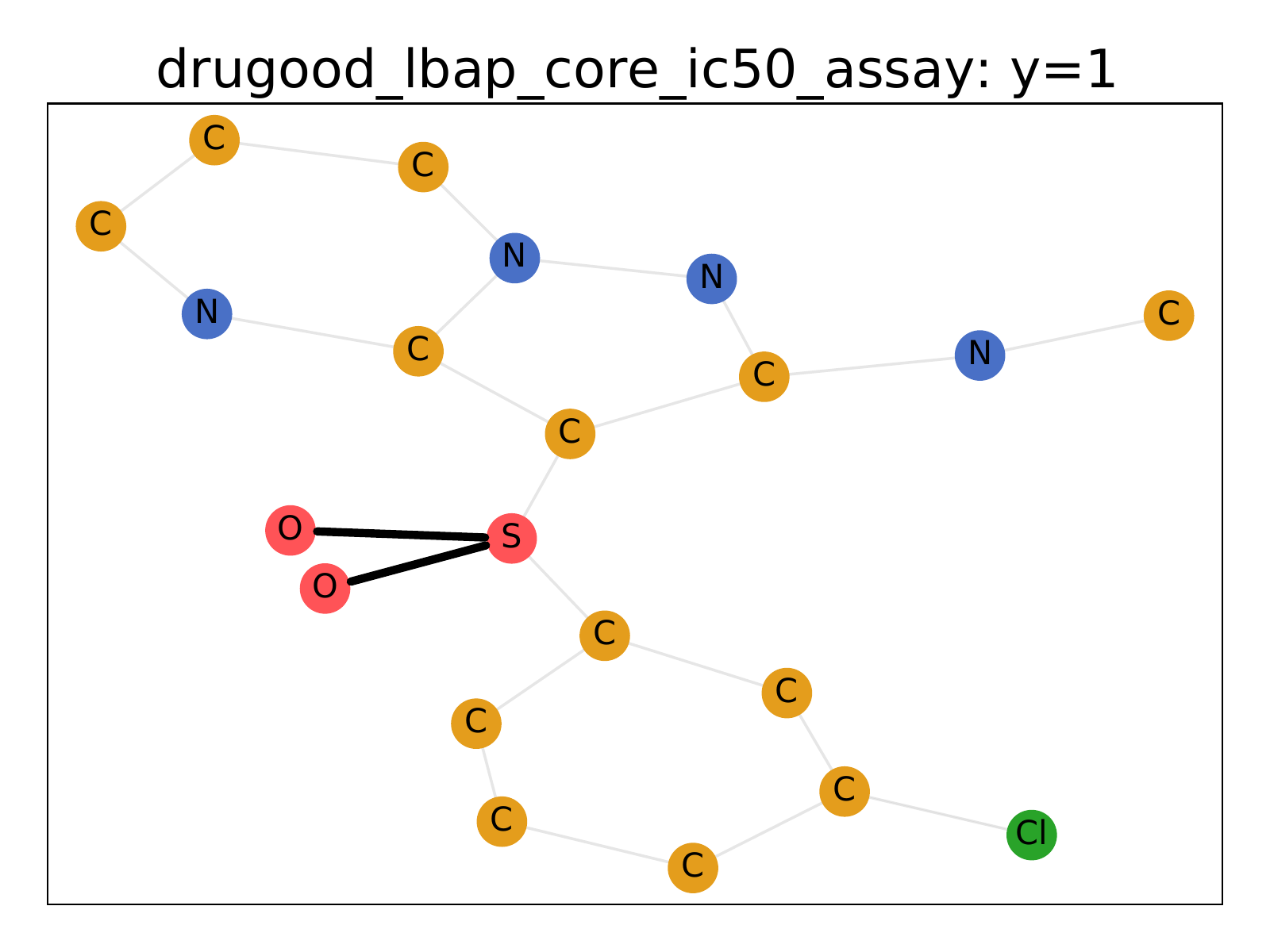}
	}
	\caption{
		Interpretation visualization of activate examples ($y=1$) from DrugOOD-Assay.}
	\label{fig:assay_viz_act_appdx}
\end{figure}

\begin{figure}[H]
	\centering
	\subfigure[]{
		\includegraphics[width=0.31\textwidth]{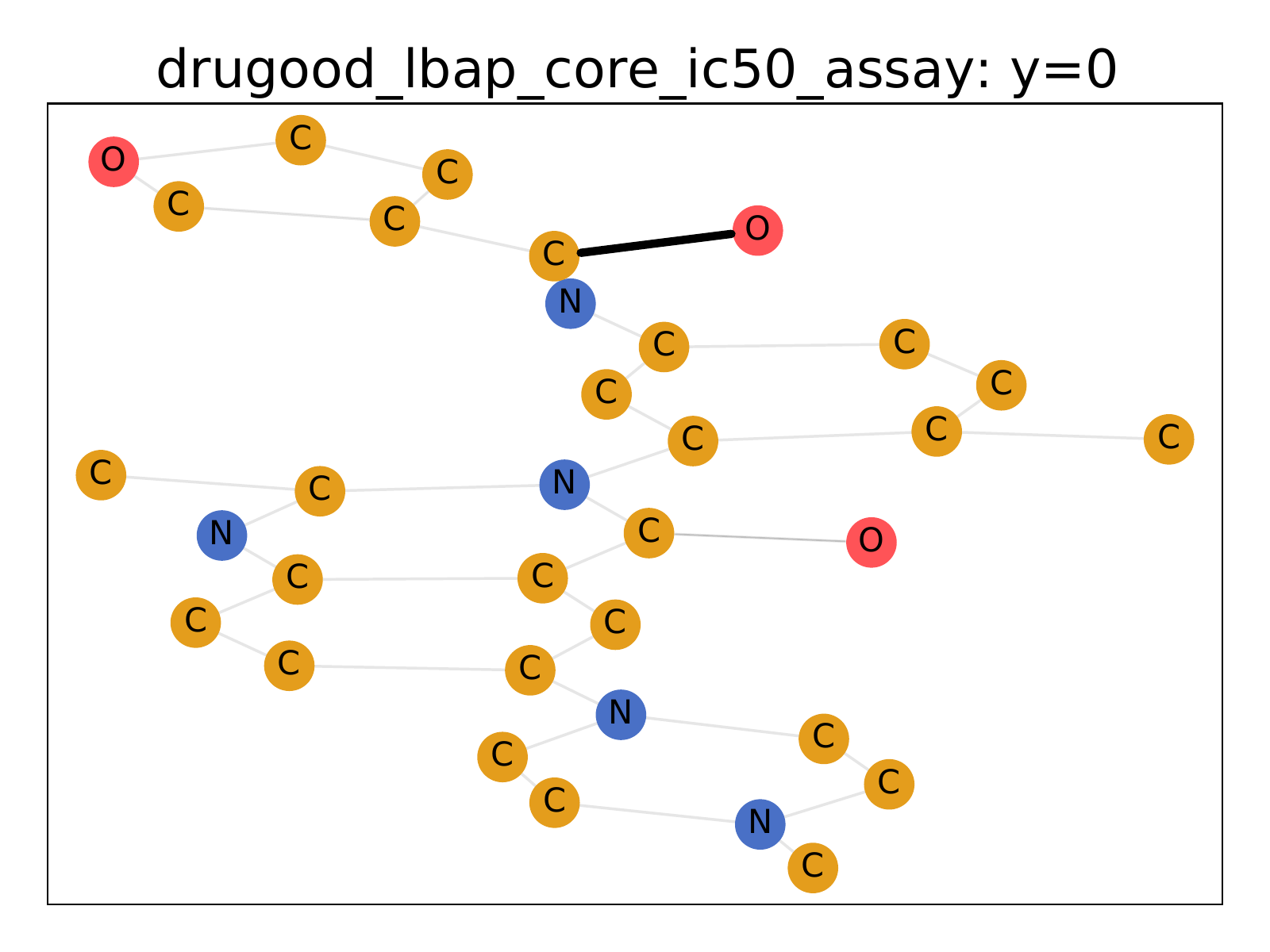}
	}
	\subfigure[]{
		\includegraphics[width=0.31\textwidth]{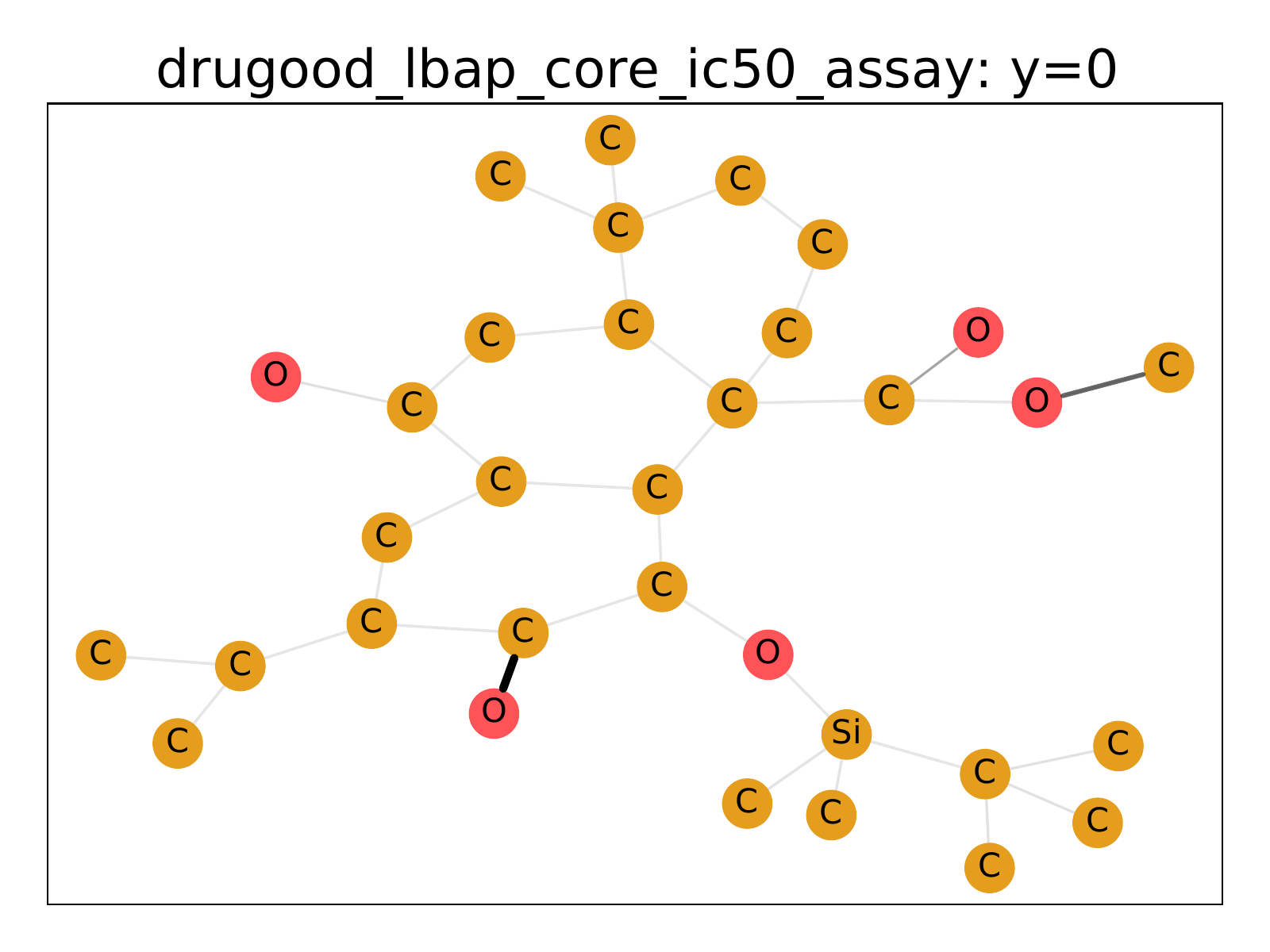}
	}
	\subfigure[]{
		\includegraphics[width=0.31\textwidth]{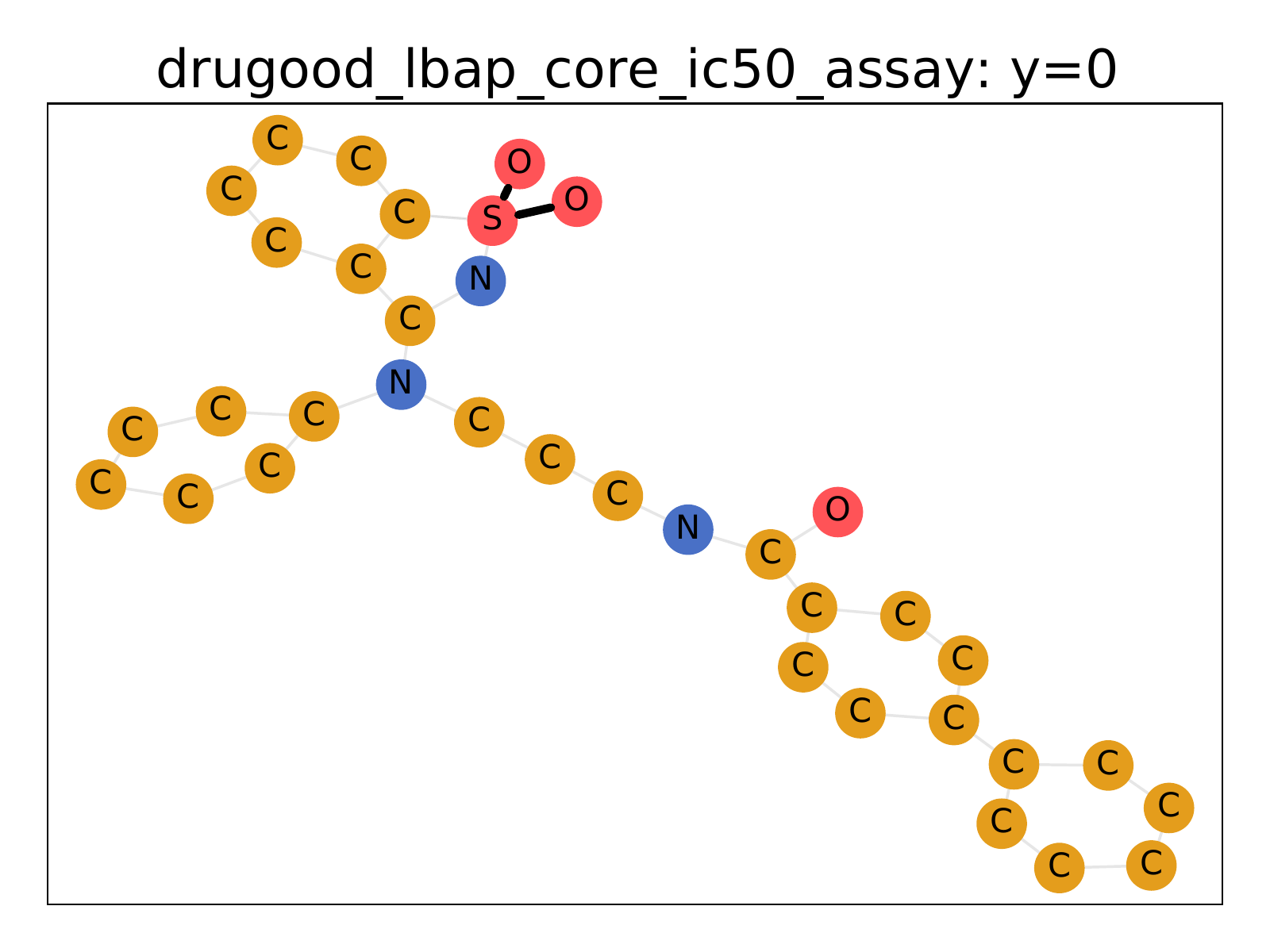}
	}
	\caption{
		Interpretation visualization of inactivate examples ($y=0$) from DrugOOD-Assay.}
	\label{fig:assay_viz_inact_appdx}
\end{figure}

\begin{figure}[H]
	\centering
	\subfigure[]{
		\includegraphics[width=0.31\textwidth]{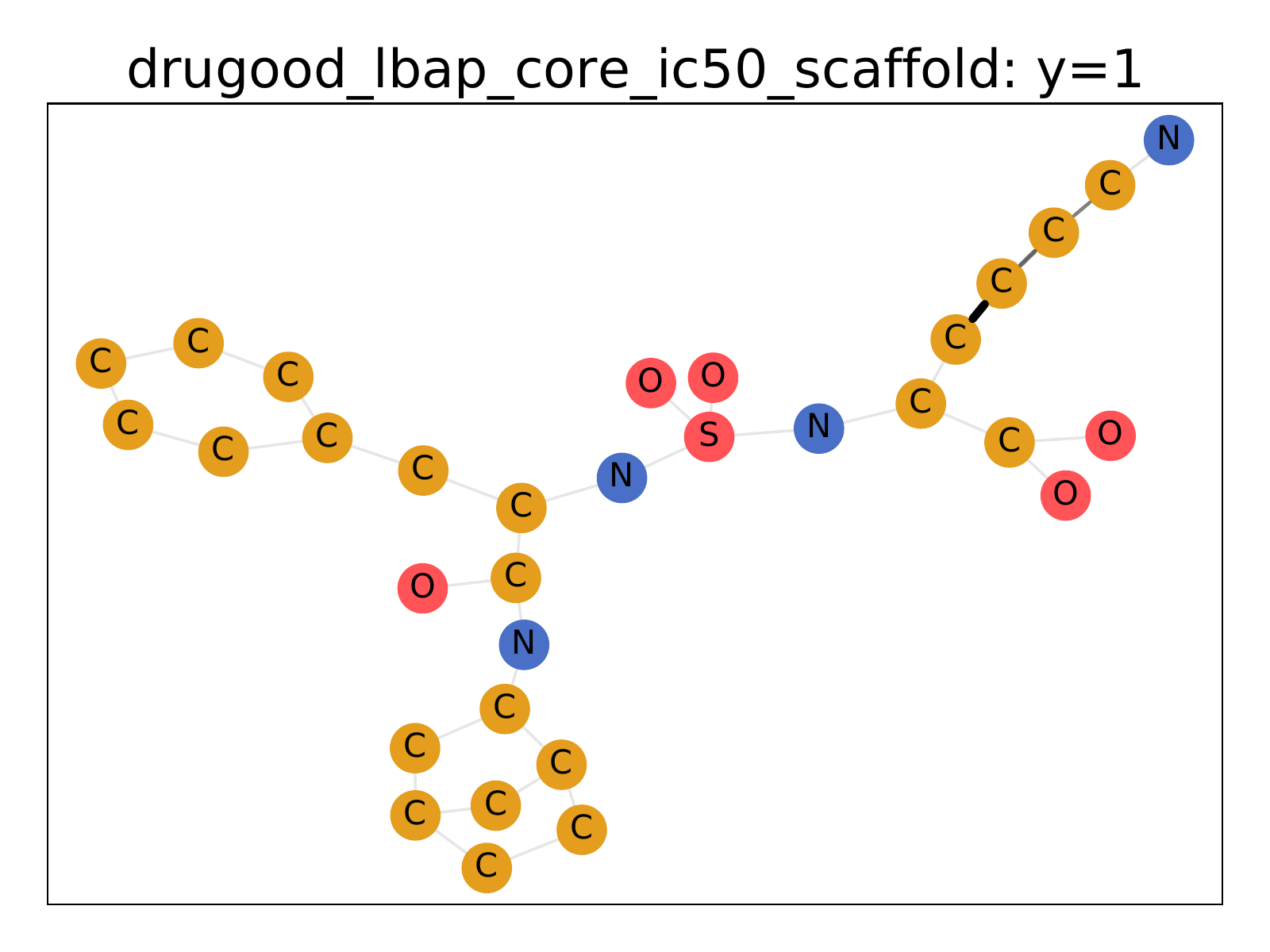}
	}
	\subfigure[]{
		\includegraphics[width=0.31\textwidth]{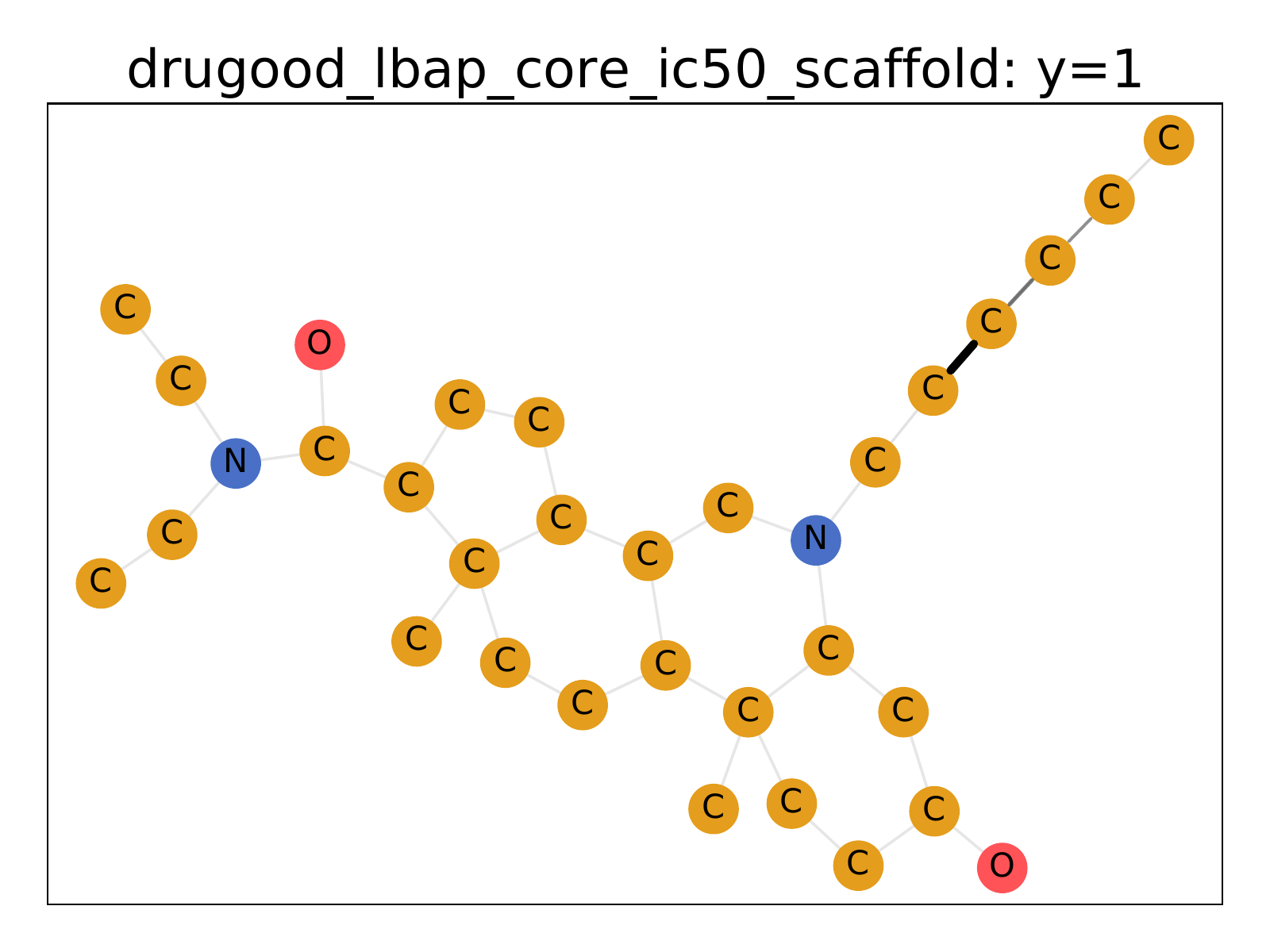}
	}
	\subfigure[]{
		\includegraphics[width=0.31\textwidth]{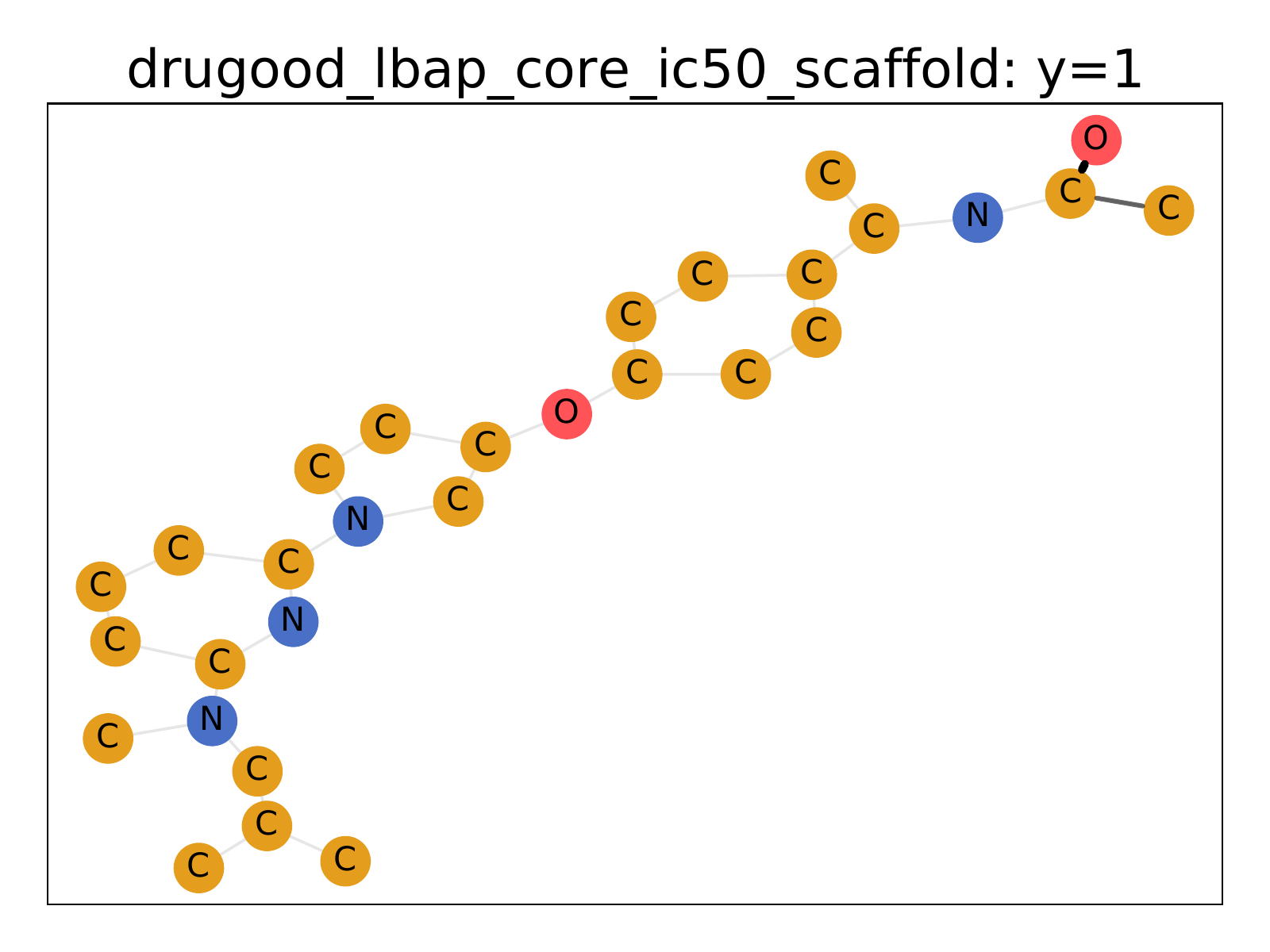}
	}
	\caption{
		Interpretation visualization of activate examples ($y=1$) from DrugOOD-Scaffold.}
	\label{fig:scaffold_viz_act_appdx}
\end{figure}

\begin{figure}[H]
	\centering
	\subfigure[]{
		\includegraphics[width=0.31\textwidth]{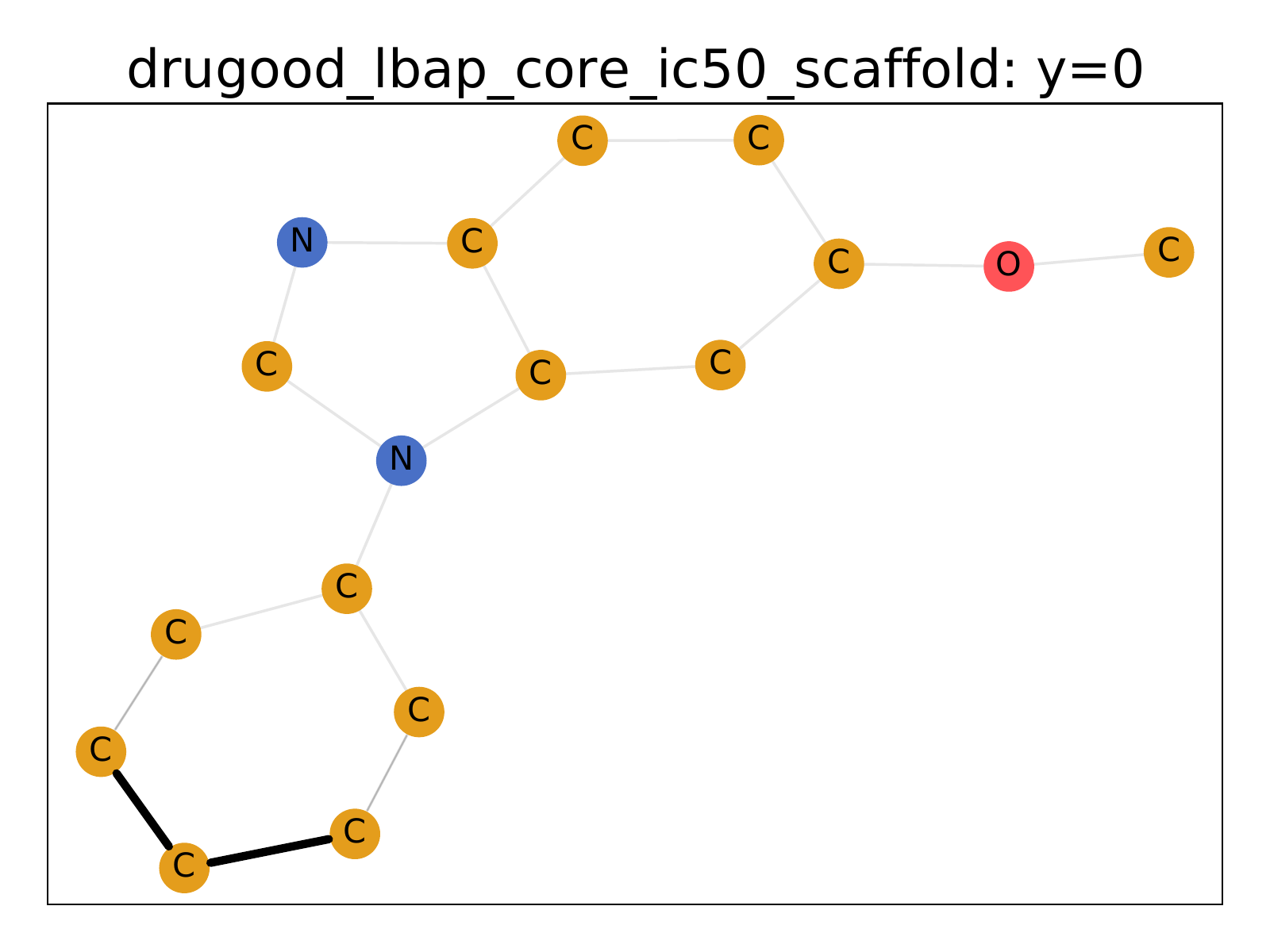}
	}
	\subfigure[]{
		\includegraphics[width=0.31\textwidth]{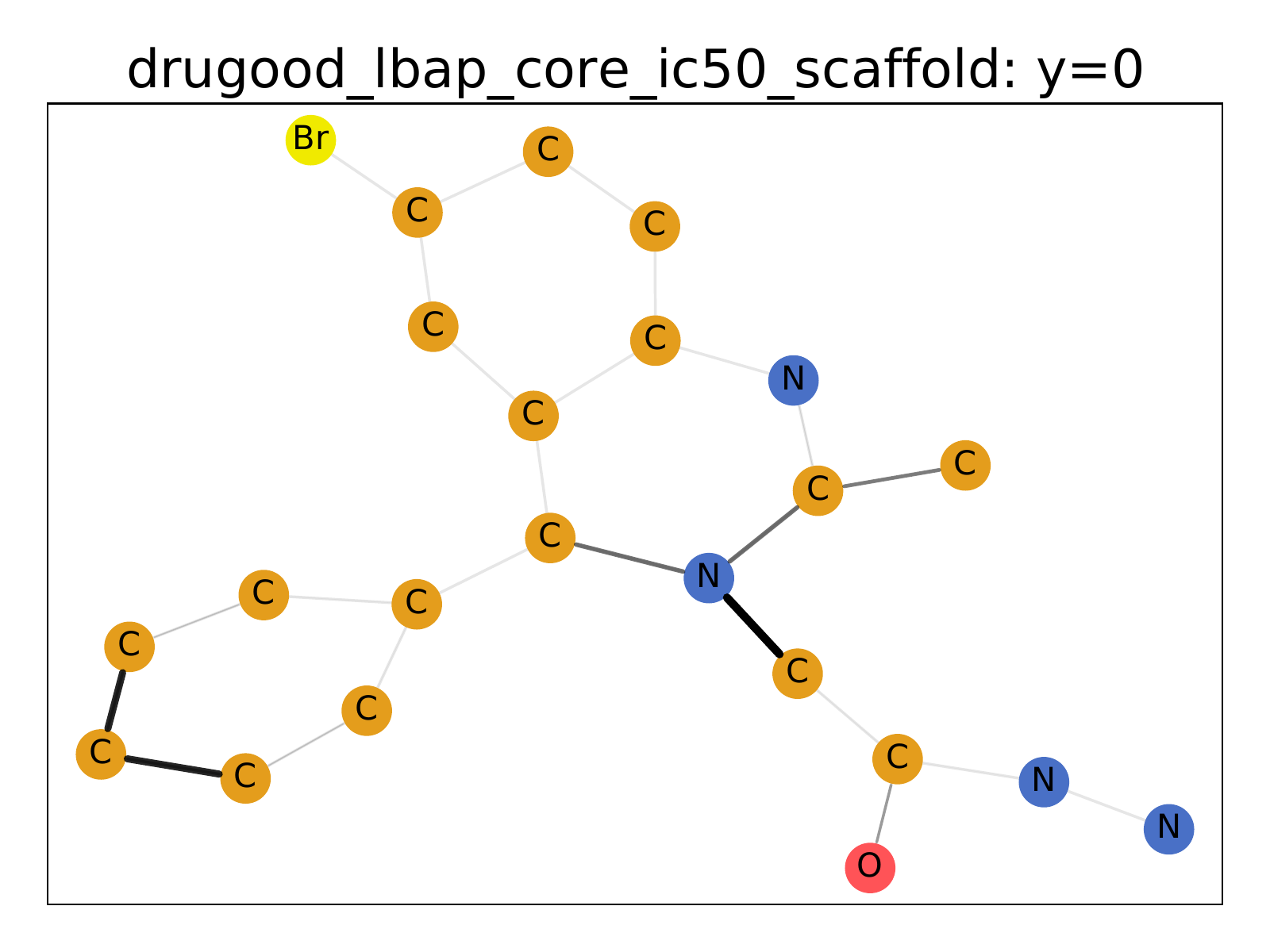}
	}
	\subfigure[]{
		\includegraphics[width=0.31\textwidth]{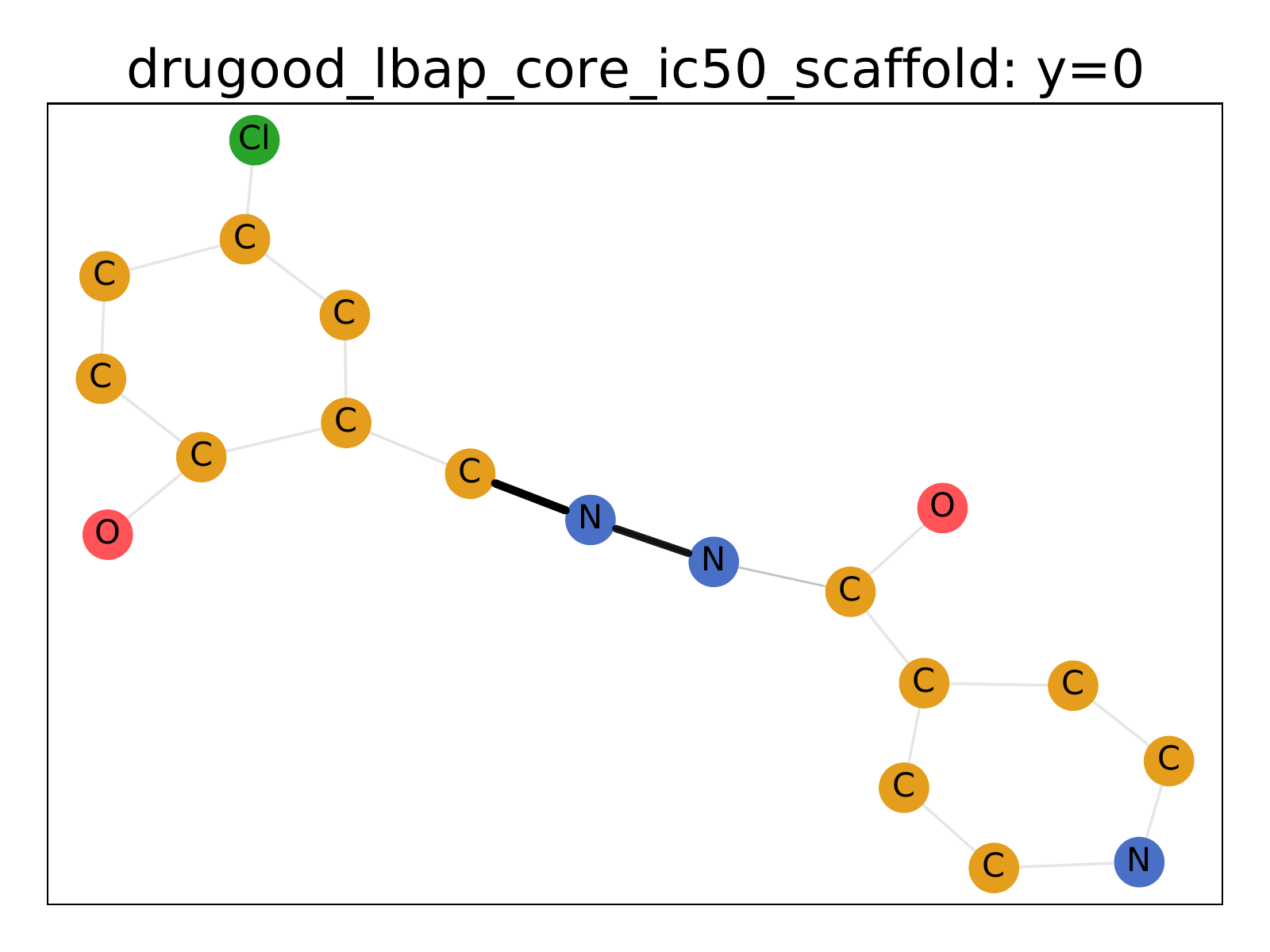}
	}
	\caption{
		Interpretation visualization of inactivate examples ($y=0$) from DrugOOD-Scaffold.}
	\label{fig:scaffold_viz_inact_appdx}
\end{figure}

\begin{figure}[H]
	\centering
	\subfigure[]{
		\includegraphics[width=0.31\textwidth]{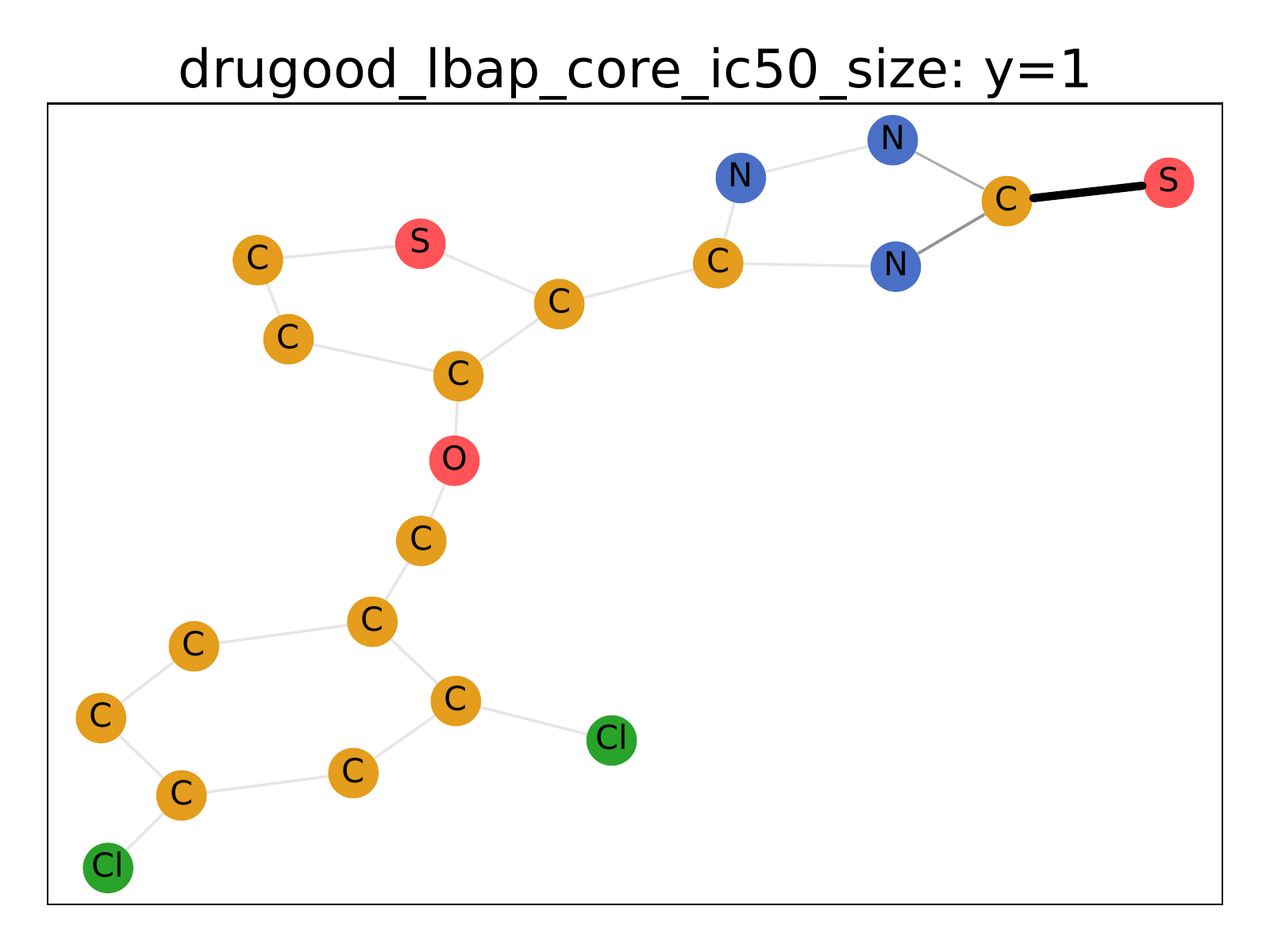}
	}
	\subfigure[]{
		\includegraphics[width=0.31\textwidth]{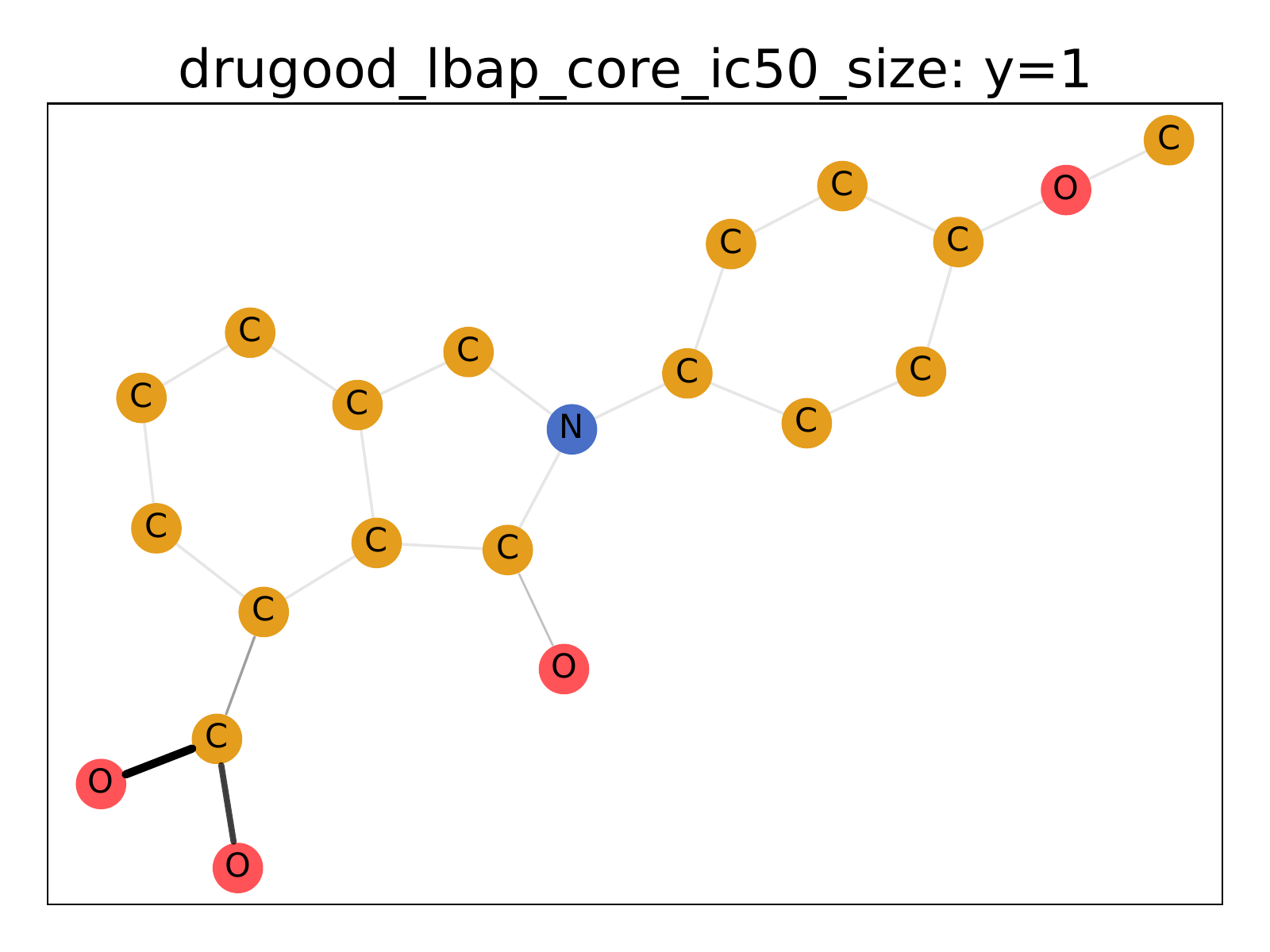}
	}
	\subfigure[]{
		\includegraphics[width=0.31\textwidth]{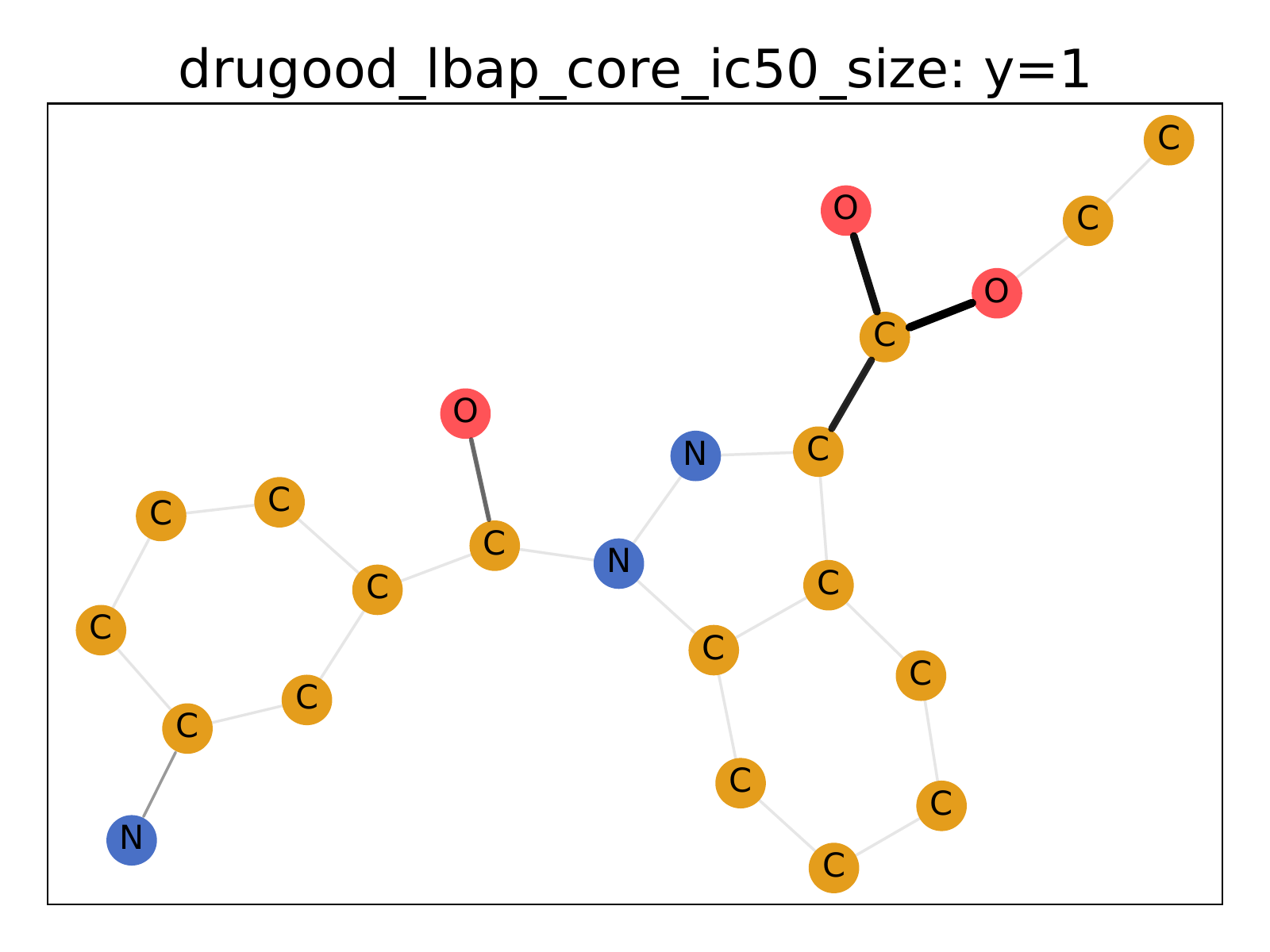}
	}
	\caption{
		Interpretation visualization of activate examples ($y=1$) from DrugOOD-Size.}
	\label{fig:size_viz_act_appdx}
\end{figure}

\begin{figure}[H]
	\centering
	\subfigure[]{
		\includegraphics[width=0.31\textwidth]{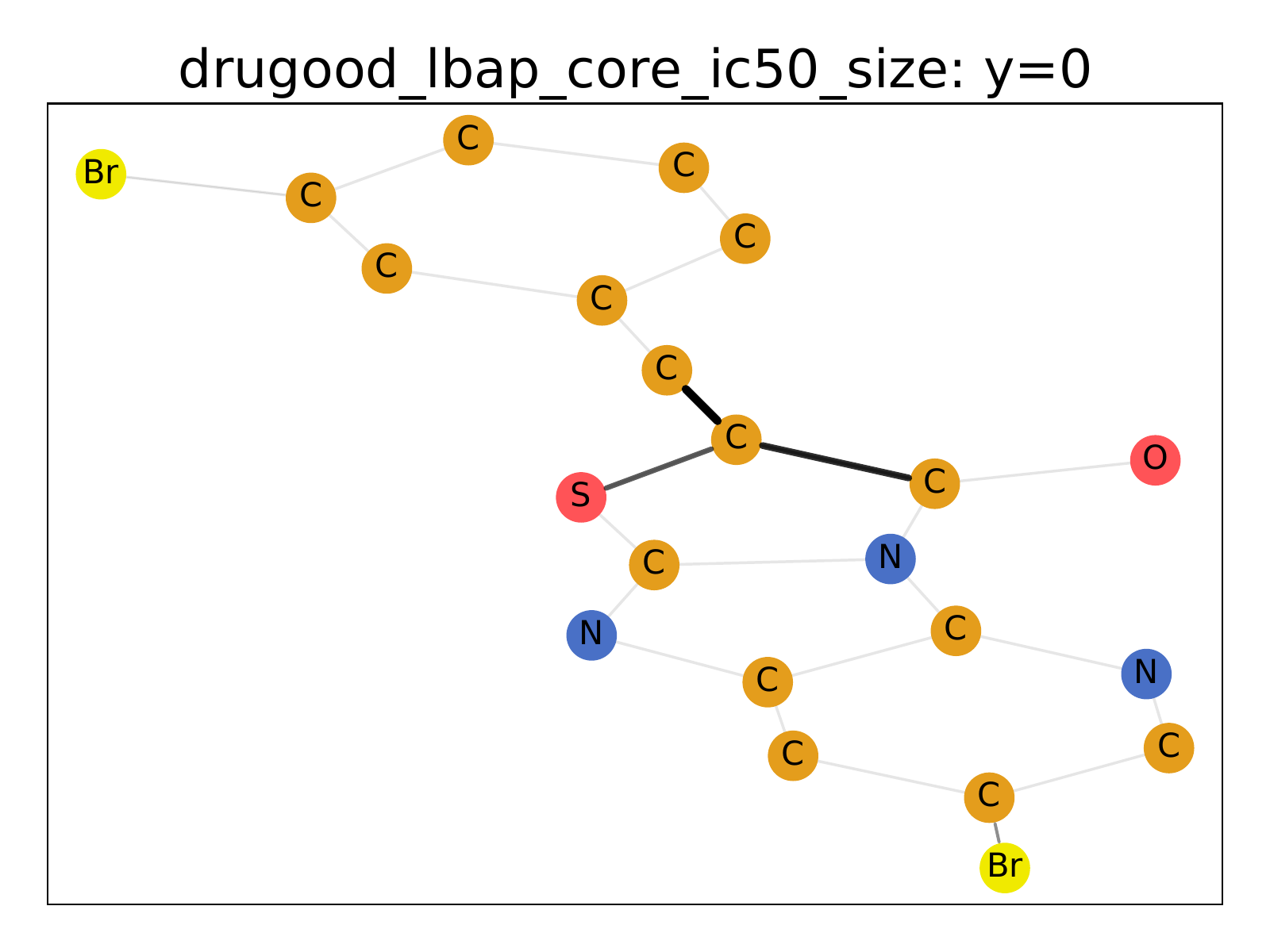}
	}
	\subfigure[]{
		\includegraphics[width=0.31\textwidth]{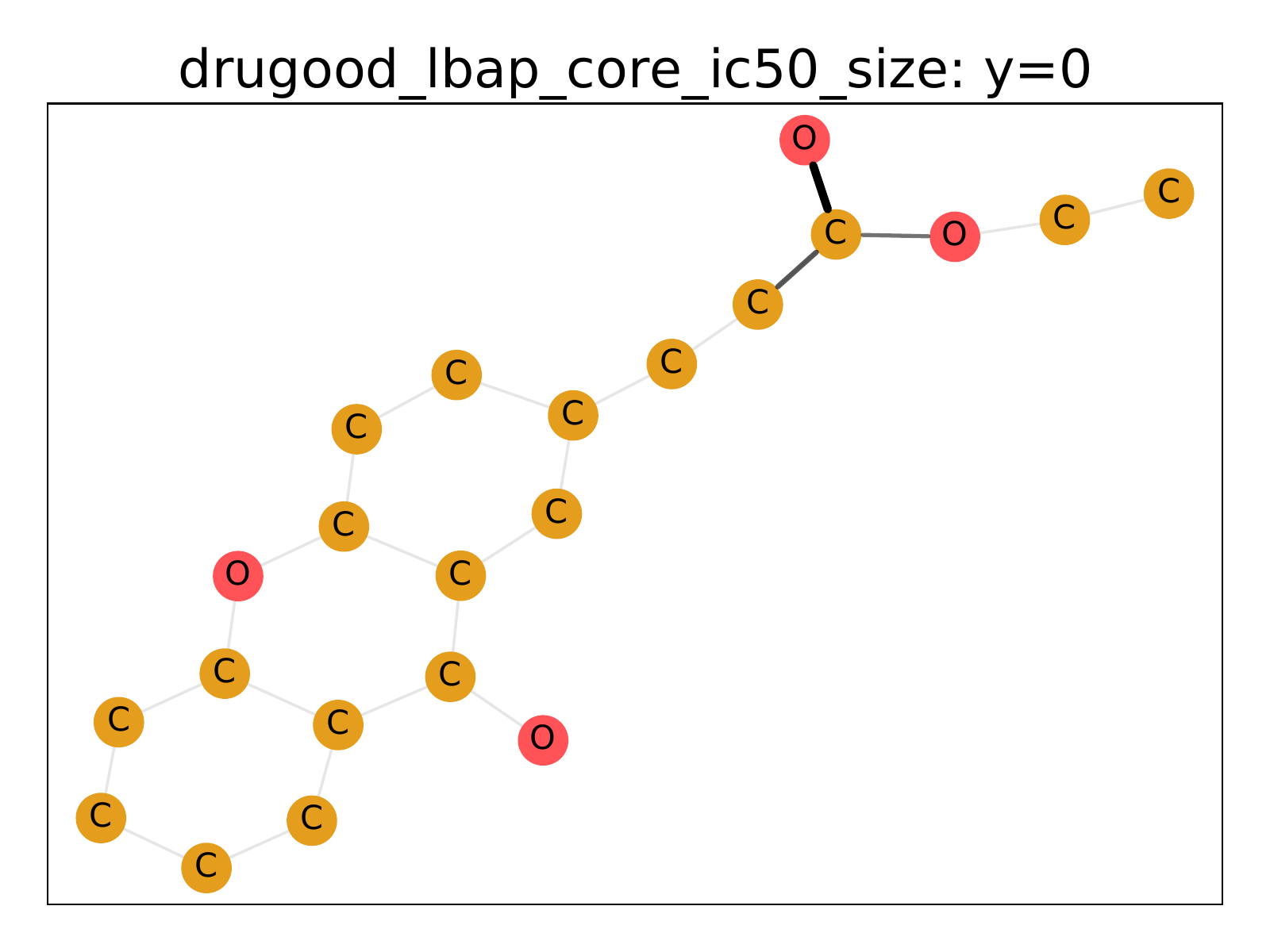}
	}
	\subfigure[]{
		\includegraphics[width=0.31\textwidth]{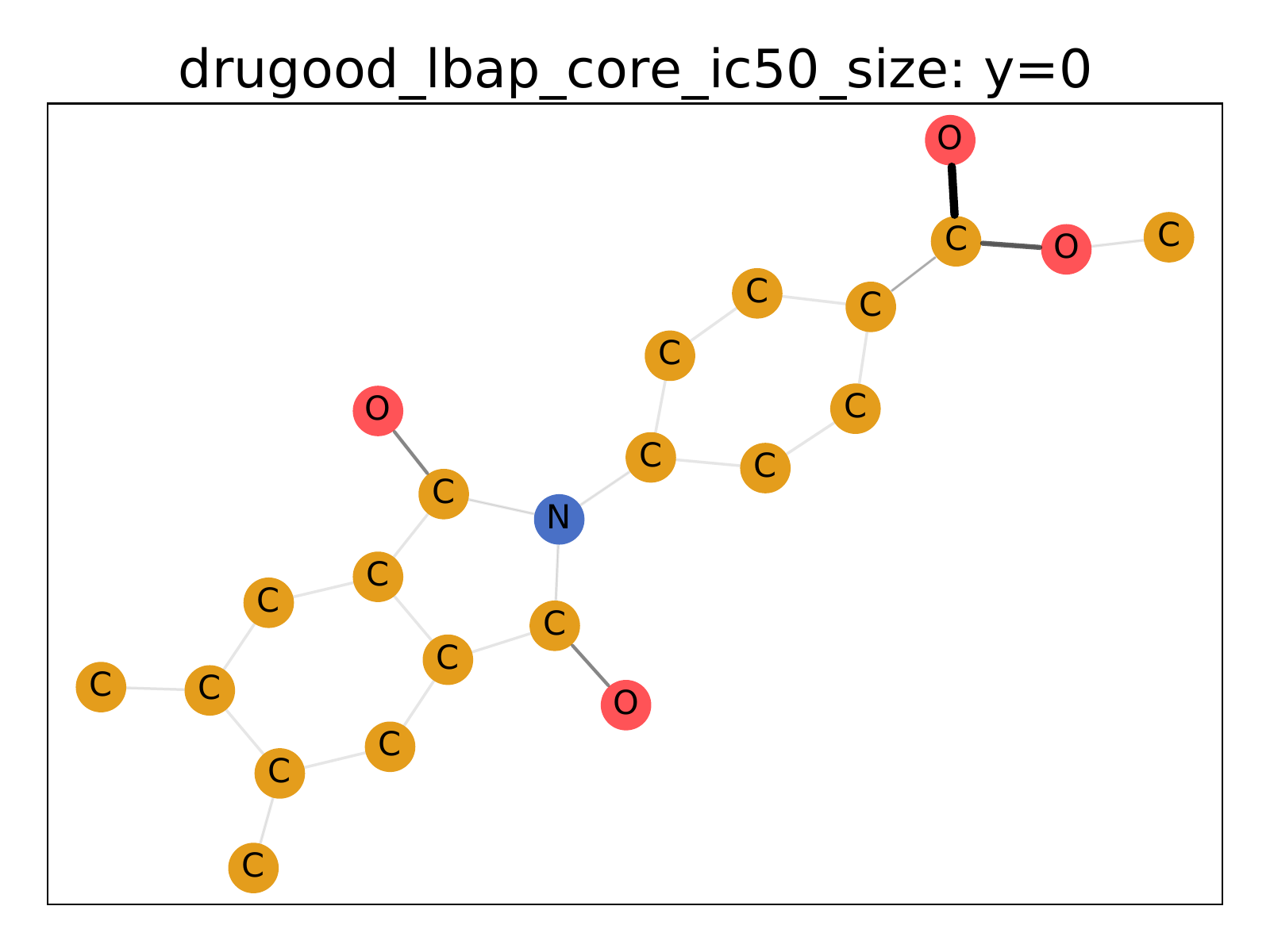}
	}
	\caption{
		Interpretation visualization of inactivate examples ($y=0$) from DrugOOD-Size.}
	\label{fig:size_viz_inact_appdx}
\end{figure}

\end{document}